\def\testx{x}
\def\testy{y}
\newcommand{\testprompt}{P}
\newcommand{\normal}{\mathsf{N}}
\let\hat\widehat
\newcommand{\f}{\frac}
\newcommand{\iid}{\stackrel{\mathrm{ i.i.d.}}{\sim}}
\newcommand{\sip}[2]{\langle #1, #2 \rangle}
\newcommand{\lsa}{\mathsf{LSA}}
\newcommand{\WPV}{W^{PV}}
\newcommand{\WKQ}{W^{KQ}}
\newcommand{\params}{\theta}
\newcommand{\snorm}[1]{\| #1 \|}
\newcommand{\summm}[3]{\sum_{#1 = #2}^{#3}}
\let\hat\widehat
\newcommand{\query}{\mathsf{query}}
\newcommand\numberthis{\addtocounter{equation}{1}\tag{\theequation}}
\newcommand{\PV}{{PV}}
\newcommand{\KQ}{{KQ}}
\renewcommand{\r}{\right}
\renewcommand{\l}{\left} 
\newcommand{\adv}{\mathsf{adv}}
\newcommand{\bad}{\mathsf{bad}}
\def\R{\mathbb{R}}
\def\N{\mathbb{N}}
\def\E{\mathbb{E}}
\newcommandx{\must}[2][1=]{\todo[linecolor=red,backgroundcolor=red!25,bordercolor=red,#1]{#2}}
\newcommandx{\maybe}[2][1=]{\todo[linecolor=blue,backgroundcolor=blue!25,bordercolor=blue,#1]{#2}}
\definecolor{junglegreen}{RGB}{41, 171, 135}
\newcommand{\interleave}{\mathsf{Interleave}}
\newcommand{\concat}{\mathsf{Concat}}
\title{Understanding In-Context Learning of Linear Models in Transformers Through an Adversarial Lens}
\author{
\name Usman Anwar 
\email ua237@cam.ac.uk \\
\addr University of Cambridge
\AND
Johannes von Oswald 
\email {jvoswald@google.com} \\
\addr Google, Paradigms of Intelligence Team 
\AND
Louis Kirsch 
\email {lkirsch@google.com} \\
\addr Google DeepMind 
\AND
David Krueger 
\email {kruegerd@mila.quebec} \\
\addr Mila \& Université de Montréal 
\AND
Spencer Frei
\email {sfrei@ucdavis.edu} \\
\addr UC Davis
}
\begin{document}

\maketitle

\newcommand{\changed}[1]{\textcolor{black}{#1}}
\newcommand{\xattack}{\texttt{feature-attack}}
\newcommand{\yattack}{\texttt{label-attack}}
\newcommand{\zattack}{\texttt{joint-attack}}
\newcommand{\xattackkk}{{feature-attack}}
\newcommand{\yattackkk}{{label-attack}}
\newcommand{\zattackkk}{{joint-attack}}
\NewDocumentCommand{\FigTheoryAttack}{}{%
\begin{figure}[t]
    \centering
    \begin{subfigure}{\textwidth}
        \centering
        \begin{subfigure}{0.24\textwidth}
           \includegraphics[width=\textwidth]{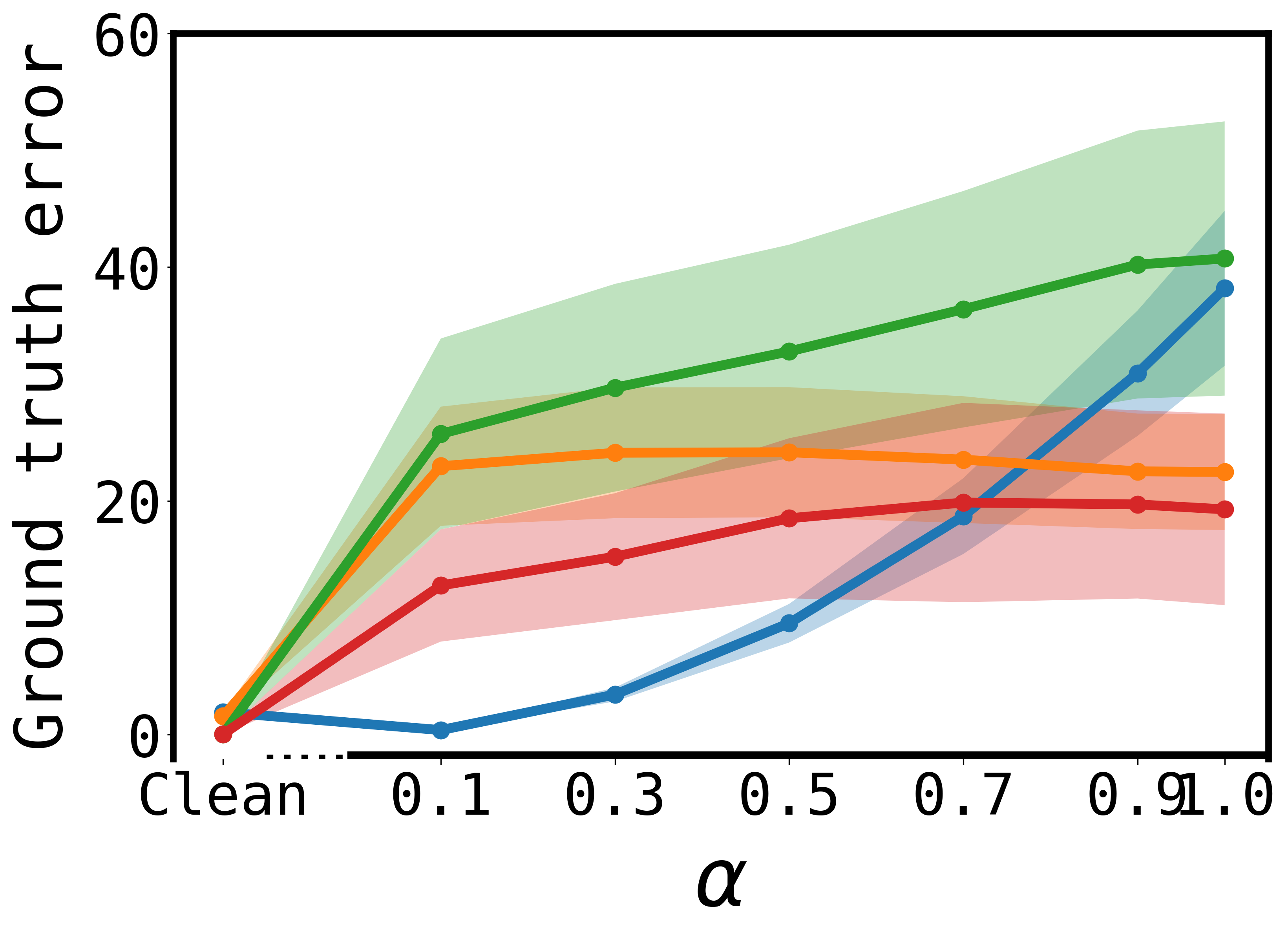}
        \caption{\xattackkk}
        \end{subfigure}
        \hfill
        \begin{subfigure}{0.24\textwidth}
           \includegraphics[width=\textwidth]{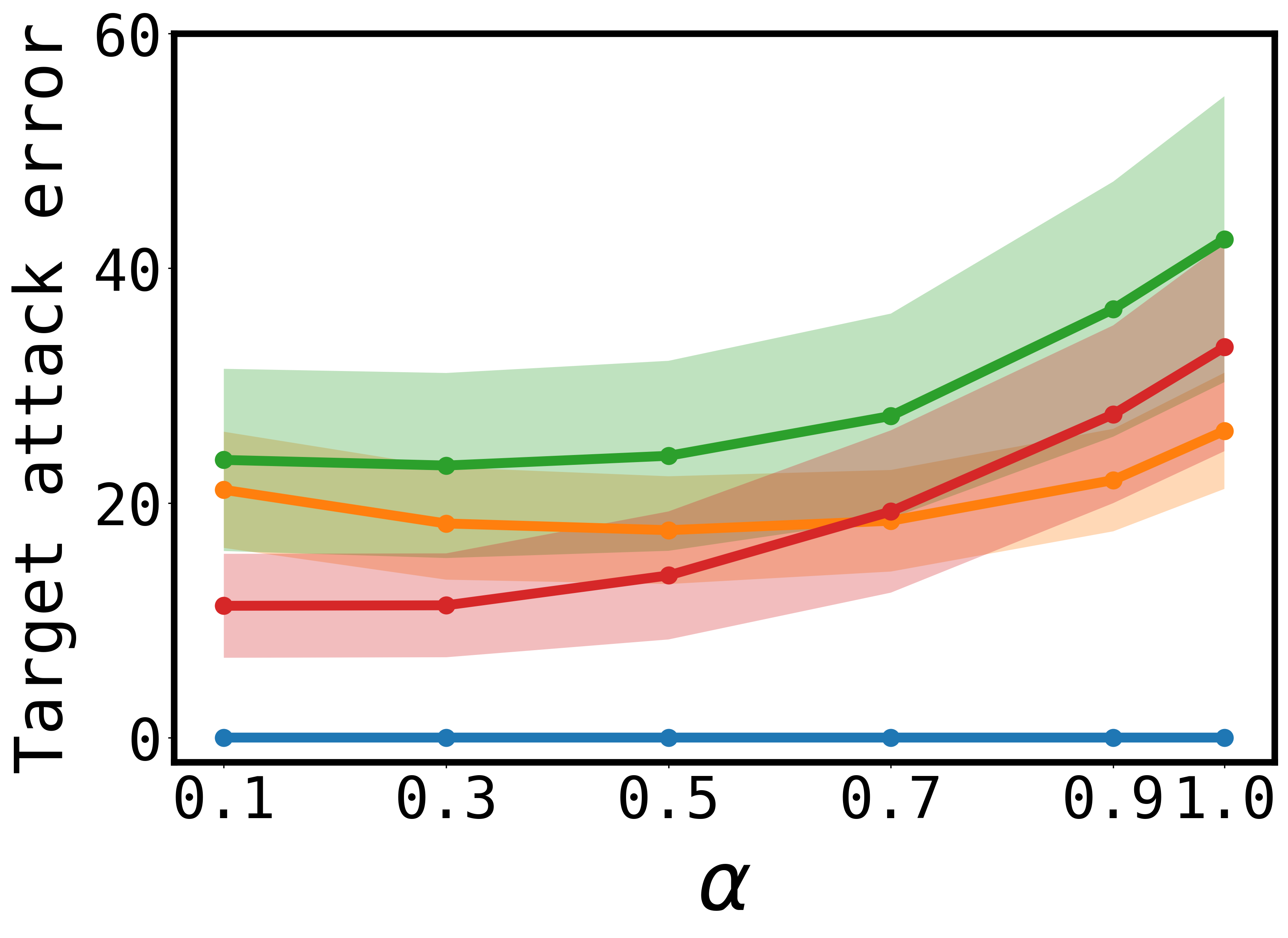}
        \caption{\xattackkk}
        \end{subfigure}
        \hfill
        \begin{subfigure}{0.24\textwidth}
           \includegraphics[width=\textwidth]{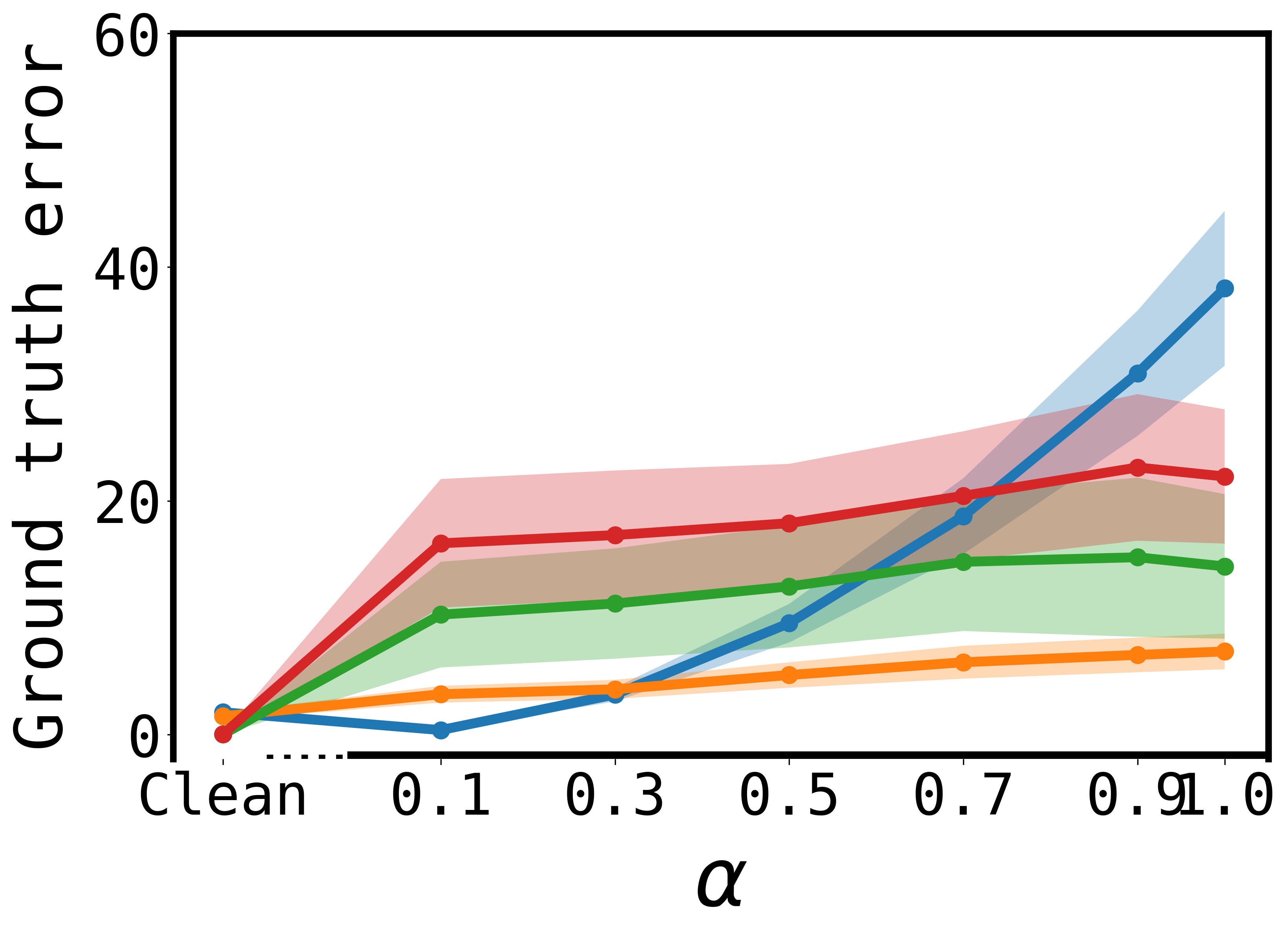}
        \caption{\yattackkk}
        \end{subfigure}
        \hfill
        \begin{subfigure}{0.24\textwidth}
           \includegraphics[width=\textwidth]{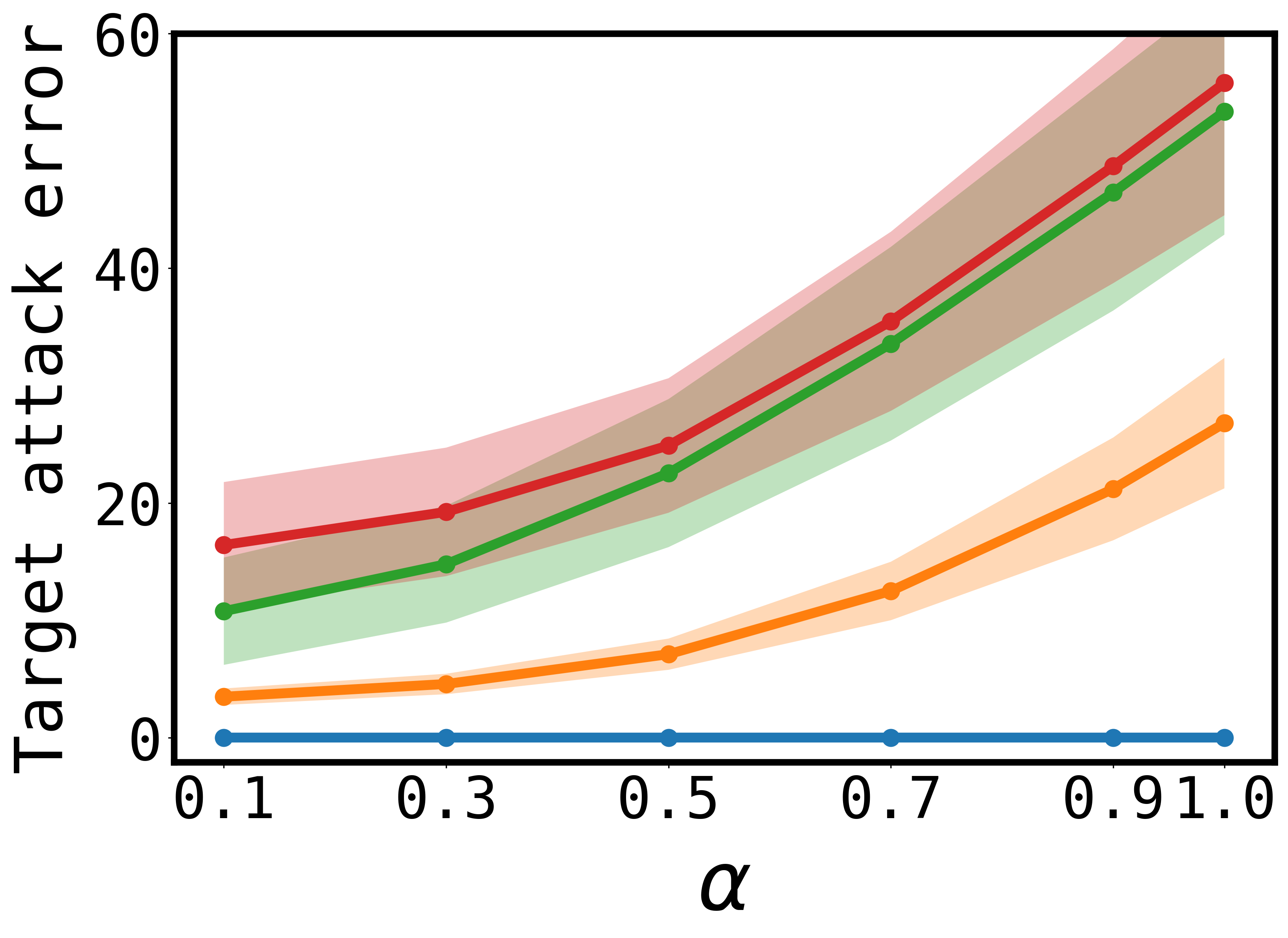}
        \caption{\yattackkk}
        \end{subfigure}
    \end{subfigure}
    \begin{subfigure}{\textwidth}
        \centering
        \includegraphics[width=0.99\textwidth]{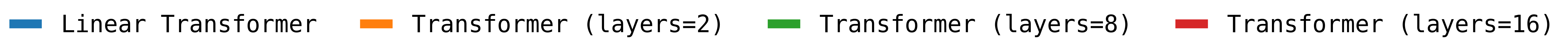}
    \end{subfigure}
    \caption{
    Robustness of different SGD-trained transformers when using attacks constructed from the \changed{infinite-time} gradient flow solution  via Theorem~\ref{thm:jailbreaking.yadv}, for different target values $y_\bad = (1 - \alpha)w^\top x_{\query} + \alpha w_{\perp}^\top x_{\query}$, where $w_\perp \perp w$.  While these attacks reduce ground truth error across all model classes, the \textit{targeted} attack error is only small for the linear transformer. Shaded area is standard error.}
    \label{fig:linear.transformer.attack.transfer}
\end{figure}}

\NewDocumentCommand{\FigGradAttackYTraining}{}{%
    \begin{figure}[b!]
        \centering
        \begin{subfigure}[b]{0.25\textwidth}
            \includegraphics[width=\textwidth]{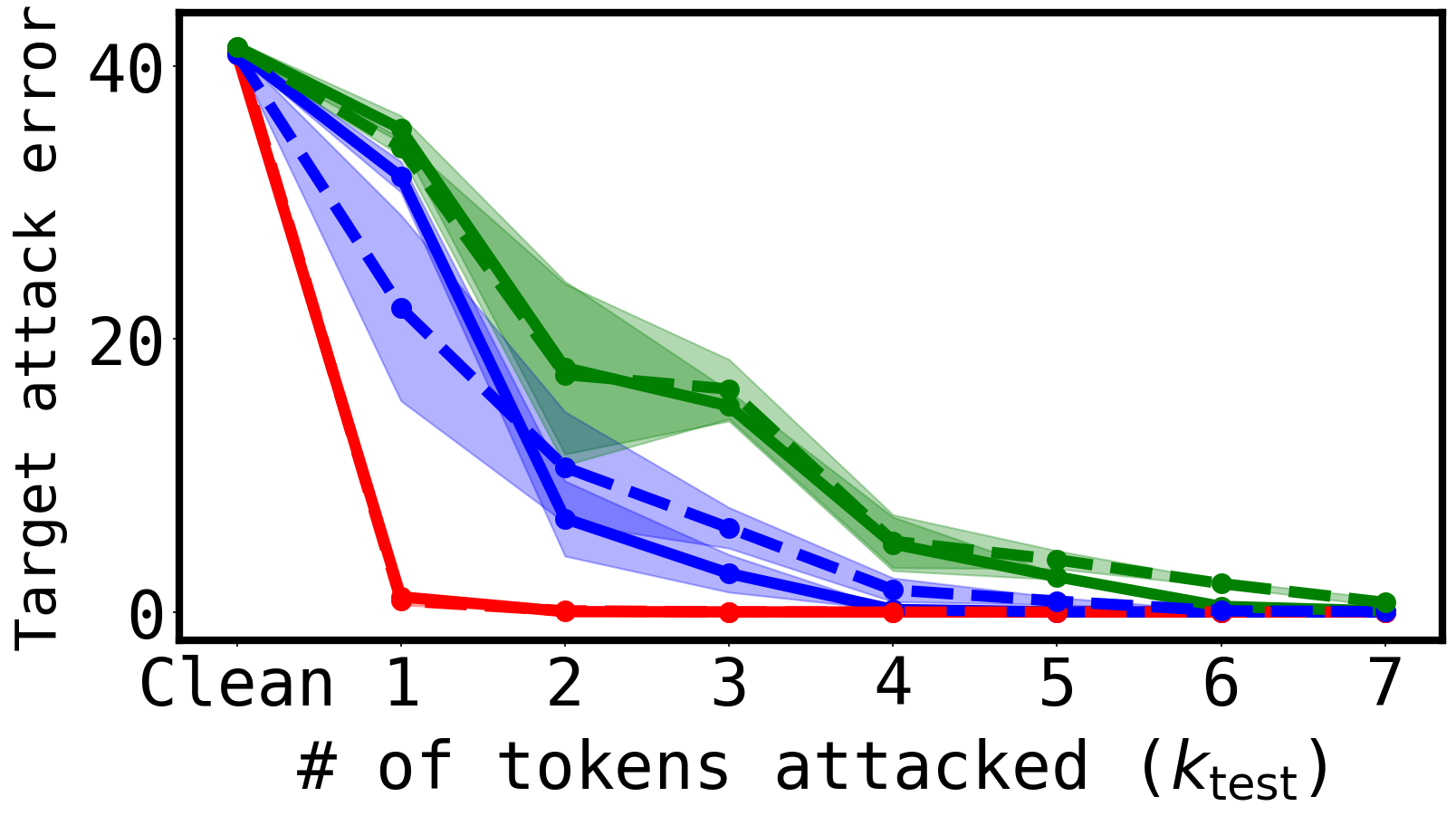}
            \caption{\xattackkk.}
            \label{subfig:x.attack.y.advtraining}
        \end{subfigure}
        \begin{subfigure}[b]{0.25\textwidth}
            \includegraphics[width=\textwidth]{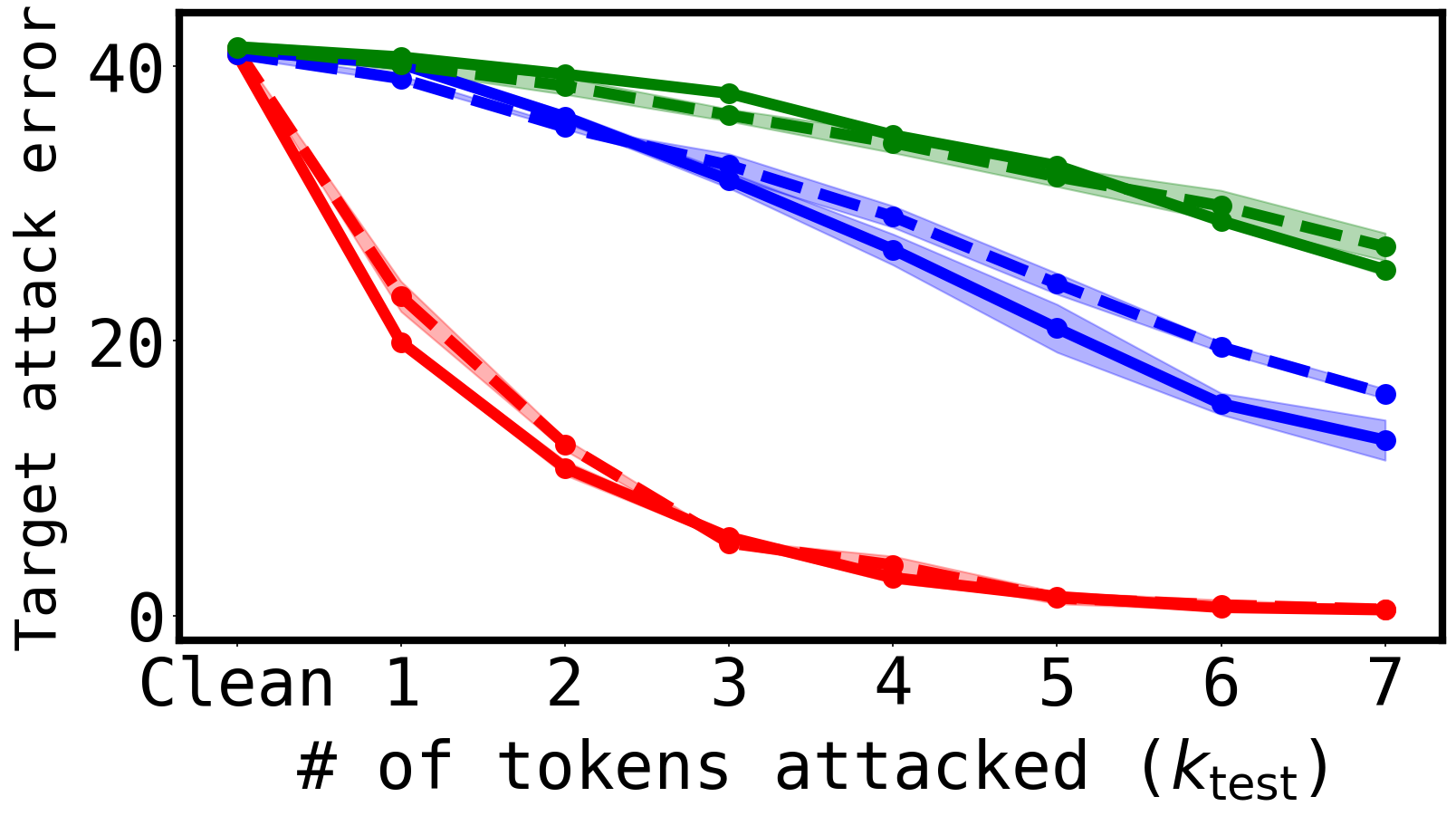}
            \caption{\yattackkk.}
        \end{subfigure}
        \begin{subfigure}[b]{0.25\textwidth}
            \includegraphics[width=\textwidth]{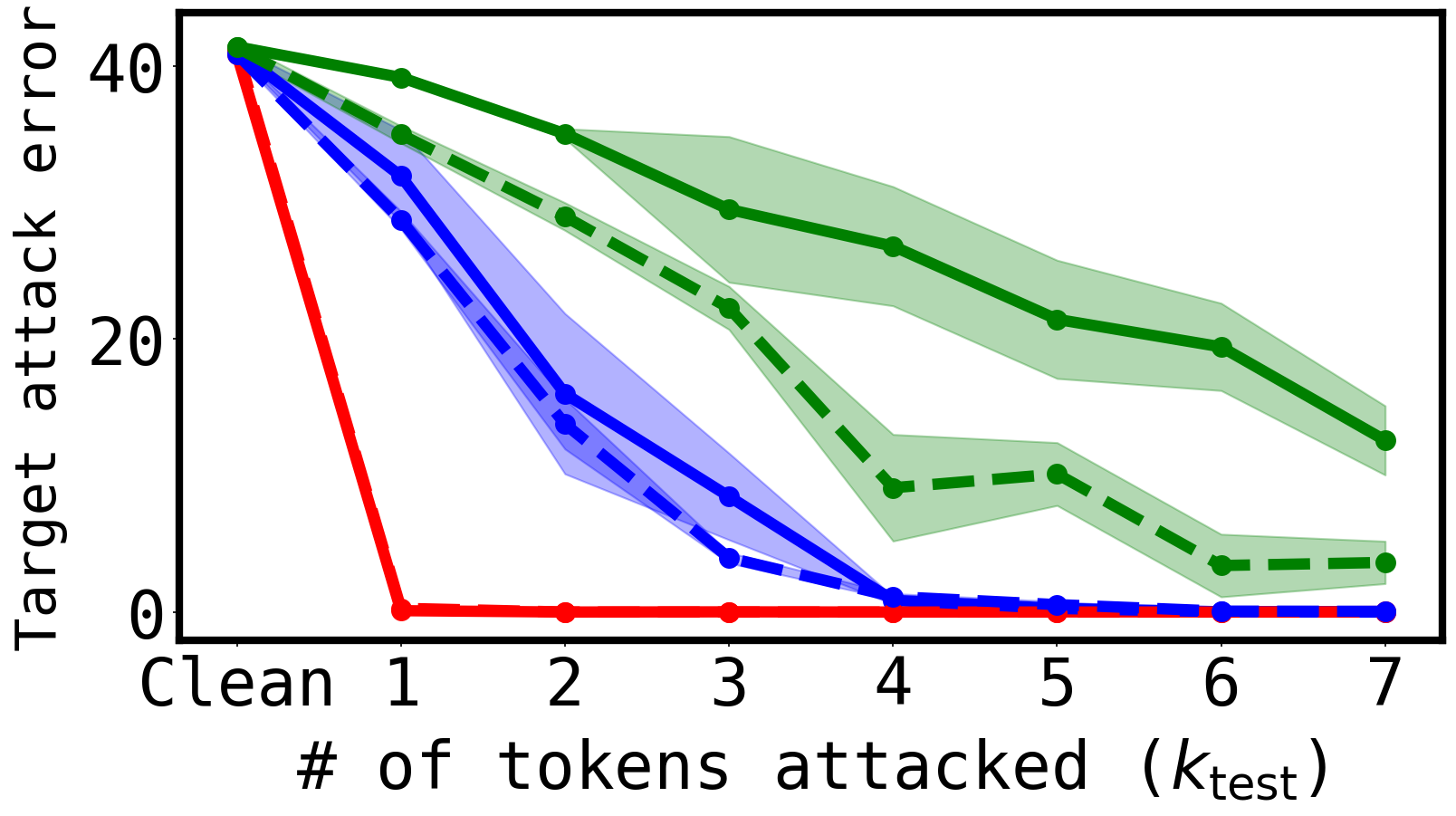}
            \caption{\zattackkk.}
        \end{subfigure}
        \begin{subfigure}[b]{0.8\textwidth}
            \includegraphics[width=\textwidth]{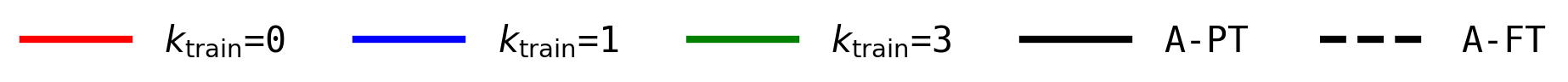}
        \end{subfigure}
        \caption{For both adversarial pretraining (A-PT) and fine-tuning (A-FT) against \yattack{}, robustness against $\yattack{}$ improves significantly, especially when trained on a budget of $k_\text{train}=3$ perturbed tokens. The results are shown for $8$ layer transformers with GPT-2 architecture.}
        \label{fig:adv.training.y}
    \end{figure}
}

\NewDocumentCommand{\FigGradAttackXTraining}{}{%
\begin{figure}[b!]
    \centering
    \begin{subfigure}[b]{0.25\textwidth}
        \includegraphics[width=\textwidth]{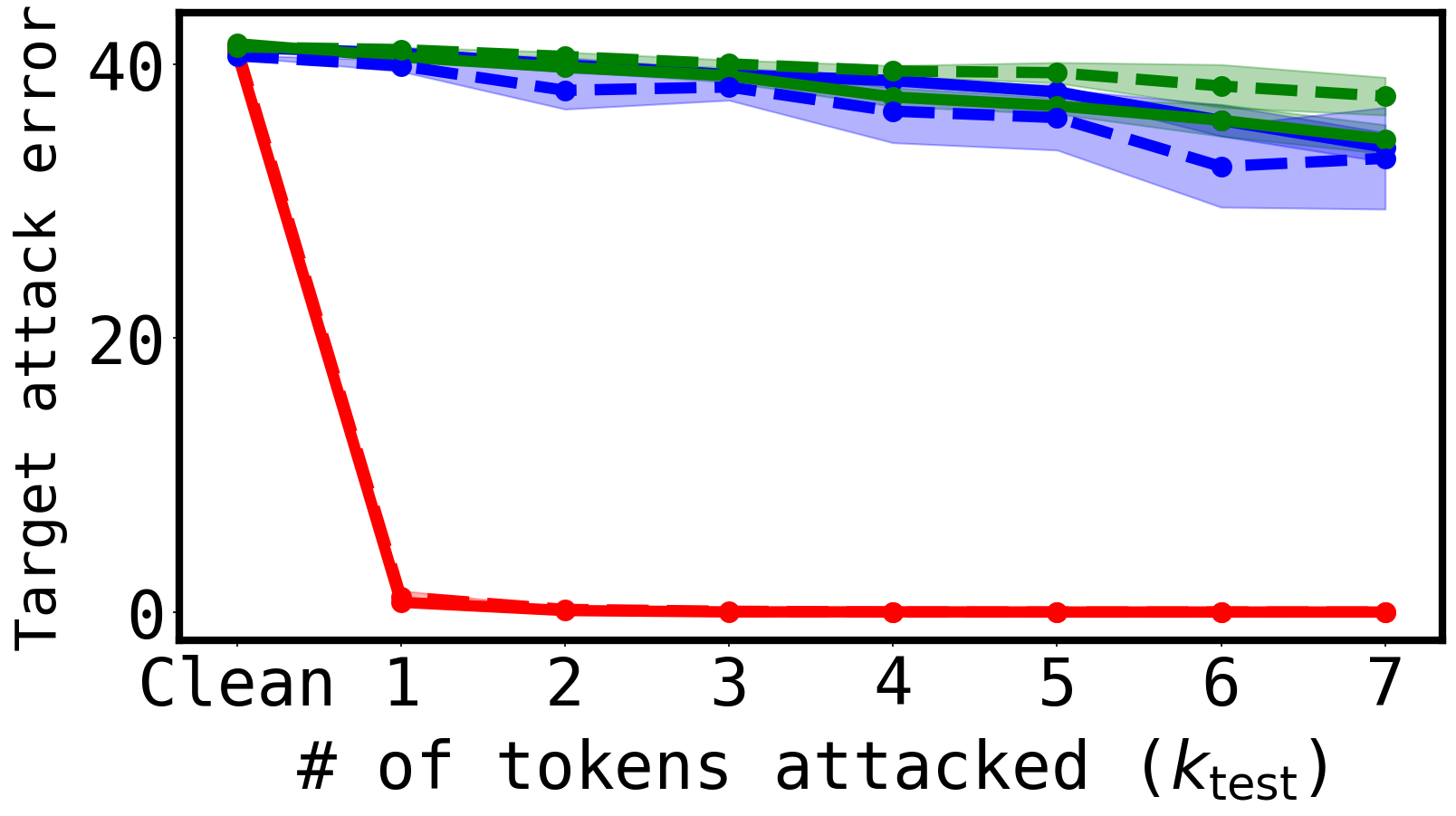}
        \caption{\xattackkk.}
    \end{subfigure}
        \begin{subfigure}[b]{0.25\textwidth}
        \includegraphics[width=\textwidth]{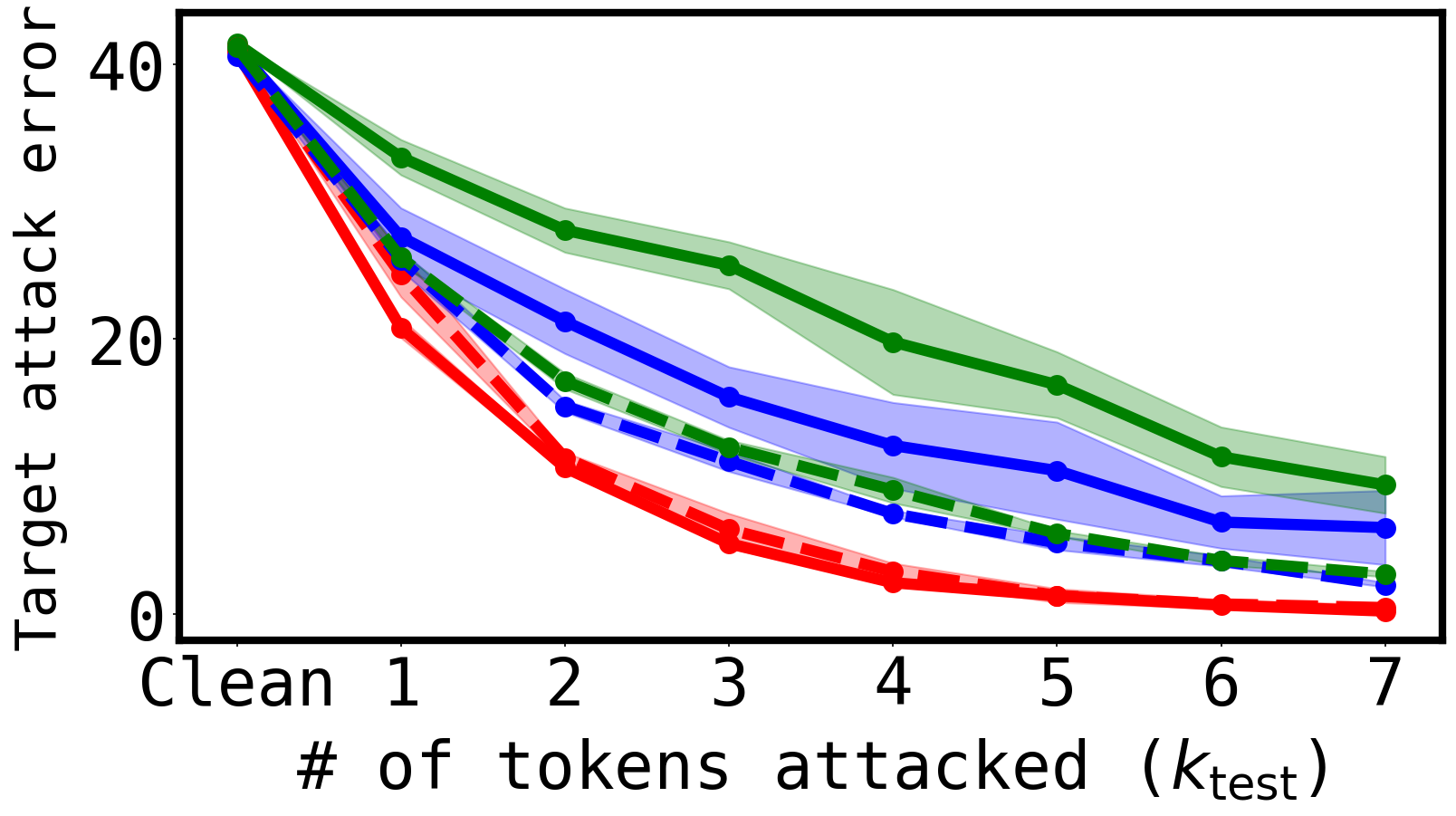}
        \caption{\yattackkk.}
    \end{subfigure}
        \begin{subfigure}[b]{0.25\textwidth}
        \includegraphics[width=\textwidth]{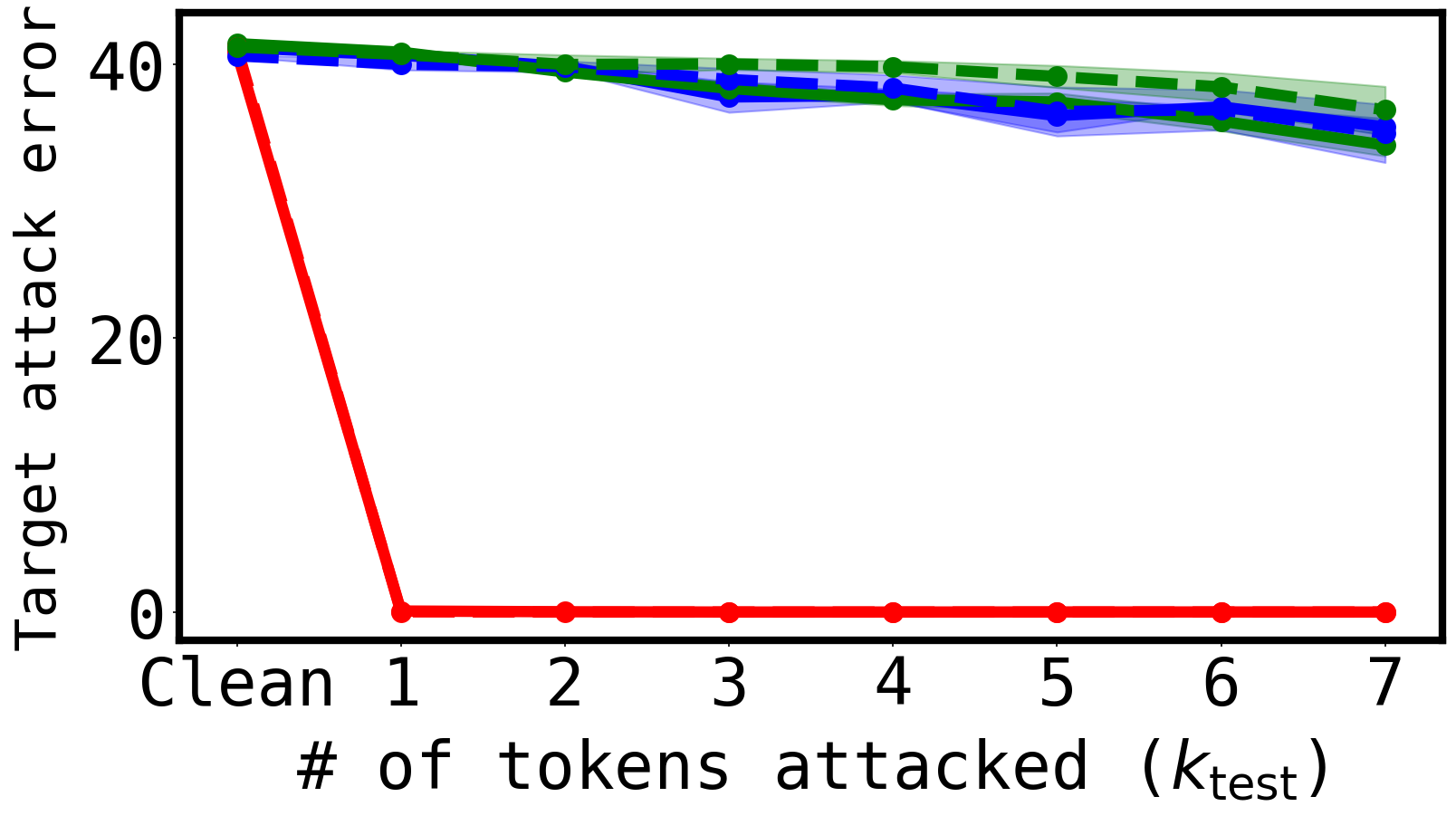}
        \caption{\zattackkk.}
    \end{subfigure}
    \begin{subfigure}[b]{0.8\textwidth}
        \includegraphics[width=\textwidth]{results/adv_training/adv_training_legend.png}
    \end{subfigure}
    \caption{For both adversarial pretraining (A-PT) and fine-tuning (A-FT) against \xattack{}, robustness against $\xattack{}$ \textit{and} \zattack{} improves for $7+$ token attacks when trained on $k_\text{train}=1$. The results are shown for $8$ layer transformers with GPT-2 architecture.}
    \label{fig:adv.training.x}
\end{figure}}

\NewDocumentCommand{\FigEffectofSeqLen}{}{%
\begin{wrapfigure}{r}{0.6\textwidth}
\raggedleft
    \begin{subfigure}[b]{0.29\textwidth}
        \centering
        \includegraphics[width=\textwidth]{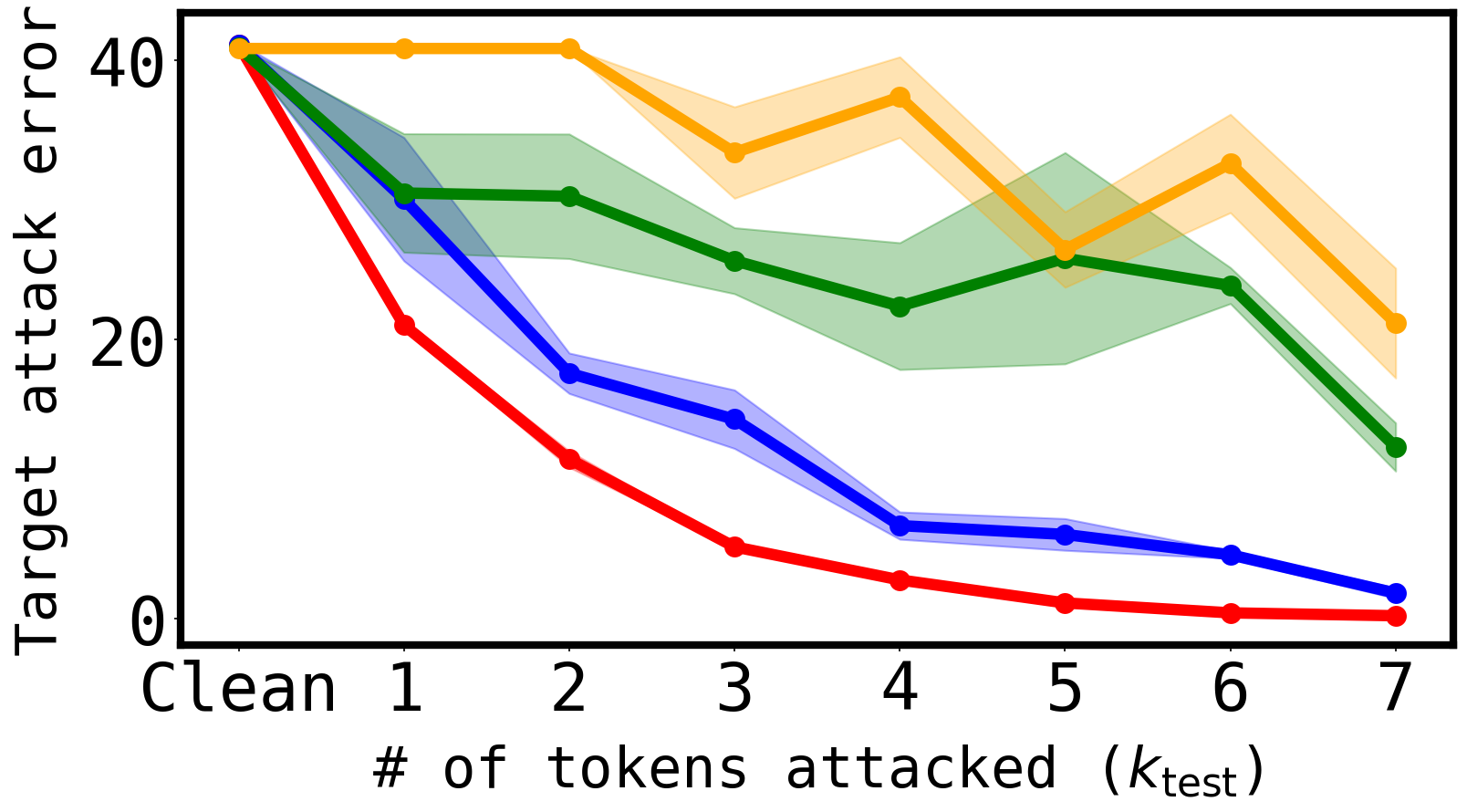}
    \end{subfigure}
    \begin{subfigure}[b]{0.29\textwidth}
        \centering
        \includegraphics[width=\textwidth]{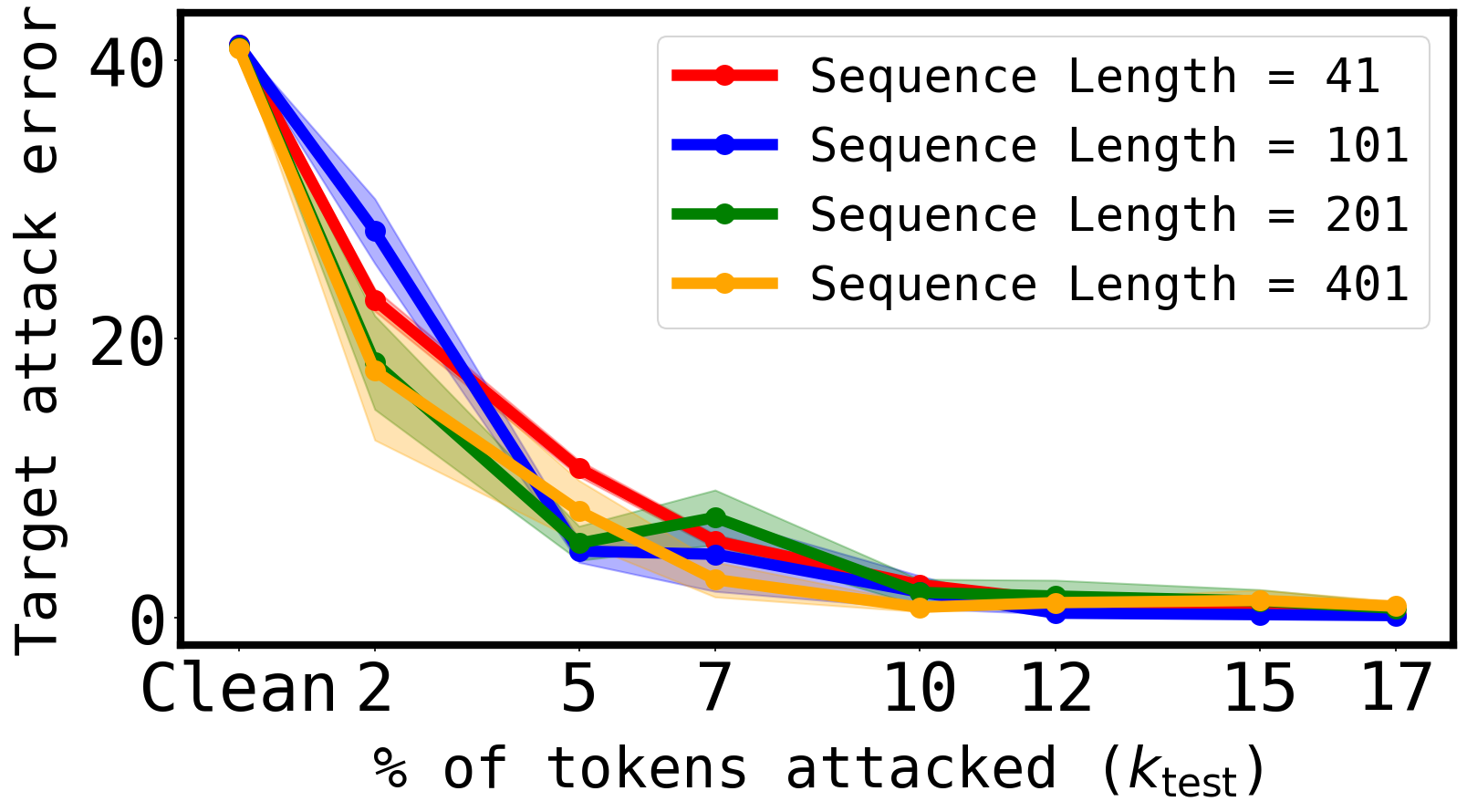}
    \end{subfigure}
    \caption{Larger context lengths can improve robustness for a fixed \textit{number} of tokens attacked, but not for a fixed \textit{proportion}. The number of layers is kept fixed at $8$ while varying the context length. \changed{The results are averaged across three independently trained transformers (with identical hyperparameters). The shaded area indicates standard error.}}
    \label{fig:results.seqlen}
\end{wrapfigure}}

\NewDocumentCommand{\FigAccRobustnessTradeoff}{}{%
\begin{wrapfigure}{r}{0.4\textwidth}
\raggedleft
        \begin{subfigure}[b]{0.4\textwidth}
        \includegraphics[width=\textwidth]{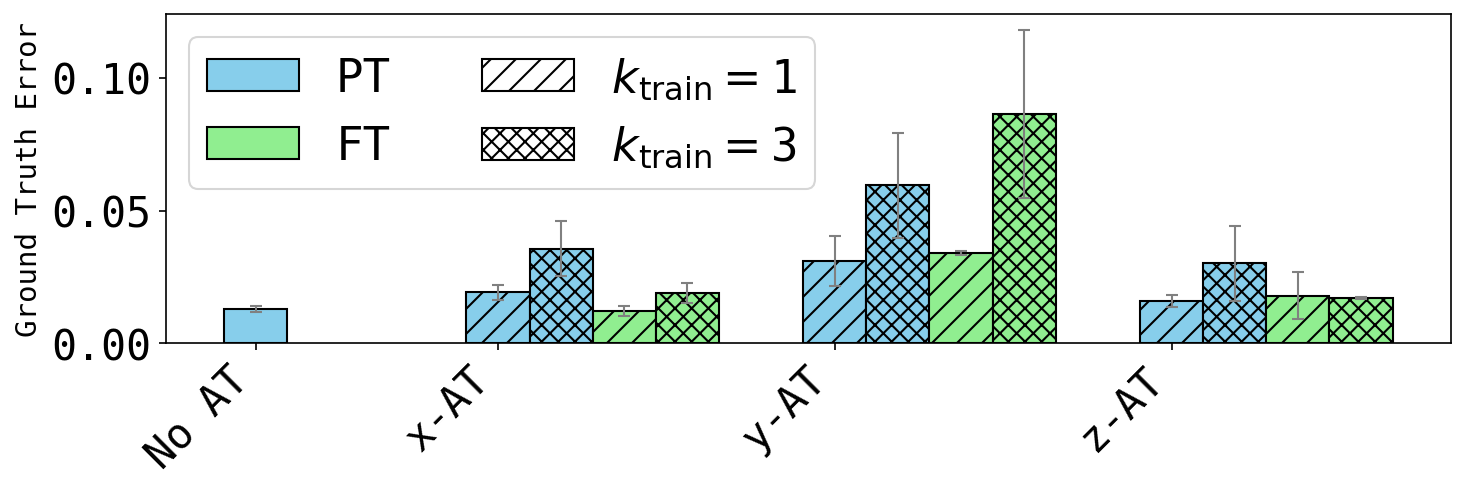}
        \end{subfigure}
        \caption{While there is a moderate tradeoff between robustness and (clean) accuracy when training against \yattack, the tradeoff is very small for \xattack{} and \zattack{} training.}
        \label{fig.clean.performance}
\end{wrapfigure}}

\NewDocumentCommand{\FigOLSTFTransfer}{}{%
\begin{figure}[t]
    \centering
    \begin{subfigure}[b]{0.39\textwidth}
        \centering
        \includegraphics[width=\textwidth]{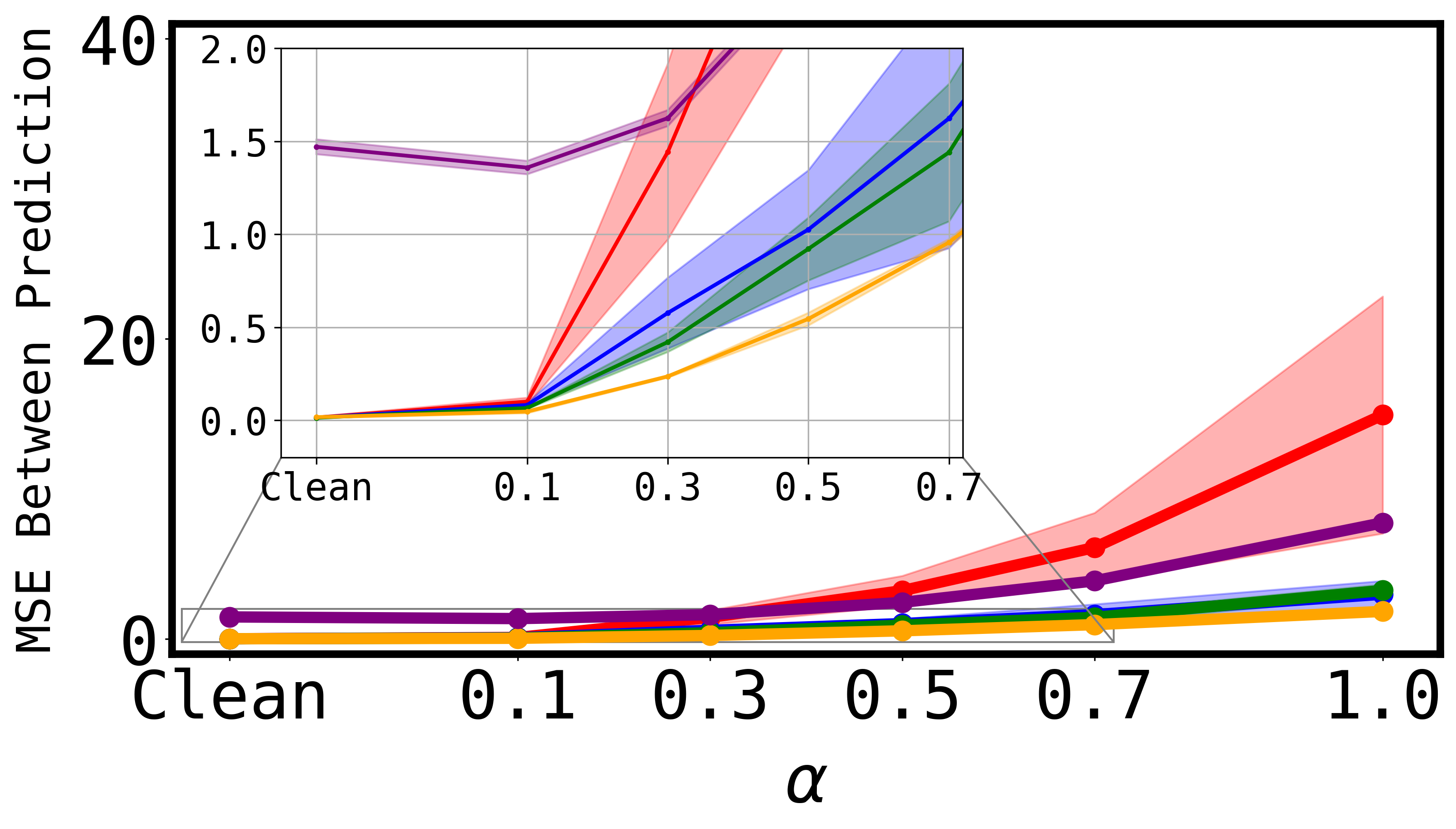}
        \caption{OLS $\rightarrow$ Transformers.}
    \end{subfigure}
    \hspace{5em}
    \begin{subfigure}[b]{0.39\textwidth}
        \centering
        \includegraphics[width=\textwidth]{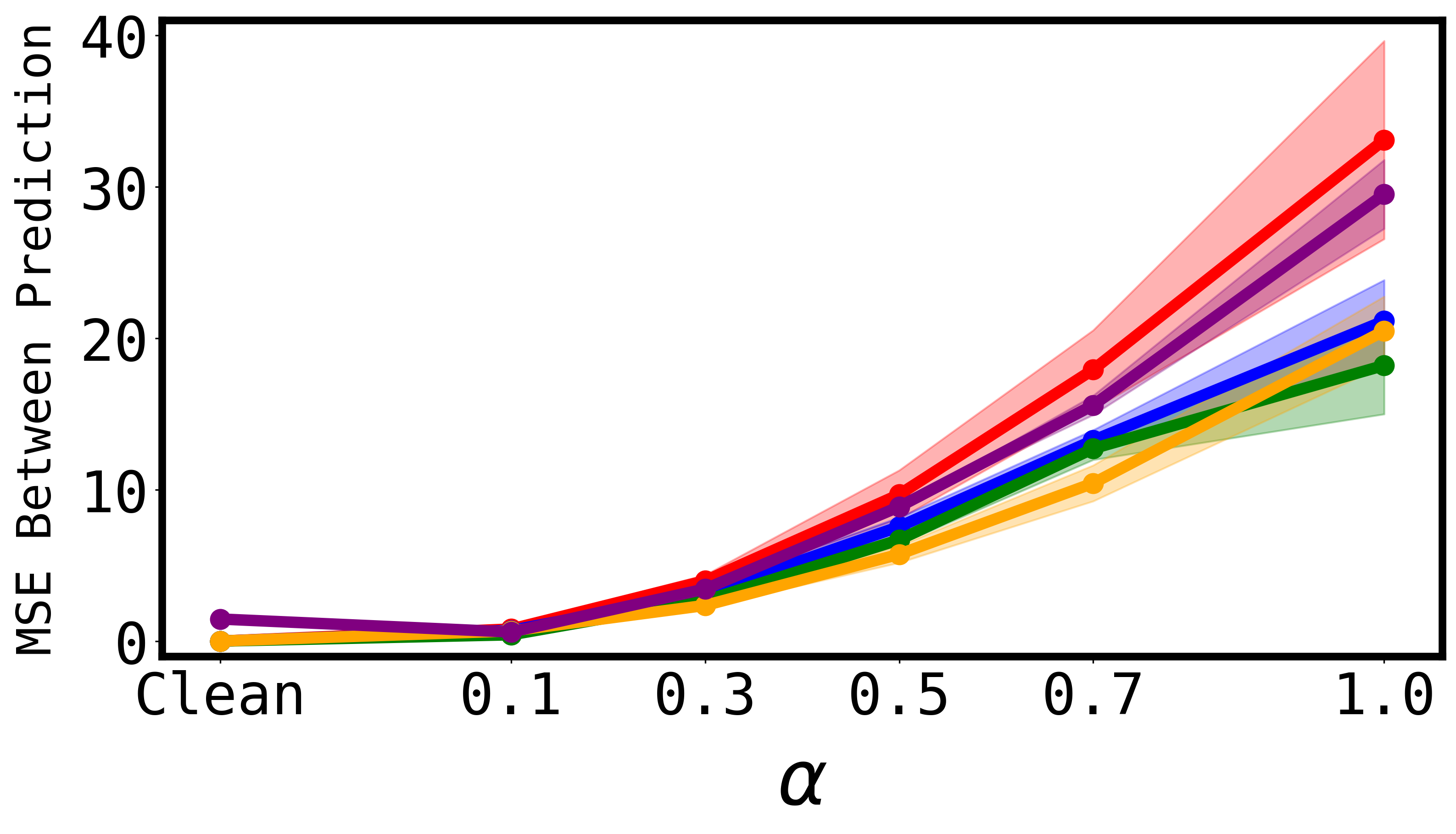}
        \caption{Transformers $\rightarrow$ OLS.}
    \end{subfigure}
    \begin{subfigure}[b]{0.9\textwidth}
        \centering
        \includegraphics[width=\textwidth]{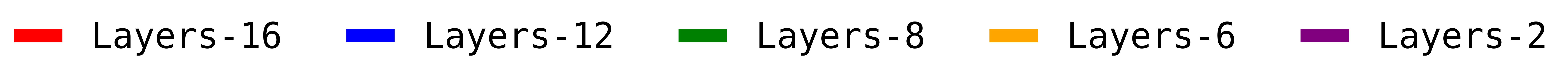}
    \end{subfigure}
    \caption{As the targeted prediction $y_\bad$ becomes more out-of-distribution ($\alpha \to 1$; cf.~\eqref{eq:y_bad.alpha}), we see (b) attacks derived from transformers result in significantly different predictions than that of OLS, while (a) attacks derived from OLS result in similar predictions for some classes of transformers but not all.}
    \label{fig:results.transfer.ols.tf}
\end{figure}}

\NewDocumentCommand{\FigOLSTFTransferWrapped}{}{%
\begin{wrapfigure}{r}{0.6\textwidth}
\vspace{-1em}
\raggedleft
    \begin{subfigure}[b]{0.28\textwidth}
        \centering
        \includegraphics[width=\textwidth]{results/OLS_TF_Transfer/ols_to_tf_attack_x_idxs_3_max_iters_1000.png}
        \caption{OLS $\rightarrow$ Transformers.}
    \end{subfigure}
    \begin{subfigure}[b]{0.28\textwidth}
        \centering
        \includegraphics[width=\textwidth]{results/OLS_TF_Transfer/tf_to_ols_attack_x_idxs_3.png}
        \caption{Transformers $\rightarrow$ OLS.}
    \end{subfigure}
    \begin{subfigure}[b]{0.55\textwidth}
        \centering
        \includegraphics[width=\textwidth]{results/OLS_TF_Transfer/ols_attack_x_idxs_3_legend.png}
    \end{subfigure}
    \caption{Mean squared error between predictions made by OLS and transformers on adversraial samples sourced respectively from OLS and transformers for different values of $\alpha$.}
    \label{fig:results.transfer.ols.tf}
\end{wrapfigure}}

\begin{abstract}
In this work, we make two contributions towards understanding of in-context learning of linear models by transformers. First, we investigate the adversarial robustness of in-context learning in transformers to hijacking attacks — a type of adversarial attacks in which the adversary’s goal is to manipulate the prompt to force the transformer to generate a specific output. We show that both linear transformers and transformers with GPT-2 architectures are vulnerable to such hijacking attacks. However, adversarial robustness to such attacks can be significantly improved through adversarial training --- done either at the pretraining or finetuning stage --- and can generalize to stronger attack models. 
Our second main contribution is a comparative analysis of adversarial vulnerabilities across transformer models and other algorithms for learning linear models. This reveals two novel findings. First, adversarial attacks transfer poorly between larger transformer models trained from different seeds despite achieving similar in-distribution performance. This suggests that transformers of the same architecture trained according to the same recipe may implement different in-context learning algorithms for the same task. Second, we observe that attacks do not transfer well between classical learning algorithms for linear models (single-step gradient descent and ordinary least squares) and transformers. This suggests that there could be qualitative differences between the in-context learning algorithms that transformers implement and these traditional algorithms. 
\end{abstract}

\section{Introduction}

Transformers exhibit sophisticated in-context learning capabilities across a variety of settings such as language~\citep{brown2020language}, vision~\citep{kirsch2022general,bar2022visual,zhang2023makes}, tabular data~\citep{hollmann2022tabpfn,requeima2024llm,ashman2024context}, reinforcement learning and robotics~\citep{chen2021decision,raparthy2023generalization,team2023human,elawady2024relic}. The mechanisms underlying this behavior, however, remain poorly understood. While recent works have made progress by studying transformer behavior on supervised learning tasks, fundamental questions about how these models learn and implement algorithms in-context remain open~\citep[][Section 2.1]{anwar2024foundational}.

In this work, we investigate the mechanisms of in-context learning through the lens of adversarial robustness to hijacking attacks --- a threat model where an adversary manipulates examples in the in-context set to force the model to output specific target values~\citep{qiang2023hijacking,bailey2023image}. Beyond its direct practical relevance for deployed language models and emerging applications of in-context learning across various domains for sensitive applications like clinical decision-making~\citep{nori2023can} or robot control~\citep{elawady2024relic}, studying hijacking attacks provides a powerful tool for probing and understanding the algorithms that transformers learn to implement in-context.

Our investigation focuses on the setting of linear regression tasks, where we analyze two architecture classes: single-layer linear attention models and GPT-2 style transformers. Through a combination of theoretical analysis and extensive experiments, we show that both the linear transformer, and GPT-2 style transformers, do not learn adversarially robust in-context learning algorithms by default.
Specifically:

\begin{enumerate}[leftmargin=1em]
\item 
We prove that single-layer linear transformers, which prior work showed implement gradient descent on in-context data~\citep{von2022transformers}, can be hijacked through perturbation of just a single token (Theorem~\ref{thm:jailbreaking.yadv}).
We then empirically demonstrate that GPT-2 style transformers are also vulnerable to hijacking attacks, which we find through gradient-based methods. 

\item Encouragingly, we show that adversarial training can effectively improve robustness of GPT-2 style transformers, with impressive generalization: training on perturbations of $K$ examples yields robustness against manipulation of $K' > K$ tokens. This is particularly surprising given the historical difficulty of achieving robustness against adaptive adversaries in regression tasks~\citep{diakonikolas2019recent}.
\end{enumerate}

We then provide a comparative analysis of adversarial vulnerabilities of different transformers and traditional solvers.
Specifically, we analyze to what extent adversarial examples designed for a particular model or solver transfer to other models or solvers.
This comparative analysis yields multiple interesting insights:

\begin{enumerate}[leftmargin=1em]
\item We find that hijacking attacks transfer readily between smaller transformers but show poor transferability between larger transformers of identical architecture but trained from different random seeds, providing the first evidence that architecturally identical transformers trained with the same recipe may learn distinct in-context learning algorithms. 

\item We find that successful attacks against linear transformers (which perform gradient descent) fail to transfer to GPT2 architectures.
Similarly, we find that adversarial attacks transfer poorly between transformers and ordinary least squares (OLS).
This shows that the out-of-distribution behavior of transformers is mechanistically distinct from both gradient descent and OLS -- calling into question prior explanations about what algorithms these models implement to learn in-context~\citep{garg2022can,akyurek2022learning}.

\end{enumerate}

\section{Related Works}

\paragraph{In-Context Learning of Supervised Learning Tasks:} Our work is most closely related to prior works that have attempted to understand in-context learning of linear functions in transformers \citep{garg2022can,akyurek2022learning,von2022transformers,zhang2023trained,fu2023transformers,ahn2023transformers,vladymyrov2024linear}. 
\citet{von2022transformers} provided a construction of weights
of linear self-attention layers~\citep{schmidhuber1992learning,katharopoulos2020transformers,schlag2021linear} that allow the transformer to implement gradient descent over the in-context examples. They show that when optimized,
the weights of the linear self-attention layer closely match their
construction, indicating that linear transformers implicitly perform
mesa-optimization. This finding is corroborated by the works of 
\citet{zhang2023trained} and \citet{ahn2023transformers}.
A number of works have argued that when GPT2 transformers are trained on linear regression, they learn to implement ordinary least squares (OLS)~\citep{garg2022can,akyurek2022learning,fu2023transformers}.  More recently, \citet{vladymyrov2024linear} show that
linear transformers also implement other iterative algorithms
on noisy linear regression tasks with possibly different levels of noise.  
\citet{bai2024transformers} show that transformers can perform in-context algorithm selection: 
choosing different learning algorithms to solve different in-context learning tasks.
Other neural architectures such as recurrent neural networks have also been shown to implement in-context learning algorithms~\citep{hochreiter2001learning} such as bandit algorithms~\citep{wang2016learning} or gradient descent~\citep{kirsch2021meta}.

\paragraph{Hijacking Attacks:} While a considerable amount of
research has been conducted on the security aspects of LLMs,
most of the prior research has focused on jailbreaking
attacks. To the best of our knowledge, \citet{qiang2023hijacking} is the 
only prior that considers hijacking attack on LLMs or transformers
during in-context learning. They show that it is possible
to hijack LLMs to generate unwanted target outputs during
in-context learning by including adversarial tokens in the demos.
\citet{he2024data} also consider adversarial perturbations to in-context
data, however, their goal is to simply reduce the in-context learning
performance of the model in general, and not in a targeted way.
\citet{bailey2023image} demonstrate that vision-language
models can be hijacked through adversarial perturbations
to the vision modality alone. Similar to our work,
both \citet{qiang2023hijacking} and
\citet{bailey2023image} assume a white-box setup and use gradient-based methods
for finding adversarial perturbations to hijack the models.

\paragraph{Robust Supervised Learning Algorithms:} 
There are a number of frameworks for robustness in machine learning.
The framework we focus on in this work is data contamination/poisoning, where an adversary can manipulate the data in order to force predictions.
Surprisingly, designing efficient robust learning algorithms,
even for the relatively simple setting of linear regression,
has proved quite challenging, 
with significant progress only being made in the last decade~\citep{diakonikolas2023algorithmic}.
Different algorithms have been devised which work under a contamination model where only labels $y$ can be corrupted~\citep{bhatia2015robust, bhatia2017consistent, suggala2019adaptive} or when both features $x$ and labels $y$ can be corrupted~\citep{klivans2018efficient,diakonikolas2019efficient,cherapanamjeri2020optimal}. 
Note that all the aforementioned work focus on hand-designing
robust learning algorithms for each problem setting.
In contrast, we are concerned with understanding the propensity of the transformers
to learn to implement robust learning algorithms.

There are a number of other related frameworks for robustness in machine learning, e.g., robustness with respect to imperceptible (adversarial) perturbations of the input~\citep{goodfellow2015explaining,madry2017towards}.   We do not focus on these attack models in this work.

\section{Preliminaries}
\label{sec:preliminaries}
In this work, we investigate whether the learning algorithms that transformers
learn to implement in-context are adversarially robust.  We focus on the setting of in-context learning of linear models, a setting studied significantly in recent years~\citep{garg2022can,akyurek2022learning,von2022transformers,zhang2023trained,ahn2023transformers}.
We assume pre-training data that are sampled as follows. Each linear regression task is indexed by $\tau\in [B]$, with each task consisting of $N$ labeled examples $(x_{\tau,i}, y_{\tau, i})_{i=1}^{N}$, query example $x_{\tau,\query}$, parameters $w_\tau \iid \normal(0,I_d)$, features $x_{\tau,i}, x_{\tau,\query} \iid \normal(0,I_d)$ (independent of $w_\tau$), and labels $y_{\tau,i} = w_\tau^\top x_{\tau, i}$,\ $y_{\tau,\query} = w_\tau^\top x_{\tau,\query}$.   

The goal is to train a transformer on this data (by a method to be described shortly) and examine if, after pre-training, when we sample a new linear regression task (by sampling a new, independent $w\sim \normal(0,I_d)$ and features $x_i$, $i=1, \dots, M$), the transformer can formulate accurate predictions for new, independent query examples.  Note that 
the number of examples $M$ in a task at test time may differ from the number of examples $N$ per task observed during training. 

To feed data into the transformer, we need to decide on a tokenization mechanism, which requires some care since transformers map sequences of vectors of a fixed dimension into a sequence of vectors of the same length and dimension, while the features $x_i$ are $d$-dimensional and outputs $y_i$ are scalars.  That is, from a prompt of $N$ input-output pairs $(x_i, y_i)$ and a test example $x_{\query}$ for which we want to make predictions, the question is how to embed
\[ P = (x_1, y_1, \dots, x_N, y_N, x_\query),\]
into a matrix.  
We will consider two variants of tokenization: concatenation (denoted $\concat$), which concatenates $x_i$ and $y_i$ and stacks each sample into a column of an embedding matrix, and then appends $(x_\query, 0)^\top \in \R^{d+1}$ as the last column:
\begin{align*}
E(P) &= \begin{pmatrix}
    x_1 & x_2 & \cdots & x_N & x_\query \\
    y_1 & y_2 & \cdots & y_N & 0
\end{pmatrix} \in \R^{(d+1)\times (N+1)}.&(\concat) \numberthis \label{eq:embedding.matrix.concat}
\end{align*}
The notation $E(P)$ emphasizes that the embedding matrix is a function of the prompt $P$, and we shall sometimes denote this as $E$ for ease of notation.   This tokenization has been used in a number of prior works on in-context learning of function classes~\citep{von2022transformers,zhang2023trained,wu2023pretraining}.
With the $\concat$ tokenization the natural predicted value for $x_{M+1}$ appears in the $(d+1, M+1)$ entry of the transformer output.  This allows for a last-token prediction formulation of the squared-loss objective function: if $f(E;\theta)$ is a transformer, the objective function for $B$ batches of data consisting of $N+1$ samples $(x_{\tau, i}, y_{\tau,i})_{i=1}^{N}$, $(x_{\tau,\query}, y_{\tau,\query})$, each batch embedded into $E_\tau$, is
\begin{equation}\label{eqn:emp_loss_maintext}
    \widehat L(\params) = \f{1}{2B} \textstyle \sum_{\tau=1}^B \big([f(E_\tau; \theta)]_{d+1,N+1}- y_{\tau,\query} \big)^2.
\end{equation}

We will also consider an alternative tokenization method, $\interleave$, where features $x$ and $y$ are interleaved into separate tokens,
\begin{align*}
E(P) &= \begin{pmatrix}
    x_1 & 0 & x_2 & \cdots & x_N & 0 & x_\query \\
    0 & y_1 & 0 & \cdots & 0 & y_N & 0
\end{pmatrix} \in \R^{(d+1)\times (2N+1)}.&(\interleave) \numberthis \label{eq:embedding.matrix.interleave}
\end{align*}
By using causal masking, i.e. forcing the prediction for the $i$-th column of $E_\tau$ to depend only on columns $\leq i$, this tokenization allows for the formulation of a next-token prediction averaged across all $N$ pairs of examples,
\begin{equation}\label{eqn:emp_loss_standard_transformer}
    \widehat L(\params) = \f{1}{2B} \textstyle \sum_{\tau=1}^B \f{1}{N} \textstyle \sum_{i=1}^{N}\big( [f^{\mathsf{Mask}}(E_\tau; \theta)]_{d+1,2i+1} - y_{\tau, i+1}\big)^2,
\end{equation}
where we treat $y_{\tau, N+1} := y_{\tau,\query}$. 
This formulation was used in the original work by~\citet{garg2022can}

We consider in-context learning in two types of transformer models:
single-layer linear transformers, where we can theoretically analyze the behavior of the transformer, and standard GPT-2 style transformers, where we use experiments to probe their behavior.  In all experiments, we focus on the setting where $d=20$ and the number of examples per pre-training task is $N=40$. 

\subsection{Single-Layer Linear Transformer Setup}
\label{subsec:linear.transformer}
Linear transformers are a simplified transformer model in which the standard
self-attention layers are replaced by linear self-attention 
layers~\citep{katharopoulos2020transformers, von2022transformers, ahn2023transformers, zhang2023trained, vladymyrov2024linear}.
In this work, we specifically consider a single-layer linear self-attention (LSA) model,
\begin{equation} \label{eq:lsa}
    f_\lsa(E;\params) = f_\lsa(E; \WPV, \WKQ): = E + \WPV E \cdot \f{ E^\top \WKQ E}{N}. 
\end{equation}
This is a modified version of attention where we remove the softmax nonlinearity,
merge the projection and value matrices into a single matrix $\WPV \in\R^{d+1 \times d+1}$,
and merge the query and key matrices into a single matrix $\WKQ \in \R^{d+1 \times d+1}$.  For the linear transformer, we will assume the $\concat$ tokenization. 

Prior work by~\citet{zhang2023trained} developed an explicit formula for the predictions $f_\lsa$ when it is pre-trained on noiseless linear regression tasks (under the $\concat$ tokenization) by gradient flow with a particular initialization scheme.  This corresponds to gradient descent with an infinitesimal learning rate $\f{\mathrm{d}}{\mathrm dt} \params = - \nabla L(\params)$ in the infinite task limit $B\to \infty$ of the objective~\eqref{eqn:emp_loss_maintext},
\begin{equation}\label{eqn:population_loss}
    L(\params) = \lim_{B\to \infty} \widehat L(\params) = \f 12 \E_{w_\tau\sim \normal(0,I),\, x_{\tau,i},x_{\tau, \query}\iid \normal(0,I)} \l[ ([f(E_\tau; \theta)]_{d+1,N+1} - x_{\tau, \query}^\top w)^2 \r].
\end{equation}

\subsection{Standard Transformer Setup}
\label{subsec:standard.transformer.setup}
For studying the adversarial robustness of the in-context learning in
standard transformers, we use the same setup as described in \citet{garg2022can}.  Namely, we use a standard GPT2 architecture with the $\interleave$ tokenization. We provide details on the architecture and the training setup in Appendix~\ref{appx.training.details}.

\subsection{Hijacking Attacks}
We focus on a particular adversarial attack where the adversary's goal is to hijack the transformer.
Specifically, the aim of the adversary is to force the transformer to predict a specific output $y_\bad$ for $x_\query$ when given a prompt $P = (x_1, y_1, \dots, x_M, y_M, x_\query)$.  The adversary can choose one or more pairs $(x_i, y_i)$ to replace with an adversarial example $(x_\adv^{(i)}, y_\adv^{(i)})$.

We characterize hijacking attacks in this work along two axes:
$(i)$ the type of data being attacked 
$(ii)$ number of data-points or tokens being attacked.
The adversary may perturb either the $x$ feature $(x_i, y_i) \mapsto (x_\adv, y_i)$, which we call \xattack{},
or a label $y$, $(x_i, y_i)\mapsto (x_i, y_\adv)$, which we refer to as \yattack{}, or simultaneously perturb the pair $(x_i, y_i)\mapsto (x_\adv, y_\adv)$, which we refer to as \zattack{}.  We will primarily focus on \xattack{} and \yattack{} as the behavior of \zattack{} is qualitatively quite similar to \xattack{} (see Figures~\ref{fig:adv.training.y}~and~\ref{fig:adv.training.x}).
Furthermore, we allow for the adversary to perturb multiple tokens in the prompt $P$. 
A \texttt{k-token} attack means that the adversary can perturb at most
$k$ pairs $(x_i,y_i)$ in the prompt.\footnote{Note that for standard transformers with the $\interleave$ tokenization, a k-token attack corresponds to $2k$ tokens being manipulated (see~\eqref{eq:embedding.matrix.interleave}).}

We note that hijacking attacks are different from jailbreaks.
In jailbreaking, the adversary's goal is to bypass safety filters
instilled within the LLM~\citep{willison2023multi, kim2024jailbreaking}.
A jailbreak may be considered successful
if it can elicit \textit{any} unsafe response from the LLM.
While on the other hand, the goal of a hijacking attack
is to force the model to generate \textit{specific} outputs desired by
the adversary~\citep{bailey2023image}, which could potentially
be unsafe outputs, in which case the hijacking attack would
be considered a jailbreak as well.
A good analogy for jailbreaks and hijack attacks
is untargeted and targeted adversarial attacks as studied
in the context of image classification~\citep{liu2016delving}.


\section{Robustness of Transformers to Hijacking Attacks}
In this section, we first provide a theoretical result characterizing the lack of robustness of linear transformers to \textit{single-token} hijacking attacks.
We then empirically investigate the robustness of GPT-2 style transformers trained to solve linear regression in-context, and the efficacy of adversarial training towards improving the robustness of these transformers.

\subsection{Single-Layer Linear Transformers}
\label{subsec:linear.transformer.proof}
We first consider robustness of a linear transformer trained to solve linear regression in-context.
As reviewed previously in the Section~\ref{subsec:linear.transformer},
this setup has been considered in several prior 
works~\citep{von2022transformers,zhang2023trained,ahn2023transformers},
who all show that linear transformers learn to solve linear regression problems
in-context by implementing a (preconditioned) step of a gradient descent.
We build on this prior work to show that the solution learned by linear transformers is highly non-robust and that an adversary can hijack a linear transformer with very minimal perturbations to the in-context training set. Specifically, we show that throughout the training trajectory, an adversary can force the linear transformer to make any prediction it would like by simply adding a single $(x_\adv, y_\adv)$ pair to the input sequence.  We provide a constructive proof of this theorem in Appendix \ref{appx.proofs}.

\begin{restatable}{theorem}{linearhijack}
\label{thm:jailbreaking.yadv}
Let $t\geq 0$ and let $f_\lsa(\cdot\ ; \theta(t))$ be the linear transformer trained by gradient flow on the population loss using the initialization of~\citet{zhang2023trained}, and denote $\theta(\infty)$ as the infinite-time limit of gradient flow.  For any time $t\in \R_+ \cup \{\infty\}$ and prompt $\testprompt = (\testx_1, \testy_1, \dots, \testx_M, \testy_M, x_\query)$ with $x_\query \sim \normal(0,I)$, for any $y_{\bad}\in \R$, the following holds.
\begin{enumerate}
    \item If $x_\adv \sim \normal(0, I_d)$, there exists \changed{ $y_\adv = y_\adv(t, P, x_\query, y_\bad, x_\adv)\in \R$} s.t. with probability 1 over the draws of $x_\adv, x_\query$, by replacing any single example $(x_i, y_i)$, $i\leq M$, with $(x_\adv, y_\adv)$, the output on the perturbed prompt $P_\adv$ satisfies $\widehat y_\query(E(P_\adv);\theta(t)) = y_\bad.$
    \item If $y_\adv \neq 0$, there exists \changed{$x_\adv = x_\adv(t, P, x_\query, y_\bad, y_\adv)\in \R^d$ }s.t. with probability 1 over the draw of $x_\query$, by replacing any single example $(x_i, y_i)$, $i\leq M$, with $(x_\adv, y_\adv)$, the output on the perturbed prompt $P_\adv)$ satisfies $\widehat y_\query(E(P_\adv);\theta(t)) = y_\bad.$
\end{enumerate}
\end{restatable}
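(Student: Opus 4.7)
The plan is to leverage the closed-form characterization of the gradient flow trajectory established by \citet{zhang2023trained}. Under their initialization, the parameters $\theta(t) = (\WPV(t), \WKQ(t))$ remain within a structured family for every $t \geq 0$, and the resulting prediction takes the form
\begin{equation*}
\widehat y_\query(E(P); \theta(t)) \;=\; \frac{\gamma(t)}{N} \sum_{j=1}^N y_j \, x_j^\top \Gamma(t) x_\query,
\end{equation*}
where $\gamma(t) \in \R$ and $\Gamma(t) \in \R^{d \times d}$ are explicit functions of $t$ obtained from the ODEs governing the nonzero blocks of $\WPV(t)$ and $\WKQ(t)$. A key feature of this trajectory is that $\gamma(t) \neq 0$ and $\Gamma(t)$ is nonsingular for every $t \in \R_+ \cup \{\infty\}$, where the infinite-time quantities are the (well-defined) limits of the ODE. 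This is exactly the fact that the single-layer linear transformer implements (preconditioned) one-step gradient descent from zero on the in-context least-squares objective.

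Given this closed form, the theorem reduces to a one-line algebraic computation. Replacing the $i$-th example $(x_i, y_i)$ with $(x_\adv, y_\adv)$ shifts the prediction on the perturbed prompt $P_\adv$ by
\begin{equation*}
\widehat y_\query(E(P_\adv); \theta(t)) - \widehat y_\query(E(P); \theta(t)) \;=\; \frac{\gamma(t)}{N} \bigl(y_\adv \, x_\adv^\top \Gamma(t) x_\query - y_i \, x_i^\top \Gamma(t) x_\query\bigr),
\end{equation*}
so requiring the perturbed prediction to equal $y_\bad$ becomes the single scalar equation
\begin{equation*}
y_\adv \cdot \bigl(x_\adv^\top \Gamma(t) x_\query\bigr) \;=\; c(t, P, y_\bad),
\end{equation*}
for an explicit constant $c(t, P, y_\bad) \in \R$ depending only on $t$, the clean prompt, and the target.

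For Part (1), since $x_\adv \sim \normal(0, I_d)$ is independent of $x_\query$ and $\Gamma(t) x_\query \neq 0$ almost surely (as $\Gamma(t)$ is nonsingular and $x_\query$ is Gaussian), the scalar $x_\adv^\top \Gamma(t) x_\query$ is a nondegenerate real Gaussian conditional on $x_\query$, hence nonzero with probability one; we then set $y_\adv := c(t, P, y_\bad) / (x_\adv^\top \Gamma(t) x_\query)$. For Part (2), given any $y_\adv \neq 0$, the explicit choice $x_\adv := \lambda \, \Gamma(t) x_\query / \|\Gamma(t) x_\query\|^2$ with $\lambda = c(t, P, y_\bad)/y_\adv$ satisfies the required scalar equation and is well-defined on the almost-sure event $\{x_\query \neq 0\}$. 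The only nontrivial ingredient is verifying the closed-form trajectory and the nondegeneracy $\gamma(t) \neq 0$ and $\Gamma(t)$ nonsingular at every $t \in \R_+ \cup \{\infty\}$; both are inherited directly from the explicit ODE solutions of \citet{zhang2023trained}, with the $t = \infty$ case handled via convergence of the scalar ODEs for the nonzero blocks of $\WPV(t)$ and $\WKQ(t)$.
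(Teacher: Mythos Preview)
Your proposal is correct and follows essentially the same route as the paper's proof: both invoke the block-structure of the gradient-flow trajectory from \citet{zhang2023trained} to reduce the prediction on the perturbed prompt to the scalar equation $y_\adv\,(x_\adv^\top W(t) x_\query)=c$, then solve for $y_\adv$ (Part~1) or for $x_\adv$ along the direction $W(t)x_\query/\|W(t)x_\query\|^2$ (Part~2), using Gaussianity of $x_\query$ (and of $x_\adv$ in Part~1) to guarantee the relevant inner products are nonzero almost surely. The only cosmetic difference is that you factor the effective weight as $\gamma(t)\Gamma(t)$ and assert $\Gamma(t)$ is nonsingular, whereas the paper works with the single matrix $W(t)=w_{22}^{\PV}(t)\,W_{11}^{\KQ}(t)$ and only needs $W(t)\neq 0$ (which already forces $W(t)x_\query\neq 0$ a.s.); in the isotropic setting of the theorem these are equivalent since $W_{11}^{\KQ}(t)$ is a scalar multiple of the identity along the trajectory.
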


Theorem~\ref{thm:jailbreaking.yadv} demonstrates that throughout the training trajectory, by adding a single $(x_\adv, y_\adv)$ token an adversary can force the transformer to make any prediction the adversary would like.  Moreover, the $(x_\adv, y_\adv)$ pair can be chosen so that either $x_\adv$ is in-distribution (i.e., has the same distribution as the training data and other in-context examples) or $y_\adv$ is in-distribution.  We provide explicit formulas for each of these attacks in the Appendix (see~\eqref{eq:xadv.identity} and~\eqref{eq:yadv.identity}). 

At a high level, the non-robustness of the linear transformer is a consequence of the linear transformer implementing a learning algorithm -- one step gradient step -- that generalizes well but is inherently non-robust.
At a more mechanistic level, this non-robustness can be attributed to the learned in-context algorithm's inability to identify and remove outliers from the prompt.
This property is shared by many learning algorithms for regression problems: for instance, ordinary least squares, as an algorithm which is linear in the labels $y$, can also be shown to suffer similar problems as the linear transformer outlined in Theorem~\ref{thm:jailbreaking.yadv}.
While non-robustness of the transformers to hijacking attacks has been established in prior works~\citep{qiang2023hijacking,bailey2023image}, this is the first result that provides a mechanistic explanation as to \textit{why} transformers are vulnerable to hijacking attacks.

\changed{In Section~\ref{sec:attacks.transfer.from.theory.to.tfs}, we empirically verify that our attack can successfully force linear transformer to output the adversary's desired output $y_\bad$, and evaluate its transferability to GPT-2 style transformers as well.}

\subsection{GPT-2 Style Standard Transformers}
\label{subsec:gradient.attack}
In this section, we empirically investigate the robustness of GPT2-style standard transformers to gradient-based hijacking attacks,
and whether adversarial training (during pre-training or by fine-tuning) can improve the robustness of the transformers. 

\textit{Metrics}: To evaluate the impact of our adversarial attacks,
we use two metrics: \textit{ground truth error} (GTE), and \textit{targeted attack error} (TAE).
Ground-truth error measures mean-squared error (MSE) between the transformer's
prediction on the corrupted prompt $P_\adv$ 
and the ground-truth prediction, i.e., $y_{\text{clean}} = w^\top x_\query$.
Targeted attack error similarly measures mean-squared error (MSE) between the transformer's prediction on the corrupted prompt and $y_\bad$.
Let $\hat y$ be the transformer's prediction corresponding to $x_\query$, then:
\begin{align}
    \text{Ground Truth Error}  &= 
    \frac{1}{B}  \sum_{i=1}^B \left (\hat y_i - y_\text{clean}\right)^2,
    \quad 
    \text{Targeted Attack Error} = 
    \frac{1}{B} \sum_{i=1}^B \left (\hat y_i - y_\bad\right)^2.
\end{align}

\textit{Attack Methodology}: 
We choose $y_\bad$ according to the following formula, 
\begin{align}
    y_\bad = (1-\alpha) w_\tau^\top x_\query + \alpha w_{\perp}^\top x_{\query}
    \label{eq:y_bad.alpha}
\end{align}
Here $w_\tau$ is the underlying weight vector corresponding to the clean prompt $P$ and $w_{\perp} \perp w$, and $\alpha \in [0,1]$ is a parameter. \changed{We construct $w_{\perp}$ by first sampling a $w' \iid \normal(0,I_d)$ and applying Gram-Schmidt orthogonalization process to $w'$ to make it orthogonal to $w_\tau$. The resultant vector is then normalized to have L2 norm of $d$.} When $\alpha \to 0$, the target label $y_\bad$ is more similar to the in-distribution ground truth, while $\alpha \to 1$ represents a label which is more out-of-distribution.

To carry out a gradient-based attack, we randomly select a $k_\text{test}$ number of input examples---where $k_\text{test}$ is specified beforehand---and initialize their values to zero.
We then optimize these $k_\text{test}$ tokens by minimizing the targeted attack error, for target $y_\bad$ from~\eqref{eq:y_bad.alpha} for different values of $\alpha \in (0,1]$. 
Both during training and testing, we set the sequence length of
the transformer to be $40$.
See Appendix~\ref{appx.adversarial.attack.details} for further details on attack procedure.

Based on whether we are perturbing `features' or `labels' within the input prompt, we denote our attacks as either \xattack{} or \yattack{}.
When we simultaneously attack both the features and label, we call that \zattack.
Specifically, given a prompt $\testprompt = (\testx_1, \testy_1, \dots, \testx_M, \testy_M, \testx_\query)$: for \xattack{}, we replace $(x_1, y_1)$ with $(x_\adv, y_1)$, for \yattack{}, we replace $(x_1, y_1)$ with $(x_1, y_\adv)$, and for \zattack{}, we replace $(x_1, y_1)$ with $(x_\adv, y_\adv)$.

\textit{Results}: We find that gradient based adversarial attacks are generally successful.
Our main results appear in Figure~\ref{fig:adv.training.y}
under the label $k_\text{train}=0$, which show the targeted attack error for an $8$ layer transformer averaged over $1000$ prompts and 3 random initialization seeds when $\alpha=1$ from~\eqref{eq:y_bad.alpha}.  
We note that for \xattack{}, an adversary can achieve a very small
targeted attack error
with perturbing just a single token.
However, for \yattack{}, achieving low targeted attack
generally requires perturbing multiple y-tokens.
Note that this is in contrast with linear transformers,
for which we have previously shown that hijacking is possible with
perturbing just a single y-token.
Finally, \zattack{} behave in a qualitatively similar way to \xattack{} but are slightly more effective (this is most notable for $k_\text{test}=1$).  
Additional experiments investigating different choices of $\alpha$ appear in Appendix~\ref{appx.sec.adversarial.training}. 

\ifbool{iclrtemp}{\FigGradAttackYTraining}{}
\ifbool{iclrtemp}{\FigGradAttackXTraining}{}


\vspace{2em}

\subsection{Adversarial Training}
\label{sec.adversarial.training}
A common tactic to promote adversarial robustness of neural networks is to subject them to adversarial training --- i.e., train them on adversarially perturbed samples \citep{madry2017towards}.
In our setup, we create adversarially perturbed samples by carrying out the gradient-based attack outlined in Section~\ref{subsec:gradient.attack}
on the model undergoing training.  Namely, for the model $f_\theta^t$ at time $t$, for each standard prompt $P$, we take a target adversarial label $y_\bad$ and use the gradient-based attacks from Section~\ref{subsec:gradient.attack} to construct an adversarial prompt $P_\adv$.  

\ifbool{tmlrtemp}{\FigGradAttackYTraining}{}
\ifbool{tmlrtemp}{\FigGradAttackXTraining}{}

We consider two types of setups for adversarial training. In the first setup, we train the transformer model from scratch on
adversarially perturbed prompts. We call this \textit{adversarial pretraining}.
In the second setup, we first train the transformer model on standard (non-adversarial) prompts $P$ for $T_1$ number of steps; and then further train the transformer model for $T_2$ number of steps on adversarial prompts.
We call this setup \textit{adversarial fine-tuning}.
In our experiments, unless otherwise specified, we perform adversarial
pretraining for $5\cdot 10^5$ steps. For adversarial fine-tuning,
we perform $5\cdot 10^5$ steps of standard training and then
$10^5$ steps of adversarial training, i.e., $T_1=5\cdot 10^5$ and $T_2=10^5$.

\FigAccRobustnessTradeoff

The adversarial target value $y_\bad$ is constructed by sampling a weight vector $w\sim \normal(0,I)$ independent of the parameters $w_\tau$ which determine the labels for the task $\tau$ and setting $y_\bad=w^\top x_\query$ \changed{(i.e., $\alpha=1$)}.  
To keep training efficient, for each task we perform $5$ gradient steps to construct the adversarial prompt.
We denote the number of tokens attacked during training with $k_\text{train}$, and experiment with two values of $k_\text{train}=1$ and $k_\text{train}=3$. Unless stated otherwise, we use an $8$ layer transformer.

\textbf{Adversarial training improves robustness---even with only fine-tuning.}  In Figures~\ref{fig:adv.training.y}~and~\ref{fig:adv.training.x}, we show the robustness of transformers under $k$-token hijacking attacks when they are adversarially trained on either \xattack{} or \yattack.  We see that adversarial training against attacks of a fixed type (e.g. \xattack{} or \yattack) improves robustness to hijacking attacks of the same type, with robustness under \xattackkk{} seeing a particular improvement.  Notably, there is little difference between adversarial fine-tuning and pretraining, showing little benefit from the increased compute requirement of adversarial pretraining.

\textbf{Adversarial training against one attack model moderately improves robustness against another.}  Following adversarial training against \yattack, we see modest improvement in the robustness against \xattack{} and \zattack, while adversarial training against \xattack{} results in significant improvement against \zattack{} (as expected, given that 20 of the 21 dimensions \zattack{} uses is shared by \xattack) and modest improvement against \yattack.   We show in Fig.~\ref{appx.fig:adv.training.z.alpha.1} the results for adversarial training against \zattack.  

\ifbool{tmlrtemp}{\FigEffectofSeqLen}{}
 
\textbf{Adversarial training against $k$-token attacks can lead to robustness against $k'>k$ token attacks.}  In both Fig.~\ref{fig:adv.training.y} and~\ref{fig:adv.training.x} (as well as Fig.~\ref{appx.fig:adv.training.z.alpha.1}) we see that training against $k=3$ token attacks can lead to significant robustness against $k=7$ token attacks, especially in the case of models trained against \xattack{} and \zattack{}.  

\textbf{Minimal accuracy vs. robustness tradeoff.}  In many supervised learning problems, there is an inherent tradeoff between the robustness of a model and its (non-robust) accuracy~\citep{zhang2019theoreticallyprincipledtradeoffrobustness}.  In Fig.~\ref{fig.clean.performance} we compare the performance of models which undergo adversarial training vs. those which do not, and we find that while there is a moderate tradeoff when undergoing \yattack{} training, there is little tradeoff when undergoing \xattack{} and \zattack{} training.  

On the whole, given the challenging nature of robust regression problem \citep{diakonikolas2019recent}, the success of adversarial training is both surprising and remarkable, and hints at the ability of transformers to solve highly challenging non-convex optimization problems in context.

\subsection{Effect of Scaling Depth and Context Length}
\label{sec.effect.of.scaling.and.seqlen}
\ifbool{iclrtemp}{\FigEffectofSeqLen}{}
\ifbool{spencertemp}{\FigEffectofSeqLen}{}
Some recent works indicate that larger neural networks are naturally
more robust to adversarial attacks 
\citep{bartoldson2024adversarial,howe2024exploring}.
However, we did not observe any consistent improvement in
adversarial robustness of in-context learning in transformers in
our setup with scaling of the number of layers, as can be seen in Figure~\ref{appx.fig.effect.of.scale} in the appendix. 

We also studied the effect of sequence length, which scales the size of the in-
context training set.
We show in Figure~\ref{fig:results.seqlen} that for a fixed number of tokens attacked, longer context lengths can improve the robustness to hijacking attacks.
However, for a fixed \textit{proportion} of the context length attacked, the robustness to hijacking attacks is approximately the same across context lengths. 
\changed{Note that in this setup the transformers are evaluated at the same sequence length with which they were trained with (i.e., $M=N$).}
We explore this in more detail in the appendix (see Appendix~\ref{appx.sec.effect.of.seqlen}).

\section{Transferability of Hijacking Attacks}
Intuitively, if two predictive models are similar, they should be vulnerable to similar targeted adversarial attacks.
Thus, evaluating whether hijacking attacks transfer between two predictive models learned from distinct learning algorithms can help us evaluate to what extent the two models (and their underlying learning algorithms) could be considered similar.
Note that we are specifically interested in \textit{targeted} transfer;
i.e., we want adversarial samples generated by attacking a source model to predict $y_\bad$ to also cause a victim model to predict $y_\bad$.
Transfer of targeted attacks is generally less common than the transfer of untargeted attacks~\citep{liu2016delving}.

In this section, we first analyze the transferability of adversarial attacks between different transformers, and surprisingly find that adversarial attacks can fail to transfer between even transformers of the same architecture and trained with the same recipe---their sole difference being the random seeds used for training.  
Next, we investigate the transfer from linear transformers to GPT-2 style transformers,
and find that these attacks do not transfer.
Lastly, we investigate the transfer between ordinary least squares (OLS) and GPT-2 style transformers, and find that the hijacking attacks transfer poorly between them.

\begin{figure}[t]
    \centering
    \begin{subfigure}[b]{0.4\textwidth}
        \raggedleft
        \begin{subfigure}[b]{0.3\textwidth}
            \centering
            \includegraphics[width=\textwidth]{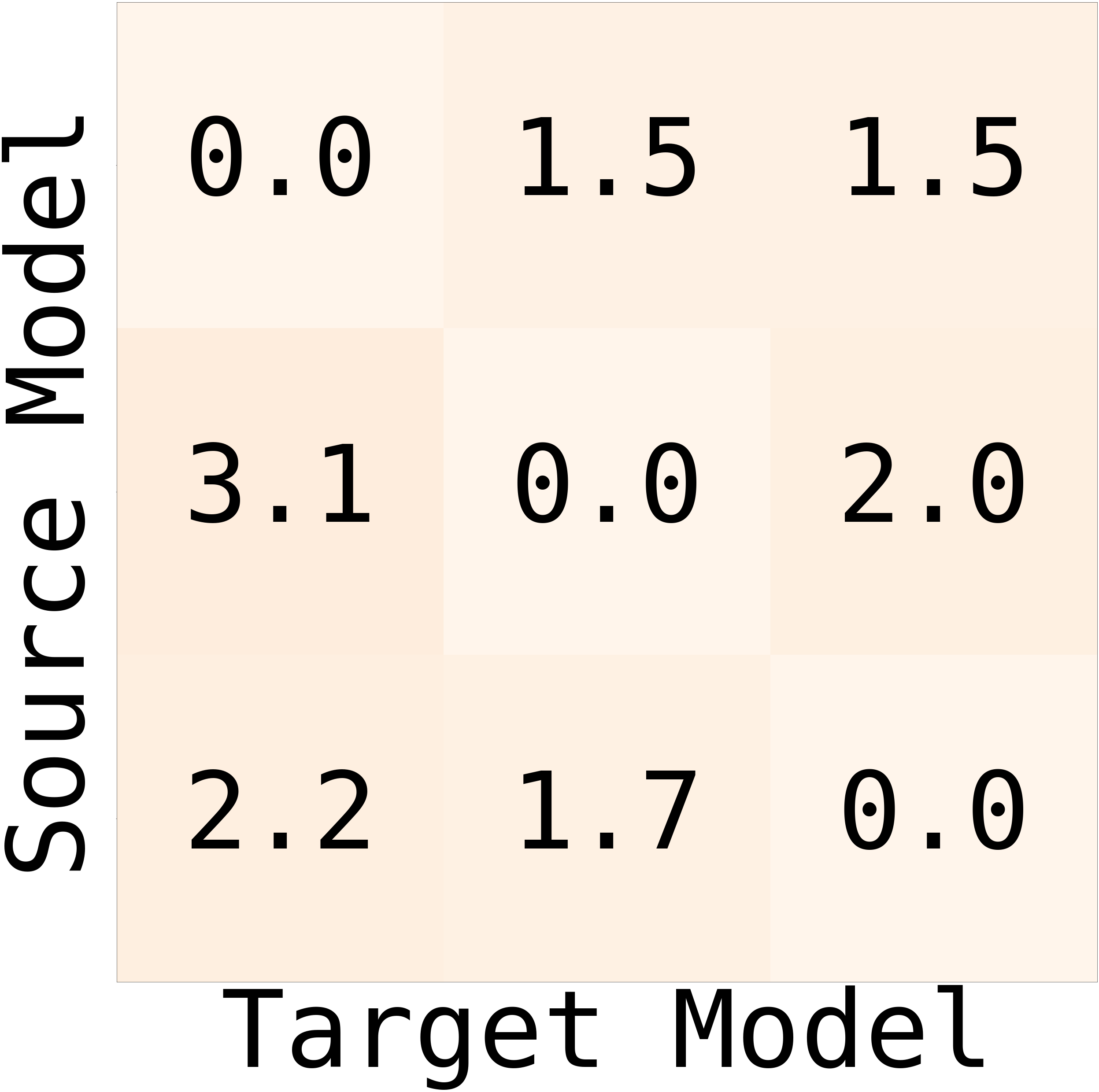}
            \caption{3 layers.}
        \end{subfigure}
        \hspace{1em}
        \begin{subfigure}[b]{0.3\textwidth}
            \centering
            \includegraphics[width=\textwidth]{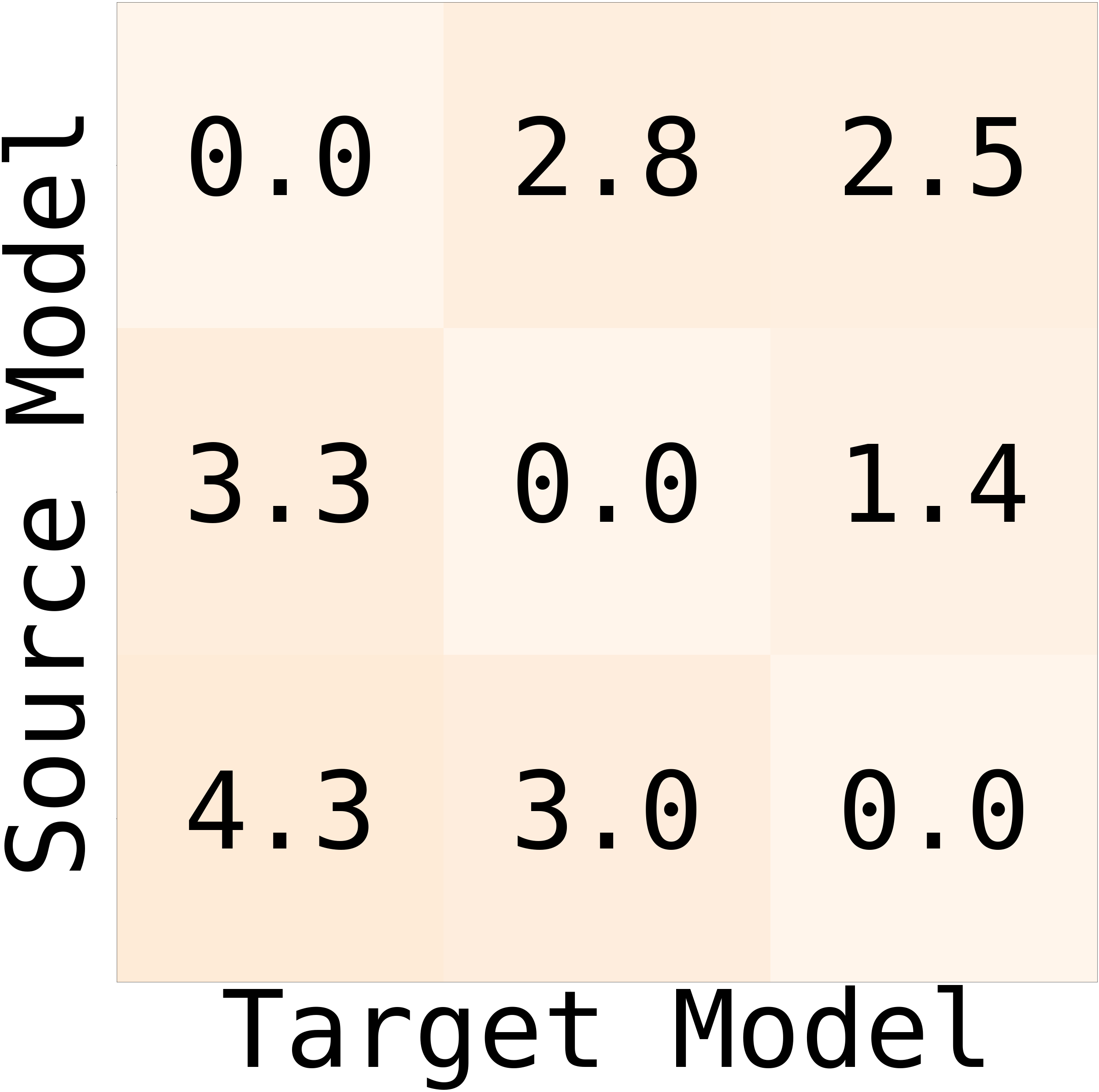}
            \caption{6 layers.}
        \end{subfigure}
        
        \vspace{0.3cm}
        
        \begin{subfigure}[b]{0.3\textwidth}
            \centering
            \includegraphics[width=\textwidth]{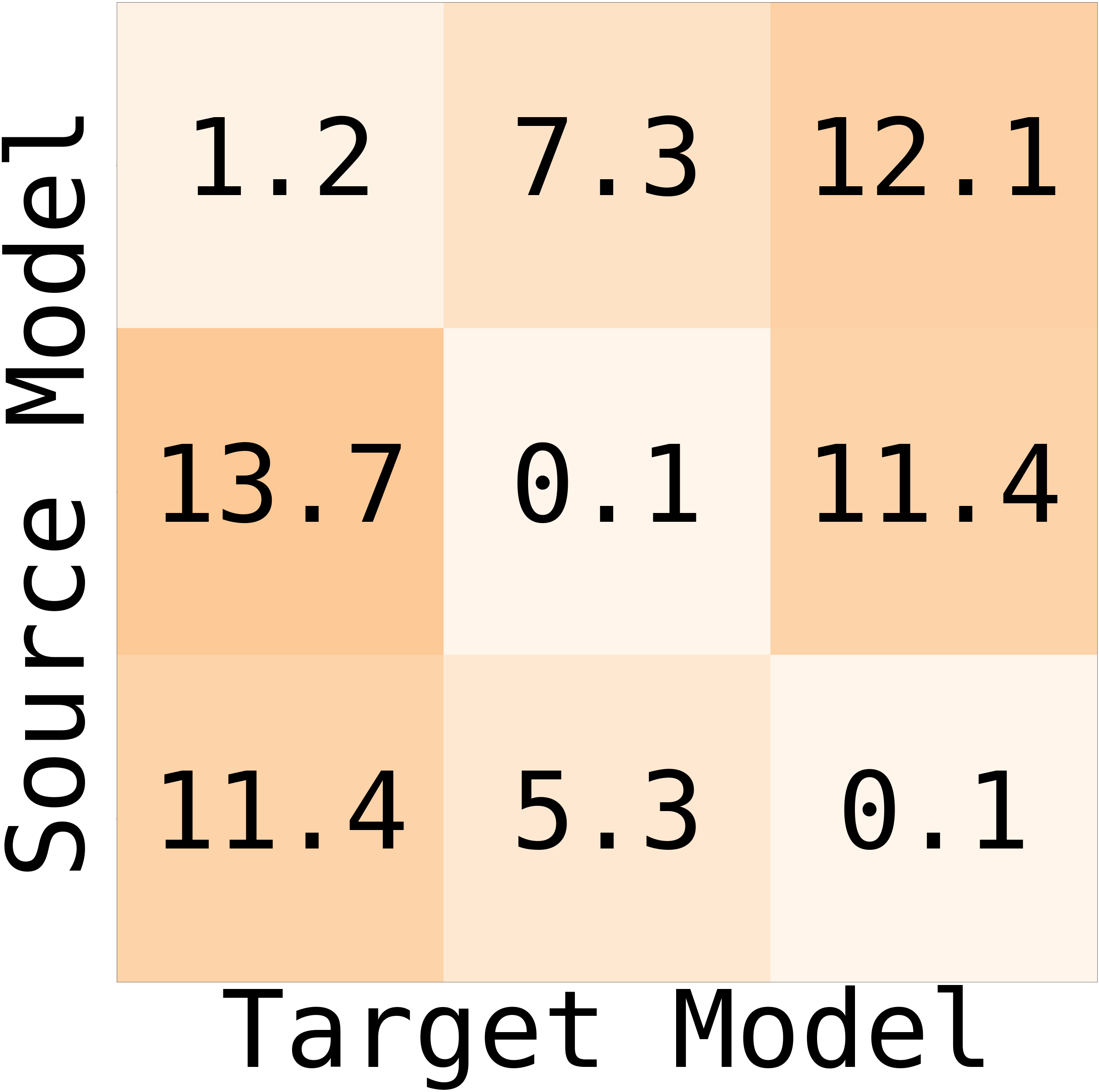}
            \caption{12 layers.}
        \end{subfigure}
       \hspace{1em} 
        \begin{subfigure}[b]{0.3\textwidth}
            \centering
            \includegraphics[width=\textwidth]{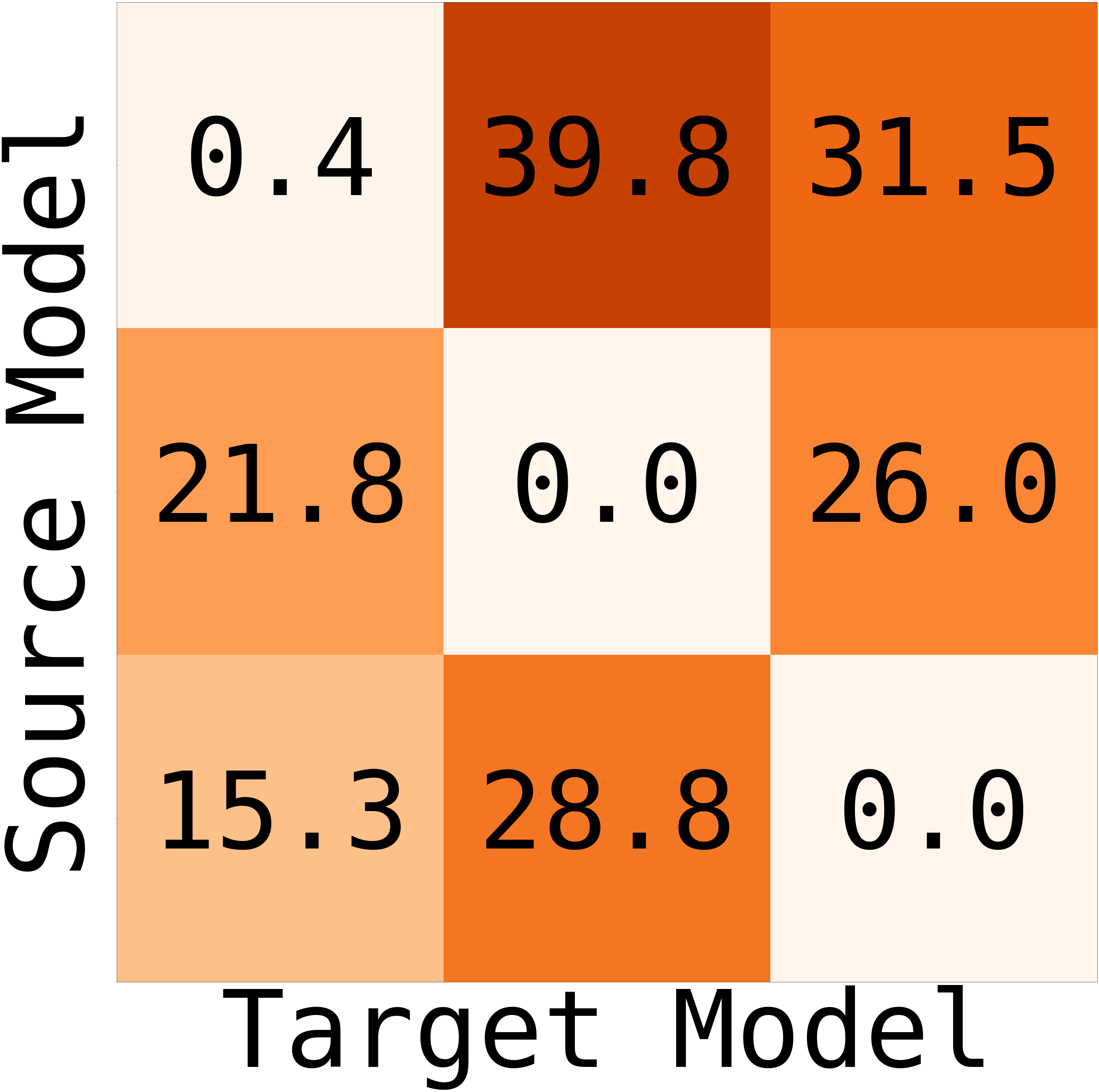}
            \caption{16 layers.}
        \end{subfigure}
    \end{subfigure}
    \begin{subfigure}[b]{0.58\textwidth}
        \hspace{3.5em}
        \includegraphics[width=0.48\textwidth]{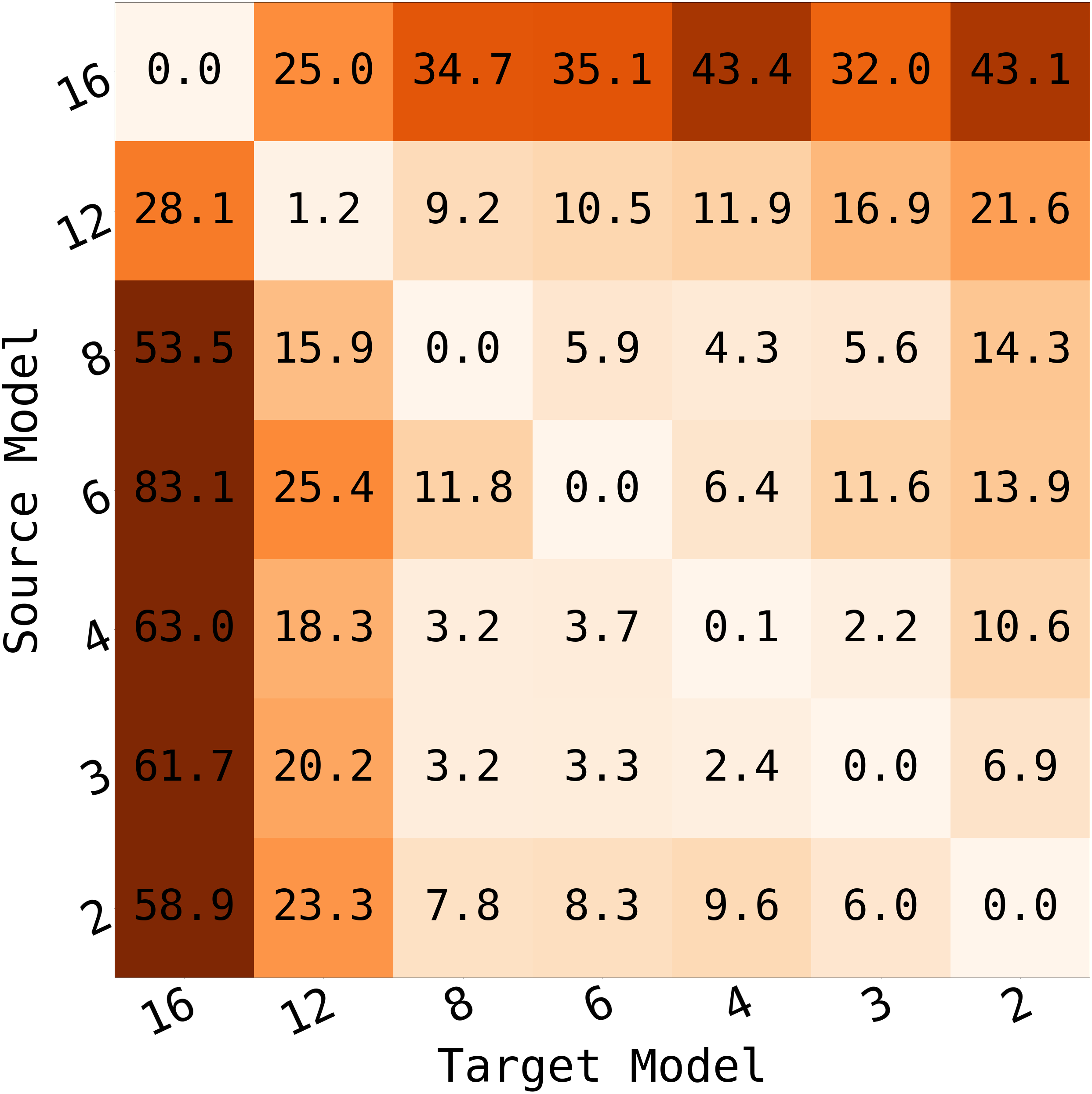}
        \caption{Source: $L$ layers, Target: $L'$ layers.}
    \end{subfigure}
    \caption{Targeted attack error when transferring an attack from a source model to a target models.  Attacks transfer better between smaller-scale models, but not to larger-scale models (right)---even across random seeds (left). Adversarial samples were generated using \xattack{} with $k=3$.}
    \label{fig.transfer.xadv.tf.to.tf.same.layers}
\end{figure}

\subsection{Transfer Between Different Transformers}
\label{sec.transfer.across.tf}
    Lastly, we look at the transferability of the adversarial attacks between different GPT-2 style transformers trained to solve linear regression in-context. 

For brevity, we restrict our focus to \xattack{} here; transferability of \yattack{} follows a similar pattern and is discussed in Appendix~\ref{appx.section.transfer}.
We first consider \textit{within-class transfer}, i.e., transfer from one transformer to another transformer with identical architecture but trained from a different random initialization.
We present these results in Fig.~\ref{fig.transfer.xadv.tf.to.tf.same.layers}(a-d),
averaged over $1000$ prompts.
We generate adversarial samples according to the methodology described in Sec.~\ref{subsec:gradient.attack}.
While sampling $y_\bad$, we use $\alpha=1$.
Diagonal entries correspond to self-transfer, i.e., TAE of the generated adversarial samples for the source model.
These results show that for transformers with smaller capacities ($3$ and $6$ layers) attacks transfer quite well, but transfers become progressively worse as the models become larger.  This suggests that higher-capacity transformers could implement different in-context learning algorithms when trained from different seeds.

We next consider \textit{across-class transfer}, i.e., transfer between transformers with different number of layers.  
We present these results in
Fig.~\ref{fig.transfer.xadv.tf.to.tf.same.layers}(e),
for one randomly chosen source and target model with given number of layers.
The results are also averaged over $1000$ prompts.
These results show a similar trend as within-class transfer: attacks from small-to-medium capacity models transfer better to other small-to-medium capacity models, while larger capacity models transfer poorly to all other capacity models.  

\ifbool{iclrtemp}{\vspace{-0.5em}}{}
\ifbool{tmlrtemp}{\vspace{-0.5em}}{}

\ifbool{iclrtemp}{\vspace{-0.5em}}{}
\ifbool{tmlrtemp}{\vspace{-0.5em}}{}

\ifbool{iclrtemp}{\FigTheoryAttack}{}
\ifbool{tmlrtemp}{\FigTheoryAttack}{}
\ifbool{spencertemp}{\FigGradAttackYTraining}{}
\ifbool{spencertemp}{\FigGradAttackXTraining}{}
\subsection{Transfer Between Linear Transformers and Standard Transformers}
\label{sec:attacks.transfer.from.theory.to.tfs}

We implement separate \xattack{} and \yattack{} based on formulas given in equations~\eqref{eq:xadv.identity}~and~~\eqref{eq:yadv.identity}.
Recall that these formulas presume that the network being attacked is a linear transformer which implements one step of gradient descent when trained to solve linear regression in context. 
As done previously, $y_\bad$ is chosen according to~\eqref{eq:y_bad.alpha}.

%
In Figure~\ref{fig:linear.transformer.attack.transfer}
we show the robustness of SGD-trained single-layer linear transformers and standard transformers of different depths as a function of $\alpha$ (cf.~\eqref{eq:y_bad.alpha}).
These results are averaged over $1000$ different samples and $3$ random initialization seeds for every model type (see Appendix~\ref{appx.training.details} for further details on training). 
We find that the gradient flow-derived attacks transfer to the SGD-trained single-layer linear transformers, as the targeted attack error is near zero for all values of $\alpha$.
Moreover, while standard (GPT2) transformers trained to solve linear regression in-context incur significant ground-truth error when the prompts are perturbed using the attacks from Theorem~\ref{thm:jailbreaking.yadv}, these attacks are not successful as \textit{targeted} attacks, since the targeted error is large. 
This behavior persists across GPT2 architectures of different depths.
As linear transformers implement one step of gradient descent, poor transferability of these attacks suggests that when trained on linear regression tasks, GPT2 architectures do not implement one step of gradient descent, as has been suggested in some prior works~\citep{von2022transformers}. 

\subsection{Transfer Between Transformers and Least Squares Solver}

\ifbool{iclrtemp}{\FigOLSTFTransferWrapped}{}
\ifbool{tmlrtemp}{\FigOLSTFTransfer}{}
\ifbool{spencertemp}{\FigOLSTFTransfer}{}

It has been argued that transformers trained to solve linear regression in-context implement ordinary least squares (OLS) \citep{garg2022can, akyurek2022learning}.
If so, adversarial (hijacking) attacks ought to transfer between transformers and OLS.
In Figure~\ref{fig:results.transfer.ols.tf}, we show mean squared error (MSE) between predictions of OLS and transformers on adversarial samples created by performing \xattackkk{} on OLS and transformers respectively.
It can be clearly observed that as the targeted prediction $y_\bad$ becomes more out-of-distribution ($\alpha\to 1$), MSE between predictions made by OLS and transformers also increases.
Furthermore, MSE is considerably larger when adversarial samples are created by attacking transformers.
This collectively indicates that the alignment between OLS and transformers is weaker out-of-distribution and that the transformers likely have additional adversarial vulnerabilities relative to OLS.
We provide additional results and expanded discussion in Appendix~\ref{appx.sec.transfer.ols.transfer}.

\section{Discussion}
This study makes several contributions to understanding how transformers perform in-context learning, particularly concerning adversarial robustness and the nature of the learned algorithms.

We conclusively establish that neither linear transformers nor standard GPT-2 style transformers inherently learn adversarially robust in-context learning algorithms. Specifically, within linear transformers, this vulnerability arises because they implement a standard, non-robust learning algorithm. 
This finding provides mechanistic insight into the adversarial non-robustness of transformers previously observed in other works~\citep{qiang2023hijacking,bailey2023image}.
On a positive note, we empirically demonstrate that adversarial training can improve robustness to hijacking attacks, and this improved robustness shows a degree of generalization. 
This is an encouraging result, especially as robust regression against adaptive adversaries is known to be challenging~\citep{diakonikolas2019recent}.

Furthermore, our analysis of the transferability of adversarial attacks reveals two novel phenomena. Firstly, hijacking attacks may fail to transfer effectively across larger transformer models trained with identical procedures but different random seeds, suggesting a non-universality of in-context learning mechanisms even within a fixed architecture and data distribution.
Secondly, adversarial samples sourced from traditional linear regression algorithms (like ordinary least squares and gradient descent) are largely ineffective against GPT-2 style transformers, and vice versa.
This provides indirect evidence that the in-context learning algorithms acquired by GPT-2 might differ qualitatively from these traditional learning algorithms.

\subsection{Limitations}
While the improvement in robustness through adversarial training is promising, the relative simplicity of the linear regression task used in our study means these results should be interpreted with caution.
Further investigations on more complex tasks are required to fully assess the capability of transformer architectures to learn adversarially robust learning algorithms.
Understanding and `reverse-engineering' the specific algorithms transformers implement in these robust settings could also provide novel insights for algorithm design.

An important limitation of our transferability analysis is its non-mechanistic nature.
While the differing behavior on adversarial (out-of-distribution) sample strongly hints at qualitative differences between learned algorithms, it leaves open questions about the precise mechanistic nature of these differences.
Specifically, it remains unclear whether these differences arise from different parameterizations of same underlying algorithm, or transformers learning variety of different types of learning algorithms, or due to some other reason.

\subsection{Future Work}
Prior work has shown that gradient descent on neural network parameters can exhibit an implicit bias towards solutions that generalize well but lack adversarial robustness~\citep{frei2023doubleedgedswordimplicitbias}.
Our work shows that a similar bias exists for linear transformers trained to solve linear regression in-context.
Future work could investigate the extent to which this bias affects the in-context learning algorithms discovered by transformers in general.

Our work, on the whole, highlights that developing a thorough understanding of in-context learning within transformers may be more challenging than previously thought, and emphasises the need of developing mechanistic understanding of these transformers.
Further mechanistic analyses of these transformers is needed to better understand what in-context learning algorithms these transformers learn.
Such reverse engineering efforts could also help inspire better algorithm design for various robust learning problems.

\bibliography{refs}
\bibliographystyle{tmlr}

\clearpage
\appendix
\section*{Appendix}

\appendix 
\section{Proofs}
\label{appx.proofs}
\paragraph{Notation:} We denote $[n] = \{1,2,...,n\}.$ 
We write the inner product of two matrices $A,B \in \mathbb{R}^{m \times n}$ as $\left\langle A,B\right\rangle = \operatorname{tr}(AB^\top).$ We use $0_{n}$ and $0_{m\times n}$ to denote the zero vector and zero matrix of size $n$ and $m\times n,$ respectively.  We denote the matrix operator norm and Frobenius norm as $\left\|\cdot\right\|_2$ and $\left\|\cdot\right\|_{F}$. We use $I_d$ to denote the $d$-dimensional identity matrix and sometimes we also use $I$ when the dimension is clear from the context.

\paragraph{Setup:} As described in the main text, we consider the setting of linear transformers trained on in-context examples of linear models, a setting considered in a number of prior theoretical works on transformers~\citep{von2022transformers,akyurek2022learning,zhang2023trained,ahn2023transformers,wu2023pretraining}. 
Let $x_i \in \R^d$ and $y_i \in \R$.  For a prompt $P =(x_1, y_1, \dots, x_N, y_N, x_{N+1})$, we say its \textit{length} is $N$.  For this prompt, we use an embedding which stacks $(x_i, y_i)^\top \in \R^{d+1}$ into the first $N$ columns with $(x_{N+1}, 0)^\top \in \R^{d+1}$ as the last column:
\begin{equation}
E = E(P) = \begin{pmatrix}
    x_1 & x_2 & \cdots & x_N & x_{N+1} \\
    y_1 & y_2 & \cdots & y_N & 0
\end{pmatrix} \in \R^{(d+1)\times (N+1)}.\label{eq:embedding.matrix.prompt}
\end{equation}
We consider a single-layer linear self-attention (LSA) model, which is a modified version of attention where we remove the softmax nonlinearity, merge the projection and value matrices into a single matrix $\WPV \in\R^{d+1, d+1}$, and merge the query and key matrices into a single matrix $\WKQ \in \R^{d+1, d+1}$.  Denote the set of parameters as $\params = (\WKQ, \WPV)$ and let
\begin{equation} \label{eq:lsa}
    f_\lsa(E;\params) = E + \WPV E \cdot \f{ E^\top \WKQ E}{N}. 
\end{equation}
The network's prediction for the query example $x_{N+1}$ is the bottom-right entry of matrix output by $f_\lsa$, 
\[ \widehat y_{\query}(E;\params) = [f_{\lsa}(E; \params)]_{(d+1), (N+1)}.\] 
We may occasionally use an abuse of notation by writing $\hat y_\query(E;\theta)$ as $\hat y_\query(P)$ or $\hat y_\query$ with the understanding that the transformer always forms predictions by embedding the prompt into the matrix $E$ and always depends upon the parameters $\theta$.

We assume training prompts are sampled as follows. Let $\Lambda$ be a positive definite covariance matrix.  Each training prompt, indexed by $\tau \in \N$, takes the form of $P_\tau = (x_{\tau, 1}, y_{\tau,1}, \dots, x_{\tau, N}, y_{\tau, N}), x_{\tau, N+1})$, where task weights $w_\tau \iid \normal (0, I_d)$, inputs $x_{\tau, i} \iid \normal(0, \Lambda)$, and labels $y_{\tau,i} = \sip{w_\tau}{x_i}$.  
The empirical risk over $B$ independent prompts is defined as
\begin{equation}\label{eqn:emp_loss_maintext}
    \widehat L(\params) = \f{1}{2B} \sum_{\tau=1}^B \bigg(\widehat y_{\tau, N+1}(E_\tau; \theta)- \sip{w_\tau}{x_{\tau, N+1}}\bigg)^2.
\end{equation}
We consider the behavior of gradient flow-trained networks over the population loss in the infinite task limit $B\to \infty$:
\begin{equation}\label{eqn:population_loss}
    L(\params) = \lim_{B\to \infty} \widehat L(\params) = \f 12 \E_{w_\tau\sim \normal(0,I_d),\, x_{\tau,i} x_{\tau, N+1}\iid \normal(0,\Lambda)} \l[ (\widehat y_{\tau, N+1}(E_\tau; \theta) - \sip{w_\tau}{x_{\tau, N+1}})^2 \r]
\end{equation}
Note that we consider the infinite task limit, but each task has a finite set of $N$ i.i.d. $(x_i, y_i)$ pairs.  
We consider the setting where $f_\lsa$ is trained by gradient flow on the population loss above.  
Gradient flow captures the behavior of gradient descent with infinitesimal step size and has dynamics $  \f{\mathrm{d}}{\mathrm dt} \params = - \nabla L(\params)$. 

We repeat Theorem~\ref{thm:jailbreaking.yadv} from the main section for convenience.

\linearhijack*

\begin{proof}
Let us partition the $\WKQ$ and $\WPV$ matrices as,
\[ \WKQ = \begin{pmatrix} W_{11}^{KQ} & w_{12}^{KQ}\\ (w_{21}^{KQ})^\top & w_{22}^{KQ} \end{pmatrix},\quad  \WPV = \begin{pmatrix} W_{11}^{PV} & w_{12}^{PV}\\ (w_{21}^{PV})^\top & w_{22}^{PV} \end{pmatrix},\]
where $W_{11}^{KQ}, W_{11}^{PV} \in \R^{d\times d}$, $w_{12}^{KQ}, w_{12}^{PV}, w_{21}^{KQ}, w_{21}^{PV}\in \R^d$, and $w_{22}^{KQ}, w_{22}^{PV}\in \R$.  Then by definition, for an embedding matrix $E$ with $M+1$ columns,
\begin{equation}\label{eqn:simple_expression_prediction}
    \hat y_{\query}(E;\theta) =    \begin{pmatrix}
         (w_{21}^{PV})^\top & w_{22}^{PV}\end{pmatrix}
     \cdot \left(\frac{EE^\top}{M}\right) 
     \begin{pmatrix}
   \WKQ_{11} \\ (w_{21}^{KQ})^\top \end{pmatrix}
  x_\query.
\end{equation}
Due to the lack of positional encoders, the prediction is the same when replacing $(x_k, y_k)$ with $(x_\adv, y_\adv)$ for any $k$, so for notational simplicity of the proof we will consider the case of replacing $(x_1, y_1)$ with $(x_\adv, y_\adv)$.  So, let us consider the embedding corresponding to $(x_\adv, y_\adv, x_2, y_2, \dots, x_M, y_M, x_\query)$, so that
\[ EE^\top = \frac 1 {M}  \begin{pmatrix} 
 x_\adv x_\adv^\top + \summm i 2M x_i x_i^\top + x_{\query} x_{\query}^\top & y_\adv x_\adv + \summm i2M y_i x_i \\
 y_\adv x_\adv^\top + \summm i 2M y_i x_i^\top & y_\adv^2 + \summm i2 M y_i^2 \end{pmatrix}.\]
Expanding, we have
\begin{align*}
\hat y_\query(E;\theta) &= \f {(w_{21}^\PV)^\top}{M} \l( x_\adv x_\adv^\top + \summm i2M x_i x_i^\top + x_{\query} x_{\query}^\top \r) W_{11}^\KQ x_{\query} \\
&\qquad + \f{ (w_{21}^\PV)^\top}{M} \l( y_\adv x_\adv + \summm i2M y_i x_i \r) (w_{21}^\KQ)^\top x_{\query} \\
&\qquad + \f{ w_{22}^\PV}{M} \l( y_\adv x_\adv^\top + \summm i2M y_i x_i^\top \r) W_{11}^\KQ x_{\query} \\
&\qquad + \f {w_{22}^\PV}{M} \l( y_\adv^2 + \summm i2M y_i^2 \r)( w_{21}^\KQ)^\top x_{\query}.
\end{align*}
When training by gradient flow over the population using the initialization of~\cite[Assumption 3.3]{zhang2023trained}, by Lemmas C.1, C.5, and C.6 of~\citep{zhang2023trained} we know that for all times $t\in \R_+\cup \{\infty\}$, it holds that $w_{21}^\PV(t) = w_{12}^\PV(t) = w_{21}^\KQ(t) = 0$ and $\WKQ_{11}(t)\neq 0$ and $w_{22}^{PV}(t)\neq 0$.  In particular, the prediction formula above simplifies to
\begin{align*}\numberthis \label{eq:prediction.formula}
\hat y_\query (E;\theta(t)) &= \f{w_{22}^{PV}(t)}{M} \l( y_\adv x_\adv^\top + \summm i2M y_i x_i^\top \r) \WKQ_{11}(t) x_{\query}.
\end{align*}
For notational simplicity let us denote $W(t) = w_{22}^{\PV}(t) \WKQ_{11}(t)$, so that 
\[ \hat y(E;\theta(t)) = \f 1{M} \l( y_{\adv} x_{\adv}^\top + \summm i2M y_i x_i^\top \r) W(t) x_{\query}.\]
The goal is to take $y_\bad\in \R$ and find $(x_\adv, y_\adv)$ such that $\hat y(E;\theta(t))=y_\bad$.  Rewriting the above equation we see that this is equivalent to finding $(x_\adv, y_\adv)$ such that

\begin{equation} \label{eq:xadv.yadv.preliminary}
y_\adv x_\adv^\top W(t) x_{\query} = M \l( y_\bad - \f 1 {M} \summm i2M y_i x_i^\top W(t) x_{\query} \r).
\end{equation}
From here we see that if $W(t) x_{\query} \neq 0$ then by setting
\begin{equation} \label{eq:xadv.yadv.identity}
x_\adv y_\adv = \f{ MW(t) x_{\query}}{\snorm{W(t) x_{\query}}^2} \cdot \l( y_\bad - \f 1 {M} \summm i2M y_i x_i^\top W(t) x_{\query}\r),
\end{equation}
we guarantee that $\hat y(E;\theta(t))=y_\bad$.  By~\citet[Lemmas A.3 and A.4]{zhang2023trained}, we know $W(t)\neq 0$ for all $t$.  Since $W(t)\neq 0$ and $x_{\query}\sim \normal(0, I)$ is independent of $W(t)$, we know $W(t) x_{\query} \neq 0$ a.s.   Therefore the identity~\eqref{eq:xadv.yadv.identity} suffices for constructing adversarial tokens, and indeed for any choice of $y_{\adv}\neq 0$ this directly allows for constructing $x$-based adversarial tokens,
\begin{equation} \label{eq:xadv.identity}
x_\adv = \f{ MW(t) x_{\query}}{y_\adv \snorm{W(t) x_{\query}}^2} \cdot \l( y_\bad - \f 1 {M} \summm i2M y_i x_i^\top W(t) x_{\query}\r),
\end{equation}

On the other hand, if we want to construct an adversarial token by solely changing the label $y$, we can return to~\eqref{eq:xadv.yadv.preliminary}.  Clearly, as long as $x_{\adv}^\top W(t) x_{\query}\neq 0$, then dividing both sides by this quantity allows for solving $y_\adv$.  If we assume $x_\adv$ is another in-distribution independent $N(0,I)$ sample, then since $W(t)\neq 0$ guarantees that $x_{\adv}^\top W(t) x_{\query} \neq 0$ and so we can construct 
\begin{equation} \label{eq:yadv.identity}
y_\adv = \frac{ M \l( y_\bad - \f 1 {M} \summm i2M y_i x_i^\top W(t) x_{\query} \r)}{x_\adv^\top W(t) x_{\query} }.
\end{equation}

\end{proof}

\clearpage
\section{Additional Results}
In this section, we provide additional results that expand on some of the results given in the main text.
\subsection{Effect of Scale}
\label{appx.sec.effect.of.scale}
We conducted experiments with transformers with different number of layers to evaluate whether scale has any effect on adversarial robustness of the transformer or not.  We observed no meaningful improvement in the adversarial robustness
of the transformers with increase in the number of layers.
This is shown in the figure below for $y_\bad$ chosen
with $\alpha=1$. See Section~\ref{sec.effect.of.scaling.and.seqlen} in the main text for relevant discussion.

\begin{figure}[!h]
    \centering
    \begin{subfigure}[b]{0.45\textwidth}
    \centering
        \includegraphics[width=0.6\textwidth]{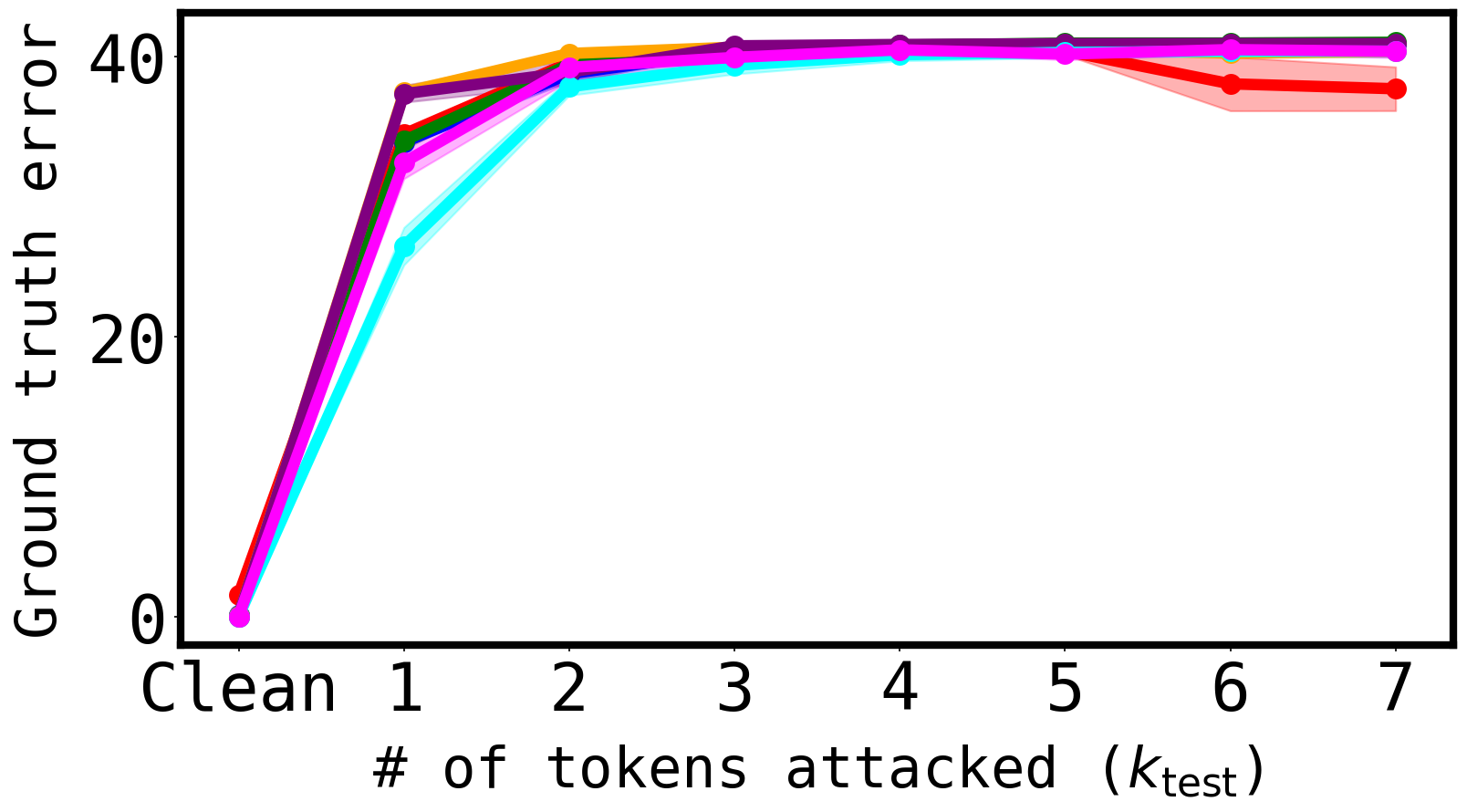}
        \caption{\xattackkk.}
    \end{subfigure}
    \begin{subfigure}[b]{0.45\textwidth}
    \centering
        \includegraphics[width=0.6\textwidth]{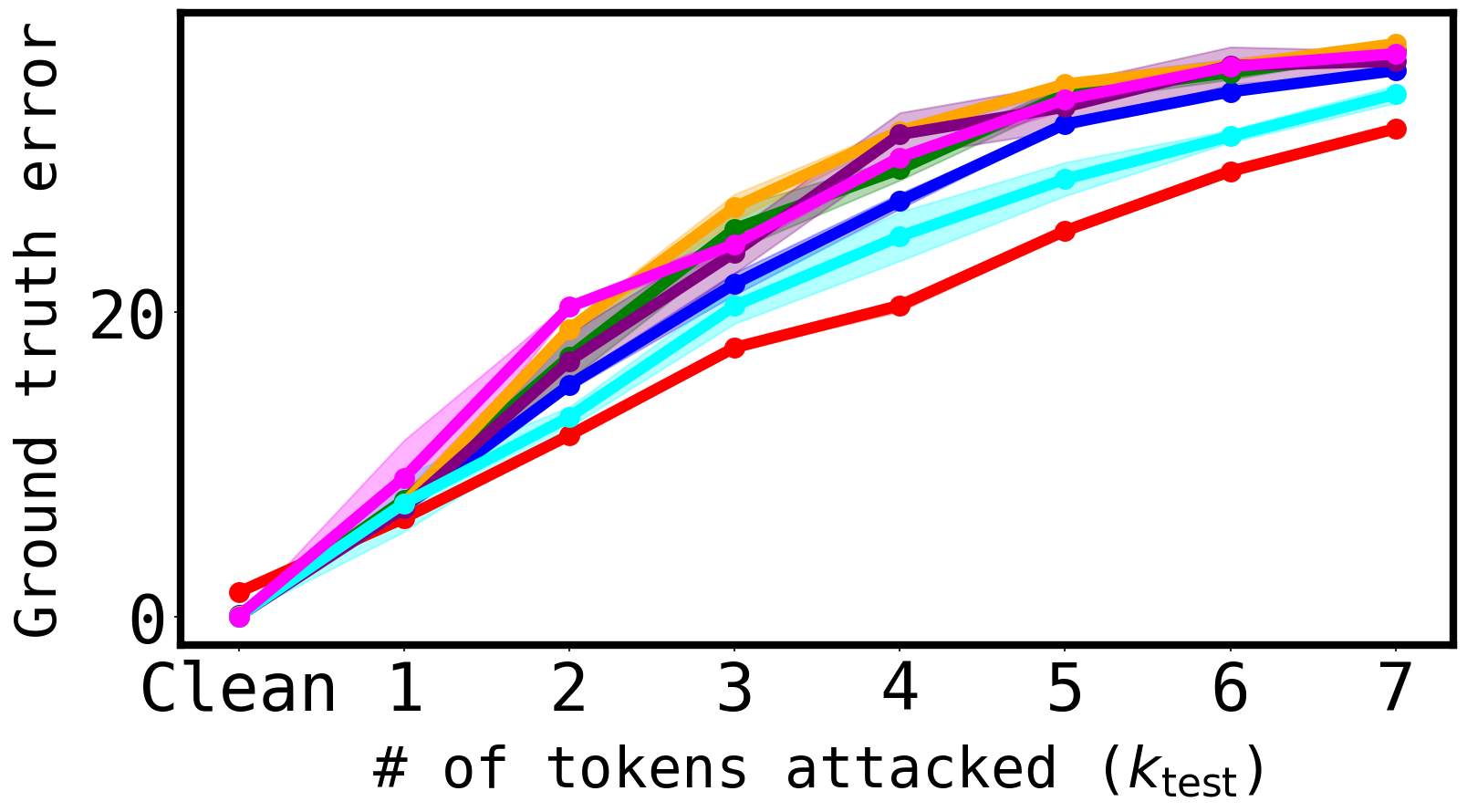}
        \caption{\yattackkk.}
    \end{subfigure}
    \begin{subfigure}[b]{0.45\textwidth}
    \centering
        \includegraphics[width=0.6\textwidth]{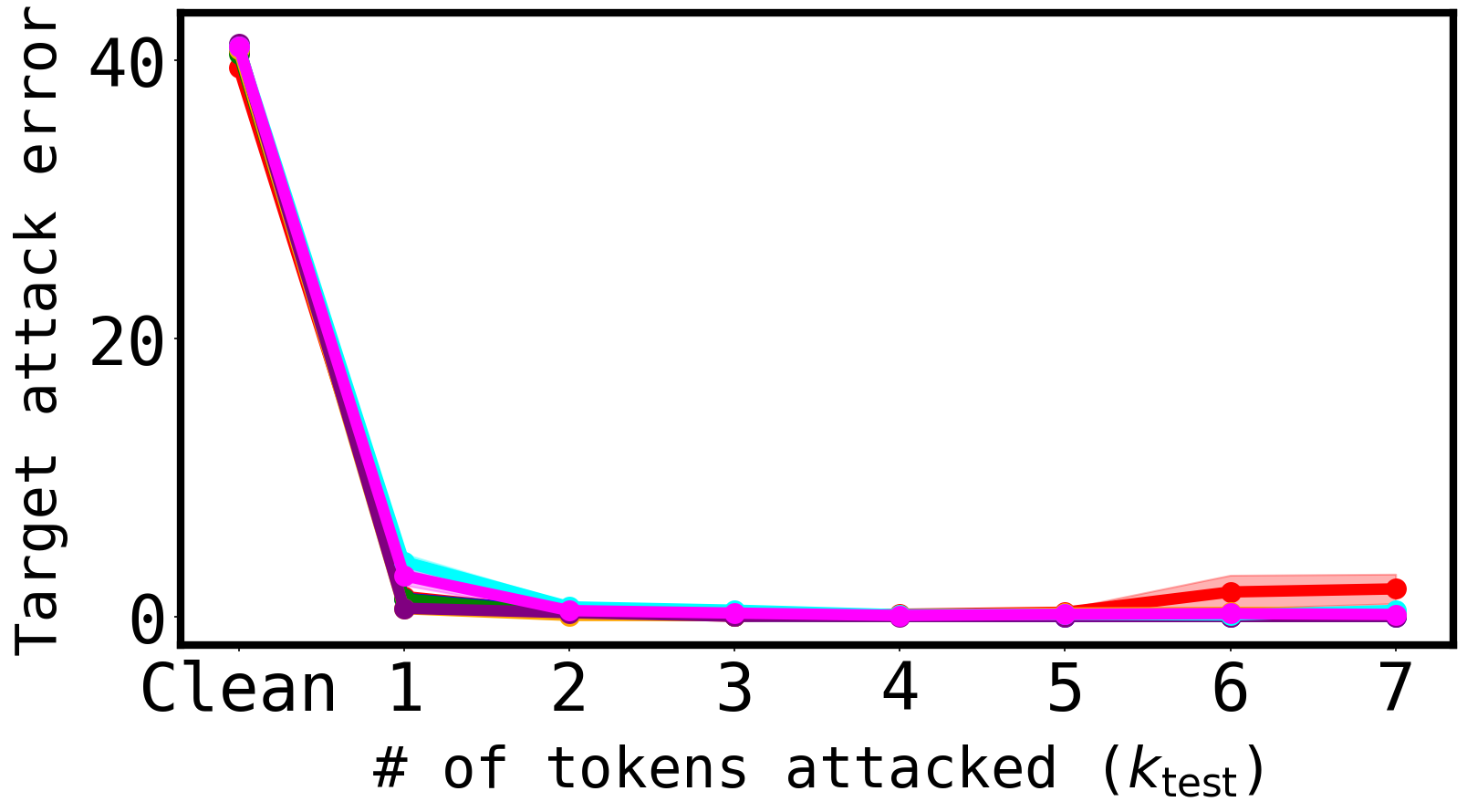}
        \caption{\xattackkk.}
    \end{subfigure}
    \begin{subfigure}[b]{0.45\textwidth}
    \centering
        \includegraphics[width=0.6\textwidth]{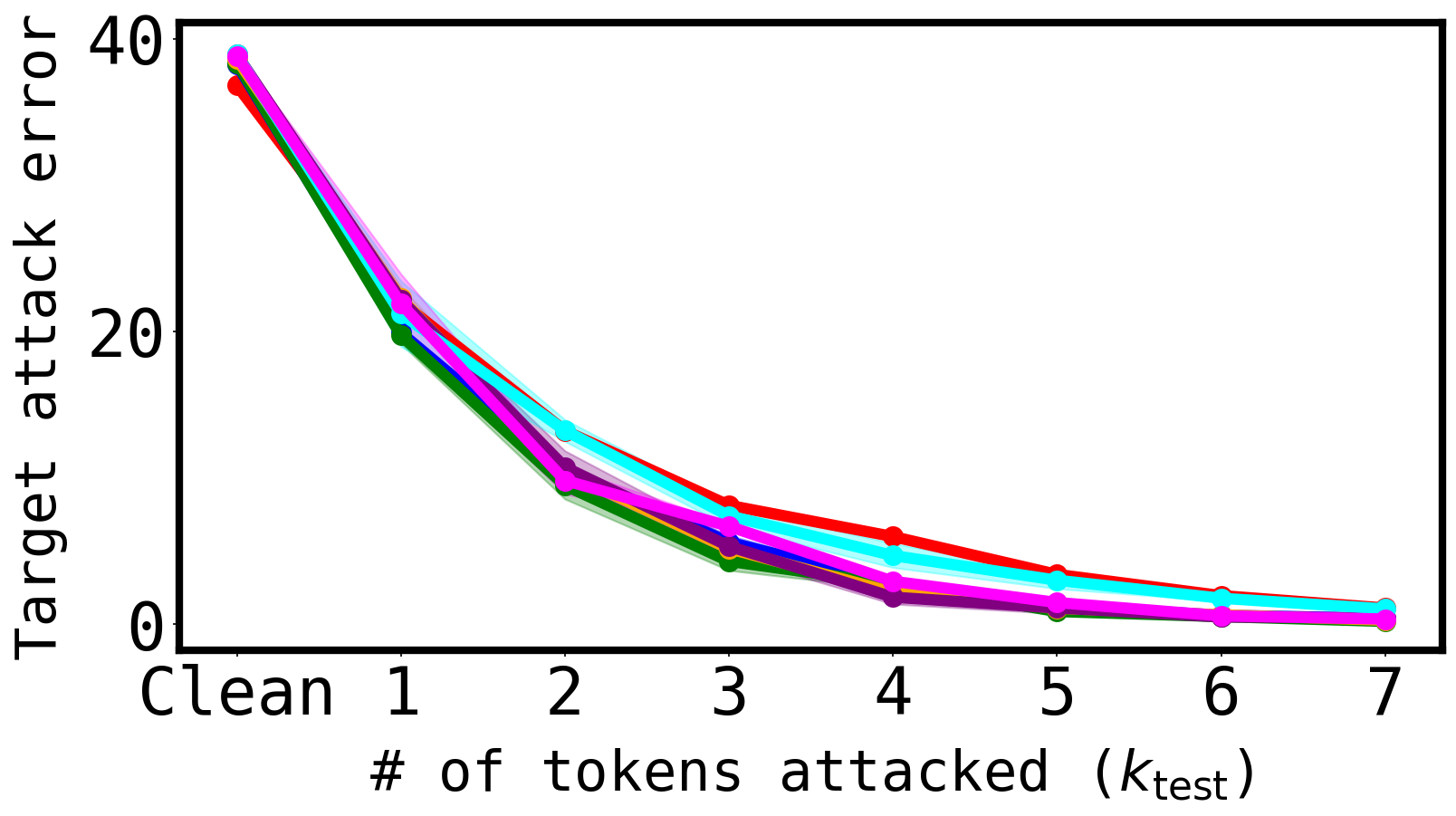}
        \caption{\yattackkk.}
    \end{subfigure}
    \begin{subfigure}[b]{0.6\textwidth}
        \includegraphics[width=\textwidth]{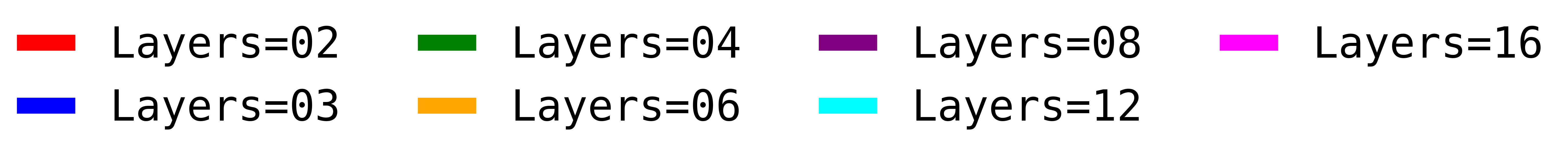}
    \end{subfigure}
    \caption{Increasing the scale of the transformer does not improve the adversarial robustness of in-context learning in transformers.}
    \label{appx.fig.effect.of.scale}
\end{figure}

\ifbool{spencertemp}{\clearpage}{}
\subsection{Effect of Sequence Length}
\label{appx.sec.effect.of.seqlen}
We show here the complete set of results, for both \xattack{} and \yattack{},  on how an increase in sequence length positively impacts adversarial robustness if adversary can manipulate the same number of tokens (for all sequence lengths), but if the adversary can manipulate the same proportion of tokens (which would amount to different number of tokens for different sequence lengths), increase in sequence length has a negligble effect on the adversarial robustness. See Section~\ref{sec.effect.of.scaling.and.seqlen} in the main text for relevant discussion.
\begin{figure}[!h]
    \centering
    \begin{subfigure}[b]{0.45\textwidth}
    \centering
        \includegraphics[width=0.6\textwidth]{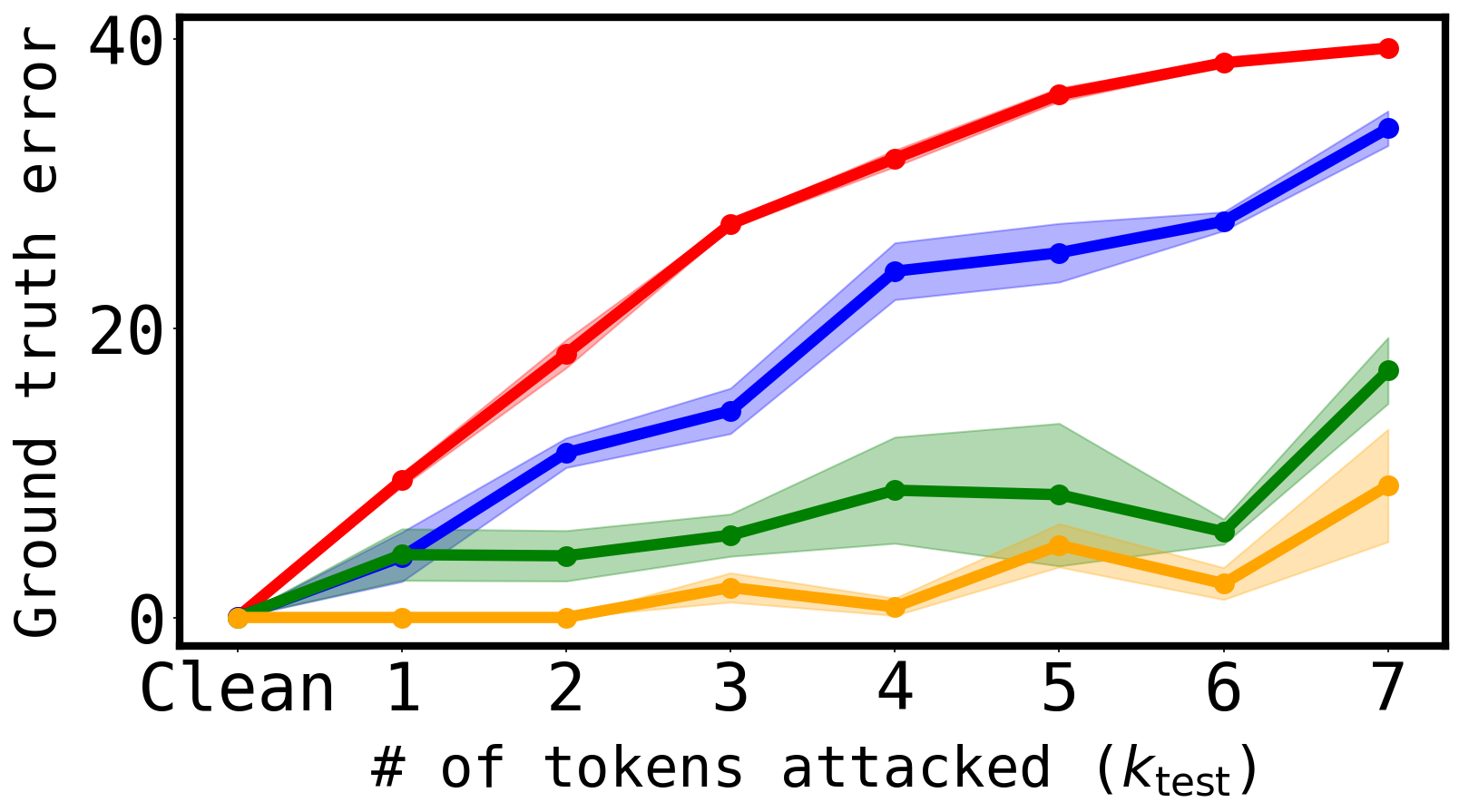}
        \caption{\xattackkk}
    \end{subfigure}
    \begin{subfigure}[b]{0.45\textwidth}
    \centering
        \includegraphics[width=0.6\textwidth]{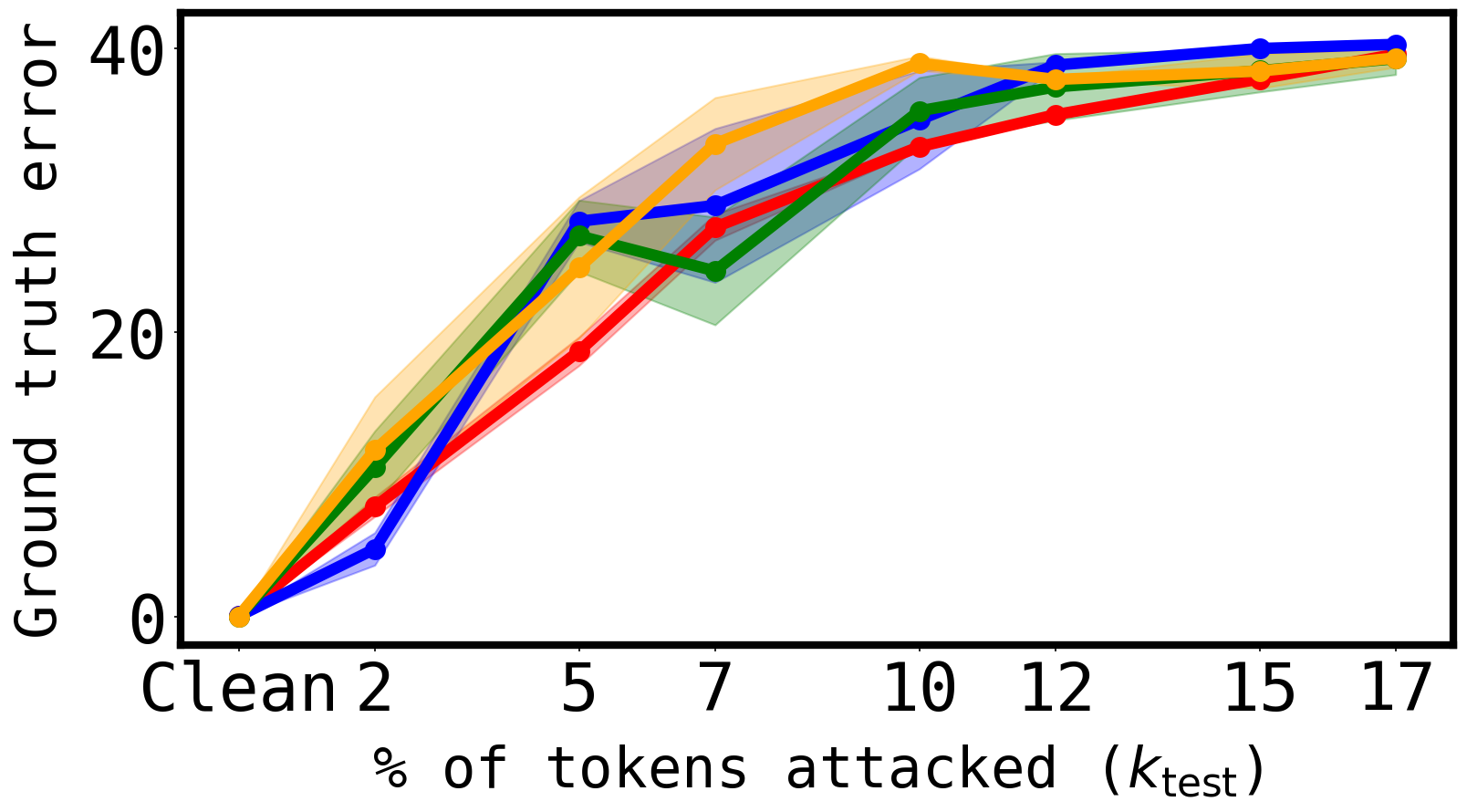}
        \caption{\xattackkk}
    \end{subfigure}
    \begin{subfigure}[b]{0.45\textwidth}
    \centering
        \includegraphics[width=0.6\textwidth]{results/effect_ofseqlen/alpha_1.0/tae_attack_y.png}
        \caption{\yattackkk.}
    \end{subfigure}
    \begin{subfigure}[b]{0.45\textwidth}
    \centering
        \includegraphics[width=0.6\textwidth]{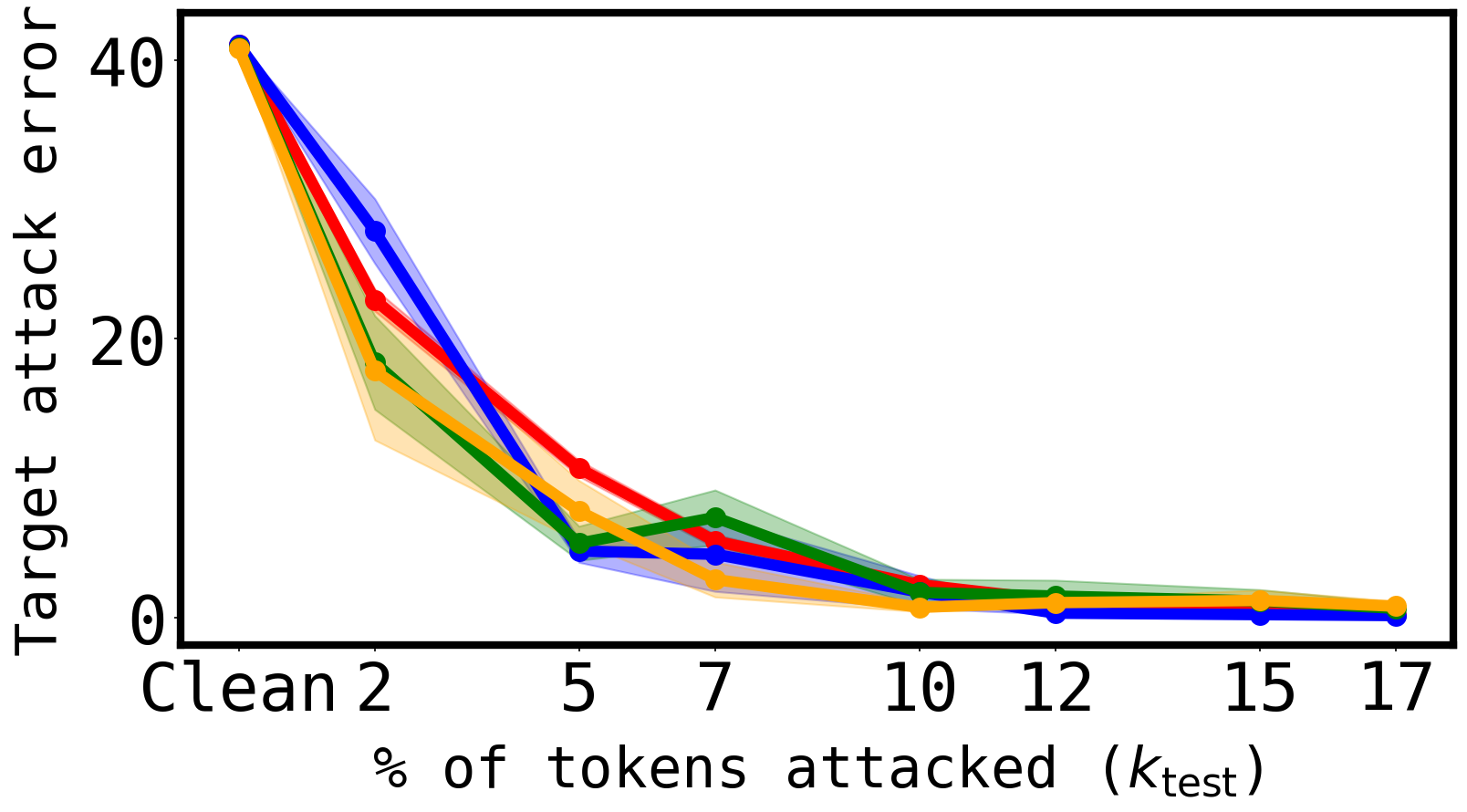}
        \caption{\yattackkk.}
    \end{subfigure}
    \begin{subfigure}[b]{0.98\textwidth}
    \centering
        \fbox{\includegraphics[width=0.9\textwidth]{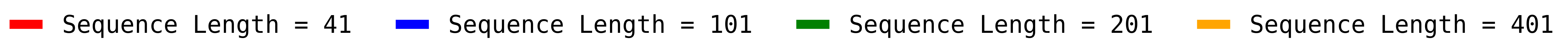}}
    \end{subfigure}
    \caption{Effect of increase in sequence length.}
    \label{appx.fig.effect.of.seqlen.y.attack}
\end{figure}

\ifbool{iclrtemp}{\clearpage}{}
\ifbool{tmlrtemp}{\FloatBarrier}{}
\ifbool{spencertemp}{\FloatBarrier}{}
\subsection{Gradient-Based Adversarial Attacks \& Adversarial Training}
\label{appx.sec.adversarial.training}
In the main text (in Sections~\ref{subsec:gradient.attack}~and~\ref{sec.adversarial.training}), we gave results for attacks performed with $y_\bad$ chosen by setting $\alpha=1$ in equation~\ref{eq:y_bad.alpha}.
Here, we present results for $\alpha=0.5$ and $\alpha=0.1$.
These results are qualitatively similar to the case of $\alpha=1$ and are presented only for completeness.
Furthermore, in the main text, we showed only target attack error for our attacks due to space constraints, while here we present results for both ground truth error and target attack error.

\ifbool{spencertemp}{\clearpage}{}

\subsubsection{$\alpha=1.0$}
\begin{figure}[h]
    \centering
    \begin{subfigure}[b]{0.32\textwidth}
        \includegraphics[width=\textwidth]{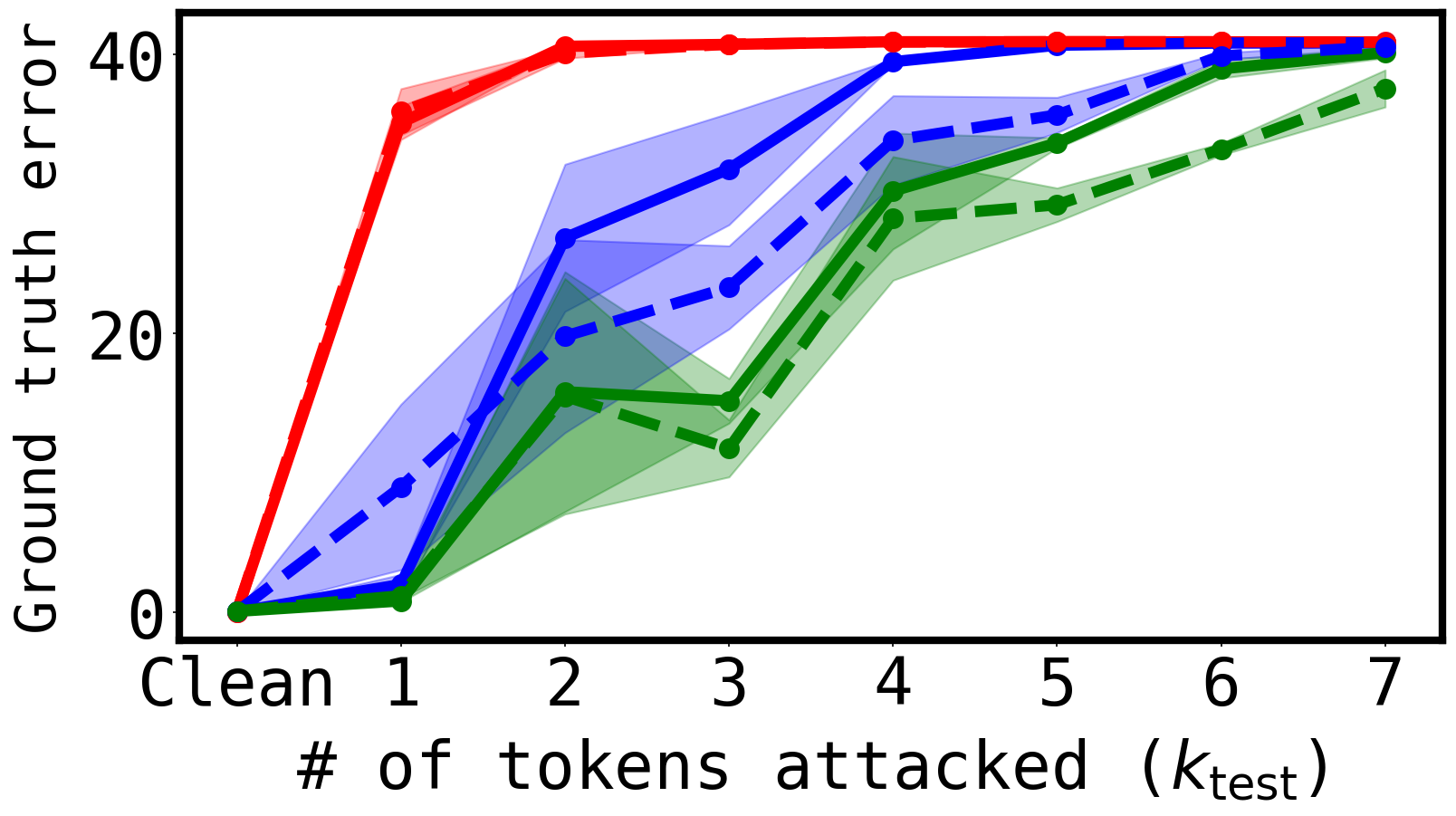}
        \caption{\xattackkk.}
    \end{subfigure}
    \begin{subfigure}[b]{0.32\textwidth}
        \includegraphics[width=\textwidth]{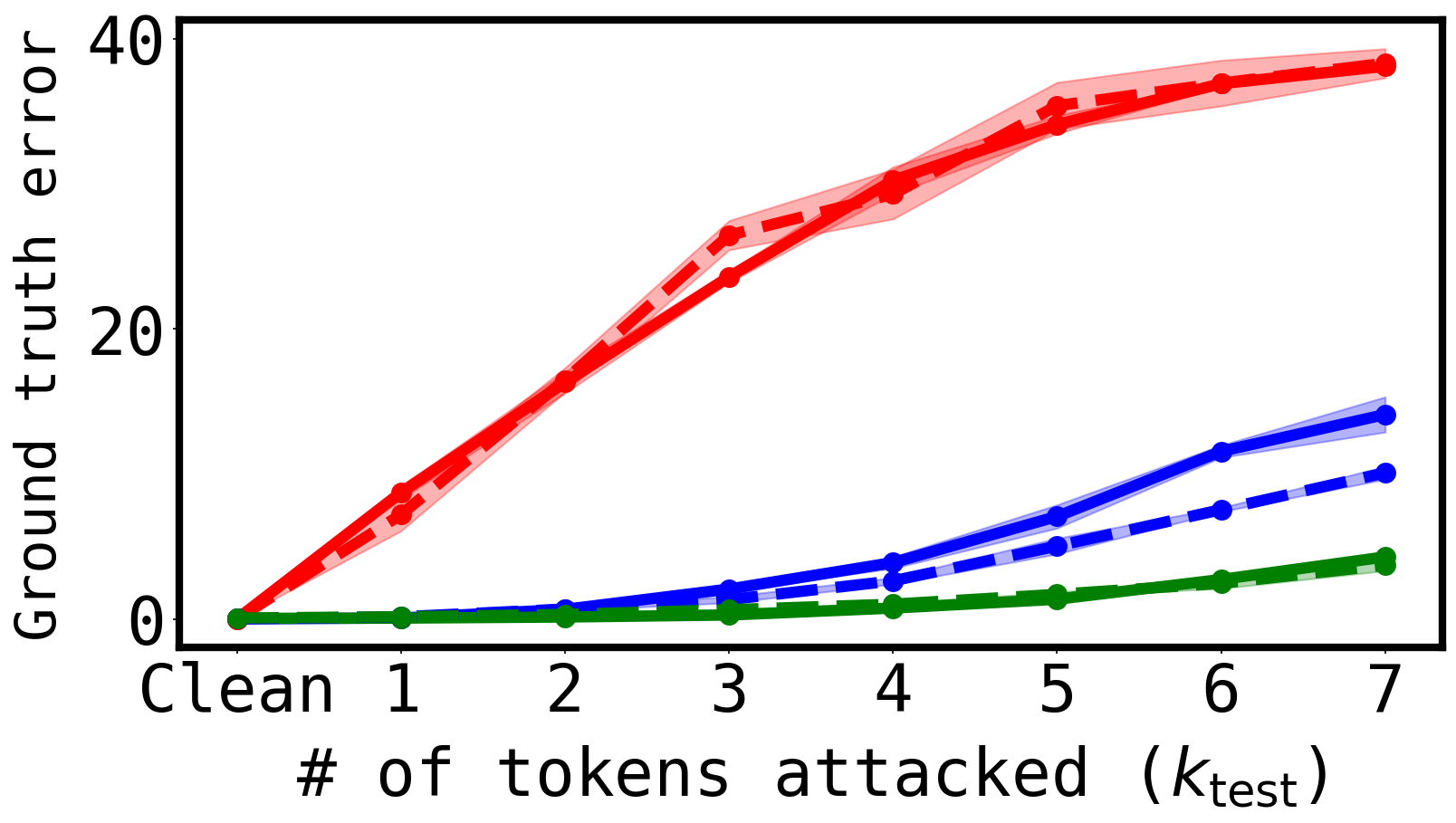}
        \caption{\yattackkk.}
    \end{subfigure}
    \begin{subfigure}[b]{0.32\textwidth}
        \includegraphics[width=\textwidth]{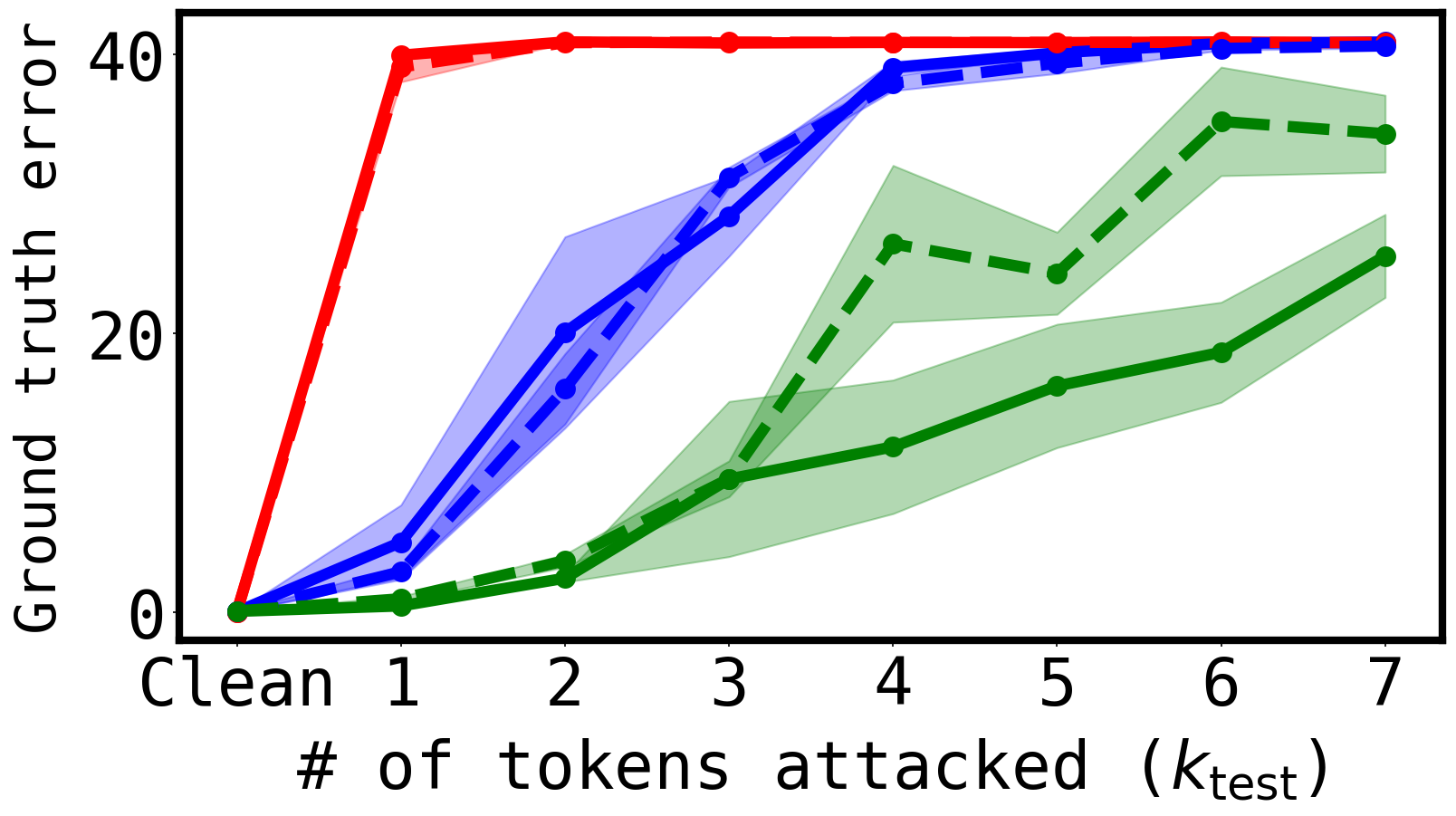}
        \caption{\zattackkk.}
    \end{subfigure}
    \begin{subfigure}[b]{0.32\textwidth}
        \includegraphics[width=\textwidth]{results/adv_training/alpha_1.0/tae_training_y_attack_x.png}
        \caption{\xattackkk.}
        \label{subfig:x.attack.y.advtraining}
    \end{subfigure}
        \begin{subfigure}[b]{0.32\textwidth}
        \includegraphics[width=\textwidth]{results/adv_training/alpha_1.0/tae_training_y_attack_y.png}
        \caption{\yattackkk.}
    \end{subfigure}
        \begin{subfigure}[b]{0.32\textwidth}
        \includegraphics[width=\textwidth]{results/adv_training/alpha_1.0/tae_training_y_attack_z.png}
        \caption{\zattackkk.}
    \end{subfigure}
    \begin{subfigure}[b]{0.98\textwidth}
        \includegraphics[width=\textwidth]{results/adv_training/adv_training_legend.png}
    \end{subfigure}
    \caption{Adversarial training against \yattack{}. A-PT denotes adversarial
    pretraining and A-FT denotes adversarial finetuning. $k_\text{train}$ denotes
    the number of tokens attacked during training and $k_\text{train}=0$ corresponds
    to a model that has not undergone adversarial training at all.}
    \label{appx.fig:adv.training.y.alpha.1}
\end{figure}
\begin{figure}[!h]
    \centering
    \begin{subfigure}[b]{0.32\textwidth}
        \includegraphics[width=\textwidth]{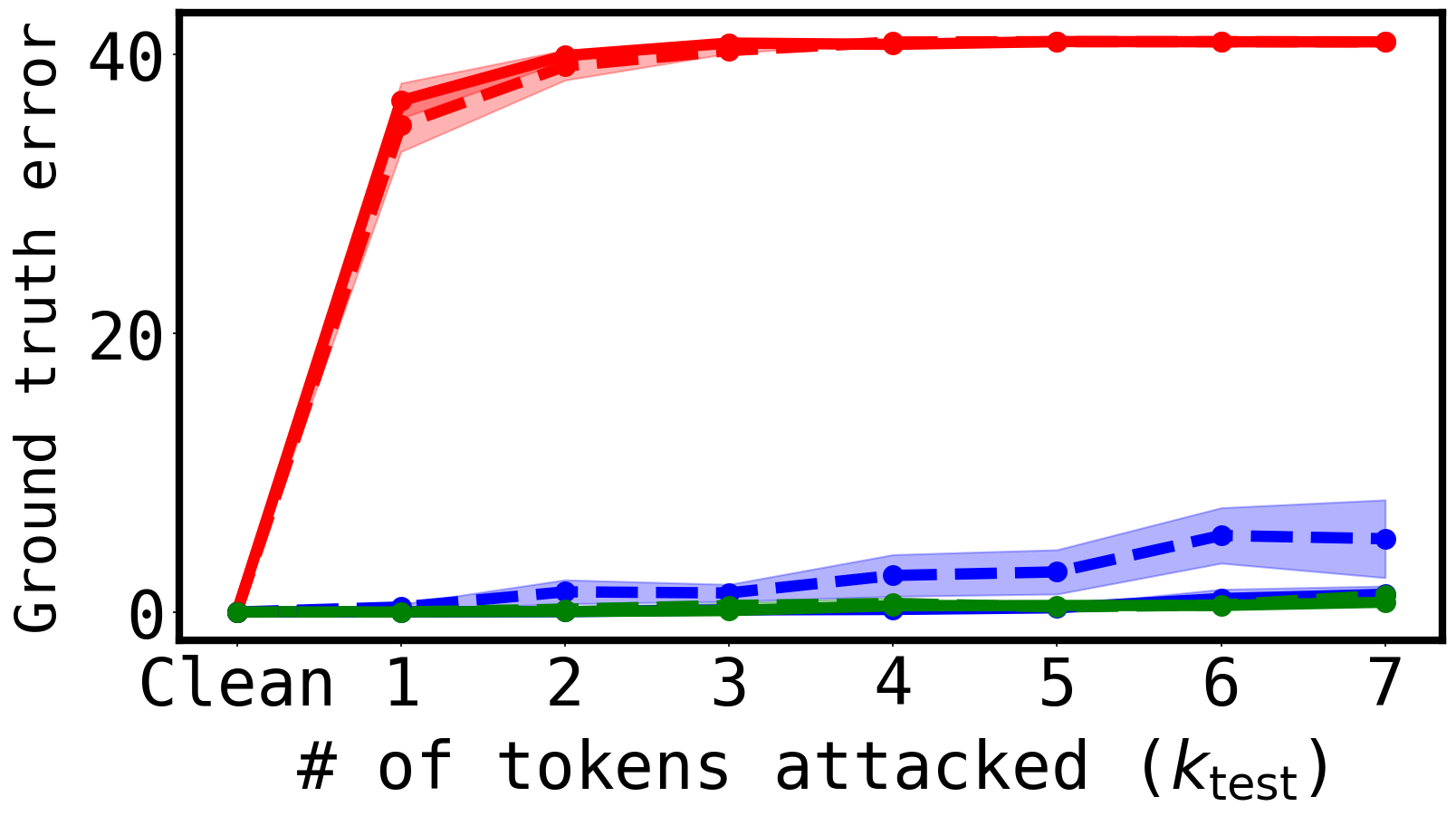}
        \caption{\xattackkk.}
    \end{subfigure}
    \begin{subfigure}[b]{0.32\textwidth}
        \includegraphics[width=\textwidth]{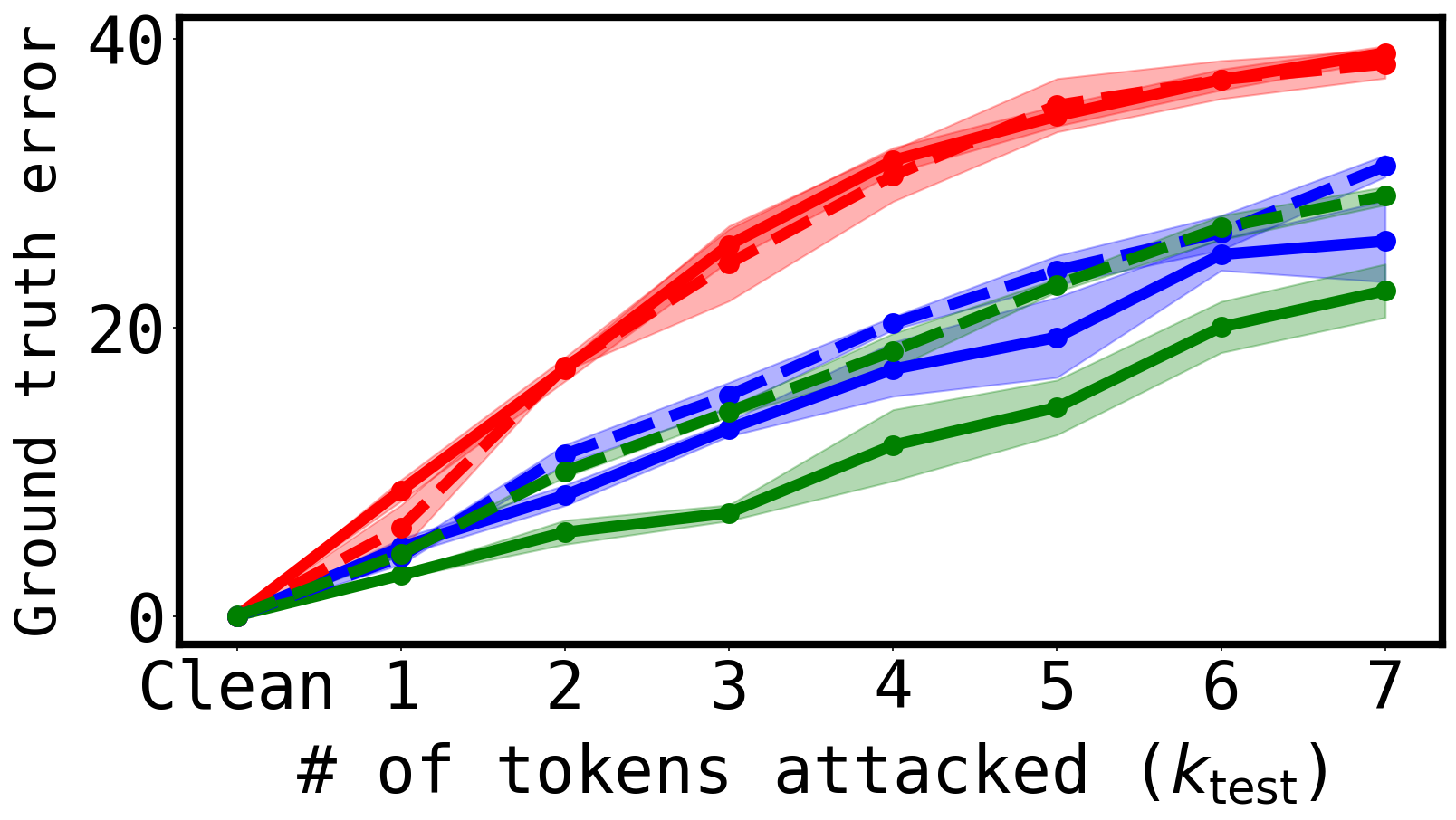}
        \caption{\yattackkk.}
    \end{subfigure}
    \begin{subfigure}[b]{0.32\textwidth}
        \includegraphics[width=\textwidth]{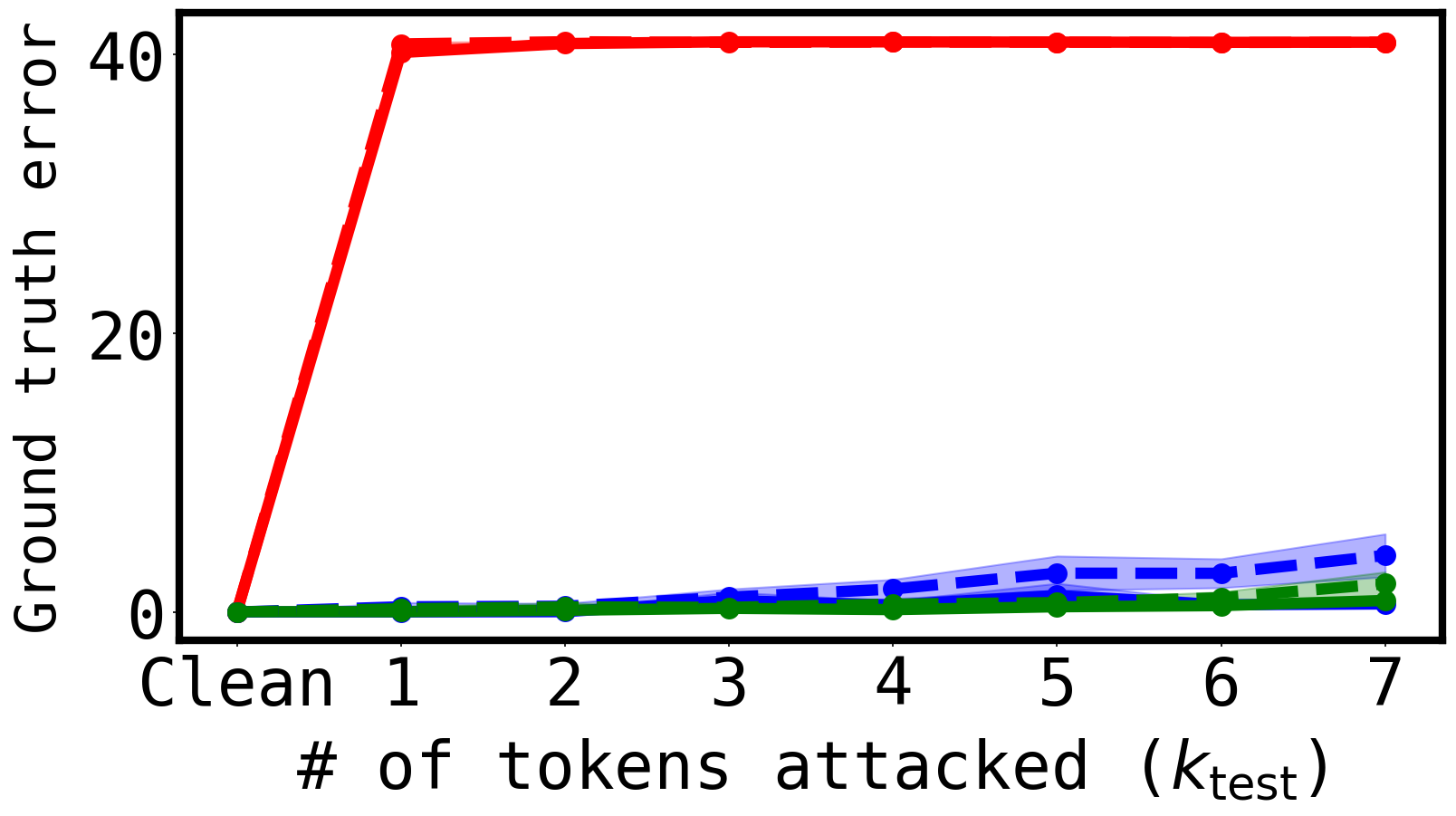}
        \caption{\zattackkk.}
    \end{subfigure}
    \begin{subfigure}[b]{0.32\textwidth}
        \includegraphics[width=\textwidth]{results/adv_training/alpha_1.0/tae_training_x_attack_x.png}
        \caption{\xattackkk.}
    \end{subfigure}
        \begin{subfigure}[b]{0.32\textwidth}
        \includegraphics[width=\textwidth]{results/adv_training/alpha_1.0/tae_training_x_attack_y.png}
        \caption{\yattackkk.}
    \end{subfigure}
        \begin{subfigure}[b]{0.32\textwidth}
        \includegraphics[width=\textwidth]{results/adv_training/alpha_1.0/tae_training_x_attack_z.png}
        \caption{\zattackkk.}
    \end{subfigure}
    \begin{subfigure}[b]{0.98\textwidth}
        \includegraphics[width=\textwidth]{results/adv_training/adv_training_legend.png}
    \end{subfigure}
    \caption{Adversarial training against \xattack.}
    \label{appx.fig:adv.training.x.alpha.1}
\end{figure}
\begin{figure}[!h]
    \centering
    \begin{subfigure}[b]{0.32\textwidth}
        \includegraphics[width=\textwidth]{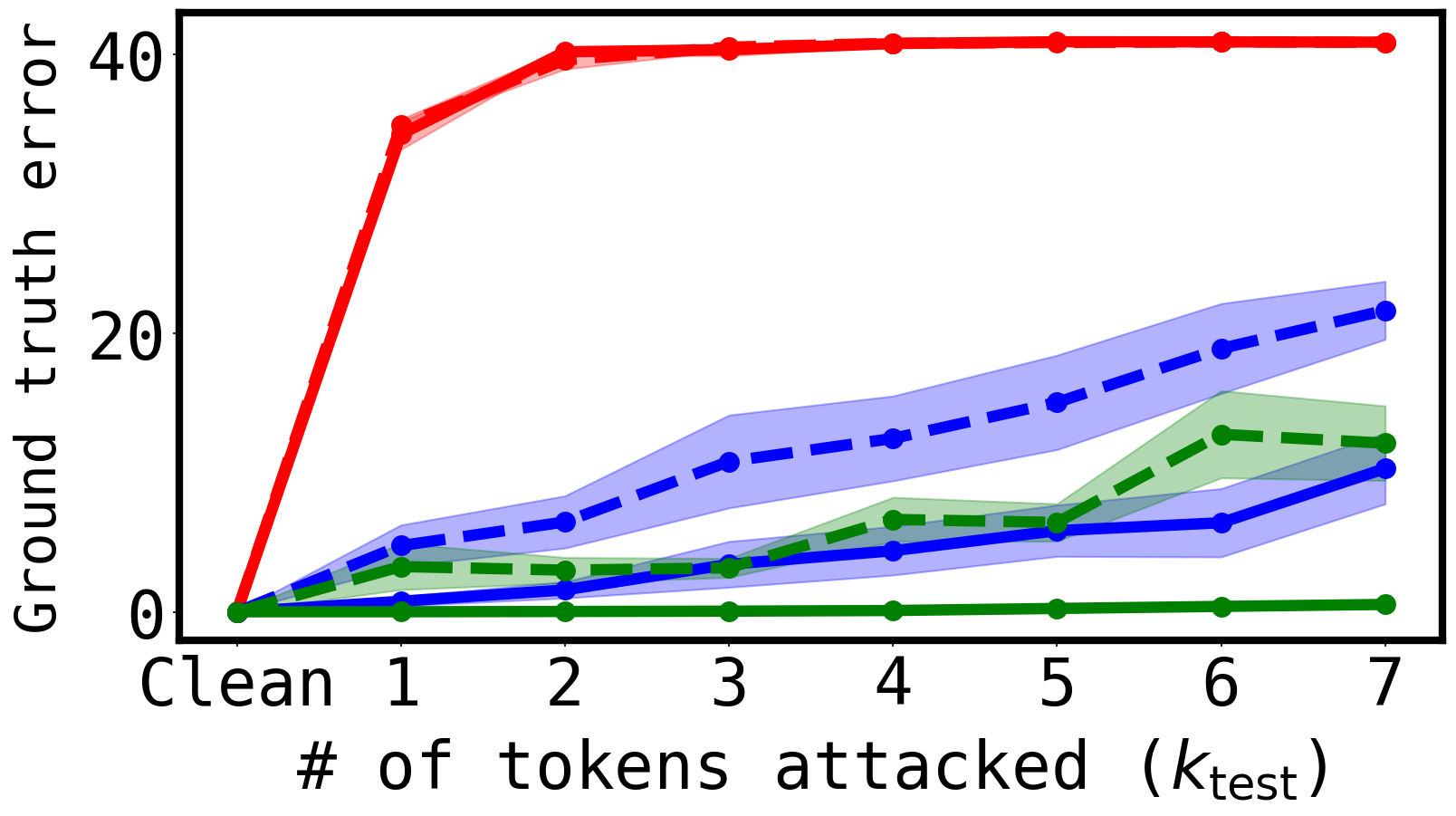}
        \caption{\xattackkk.}
    \end{subfigure}
    \begin{subfigure}[b]{0.32\textwidth}
        \includegraphics[width=\textwidth]{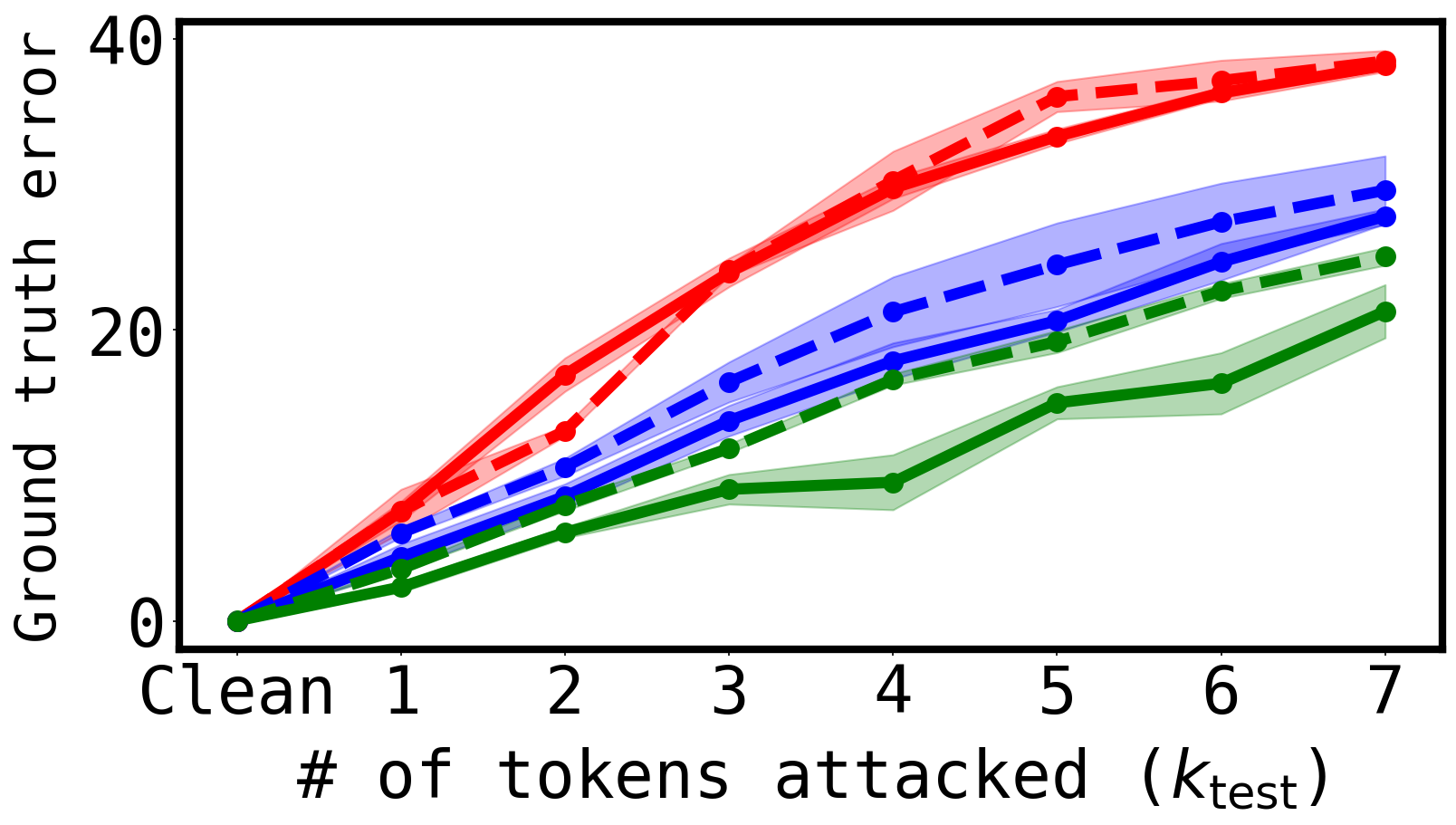}
        \caption{\yattackkk.}
    \end{subfigure}
    \begin{subfigure}[b]{0.32\textwidth}
        \includegraphics[width=\textwidth]{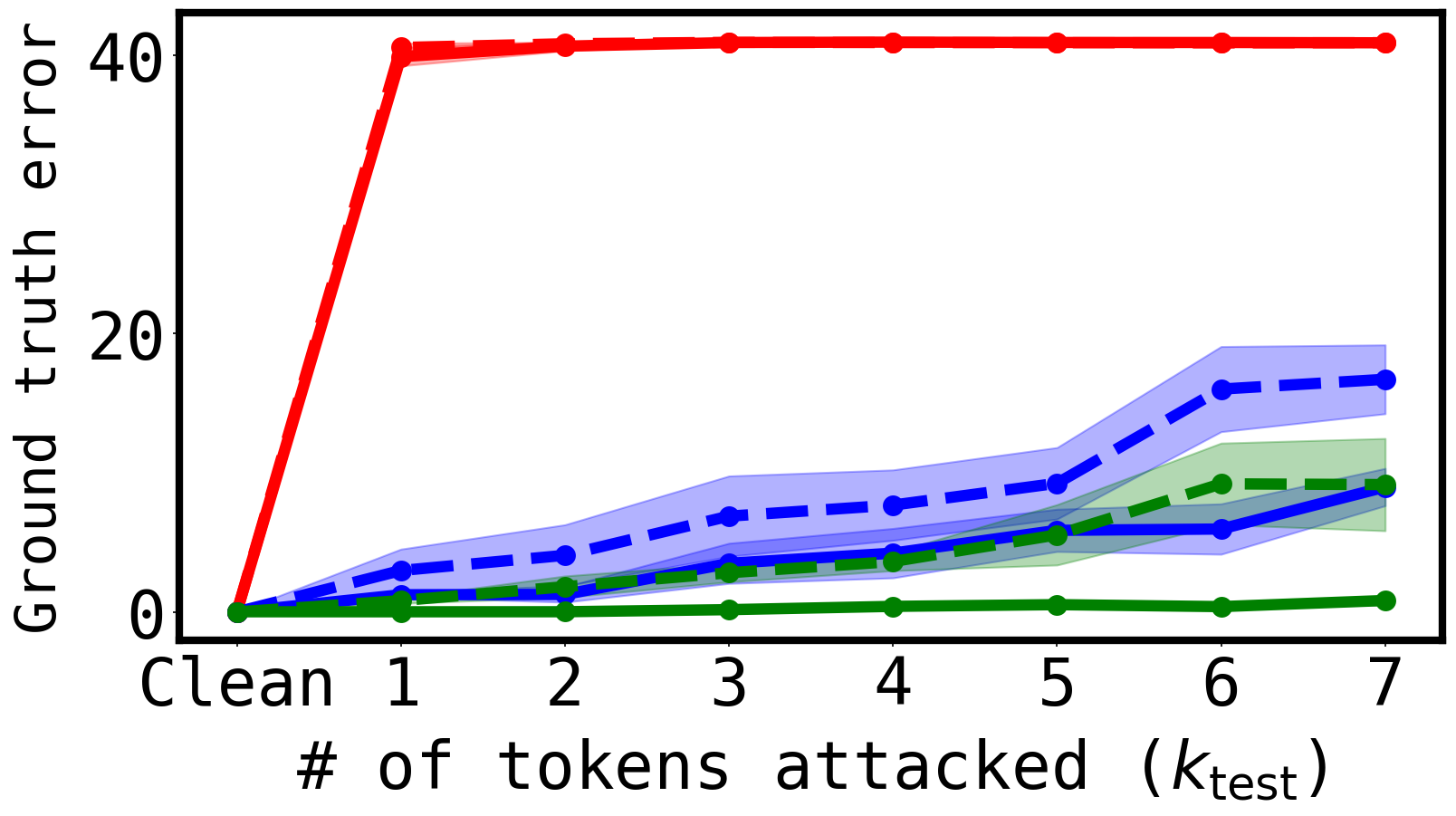}
        \caption{\zattackkk.}
    \end{subfigure}
    \begin{subfigure}[b]{0.32\textwidth}
        \includegraphics[width=\textwidth]{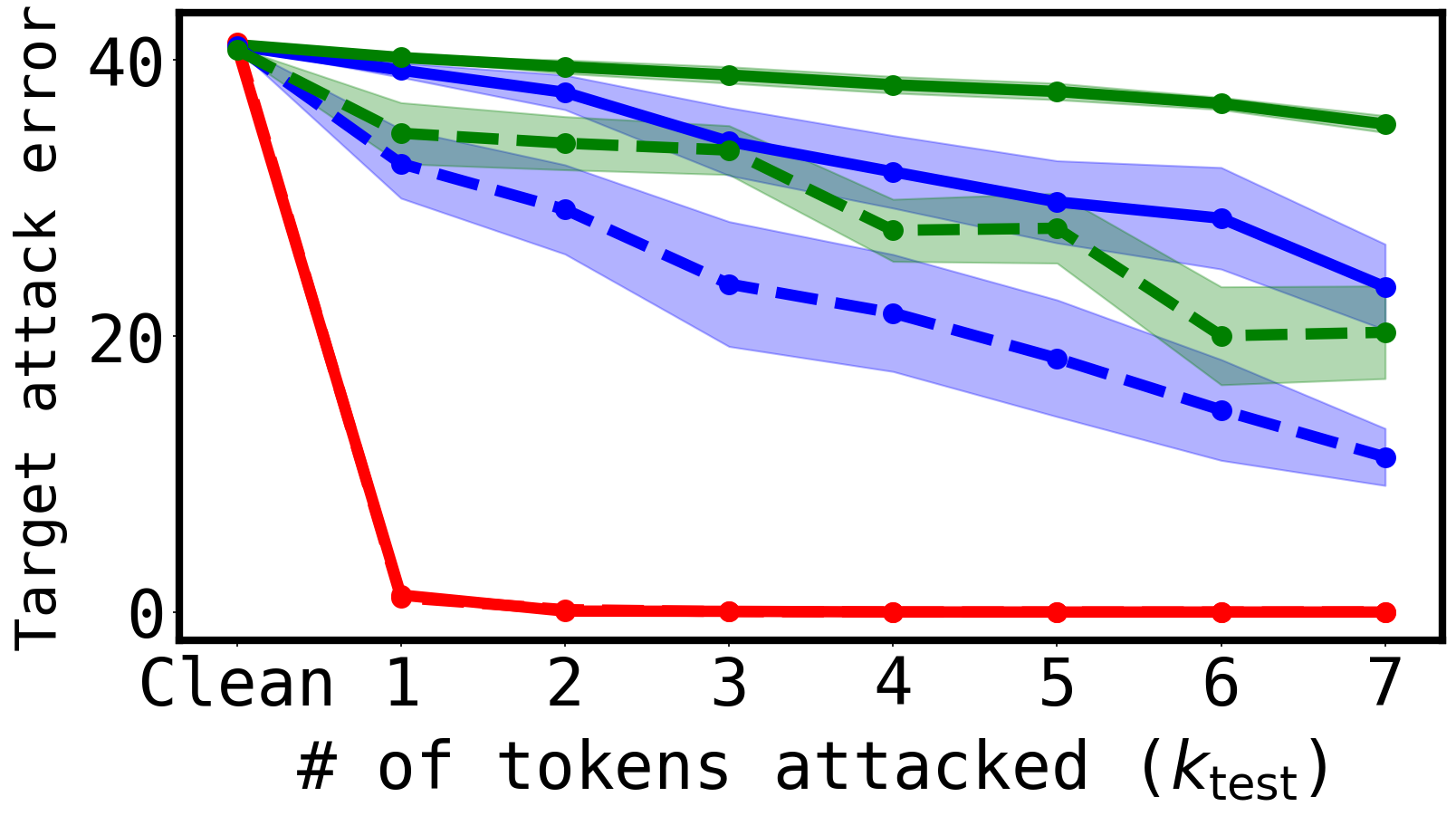}
        \caption{\xattackkk.}
    \end{subfigure}
        \begin{subfigure}[b]{0.32\textwidth}
        \includegraphics[width=\textwidth]{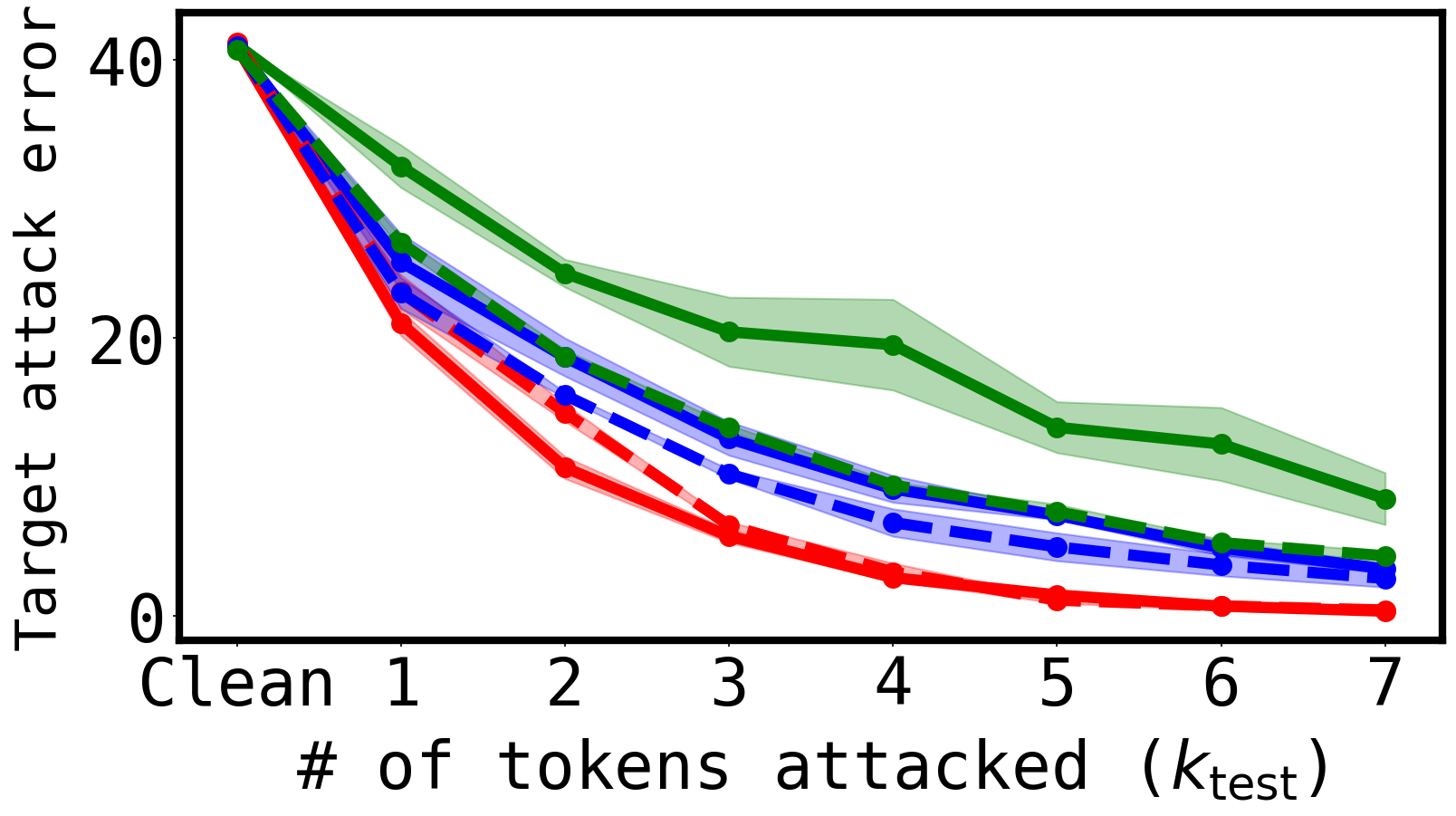}
        \caption{\yattackkk.}
    \end{subfigure}
        \begin{subfigure}[b]{0.32\textwidth}
        \includegraphics[width=\textwidth]{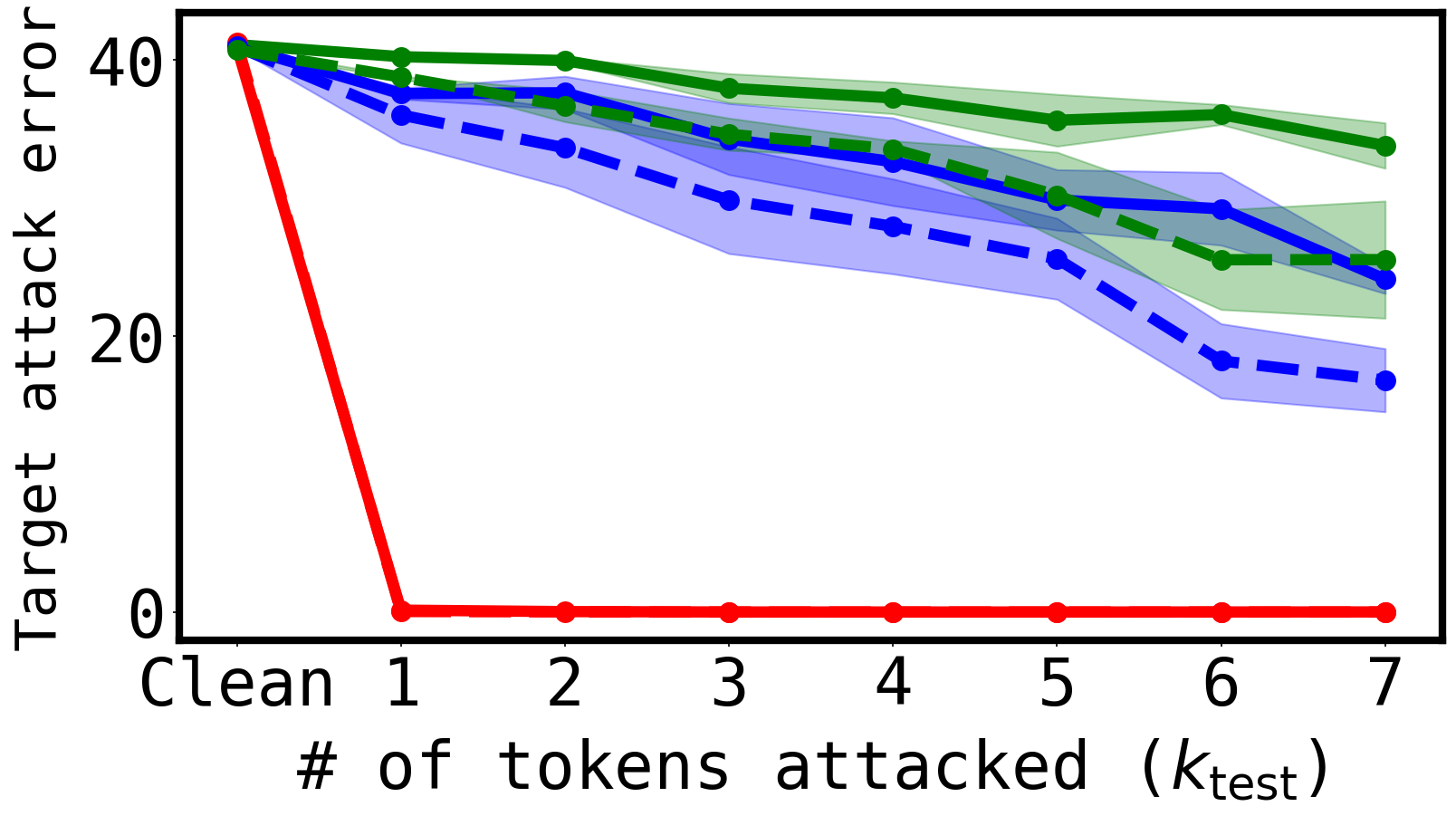}
        \caption{\zattackkk.}
    \end{subfigure}
    \begin{subfigure}[b]{0.98\textwidth}
        \includegraphics[width=\textwidth]{results/adv_training/adv_training_legend.png}
    \end{subfigure}
    \caption{Adversarial training against \zattack.}
    \label{appx.fig:adv.training.z.alpha.1}
\end{figure}

\FloatBarrier
\ifbool{tmlrtemp}{\clearpage}{}
\subsubsection{$\alpha=0.5$}
\begin{figure}[!h]
    \centering
    \begin{subfigure}[b]{0.32\textwidth}
        \includegraphics[width=\textwidth]{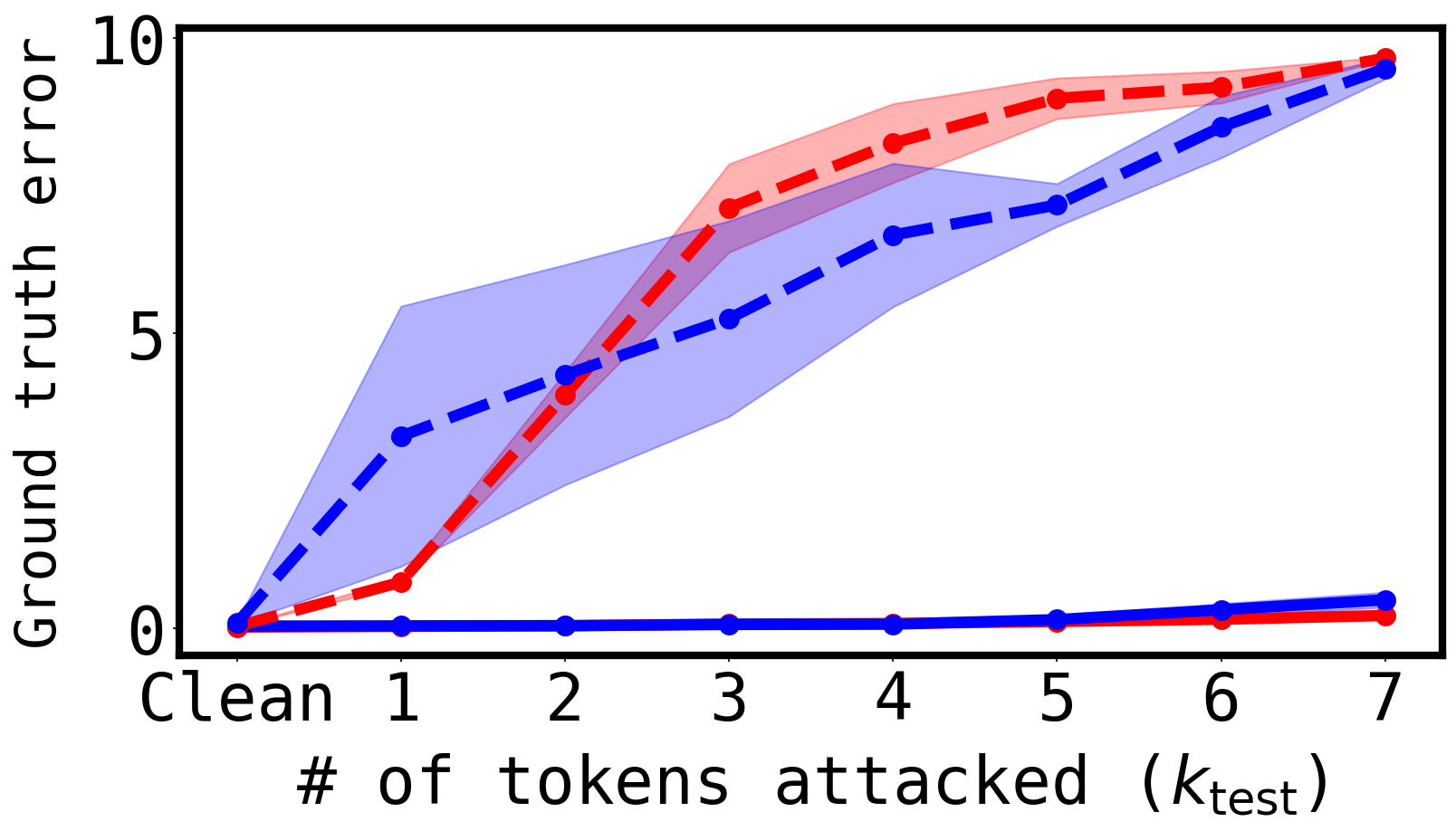}
        \caption{\xattackkk.}
    \end{subfigure}
    \begin{subfigure}[b]{0.32\textwidth}
        \includegraphics[width=\textwidth]{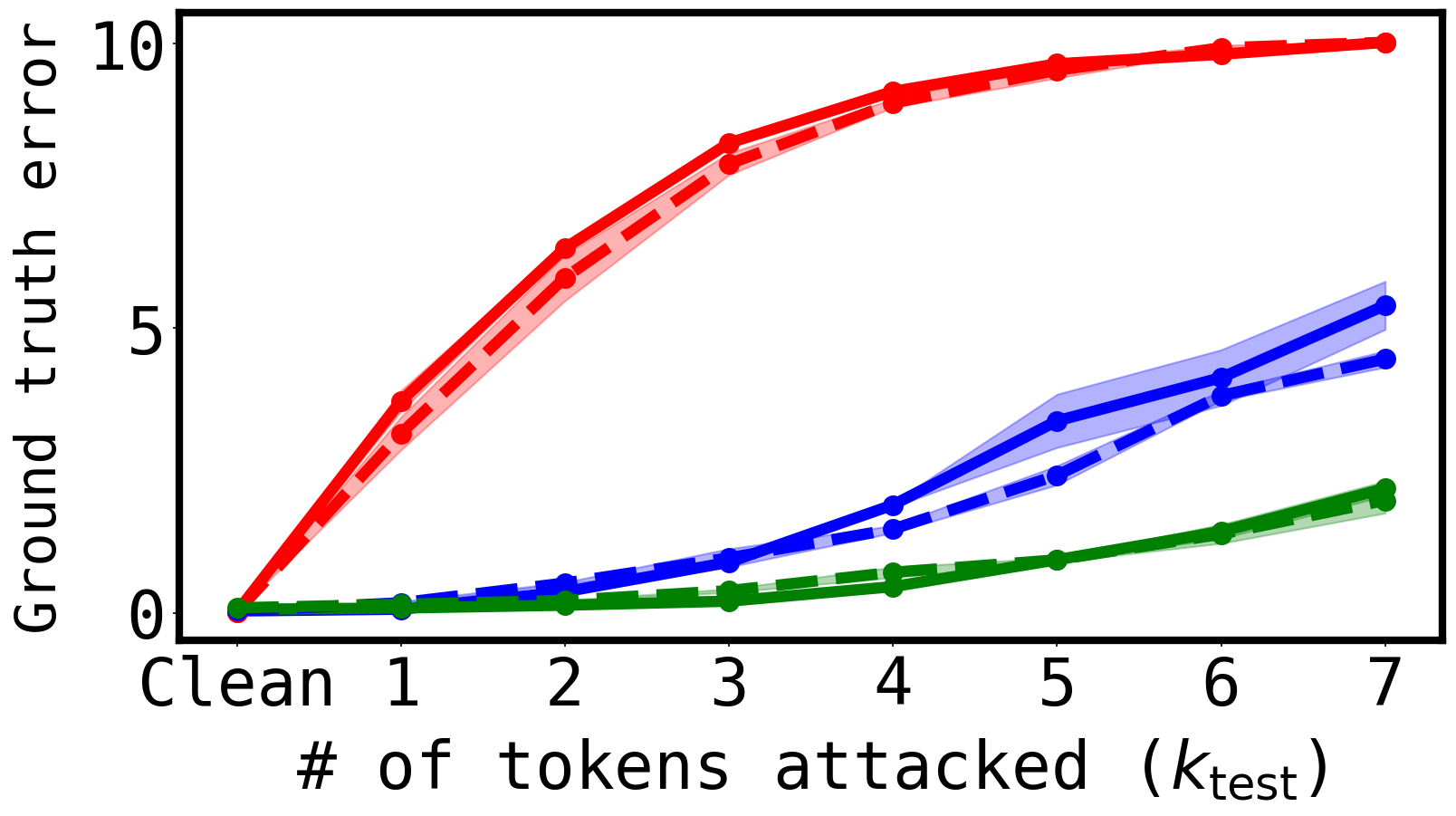}
        \caption{\yattackkk.}
    \end{subfigure}
    \begin{subfigure}[b]{0.32\textwidth}
        \includegraphics[width=\textwidth]{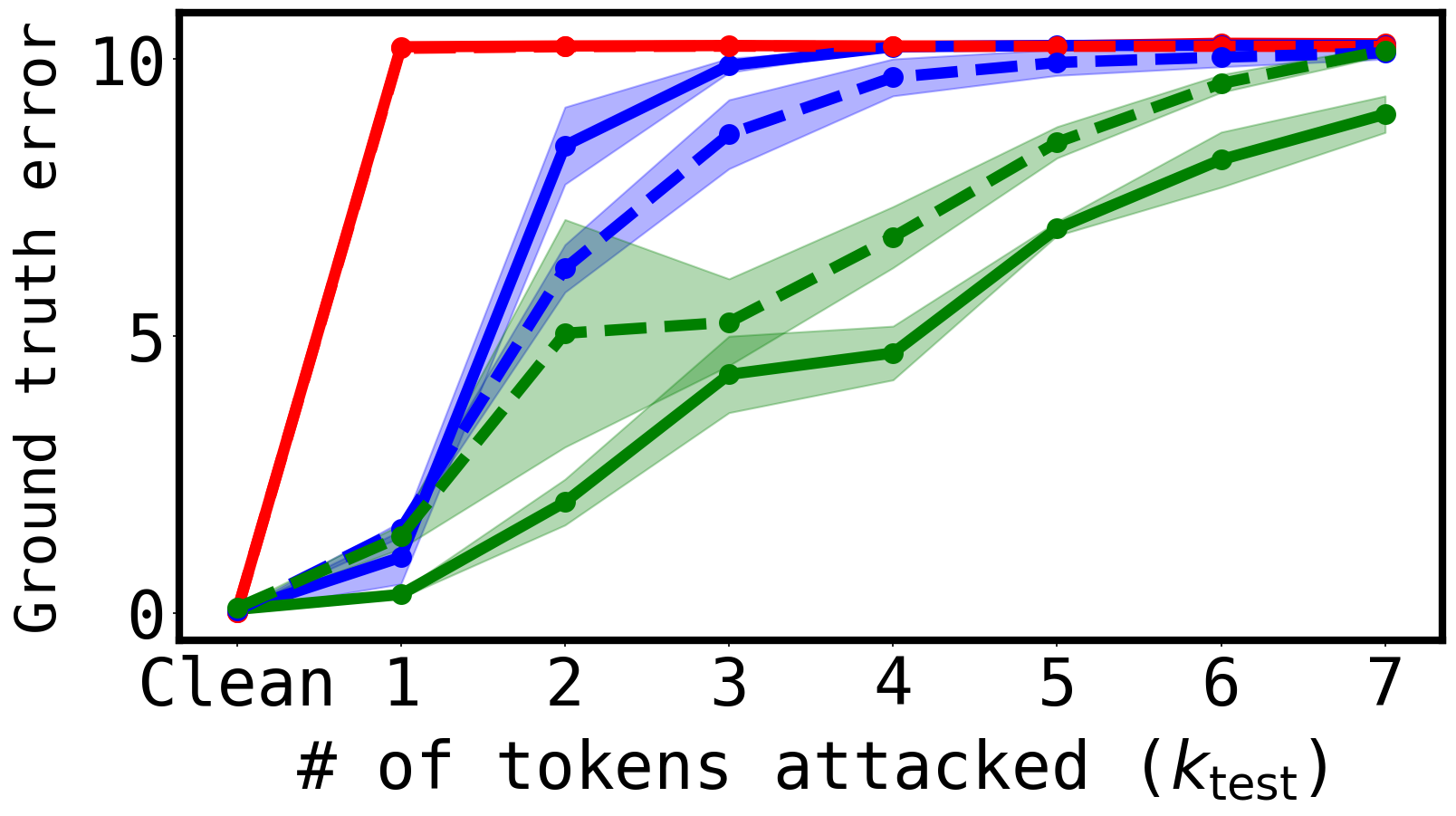}
        \caption{\zattackkk.}
    \end{subfigure}
    \begin{subfigure}[b]{0.32\textwidth}
        \includegraphics[width=\textwidth]{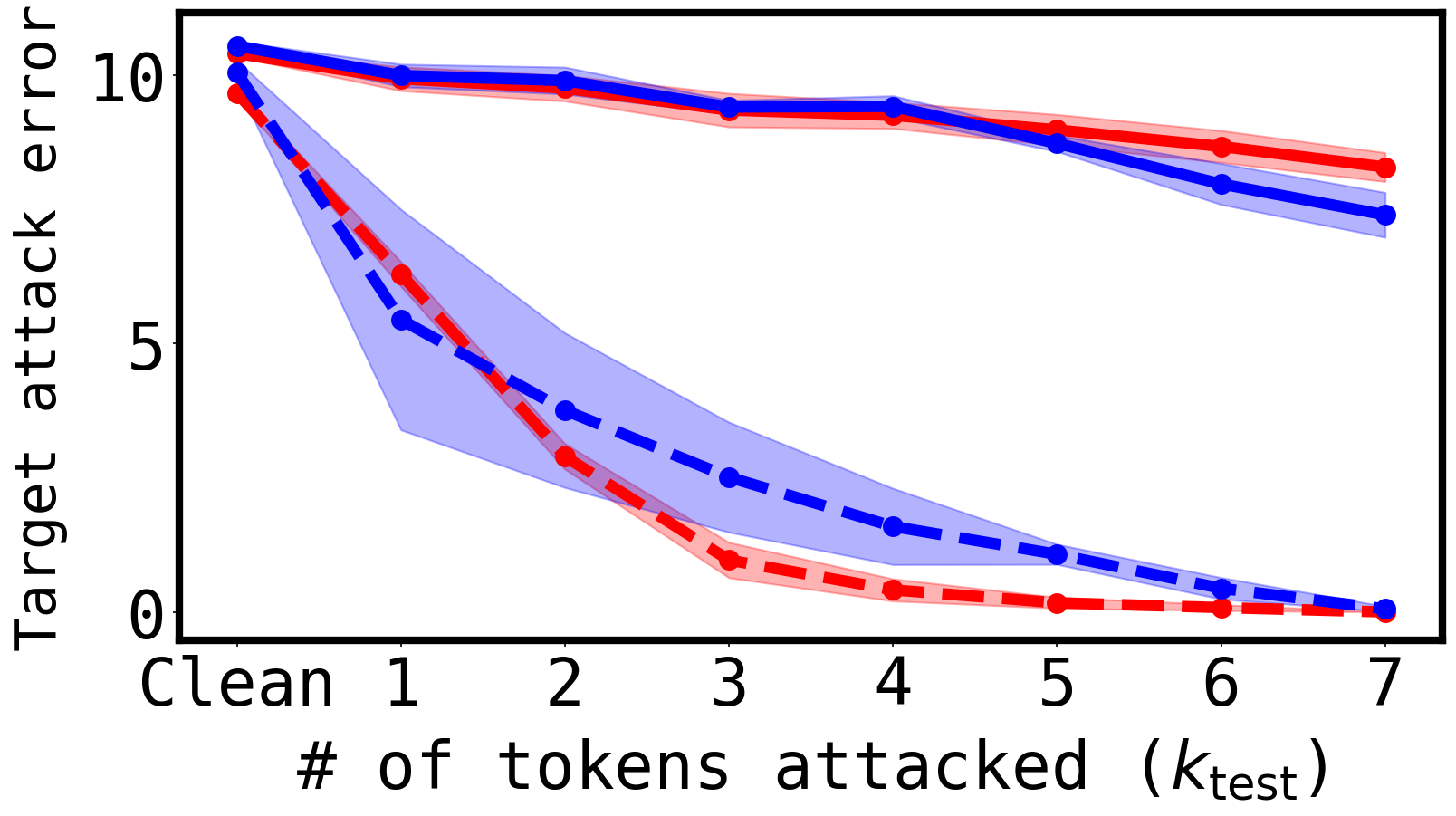}
        \caption{\xattackkk.}
    \end{subfigure}
        \begin{subfigure}[b]{0.32\textwidth}
        \includegraphics[width=\textwidth]{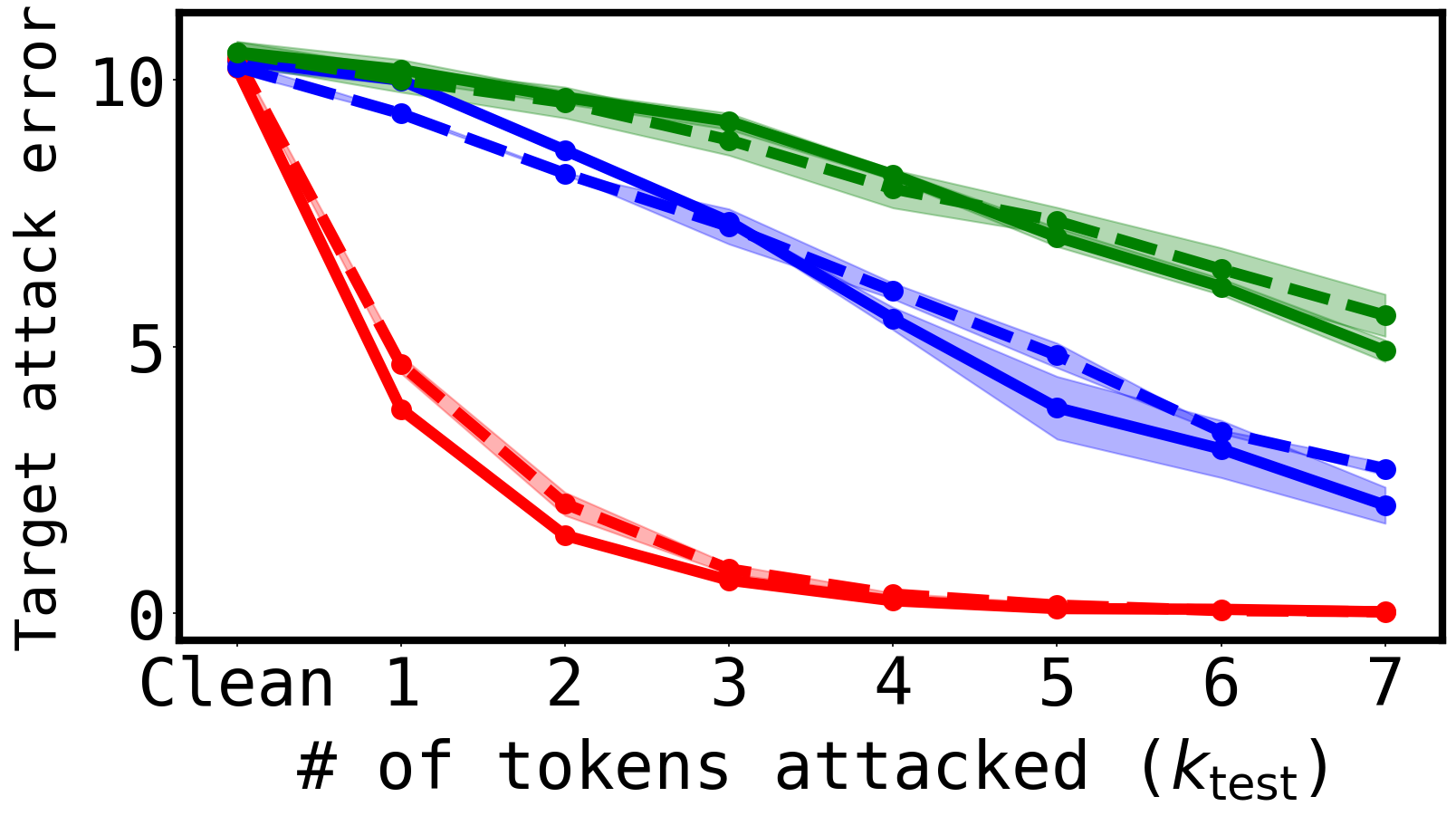}
        \caption{\yattackkk.}
    \end{subfigure}
        \begin{subfigure}[b]{0.32\textwidth}
        \includegraphics[width=\textwidth]{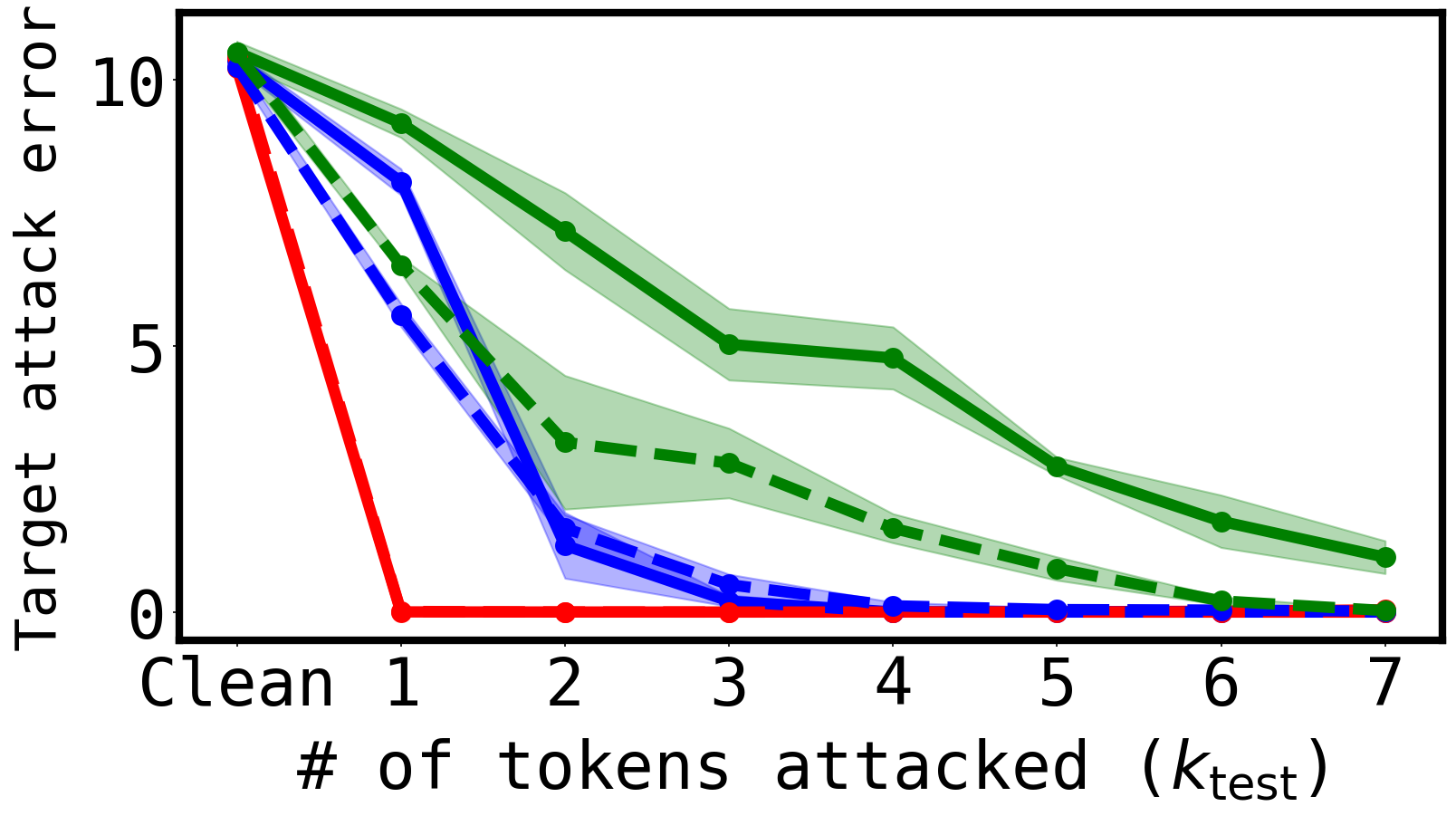}
        \caption{\zattackkk.}
    \end{subfigure}
    \begin{subfigure}[b]{0.98\textwidth}
        \includegraphics[width=\textwidth]{results/adv_training/adv_training_legend.png}
    \end{subfigure}
    \caption{Adversarial training against \yattack.}
    \label{appx.fig:adv.training.y.alpha.0.5}
\end{figure}
\begin{figure}[!h]
    \centering
    \begin{subfigure}[b]{0.32\textwidth}
        \includegraphics[width=\textwidth]{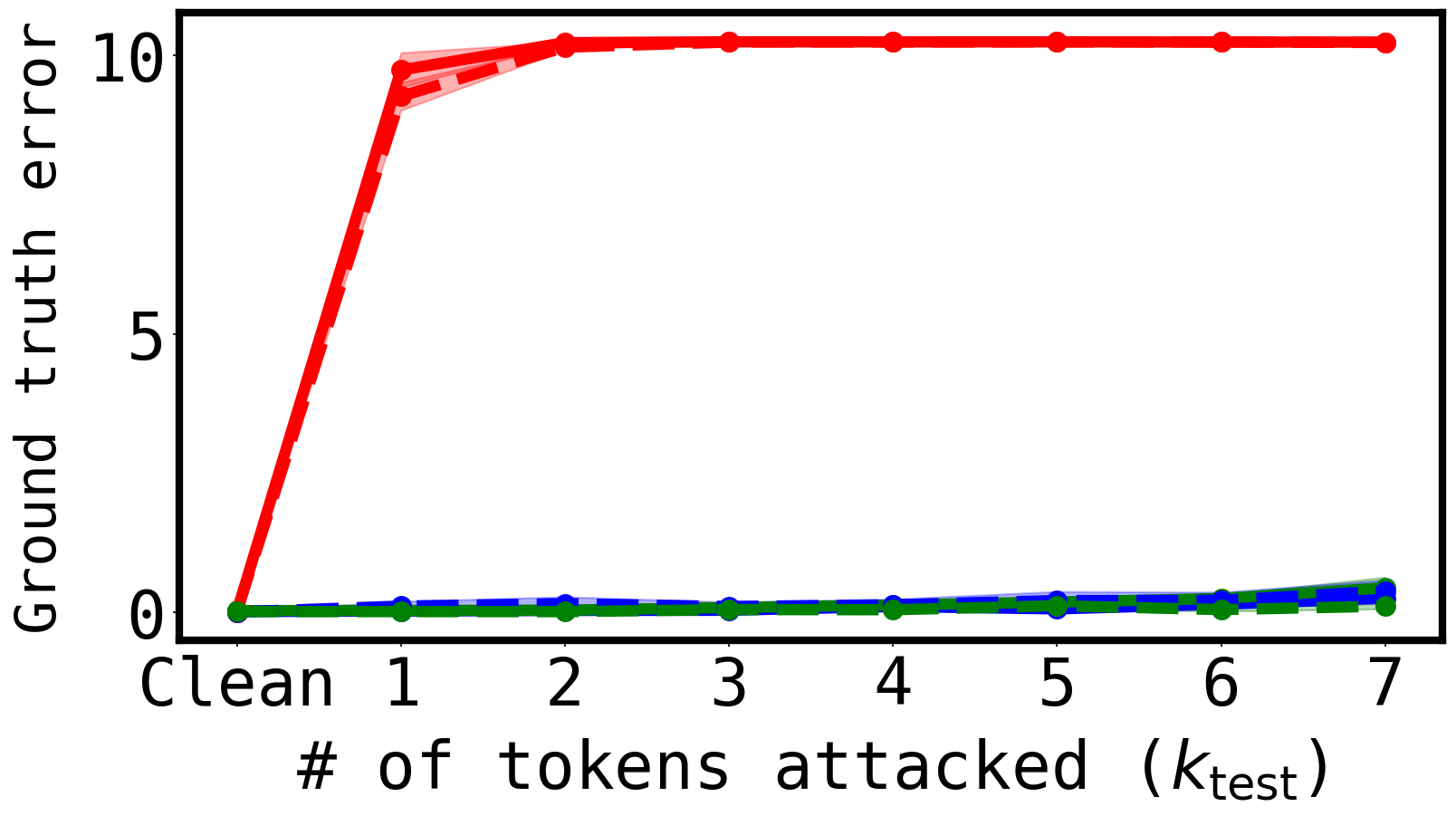}
        \caption{\xattackkk.}
    \end{subfigure}
    \begin{subfigure}[b]{0.32\textwidth}
        \includegraphics[width=\textwidth]{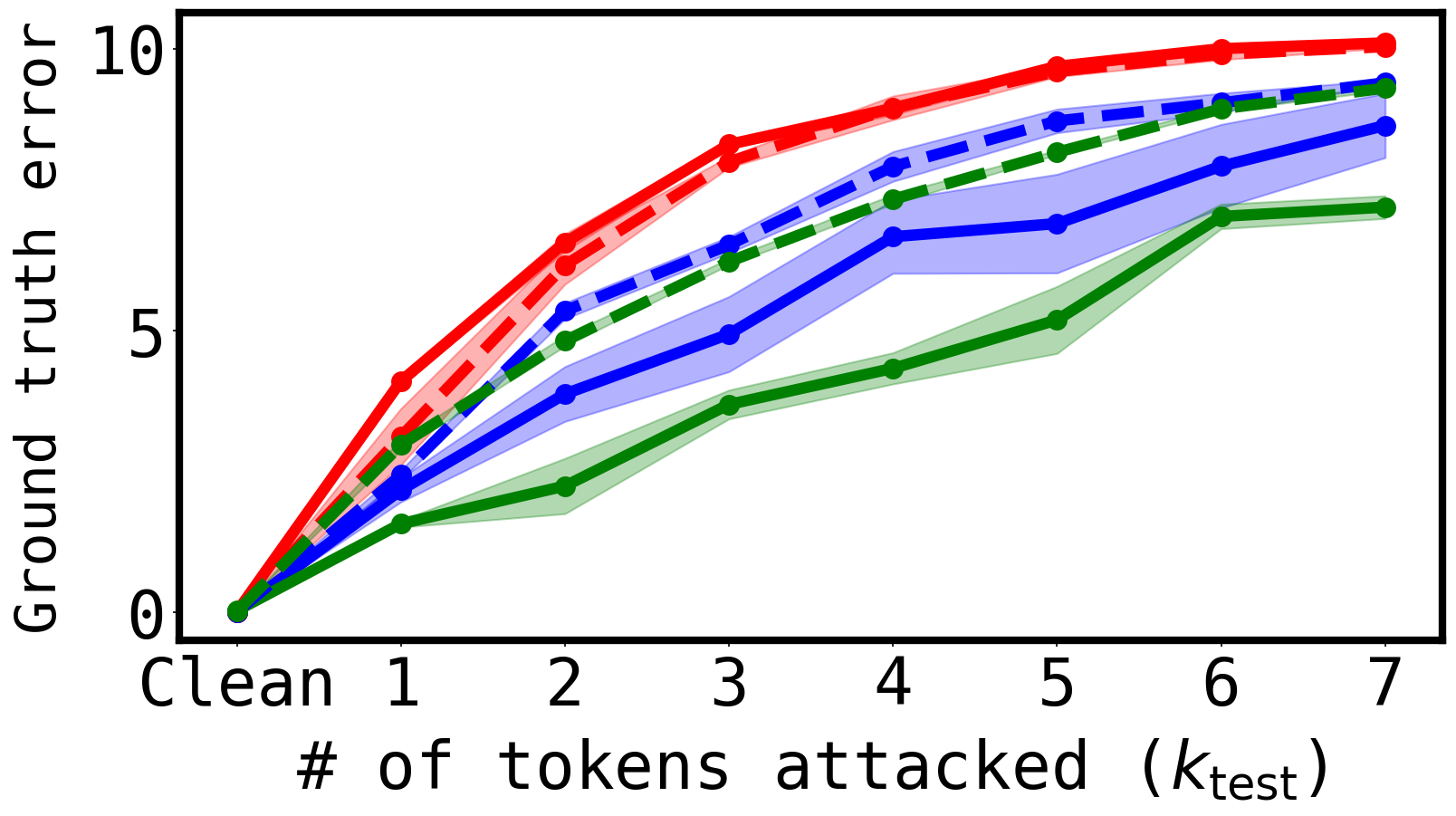}
        \caption{\yattackkk.}
    \end{subfigure}
    \begin{subfigure}[b]{0.32\textwidth}
        \includegraphics[width=\textwidth]{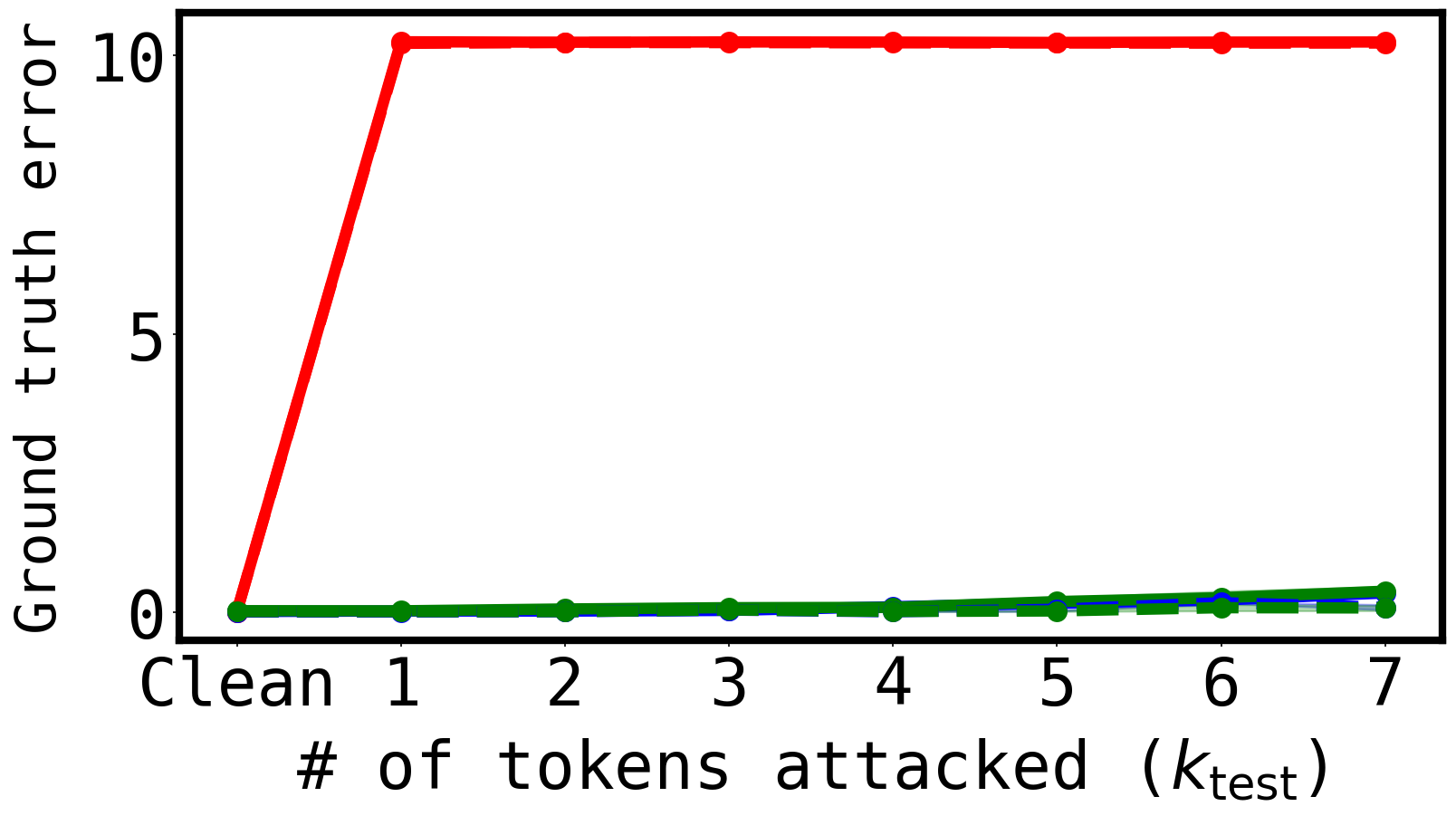}
        \caption{\zattackkk.}
    \end{subfigure}
    \begin{subfigure}[b]{0.32\textwidth}
        \includegraphics[width=\textwidth]{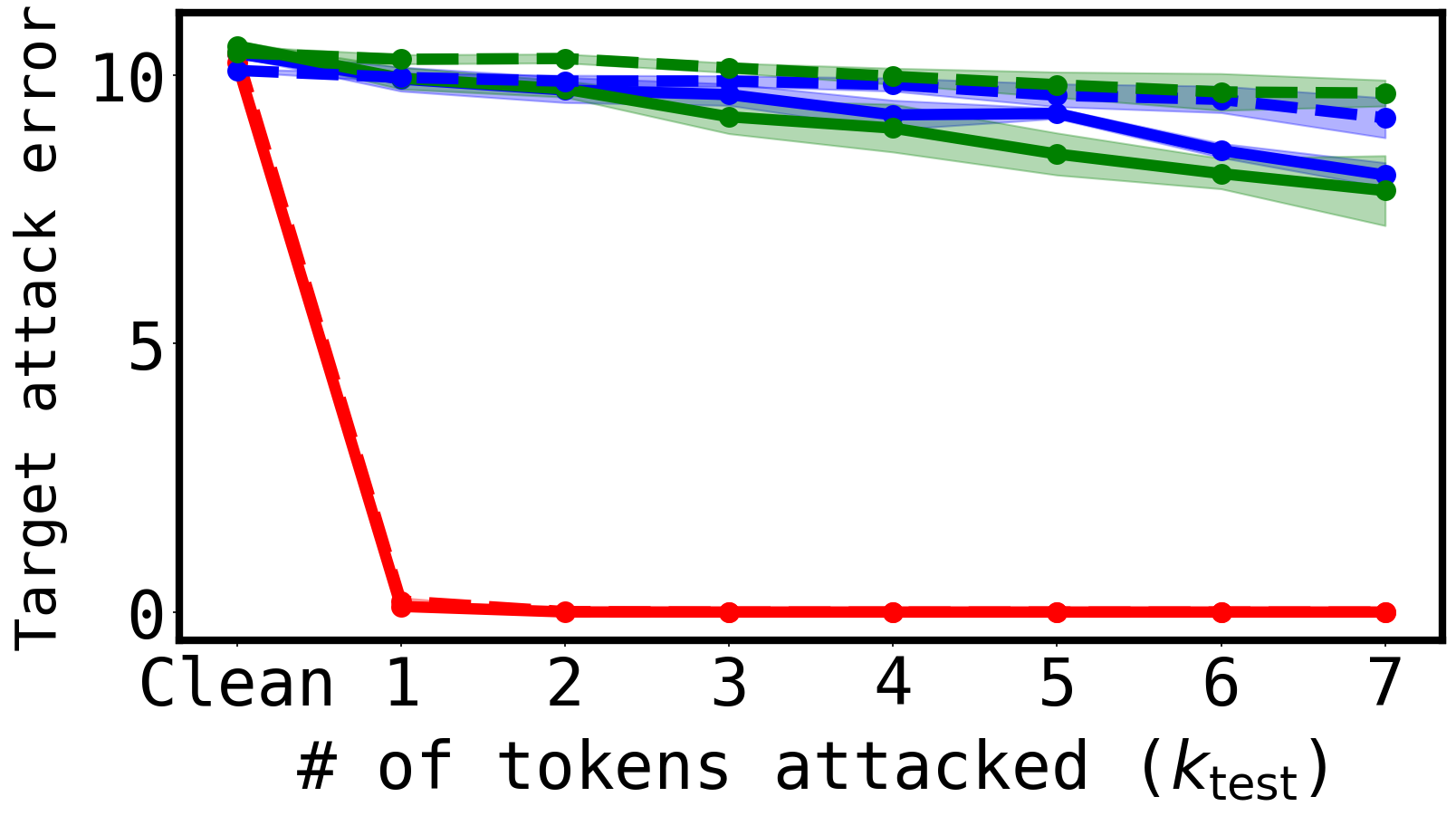}
        \caption{\xattackkk.}
    \end{subfigure}
        \begin{subfigure}[b]{0.32\textwidth}
        \includegraphics[width=\textwidth]{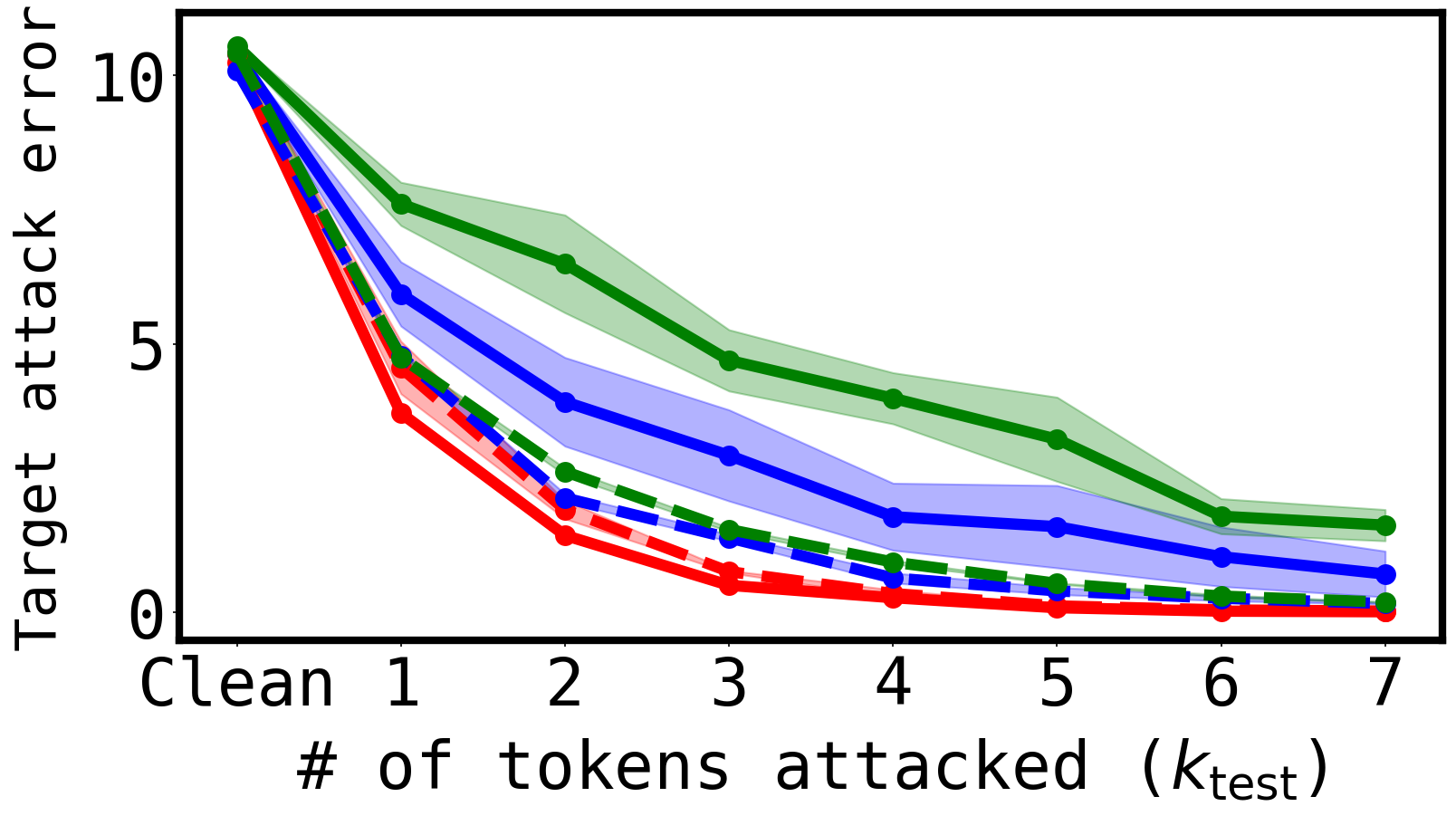}
        \caption{\yattackkk.}
    \end{subfigure}
        \begin{subfigure}[b]{0.32\textwidth}
        \includegraphics[width=\textwidth]{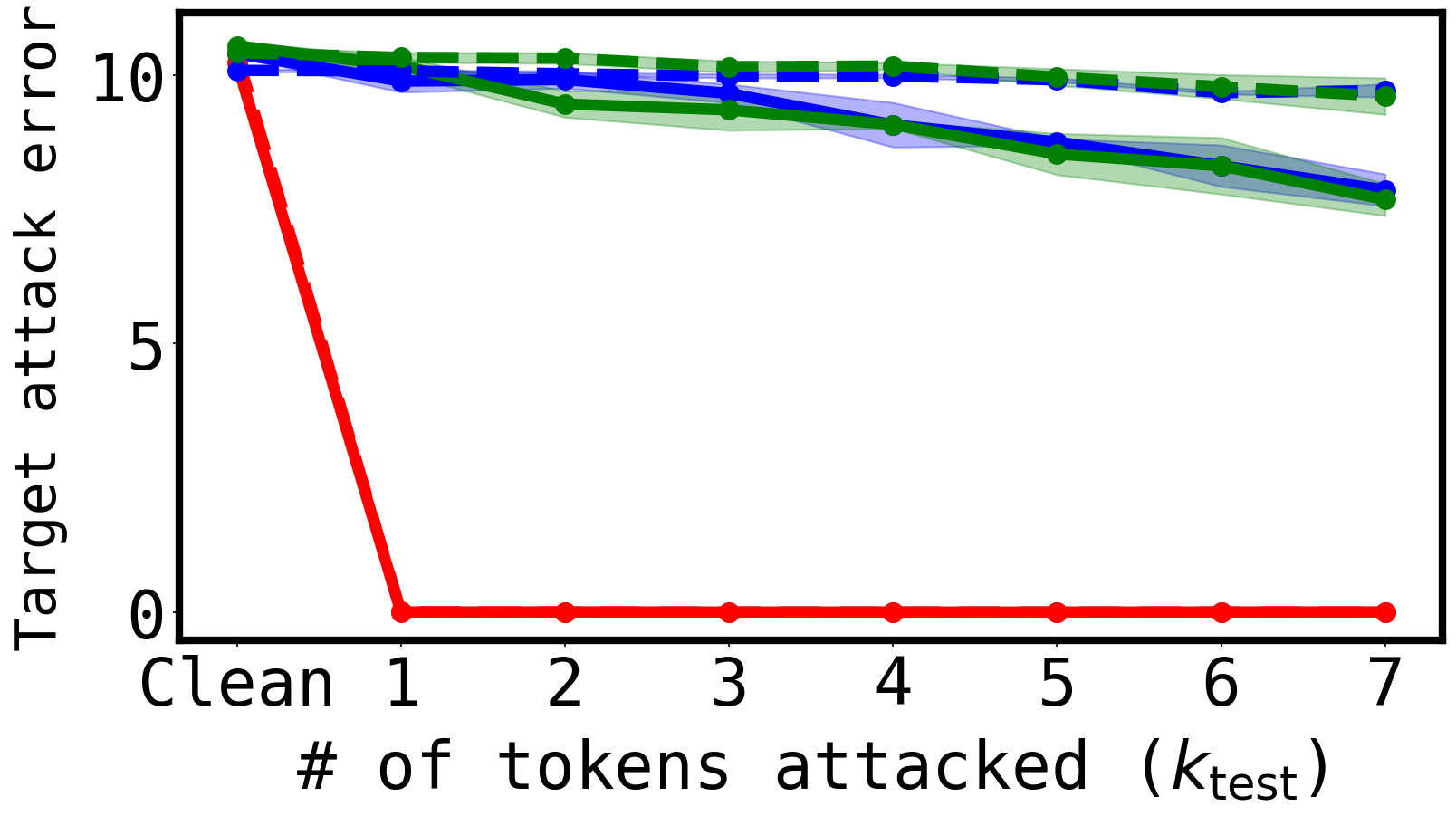}
        \caption{\zattackkk.}
    \end{subfigure}
    \begin{subfigure}[b]{0.98\textwidth}
        \includegraphics[width=\textwidth]{results/adv_training/adv_training_legend.png}
    \end{subfigure}
    \caption{Adversarial training against \xattack.}
    \label{appx.fig:adv.training.x.alpha.0.5}
\end{figure}
\begin{figure}[!h]
    \centering
    \begin{subfigure}[b]{0.32\textwidth}
        \includegraphics[width=\textwidth]{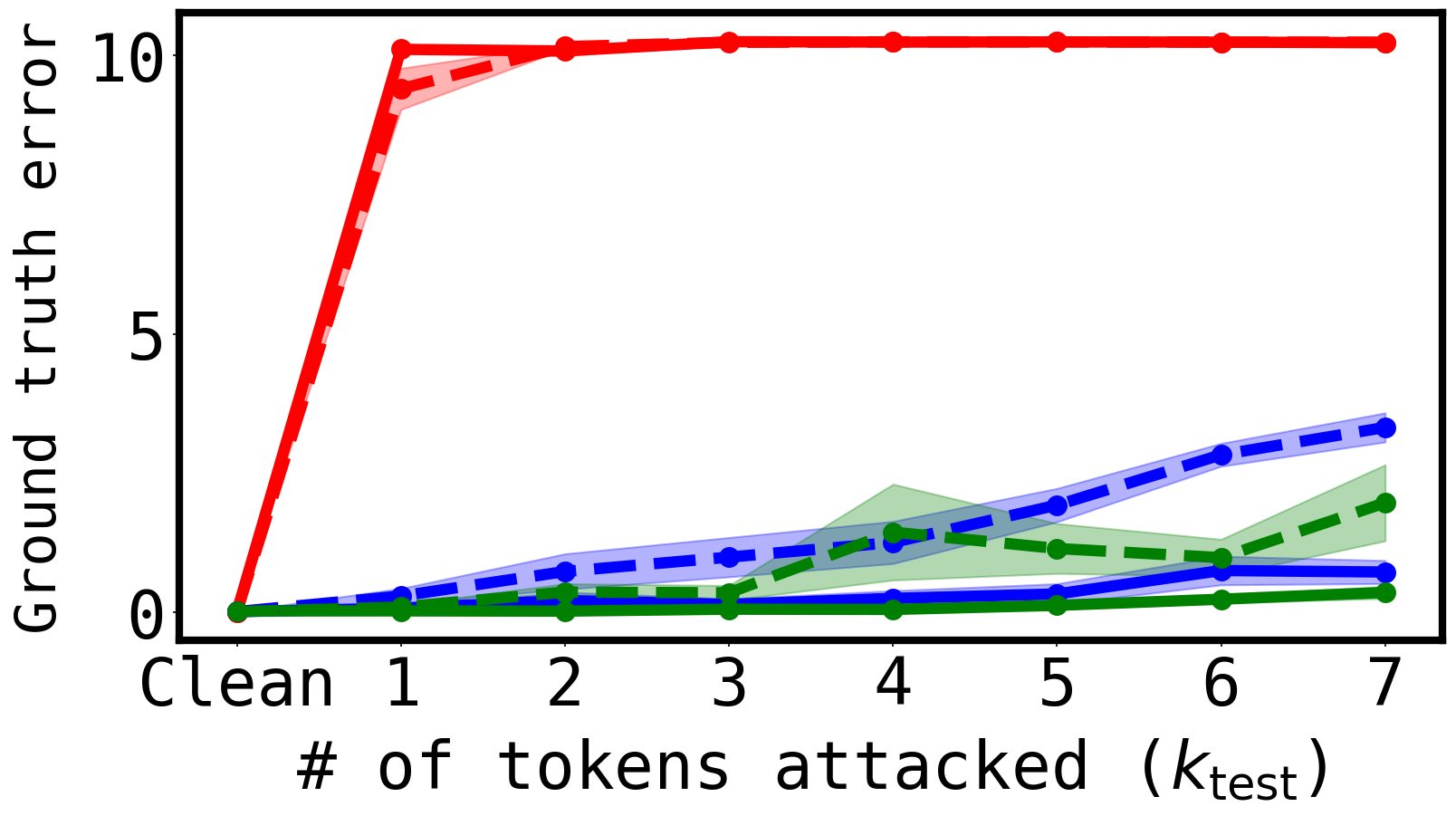}
        \caption{\xattackkk.}
    \end{subfigure}
    \begin{subfigure}[b]{0.32\textwidth}
        \includegraphics[width=\textwidth]{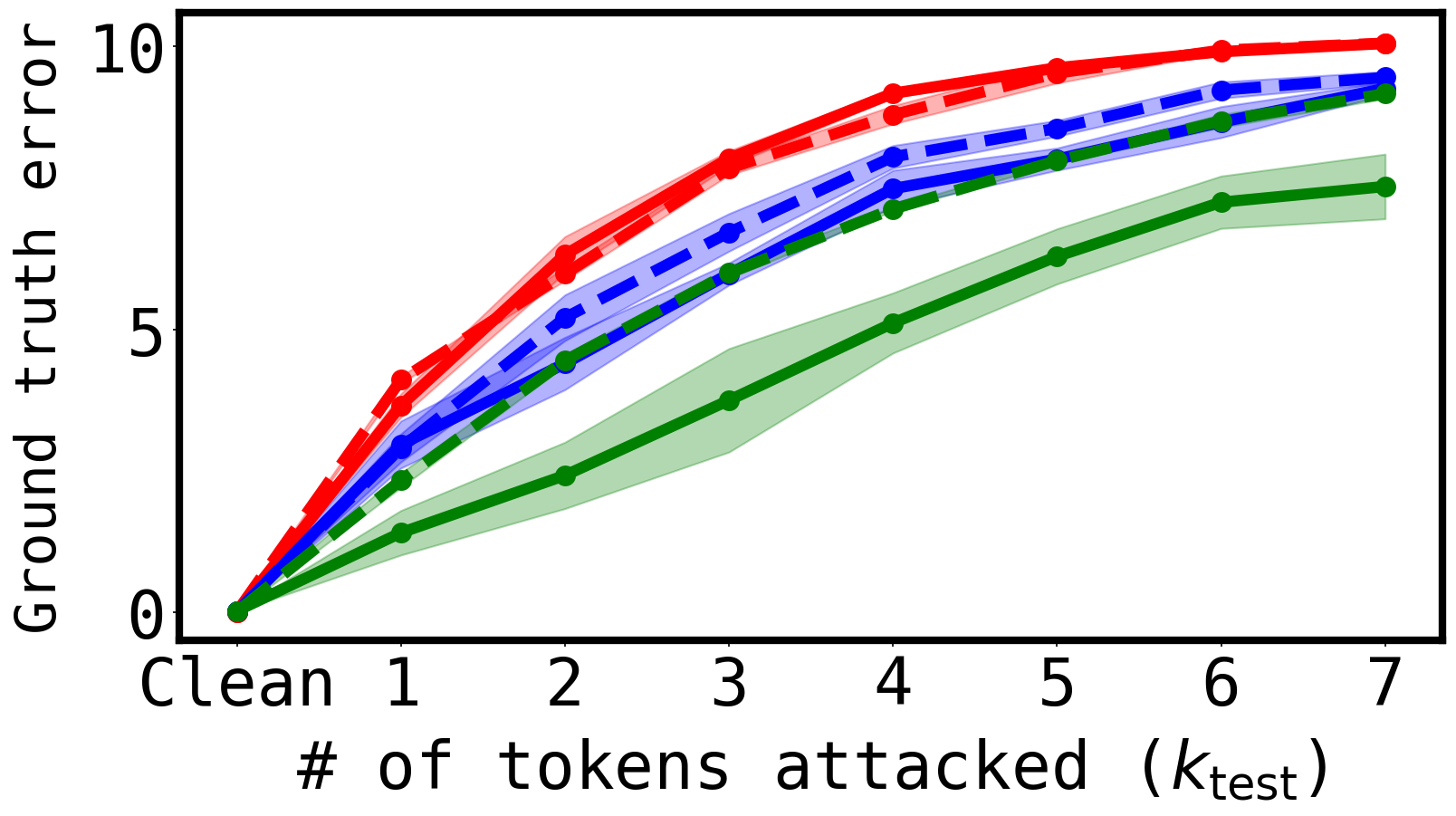}
        \caption{\yattackkk.}
    \end{subfigure}
    \begin{subfigure}[b]{0.32\textwidth}
        \includegraphics[width=\textwidth]{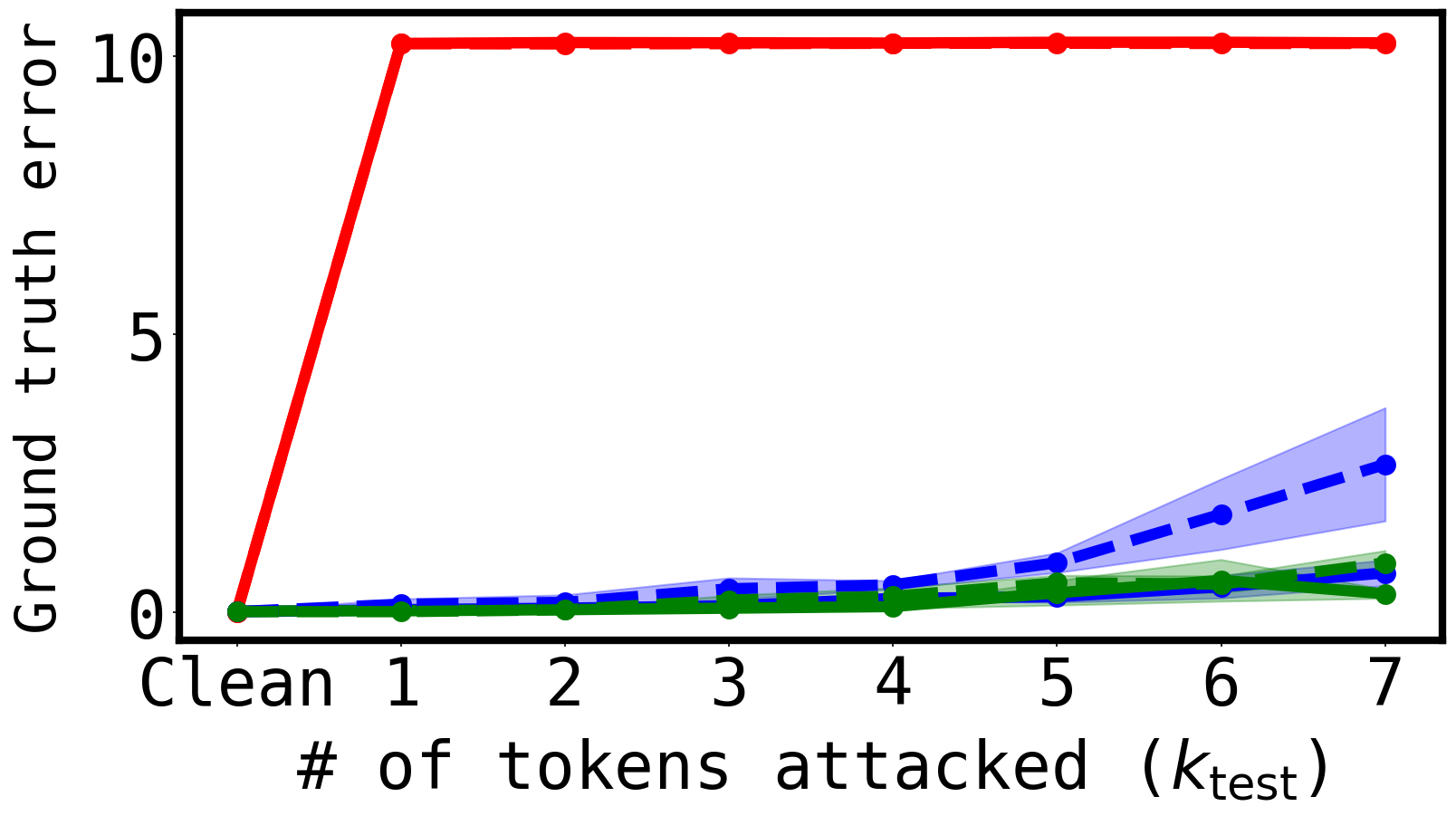}
        \caption{\zattackkk.}
    \end{subfigure}
    \begin{subfigure}[b]{0.32\textwidth}
        \includegraphics[width=\textwidth]{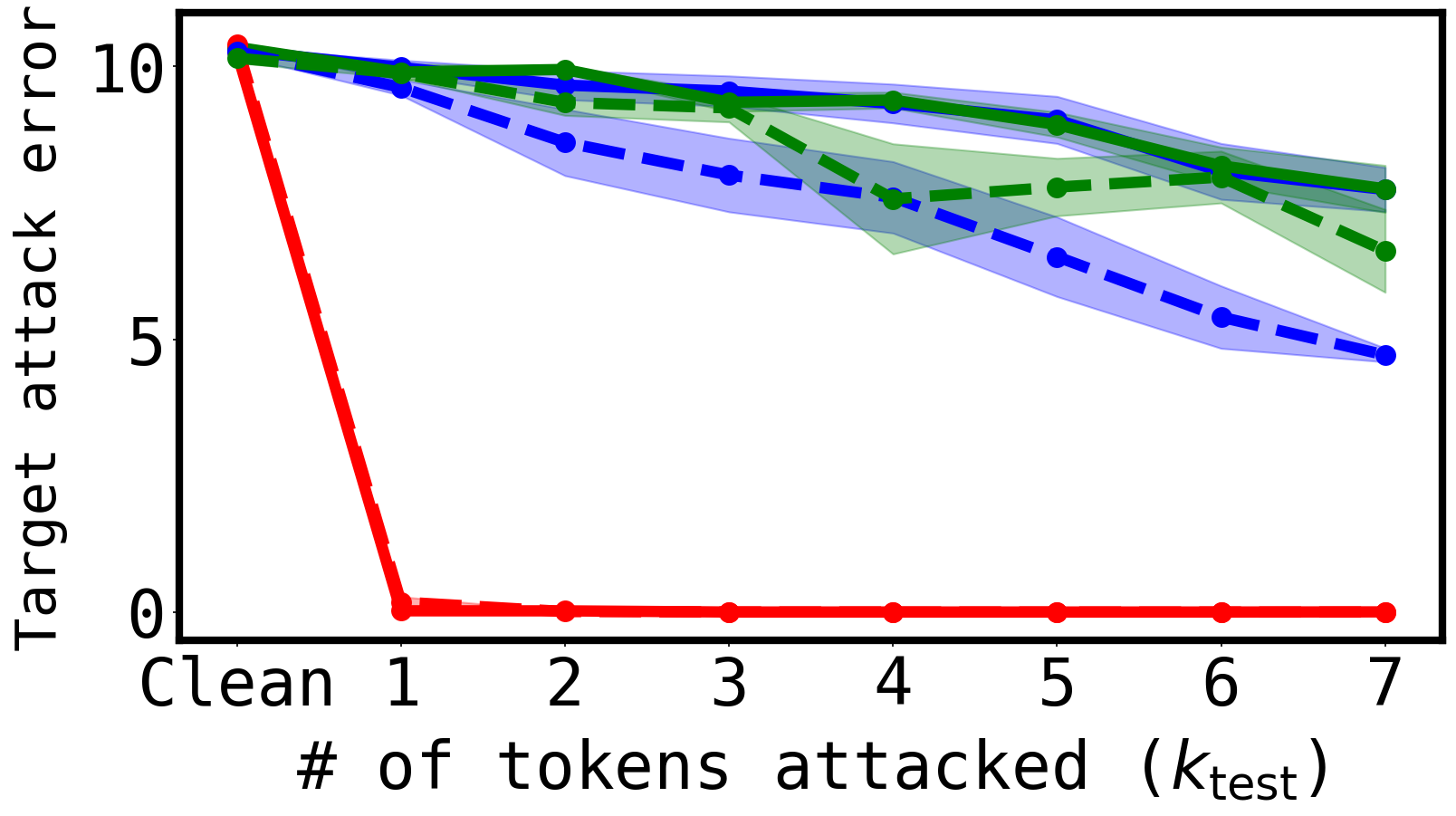}
        \caption{\xattackkk.}
    \end{subfigure}
        \begin{subfigure}[b]{0.32\textwidth}
        \includegraphics[width=\textwidth]{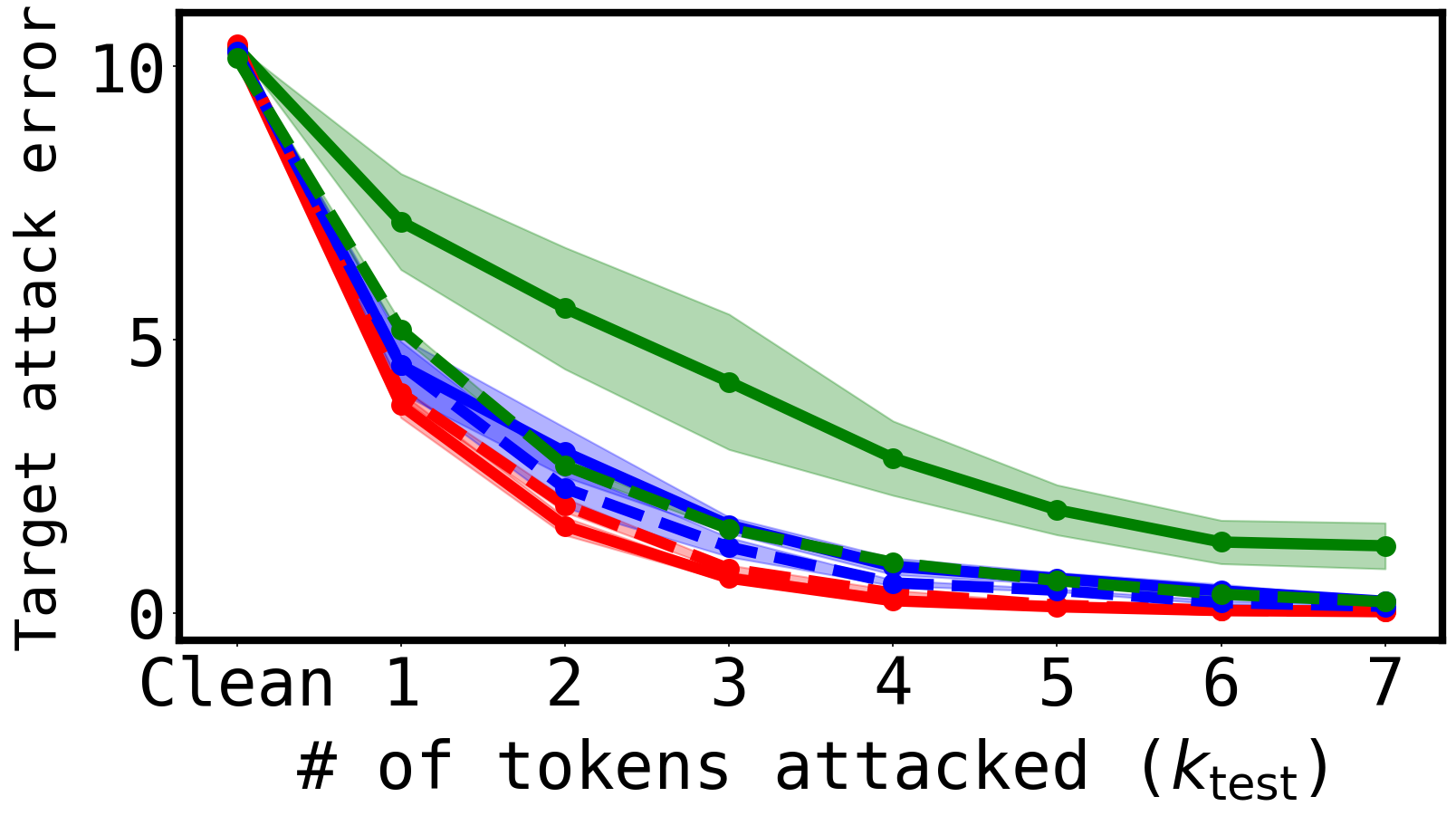}
        \caption{\yattackkk.}
    \end{subfigure}
        \begin{subfigure}[b]{0.32\textwidth}
        \includegraphics[width=\textwidth]{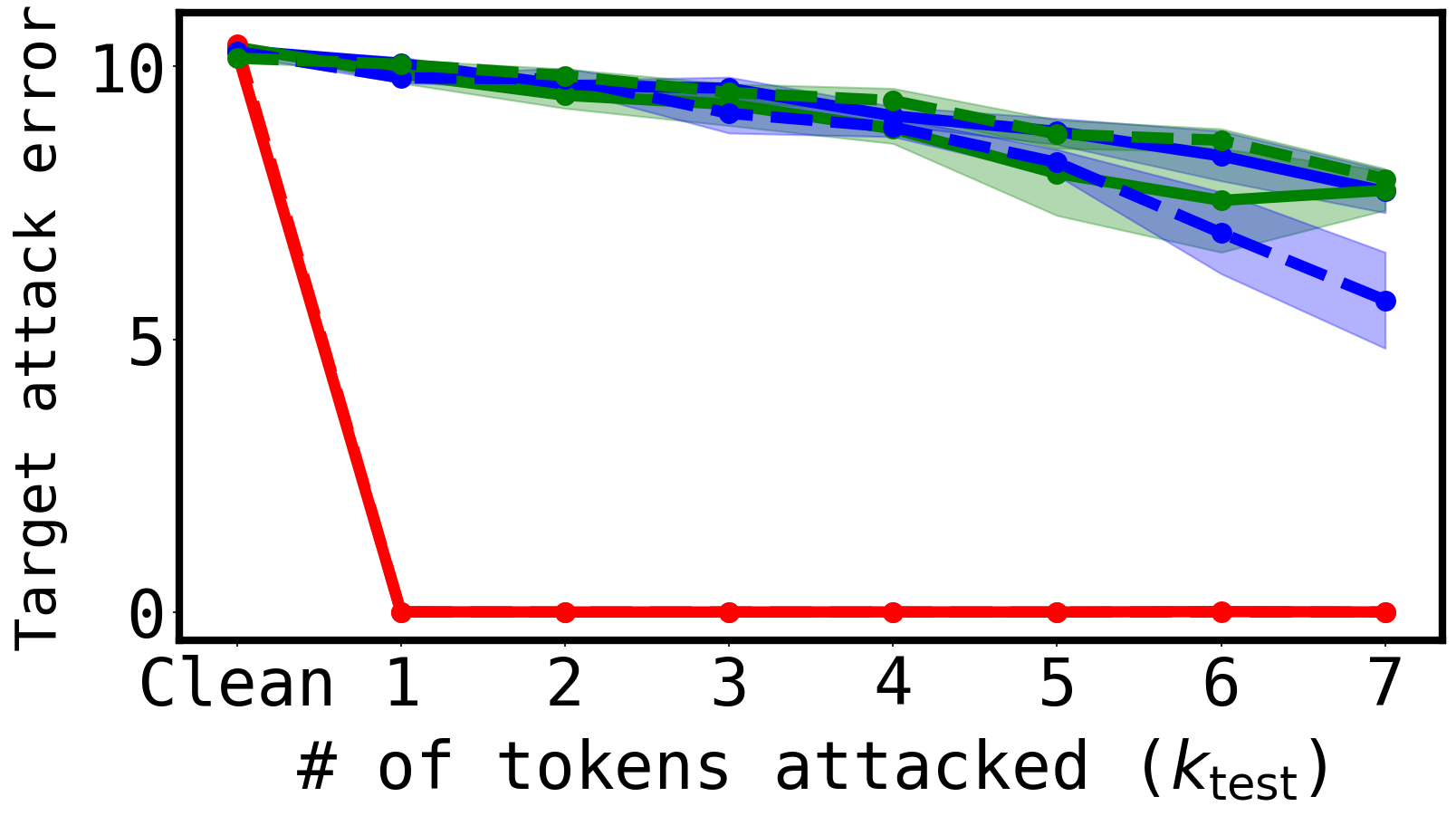}
        \caption{\zattackkk.}
    \end{subfigure}
    \begin{subfigure}[b]{0.98\textwidth}
        \includegraphics[width=\textwidth]{results/adv_training/adv_training_legend.png}
    \end{subfigure}
    \caption{Adversarial training against \zattack.}
    \label{appx.fig:adv.training.z.alpha.0.5}
\end{figure}

\ifbool{tmlrtemp}{\FloatBarrier}{\clearpage}
\subsubsection{$\alpha=0.1$}
\begin{figure}[!h]
    \centering
    \begin{subfigure}[b]{0.32\textwidth}
        \includegraphics[width=\textwidth]{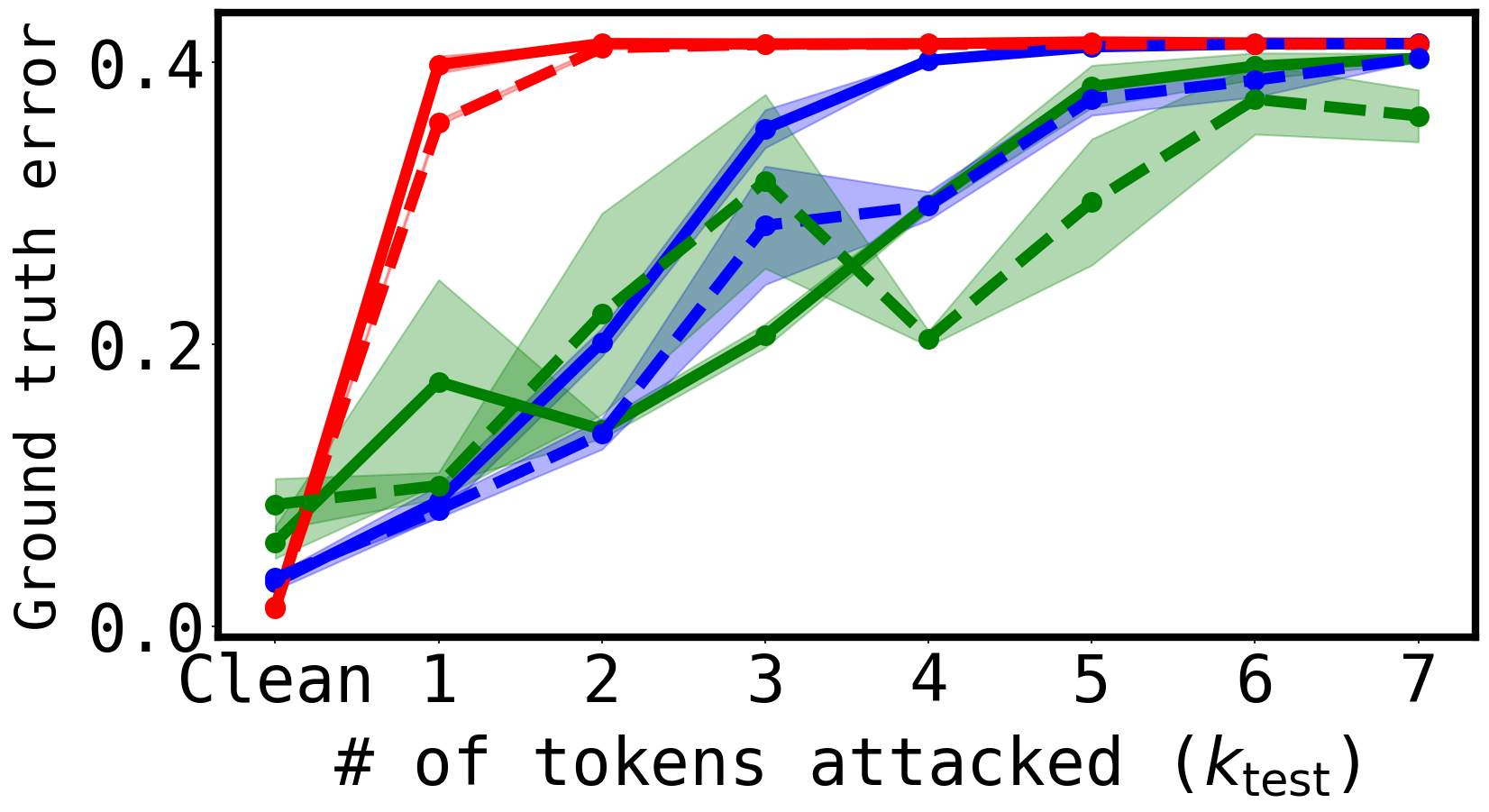}
        \caption{\xattackkk.}
    \end{subfigure}
    \begin{subfigure}[b]{0.32\textwidth}
        \includegraphics[width=\textwidth]{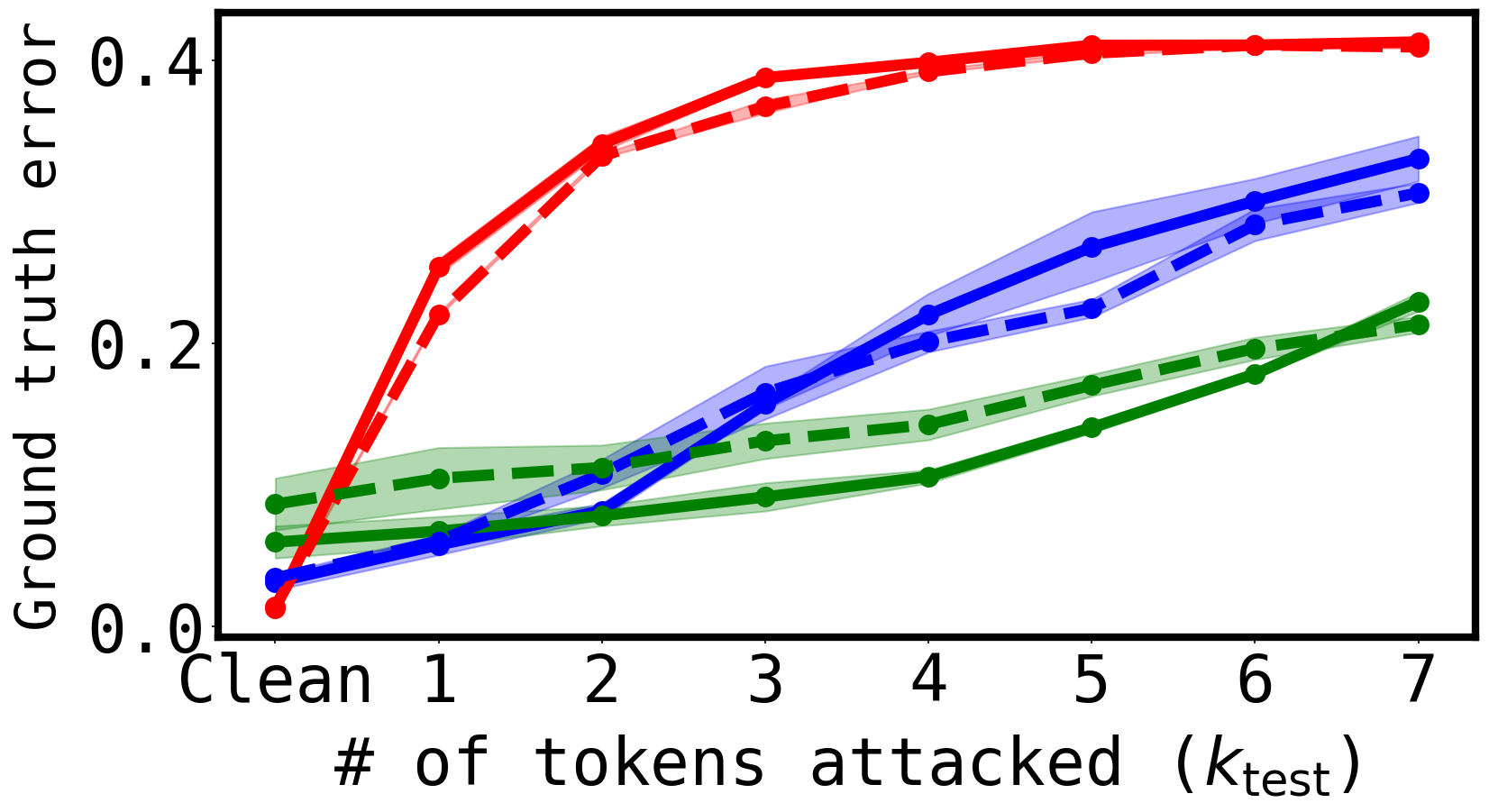}
        \caption{\yattackkk.}
    \end{subfigure}
    \begin{subfigure}[b]{0.32\textwidth}
        \includegraphics[width=\textwidth]{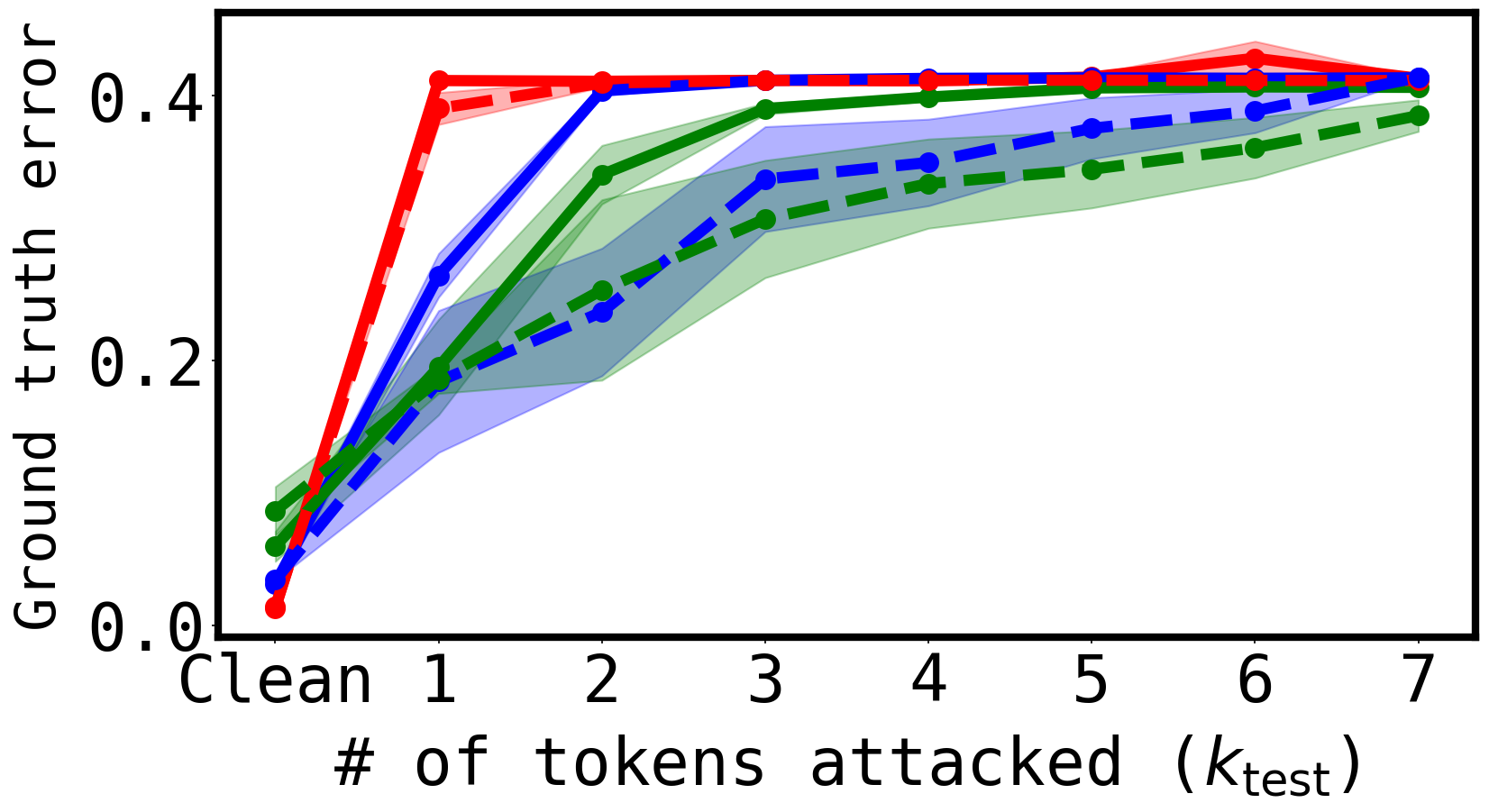}
        \caption{\zattackkk.}
    \end{subfigure}
    \begin{subfigure}[b]{0.32\textwidth}
        \includegraphics[width=\textwidth]{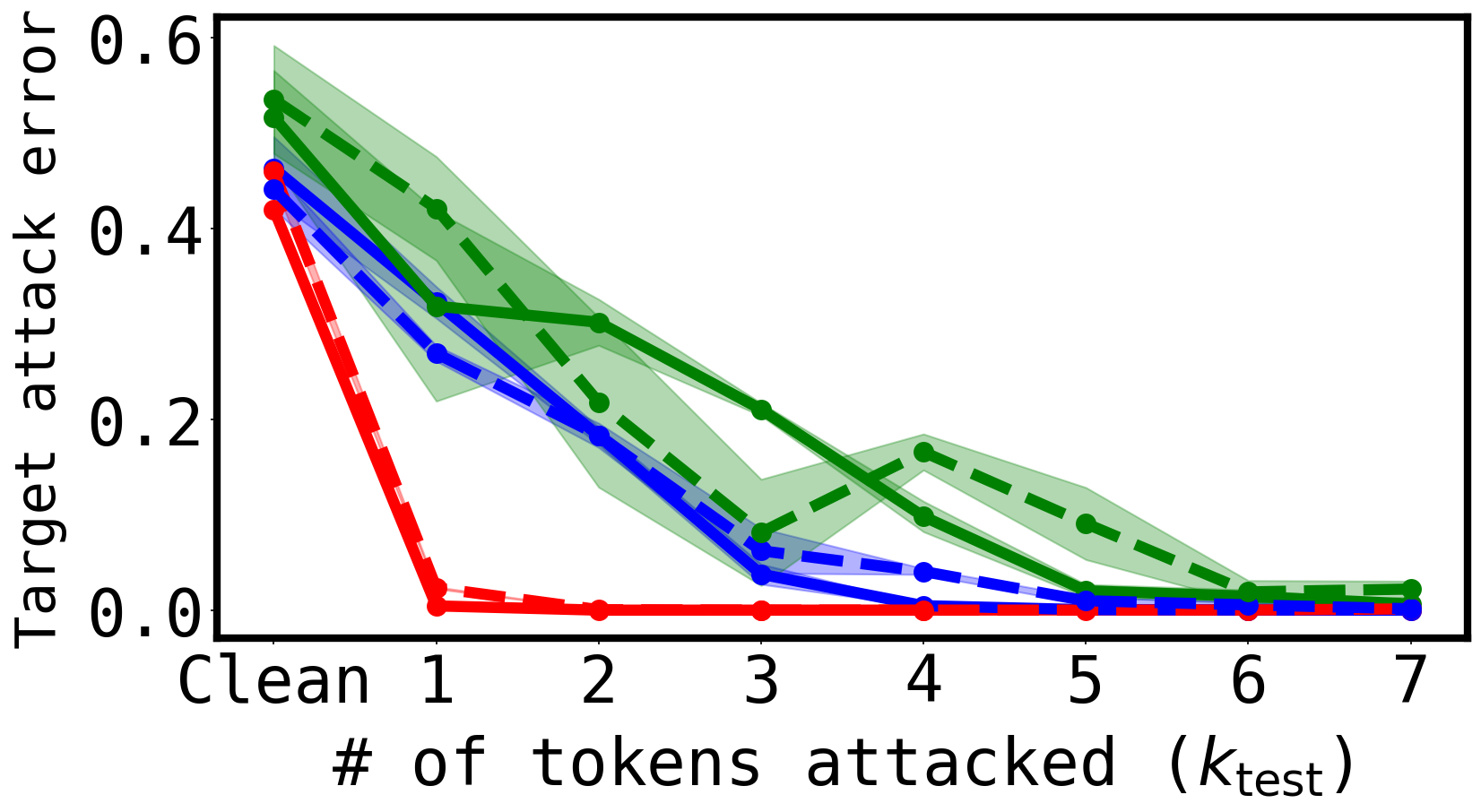}
        \caption{\xattackkk.}
    \end{subfigure}
        \begin{subfigure}[b]{0.32\textwidth}
        \includegraphics[width=\textwidth]{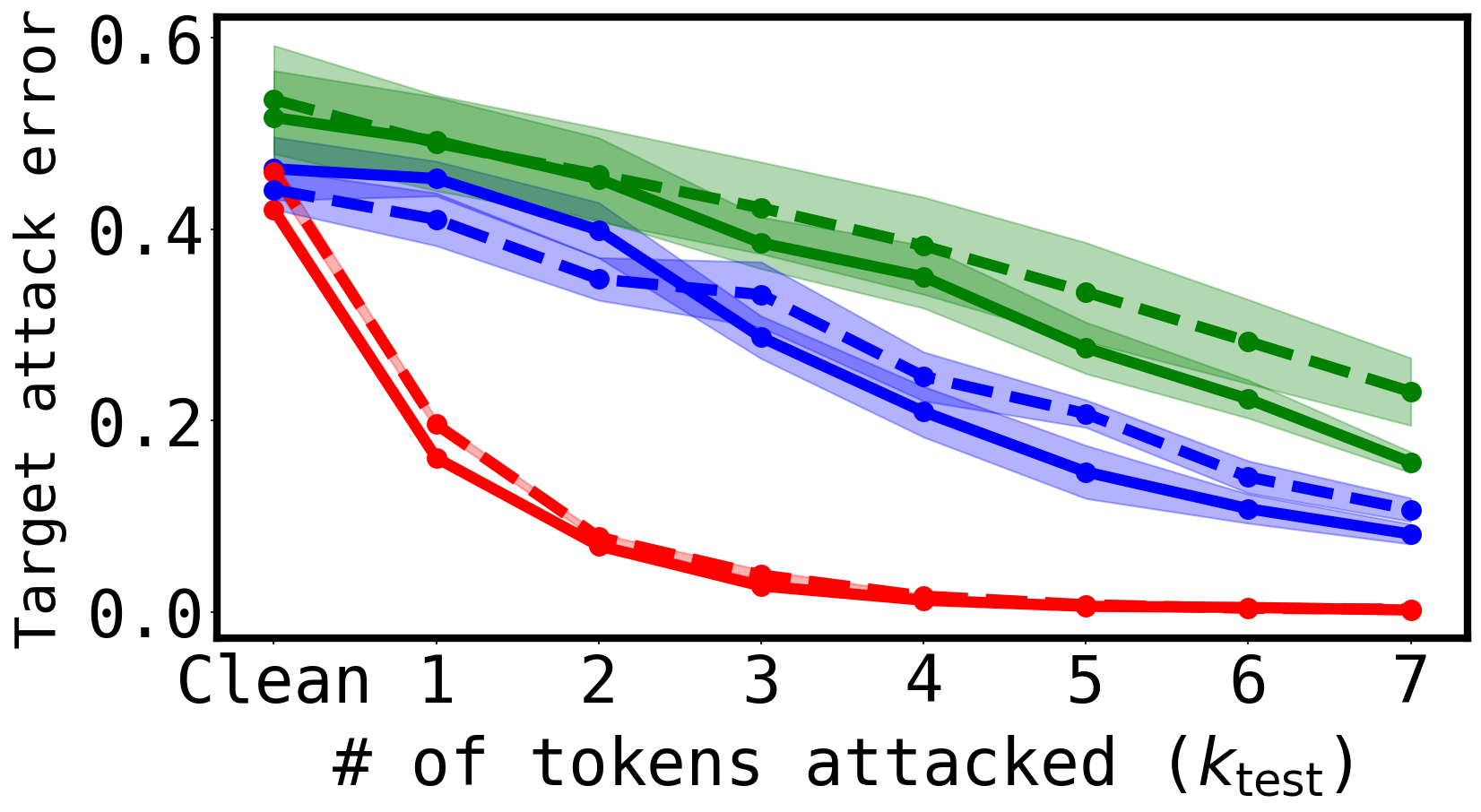}
        \caption{\yattackkk.}
    \end{subfigure}
        \begin{subfigure}[b]{0.32\textwidth}
        \includegraphics[width=\textwidth]{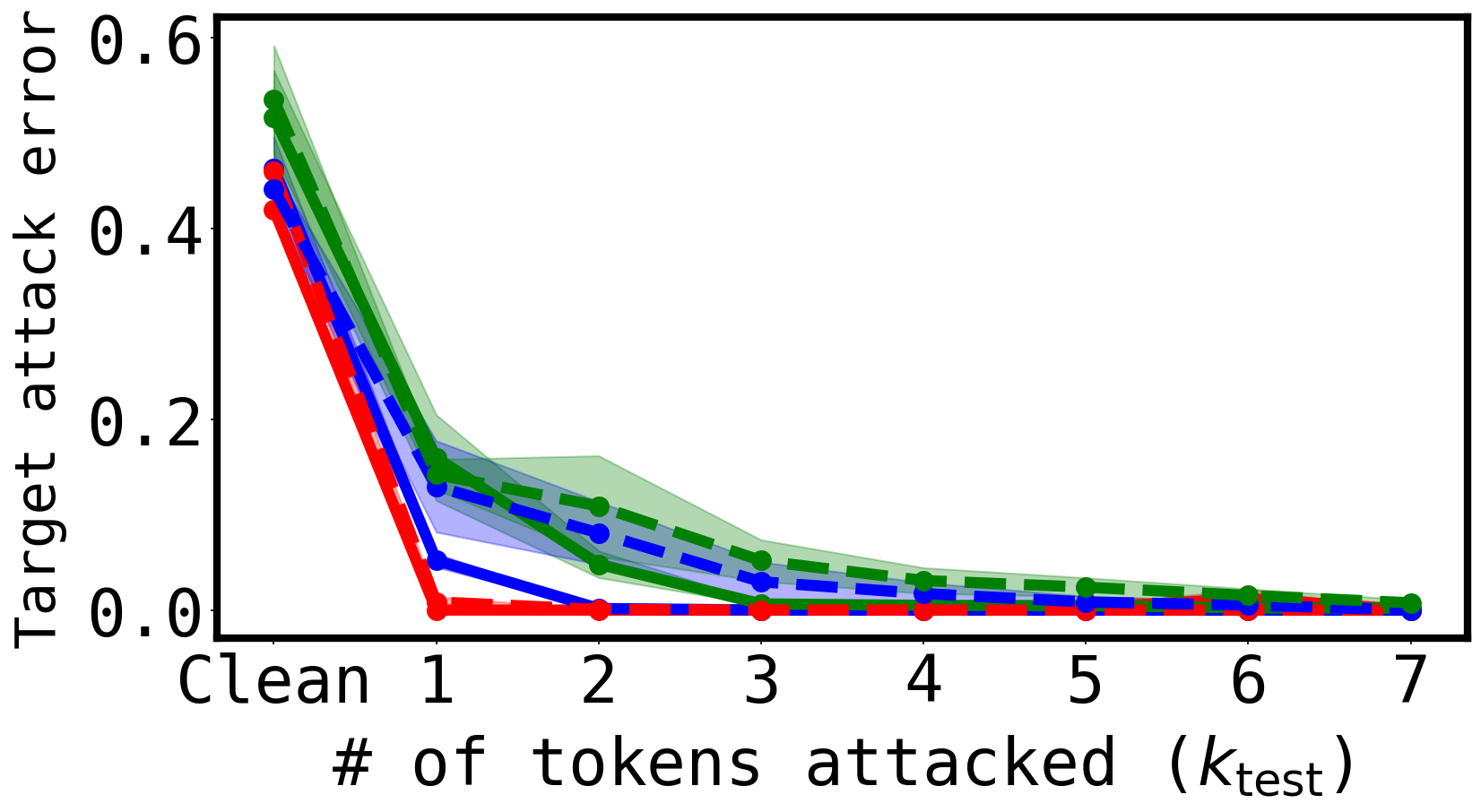}
        \caption{\zattackkk.}
    \end{subfigure}
    \begin{subfigure}[b]{0.98\textwidth}
        \includegraphics[width=\textwidth]{results/adv_training/adv_training_legend.png}
    \end{subfigure}
    \caption{Adversarial training against \yattack. A-PT denotes adversarial
    pretraining and A-FT denotes adversarial finetuning. $k_\text{train}$ denotes
    the number of tokens attacked during training and $k_\text{train}=0$ corresponds
    to a model that has not undergone adversarial training at all.}
    \label{appx.fig:adv.training.y.alpha.0.1}
\end{figure}
\begin{figure}[!h]
    \centering
    \begin{subfigure}[b]{0.32\textwidth}
        \includegraphics[width=\textwidth]{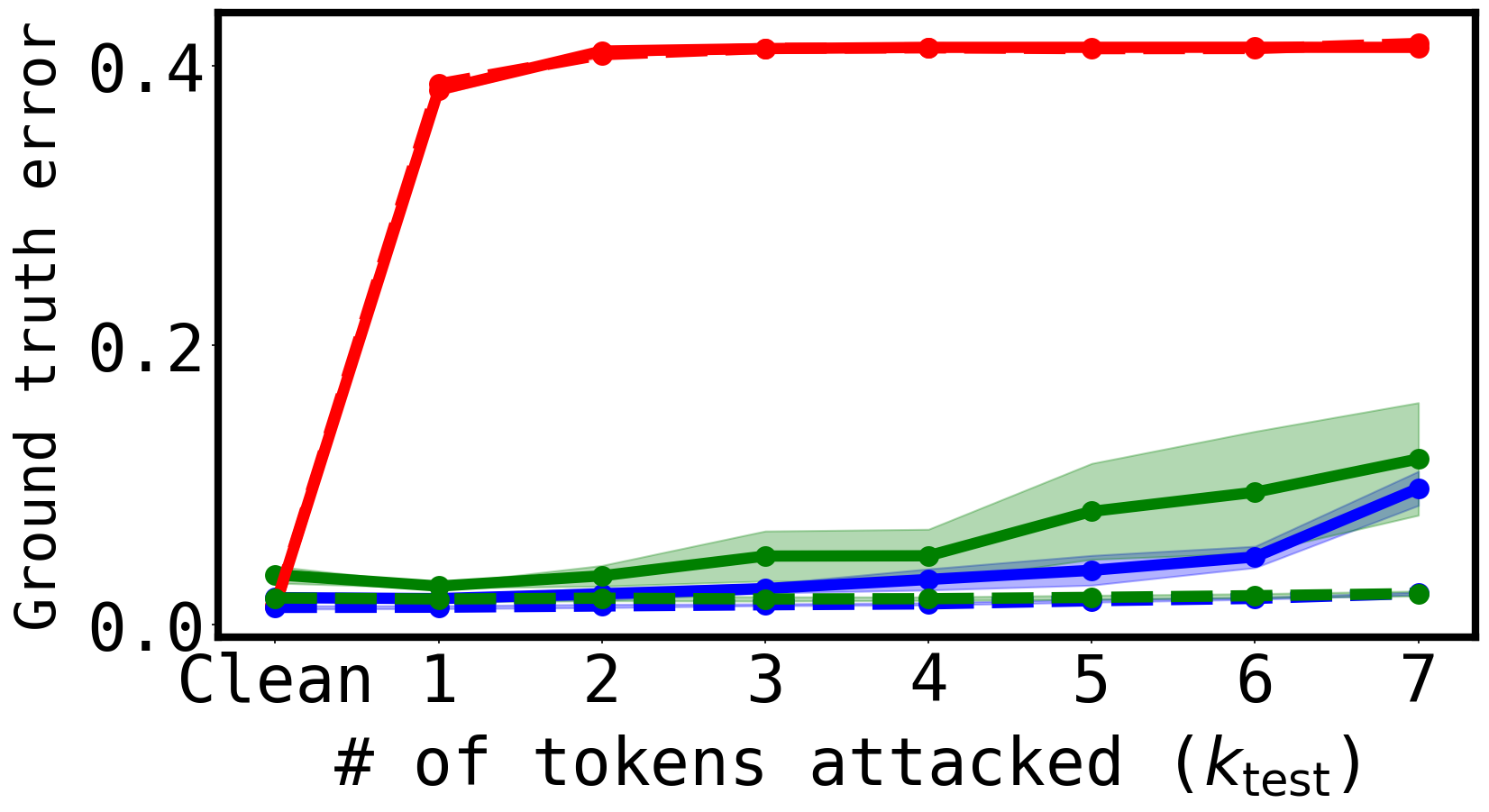}
        \caption{\xattackkk.}
    \end{subfigure}
    \begin{subfigure}[b]{0.32\textwidth}
        \includegraphics[width=\textwidth]{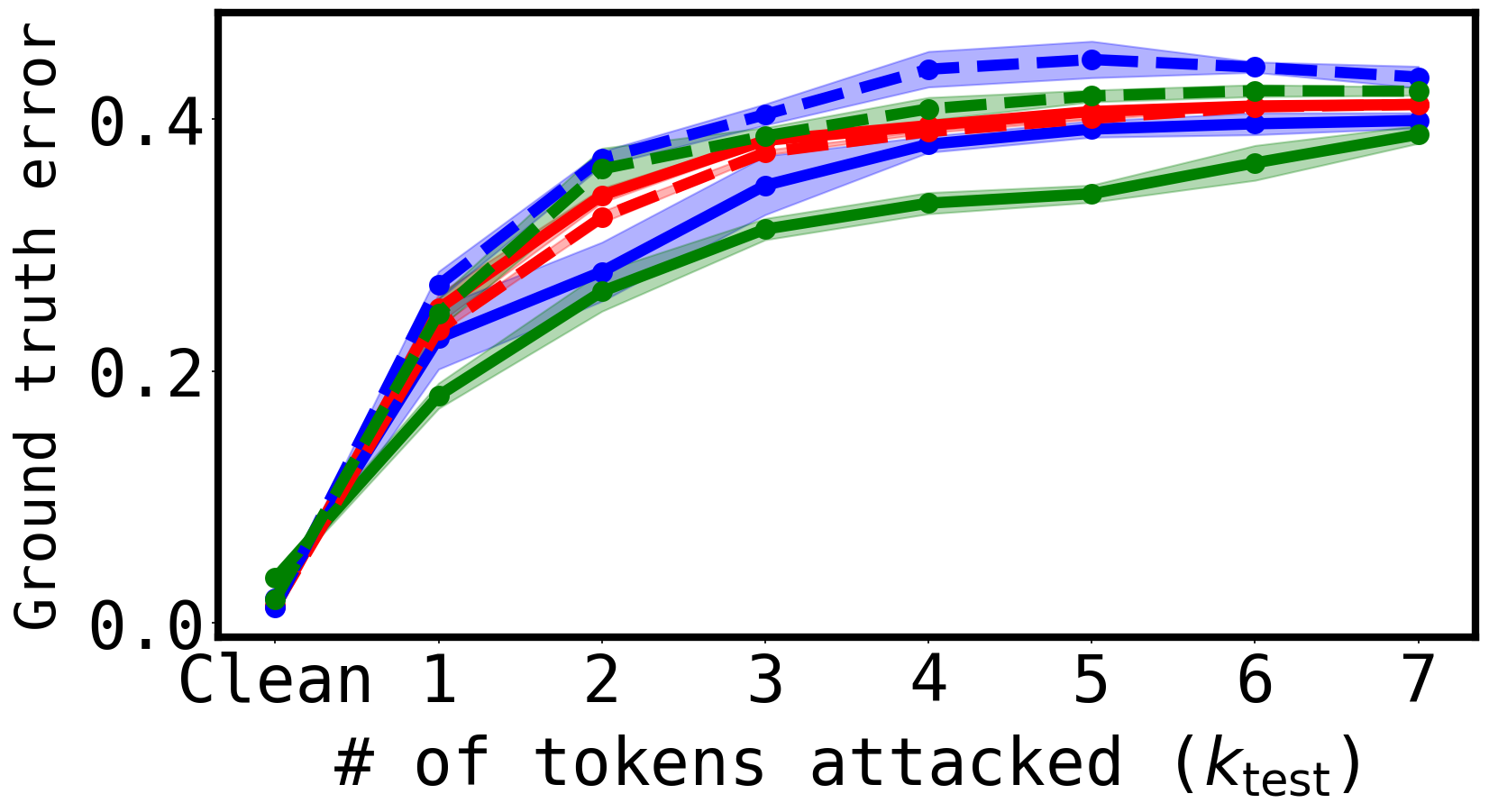}
        \caption{\yattackkk.}
    \end{subfigure}
    \begin{subfigure}[b]{0.32\textwidth}
        \includegraphics[width=\textwidth]{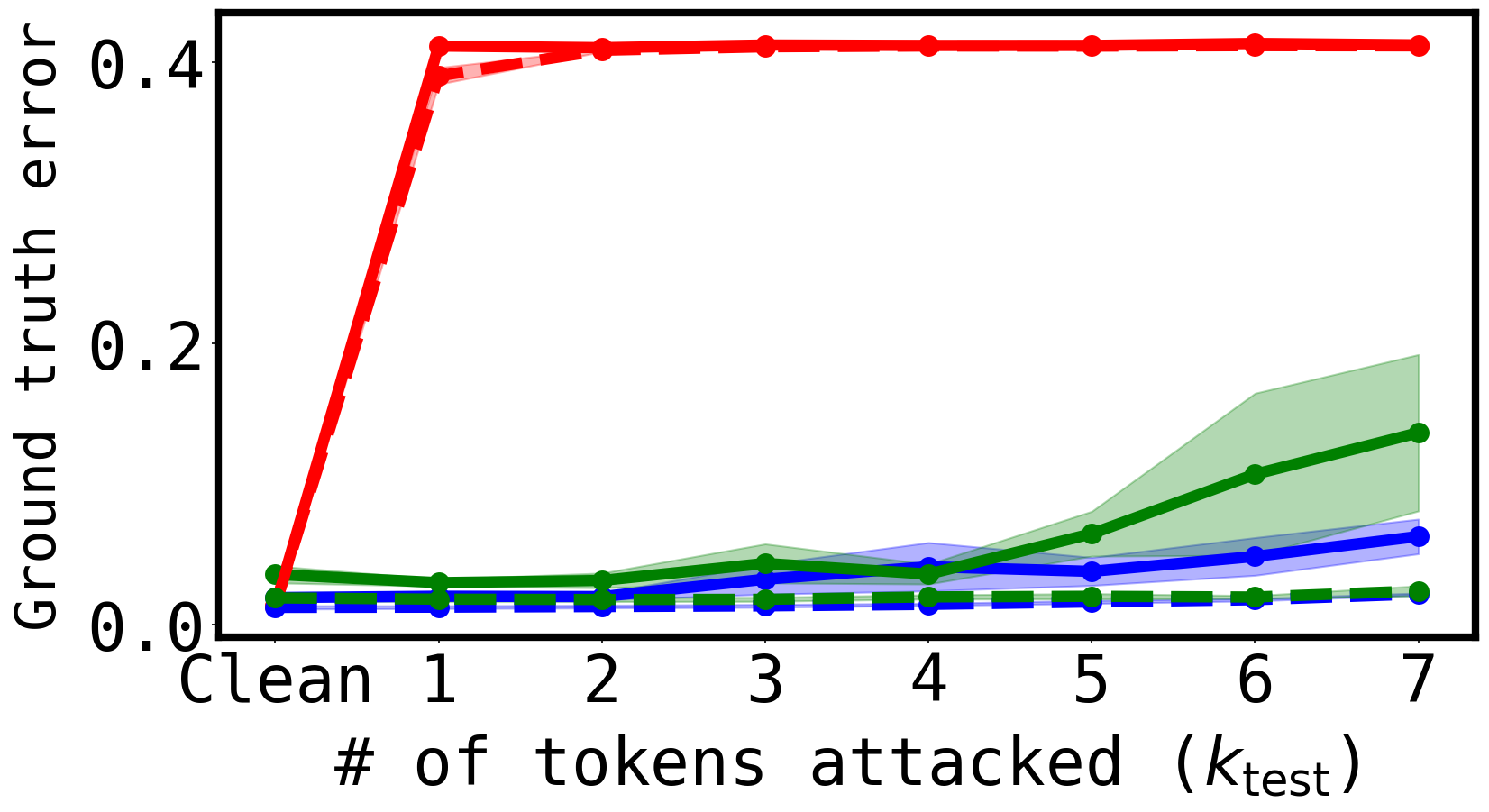}
        \caption{\zattackkk.}
    \end{subfigure}
    \begin{subfigure}[b]{0.32\textwidth}
        \includegraphics[width=\textwidth]{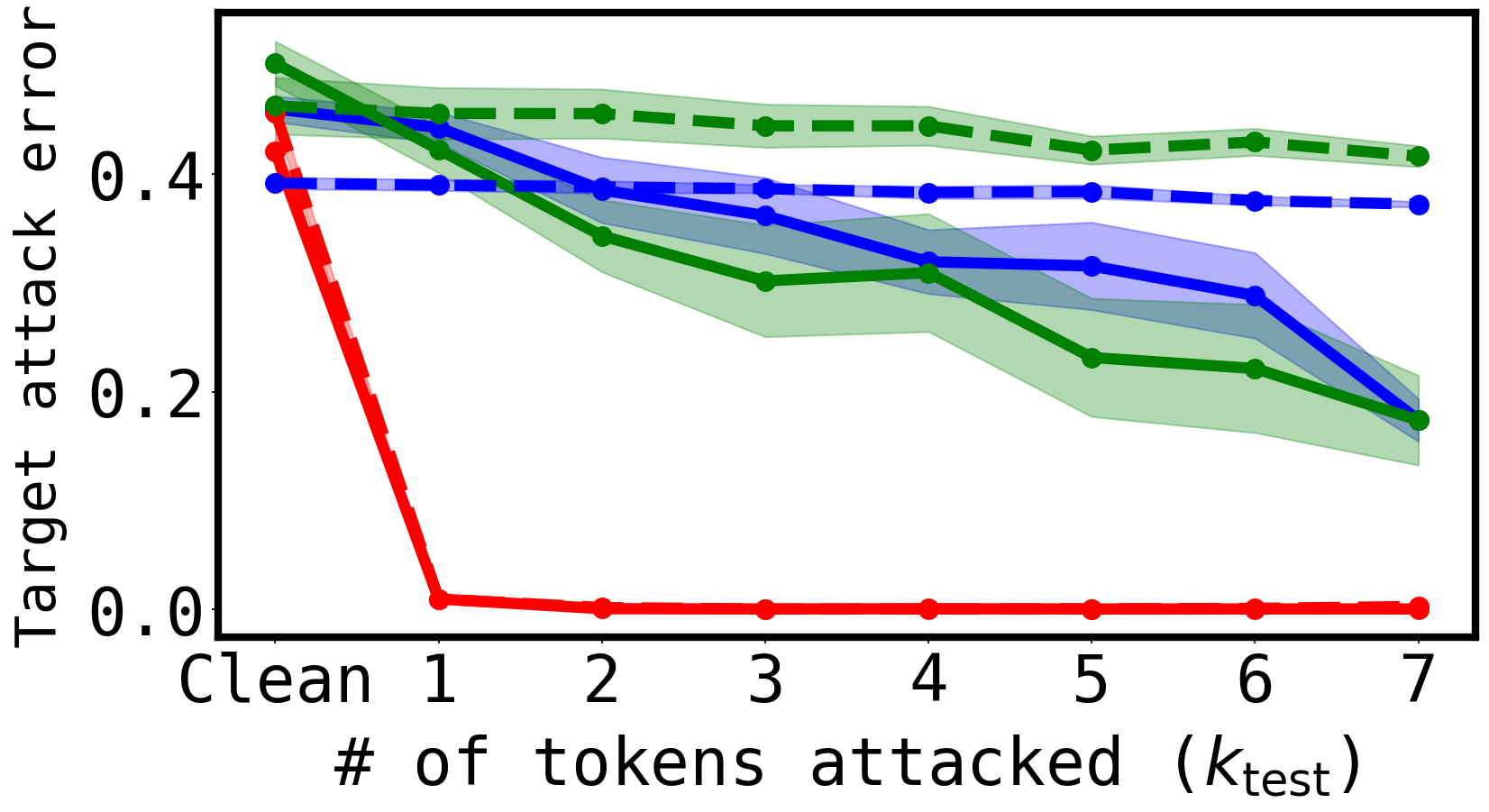}
        \caption{\xattackkk.}
    \end{subfigure}
        \begin{subfigure}[b]{0.32\textwidth}
        \includegraphics[width=\textwidth]{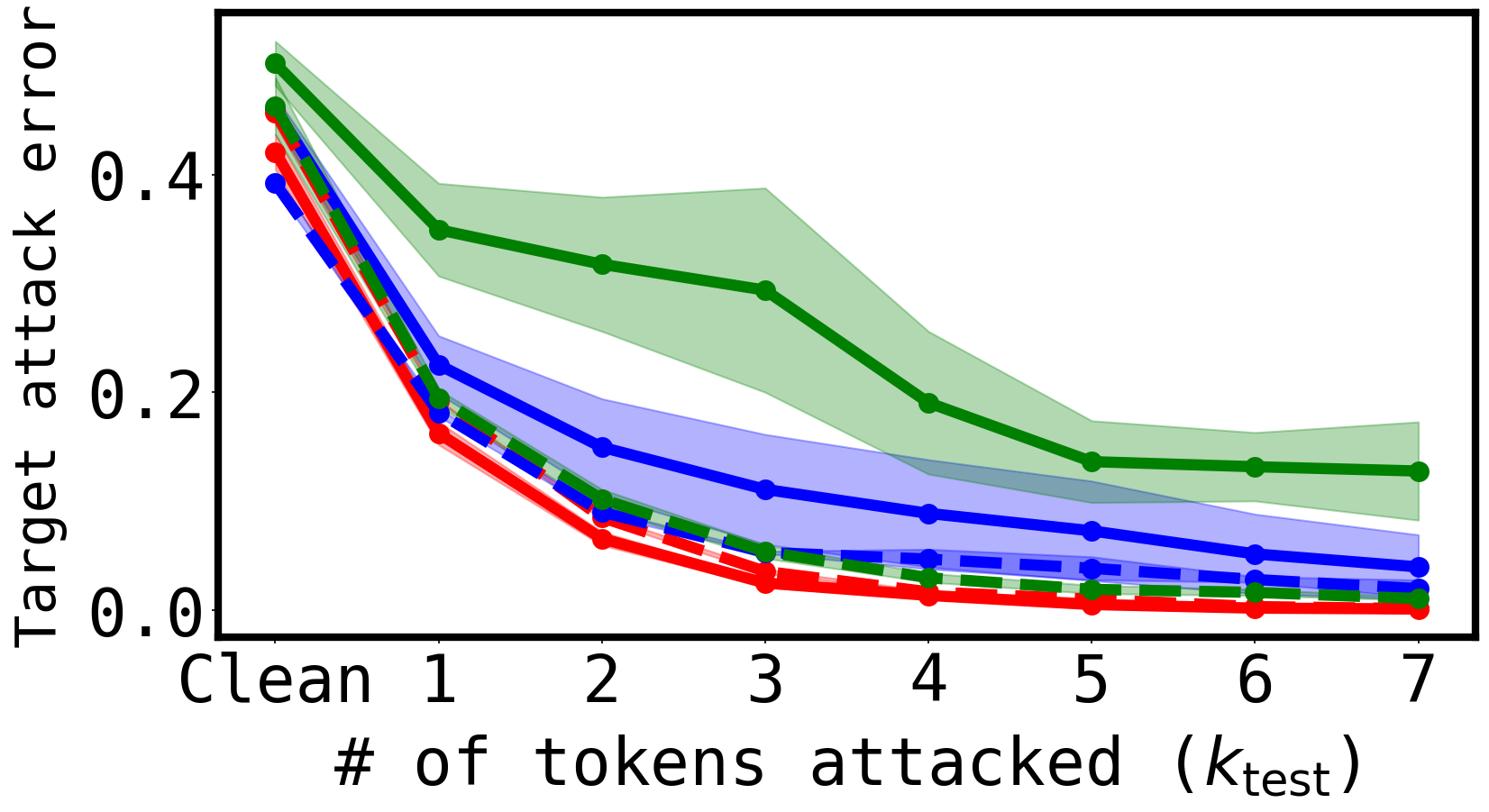}
        \caption{\yattackkk.}
    \end{subfigure}
        \begin{subfigure}[b]{0.32\textwidth}
        \includegraphics[width=\textwidth]{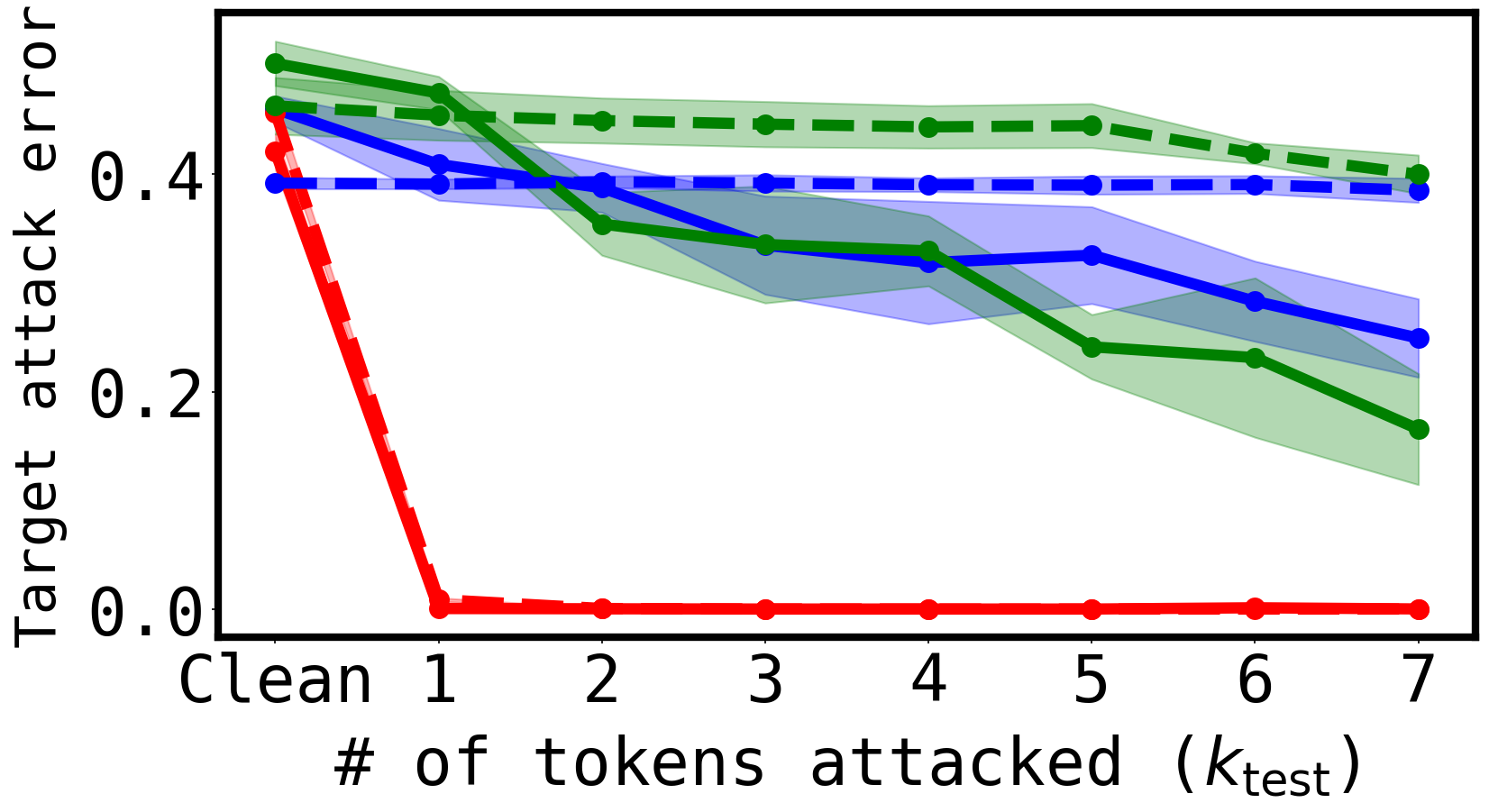}
        \caption{\zattackkk.}
    \end{subfigure}
    \begin{subfigure}[b]{0.98\textwidth}
        \includegraphics[width=\textwidth]{results/adv_training/adv_training_legend.png}
    \end{subfigure}
    \caption{Adversarial training against \xattack.}
    \label{appx.fig:adv.training.x.alpha.0.1}
\end{figure}
\begin{figure}[!h]
    \centering
    \begin{subfigure}[b]{0.32\textwidth}
        \includegraphics[width=\textwidth]{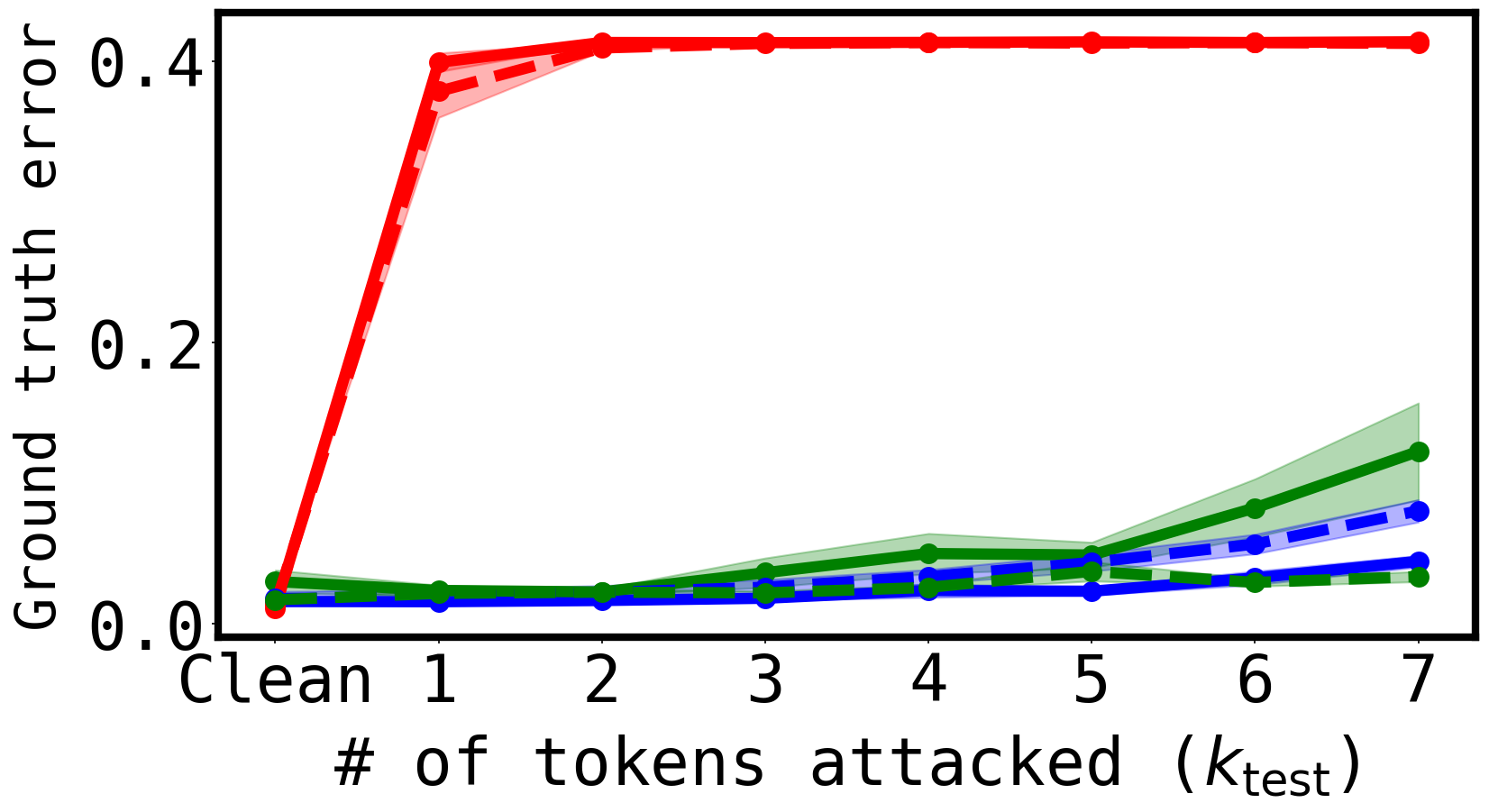}
        \caption{\xattackkk.}
    \end{subfigure}
    \begin{subfigure}[b]{0.32\textwidth}
        \includegraphics[width=\textwidth]{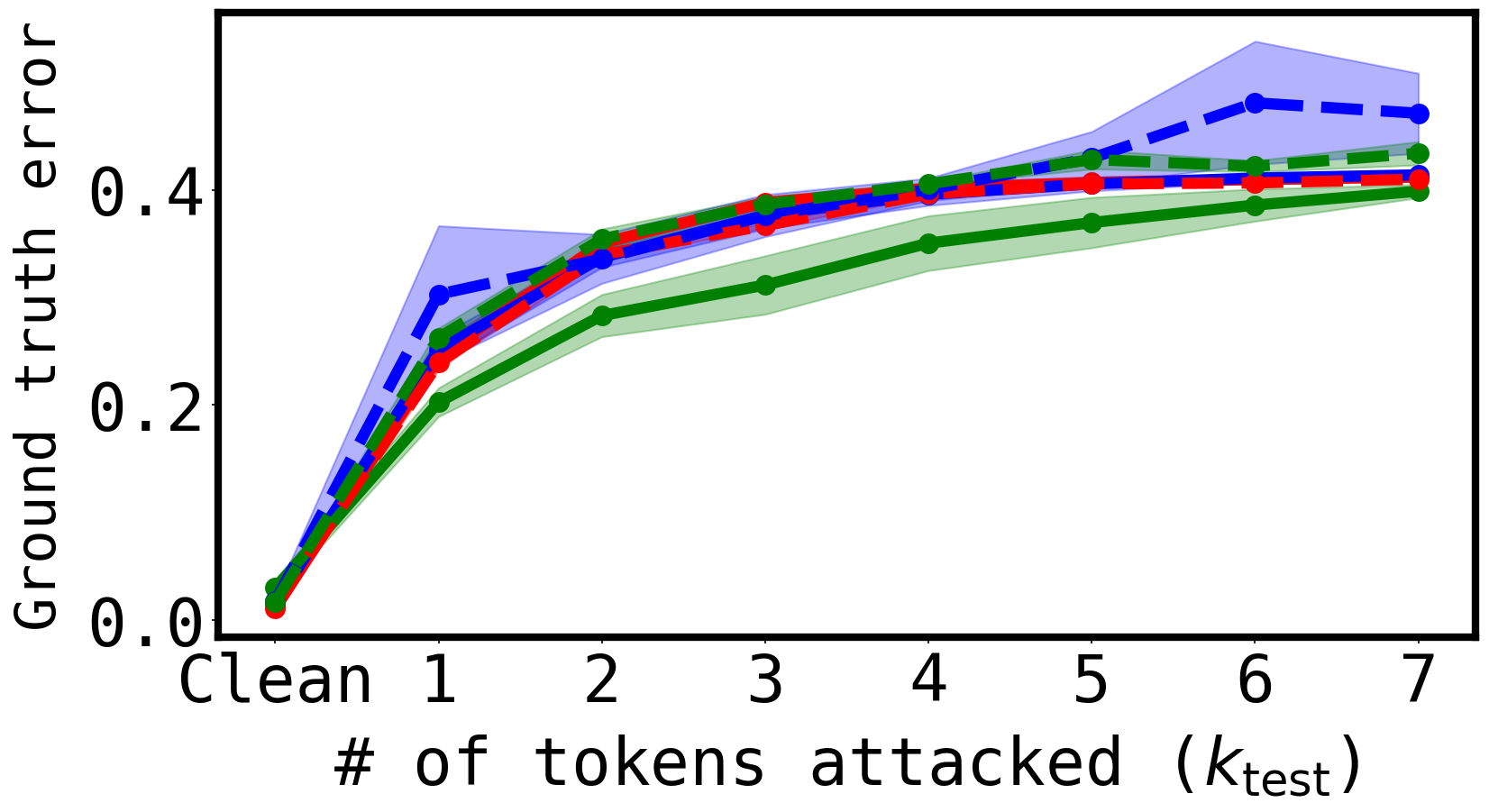}
        \caption{\yattackkk.}
    \end{subfigure}
    \begin{subfigure}[b]{0.32\textwidth}
        \includegraphics[width=\textwidth]{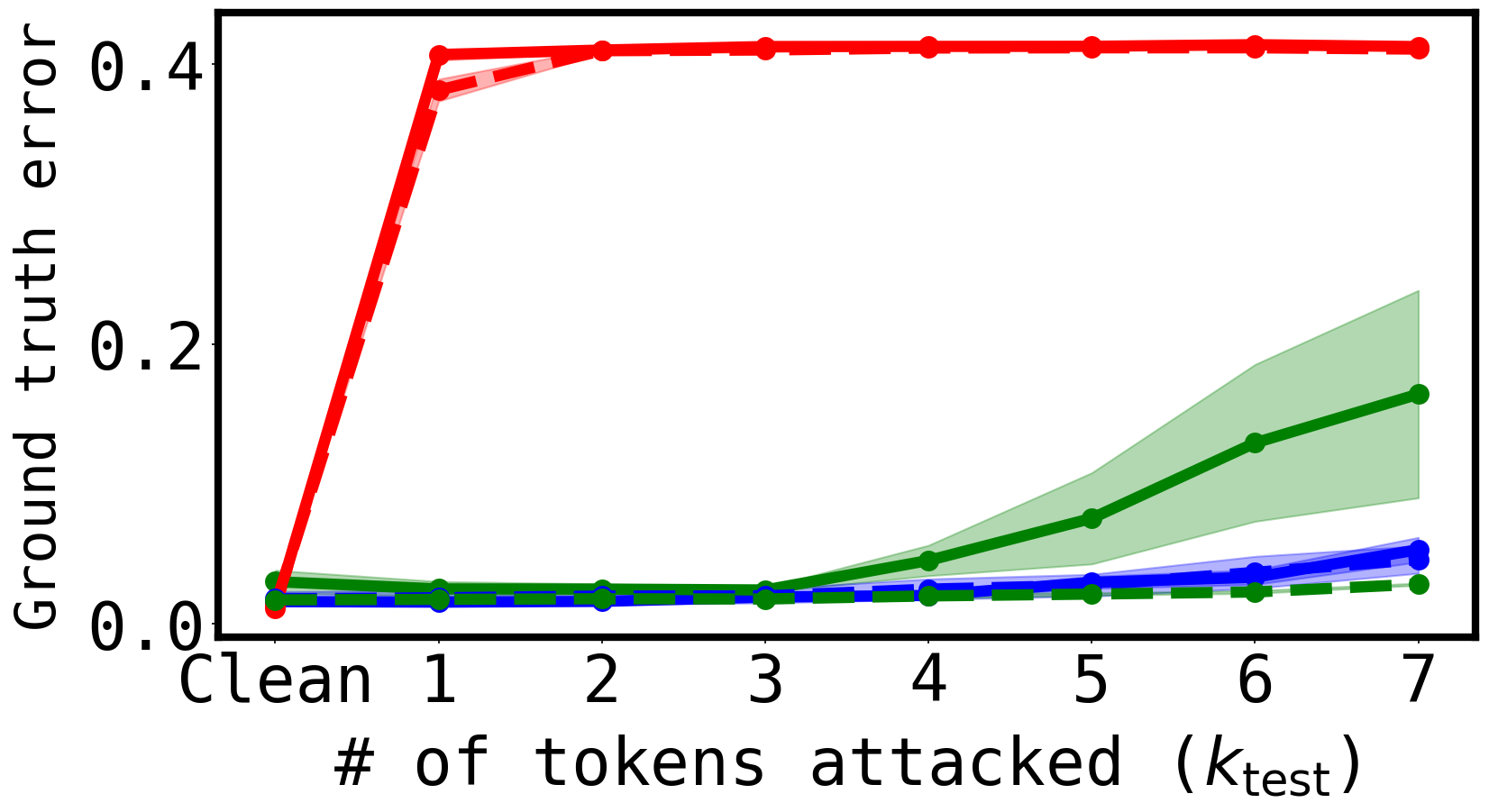}
        \caption{\zattackkk.}
    \end{subfigure}
    \begin{subfigure}[b]{0.32\textwidth}
        \includegraphics[width=\textwidth]{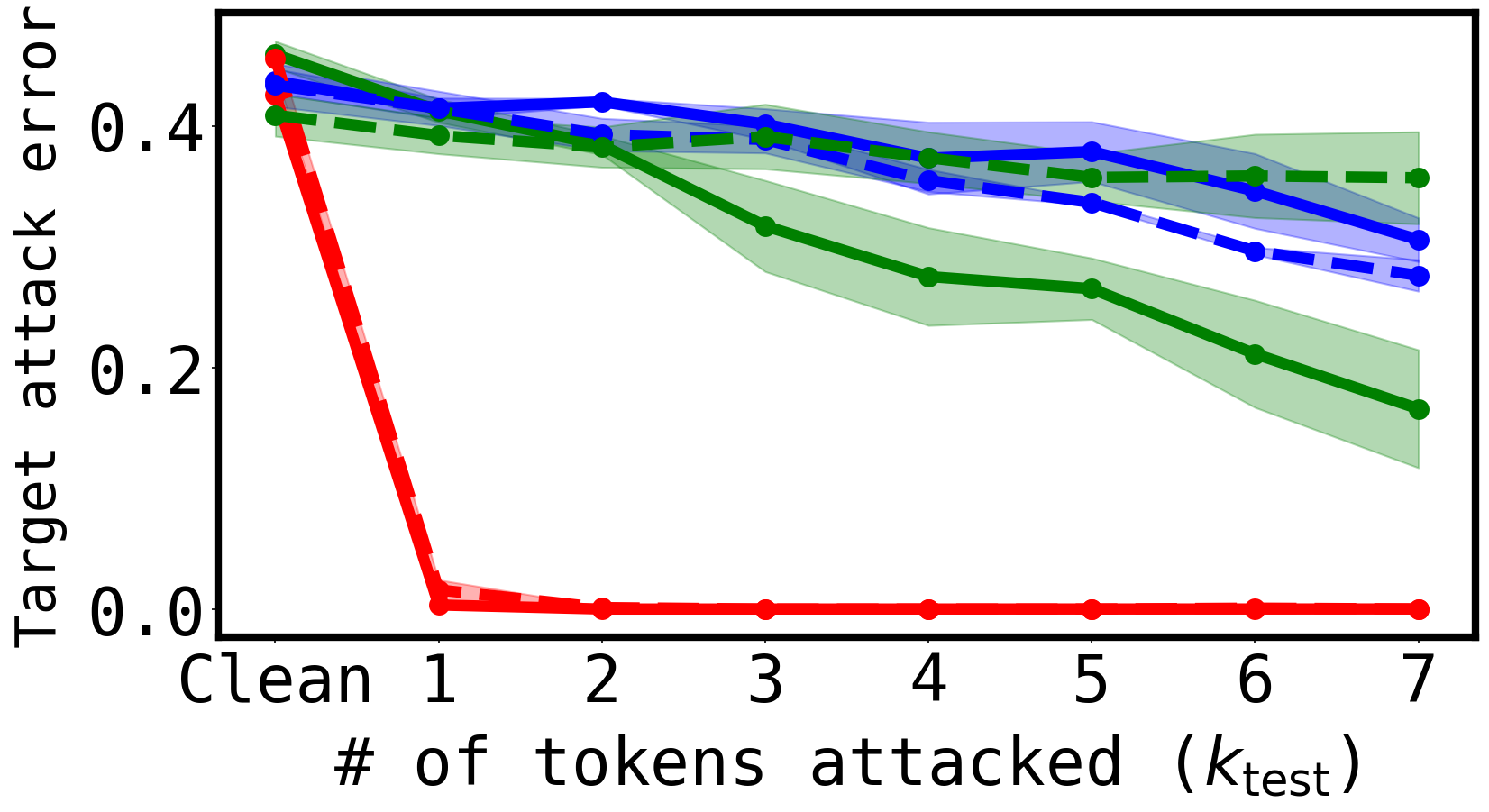}
        \caption{\xattackkk.}
    \end{subfigure}
        \begin{subfigure}[b]{0.32\textwidth}
        \includegraphics[width=\textwidth]{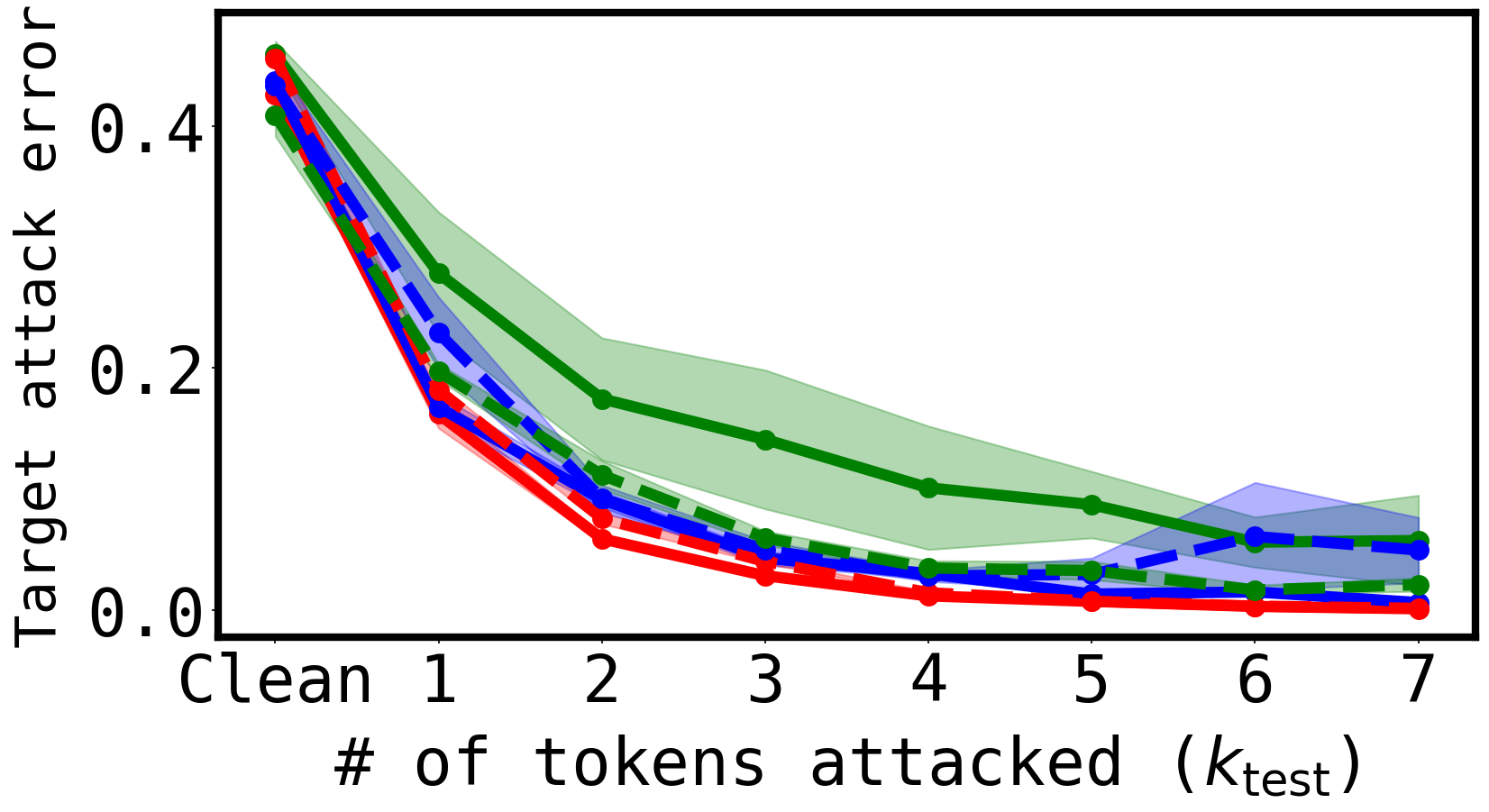}
        \caption{\yattackkk.}
    \end{subfigure}
        \begin{subfigure}[b]{0.32\textwidth}
        \includegraphics[width=\textwidth]{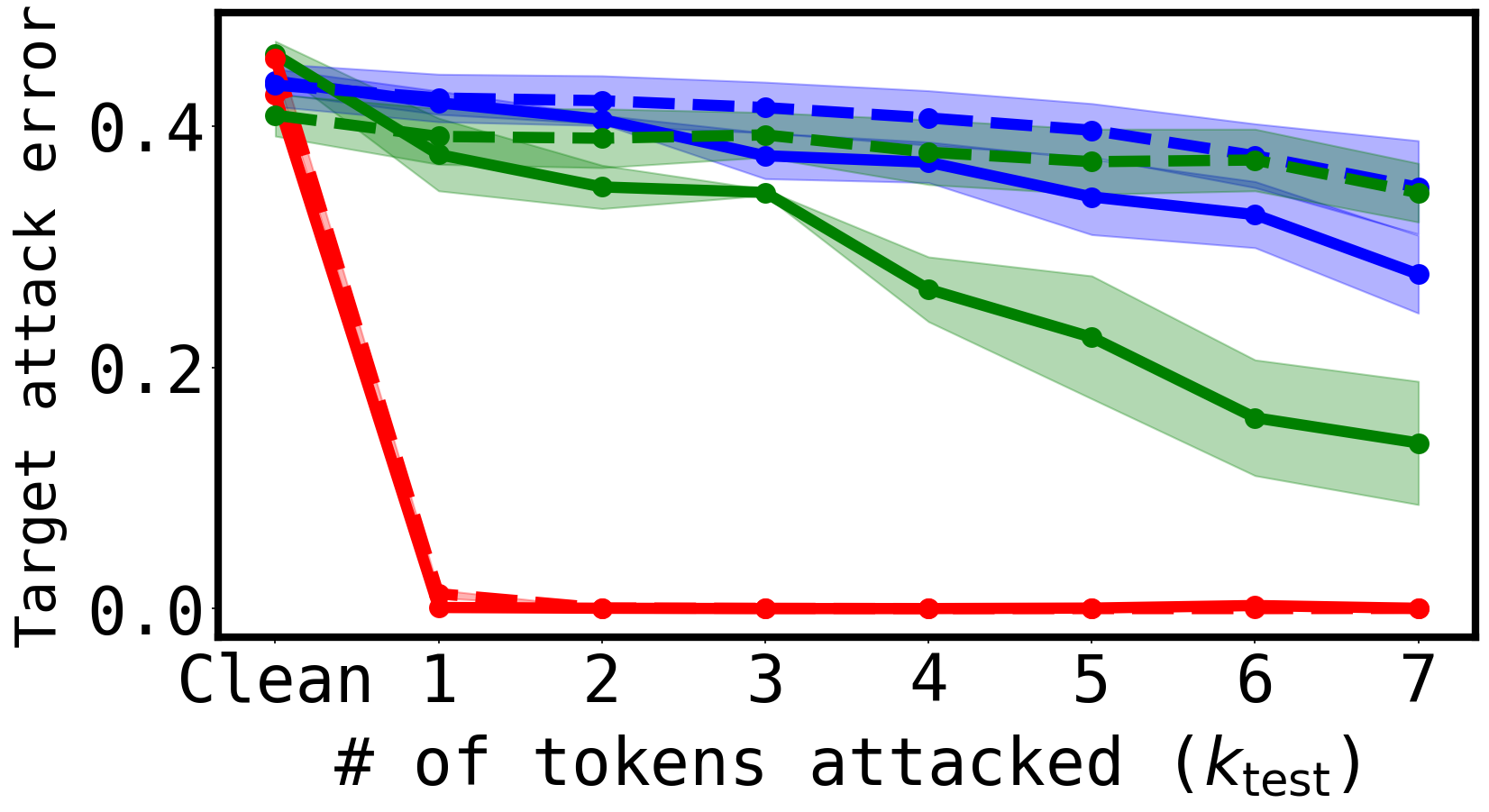}
        \caption{\zattackkk.}
    \end{subfigure}
    \begin{subfigure}[b]{0.98\textwidth}
        \includegraphics[width=\textwidth]{results/adv_training/adv_training_legend.png}
    \end{subfigure}
    \caption{Adversarial training against \zattack.}
    \label{appx.fig:adv.training.z.alpha.0.1}
\end{figure}

\FloatBarrier

\subsection{Transferability}
\label{appx.section.transfer}
In Section~\ref{sec.transfer.across.tf}, we briefly presented some results around transfer of adversarial examples generated using one transformer to other transformers -- either with the same architecture or different architecture.
We present complete results here, for both \xattack{} and \yattack{}.
As in the main text, we first present results for transfer across same class of transformers, i.e., transformers with same number of layers and then present results for transfer across different classes of transformers.
\begin{figure}[h]
    \centering
    \begin{subfigure}[b]{0.13\textwidth}
        \includegraphics[width=\textwidth]{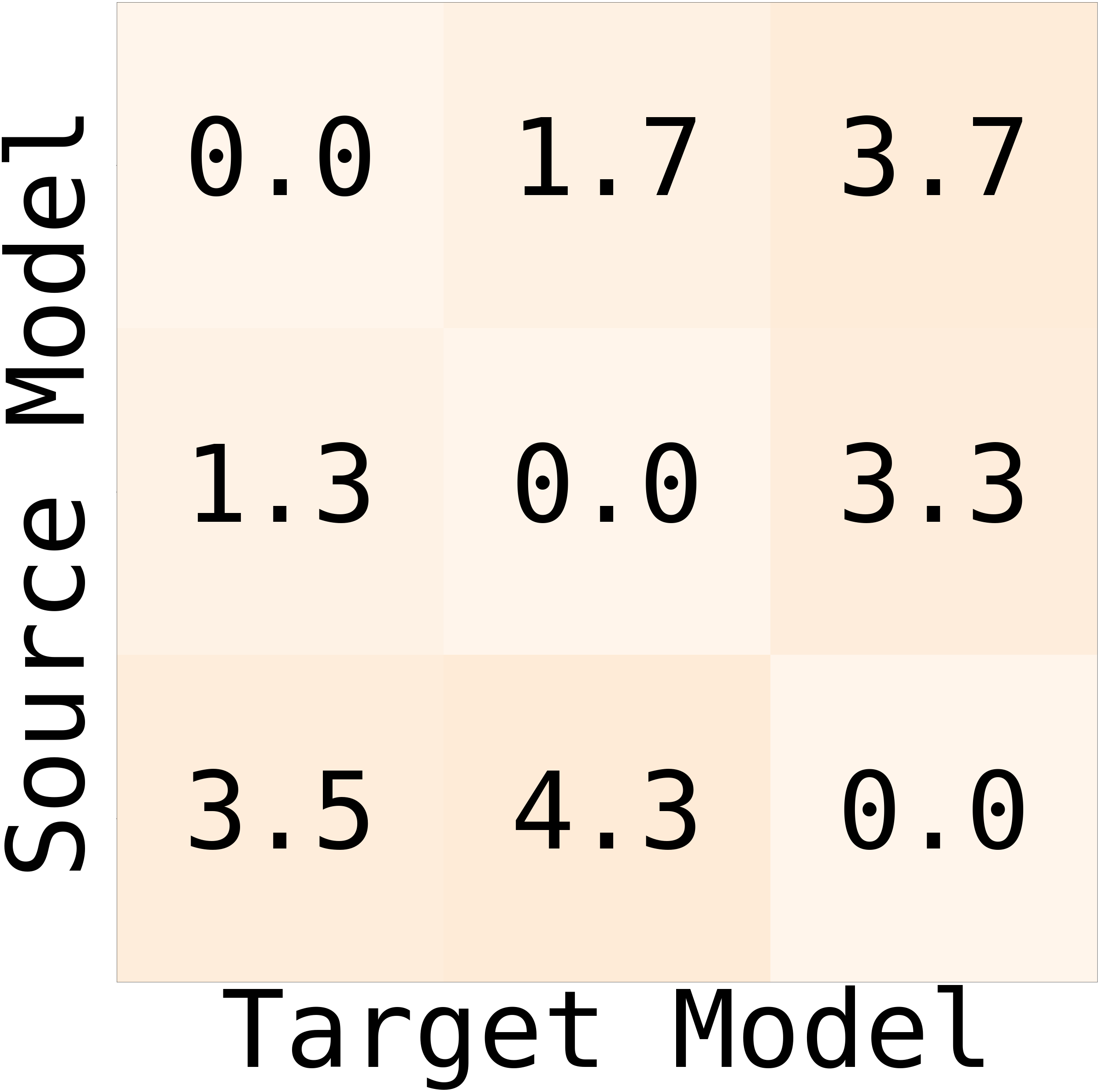}
        \caption{2 layers.}
    \end{subfigure}
    \hfill
    \begin{subfigure}[b]{0.13\textwidth}
        \includegraphics[width=\textwidth]{results/tf_to_tf_transfer_x/x_garg-scale_idxs_3_iters_100_single_modelFalselayers_3.png}
        \caption{3 layers.}
    \end{subfigure}
    \hfill
    \begin{subfigure}[b]{0.13\textwidth}
        \includegraphics[width=\textwidth]{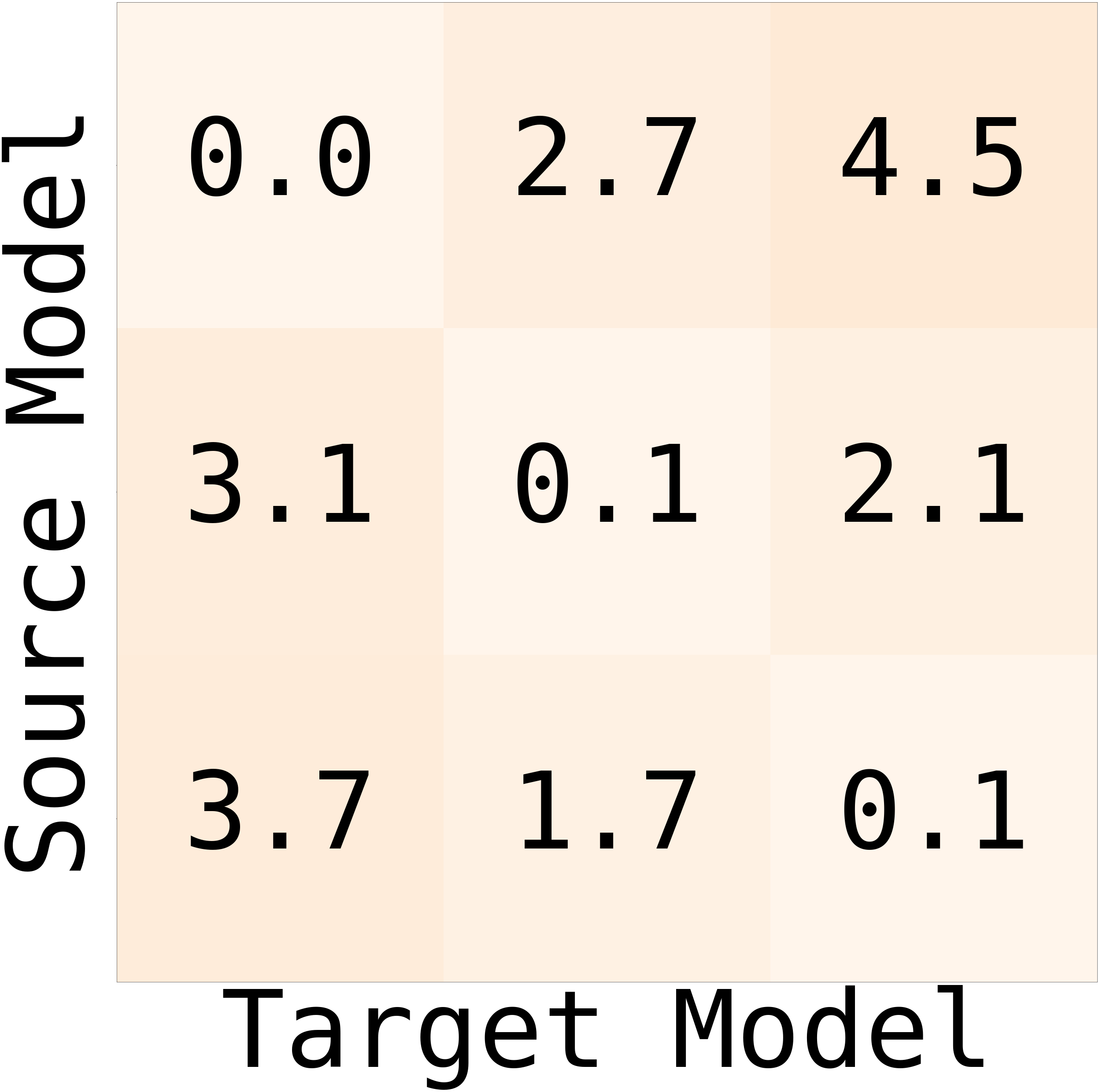}
        \caption{4 layers.}
    \end{subfigure}
    \hfill
    \begin{subfigure}[b]{0.13\textwidth}
        \includegraphics[width=\textwidth]{results/tf_to_tf_transfer_x/x_garg-scale_idxs_3_iters_100_single_modelFalselayers_6.png}
        \caption{6 layers.}
    \end{subfigure}
    \begin{subfigure}[b]{0.13\textwidth}
        \includegraphics[width=\textwidth]{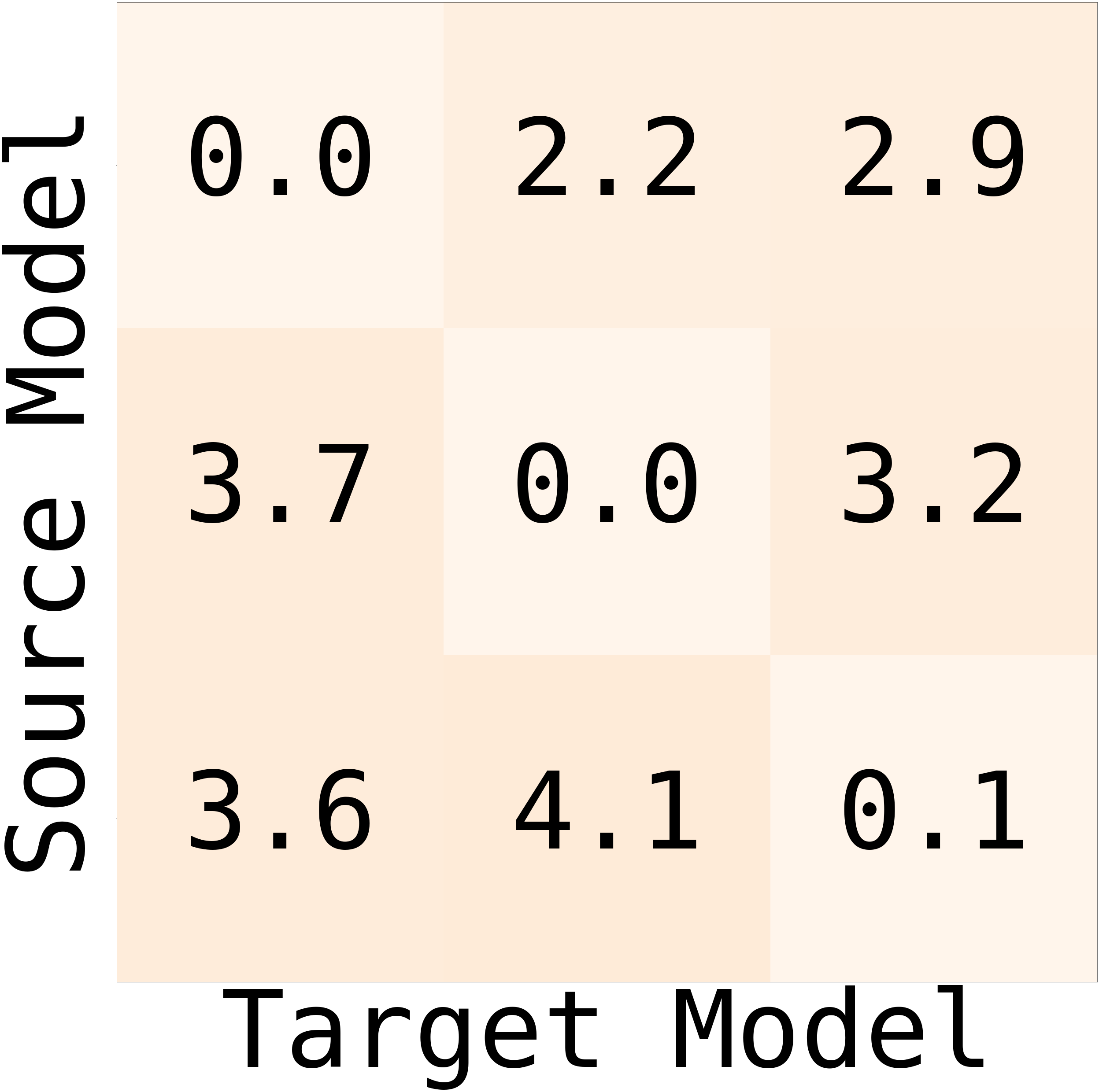}
        \caption{8 layers.}
    \end{subfigure}
    \hfill
    \begin{subfigure}[b]{0.13\textwidth}
        \includegraphics[width=\textwidth]{results/tf_to_tf_transfer_x/x_garg-scale_idxs_3_iters_100_single_modelFalselayers_12.png}
        \caption{12 layers.}
    \end{subfigure}
    \hfill
    \begin{subfigure}[b]{0.13\textwidth}
        \includegraphics[width=\textwidth]{results/tf_to_tf_transfer_x/x_garg-scale_idxs_3_iters_100_single_modelFalselayers_16.png}
        \caption{16 layers.}
    \end{subfigure}
    \caption{\textit{Target Attack Error} for different target models on adversarial samples generated using a source model with the same number of layers. Adversarial samples were generated using \xattack{} with $k=3$. Transfer of adversarial samples across transformers progressively becomes poorer as number of layers increases.}
    \label{appx.fig.transfer.xadv.tf.to.tf.same.layers}
\end{figure}
\begin{figure}[h]
    \centering
    \begin{subfigure}[b]{0.13\textwidth}
        \includegraphics[width=\textwidth]{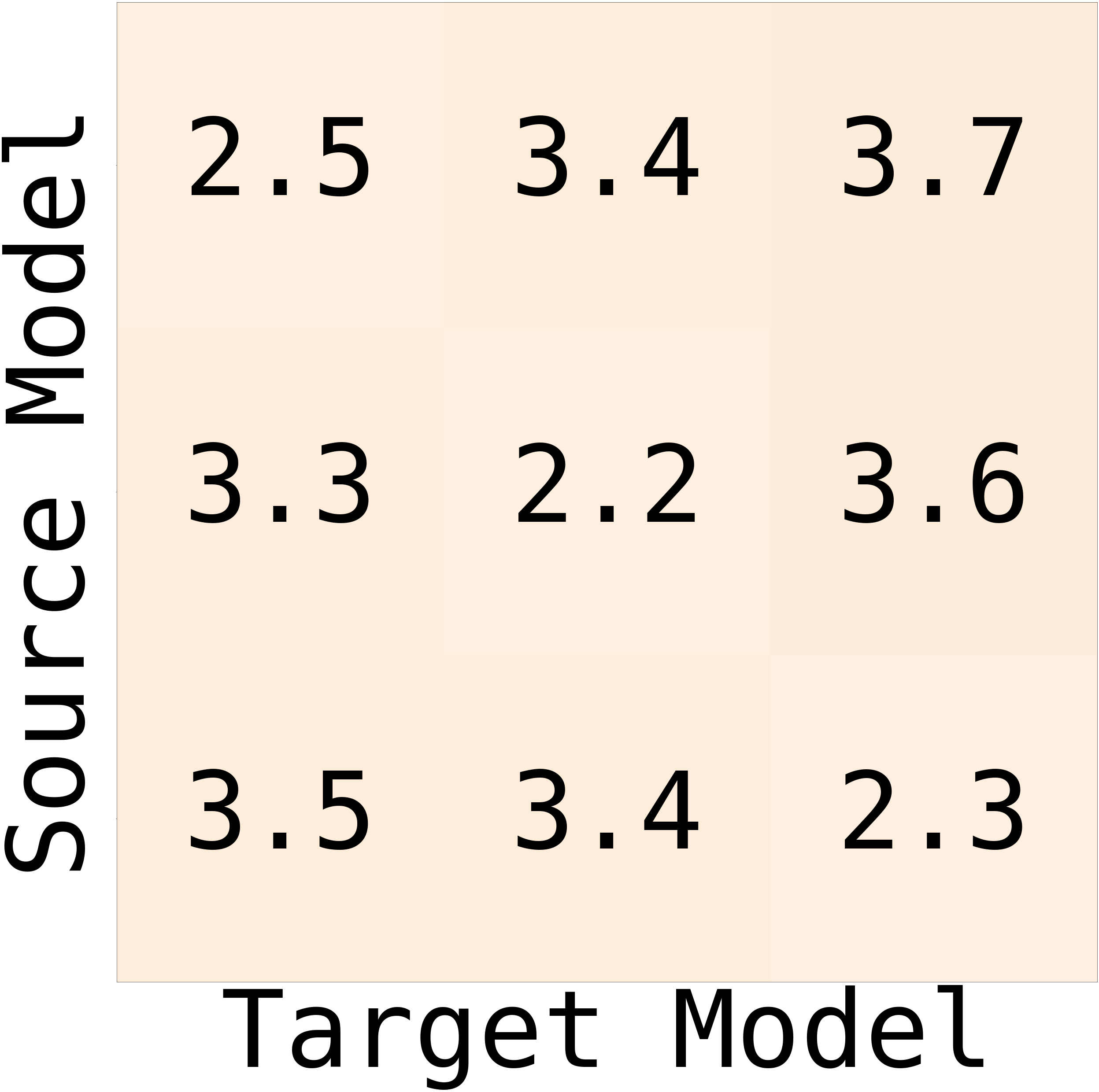}
        \caption{2 layers.}
    \end{subfigure}
    \hfill
    \begin{subfigure}[b]{0.13\textwidth}
        \includegraphics[width=\textwidth]{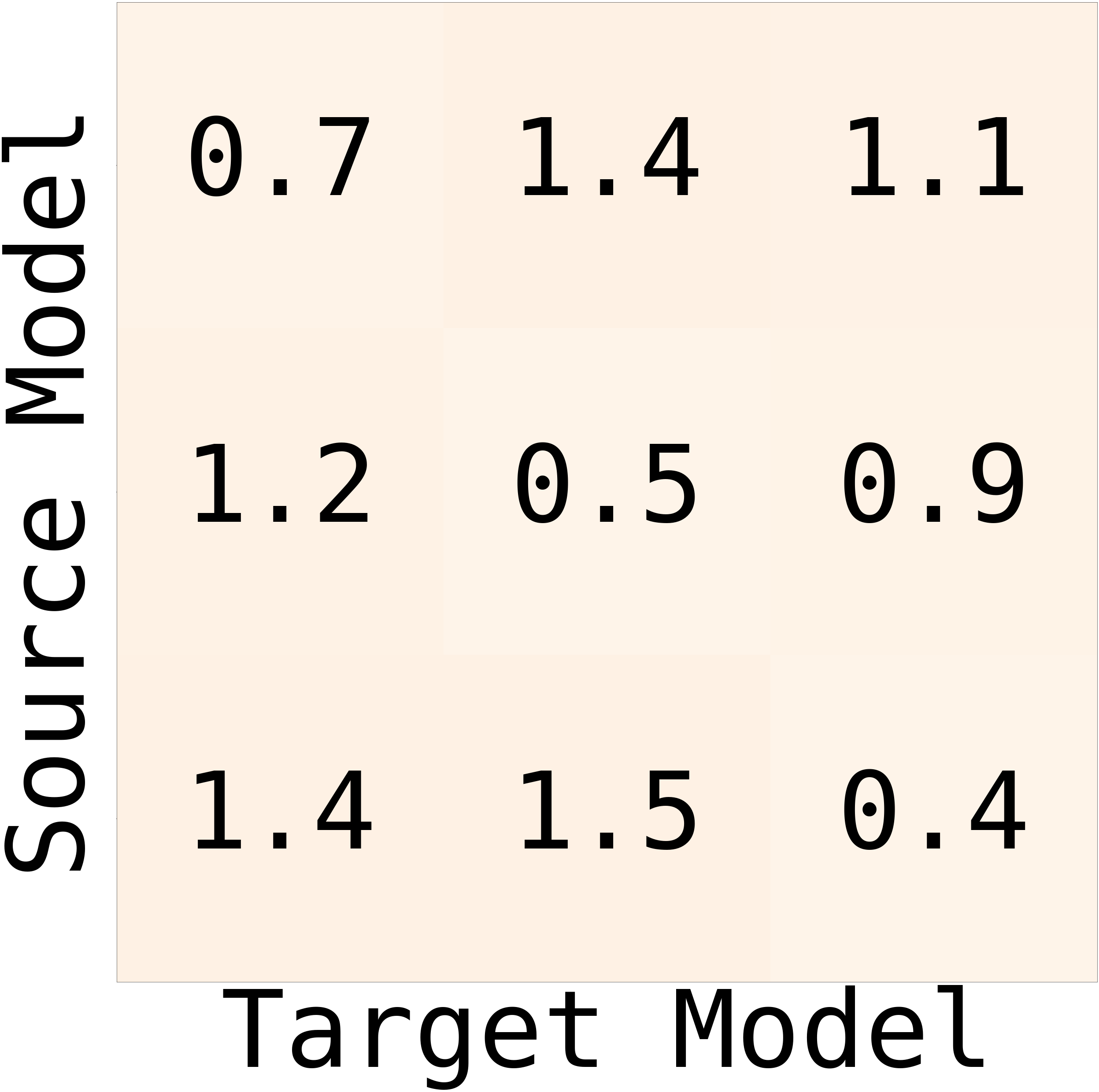}
        \caption{3 layers.}
    \end{subfigure}
    \hfill
    \begin{subfigure}[b]{0.13\textwidth}
        \includegraphics[width=\textwidth]{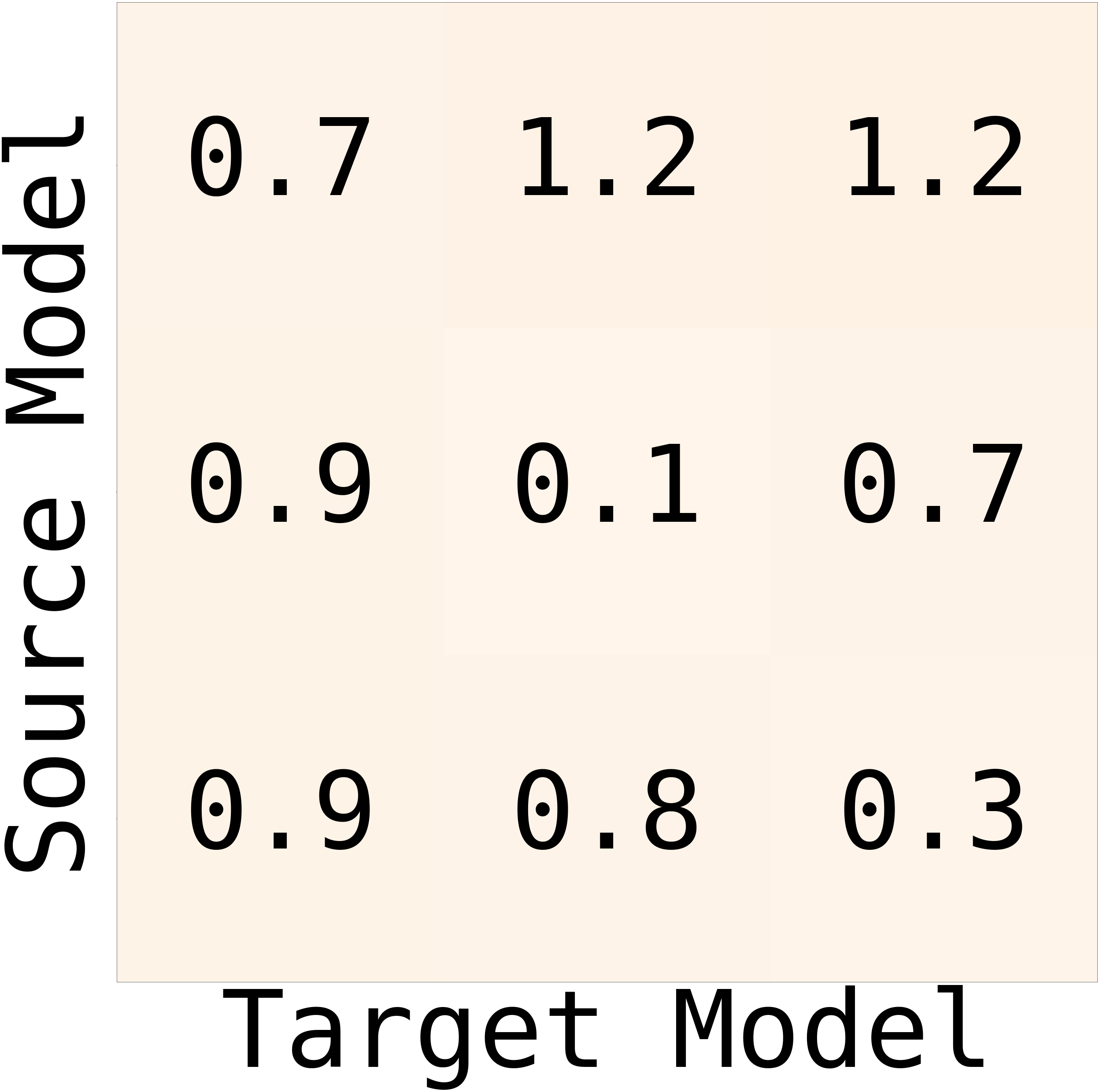}
        \caption{4 layers.}
    \end{subfigure}
    \hfill
    \begin{subfigure}[b]{0.13\textwidth}
        \includegraphics[width=\textwidth]{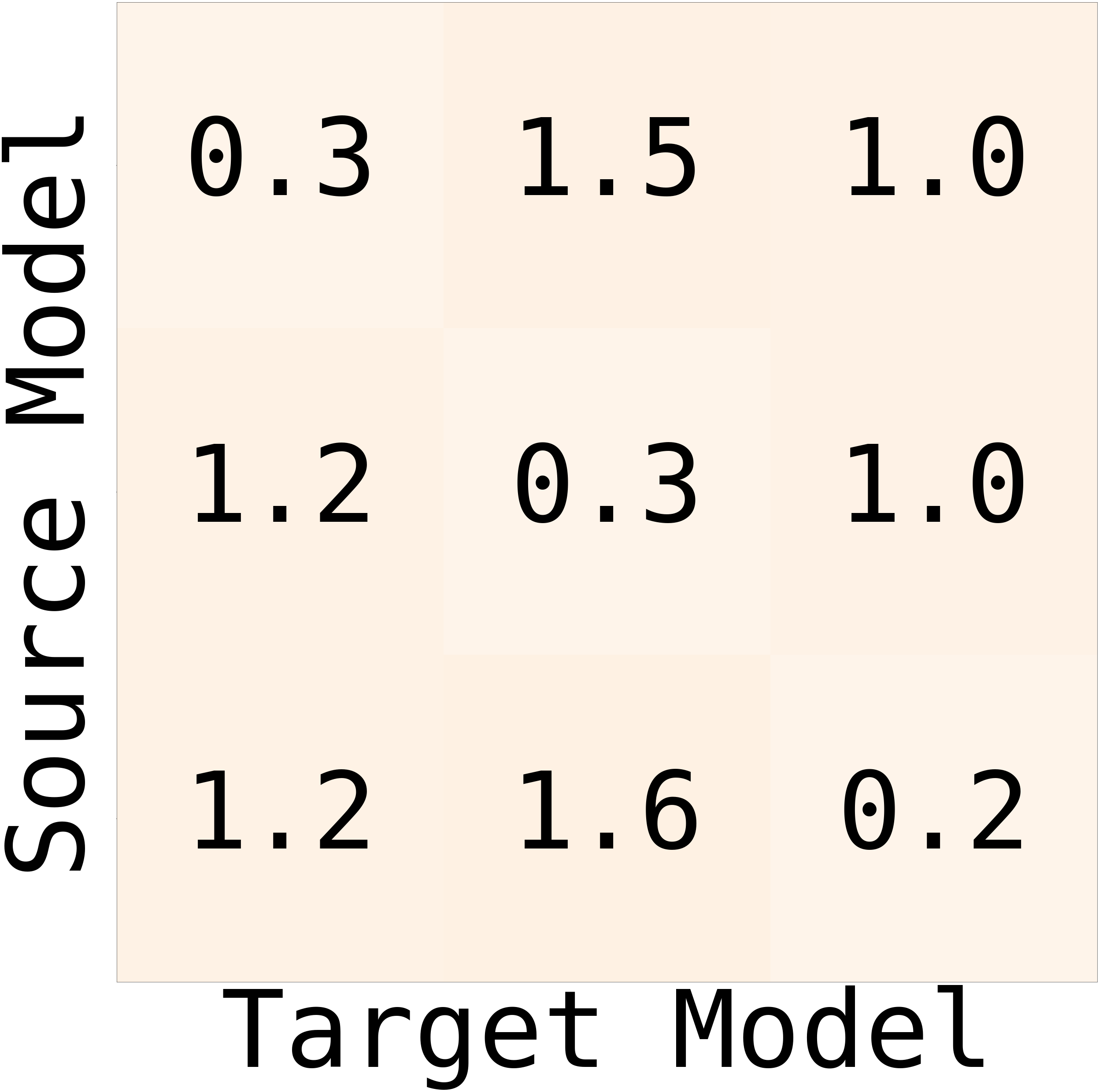}
        \caption{6 layers.}
    \end{subfigure}
    \begin{subfigure}[b]{0.13\textwidth}
        \includegraphics[width=\textwidth]{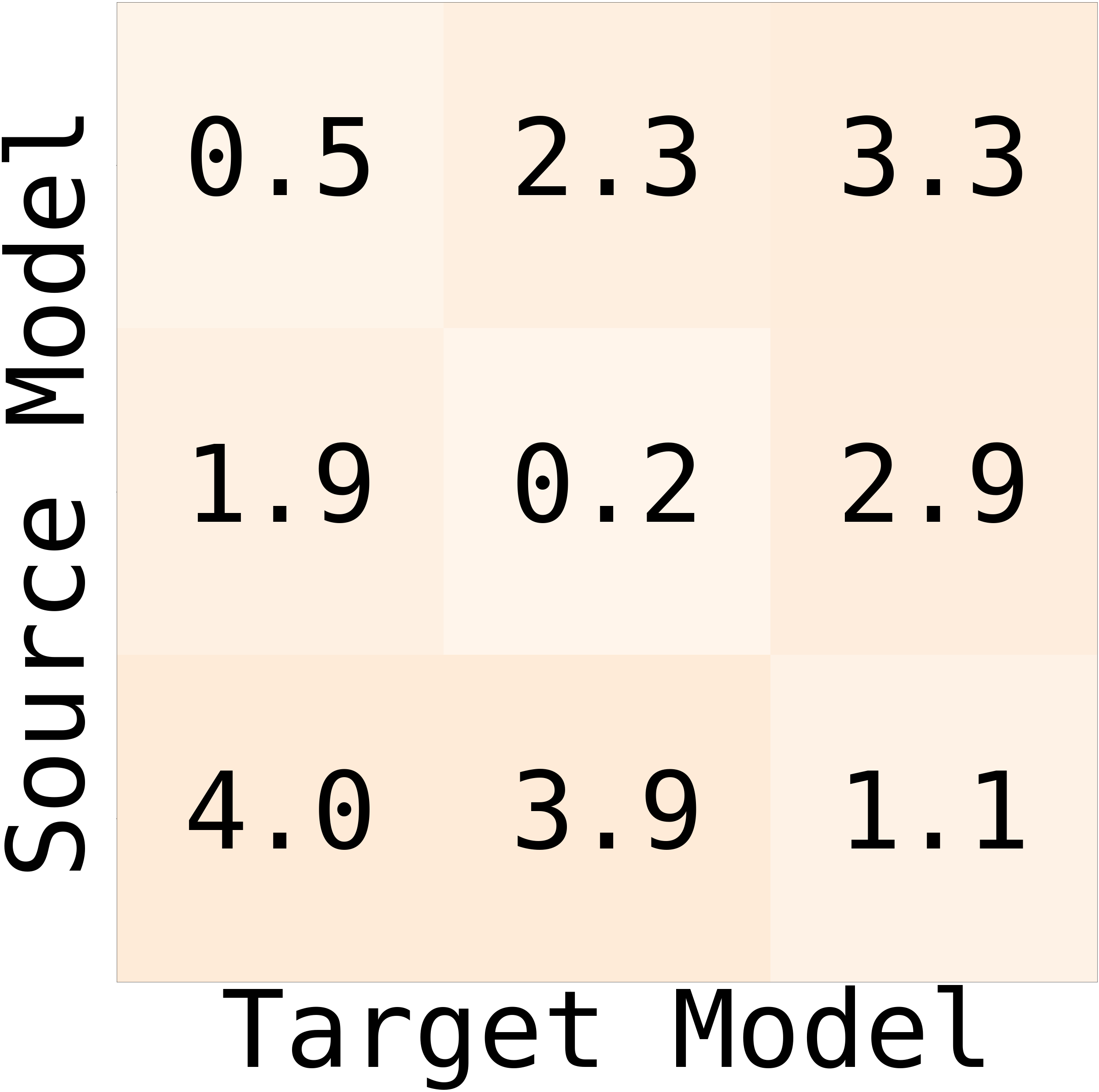}
        \caption{8 layers.}
    \end{subfigure}
    \hfill
    \begin{subfigure}[b]{0.13\textwidth}
        \includegraphics[width=\textwidth]{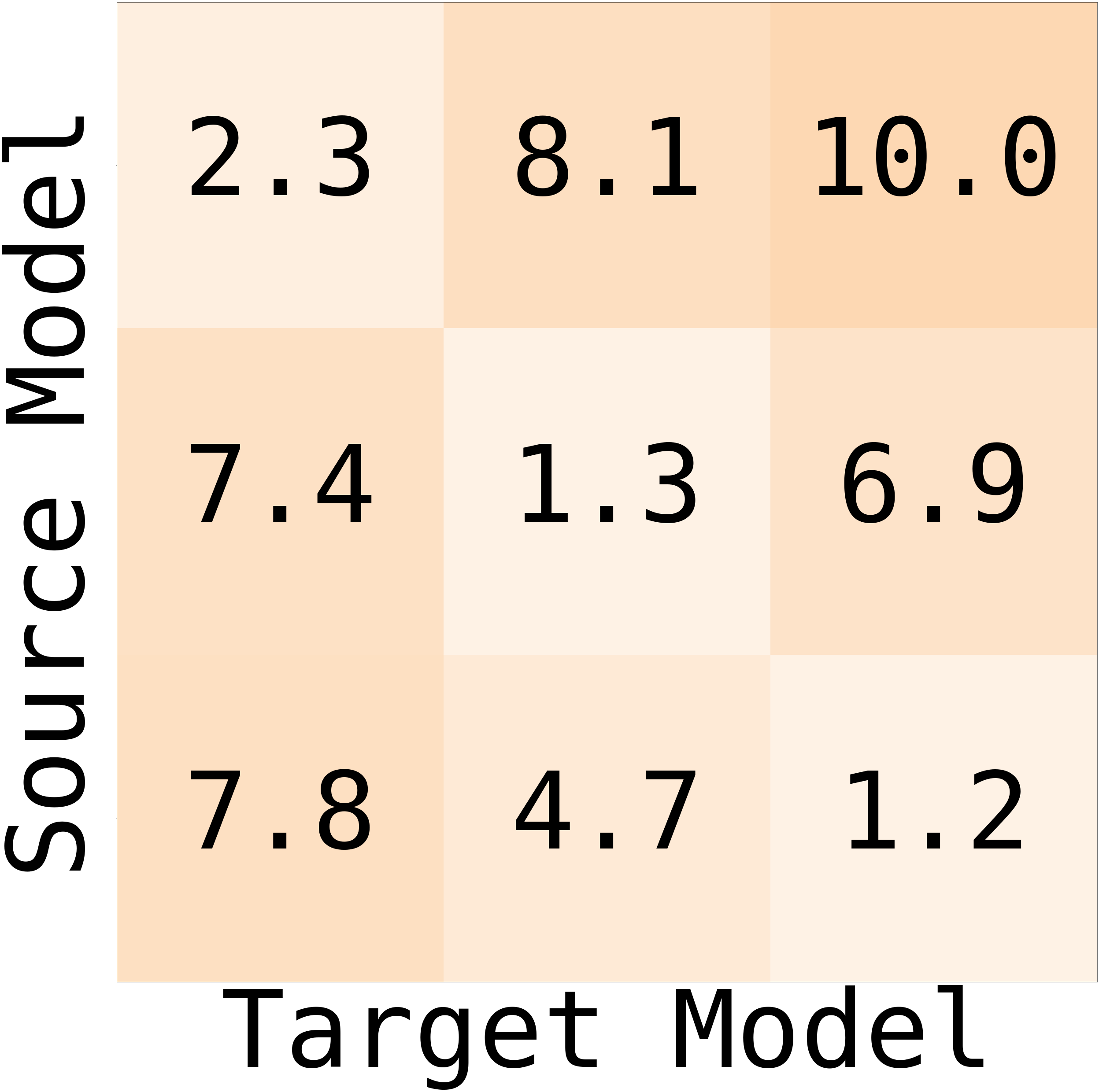}
        \caption{12 layers.}
    \end{subfigure}
    \hfill
    \begin{subfigure}[b]{0.13\textwidth}
        \includegraphics[width=\textwidth]{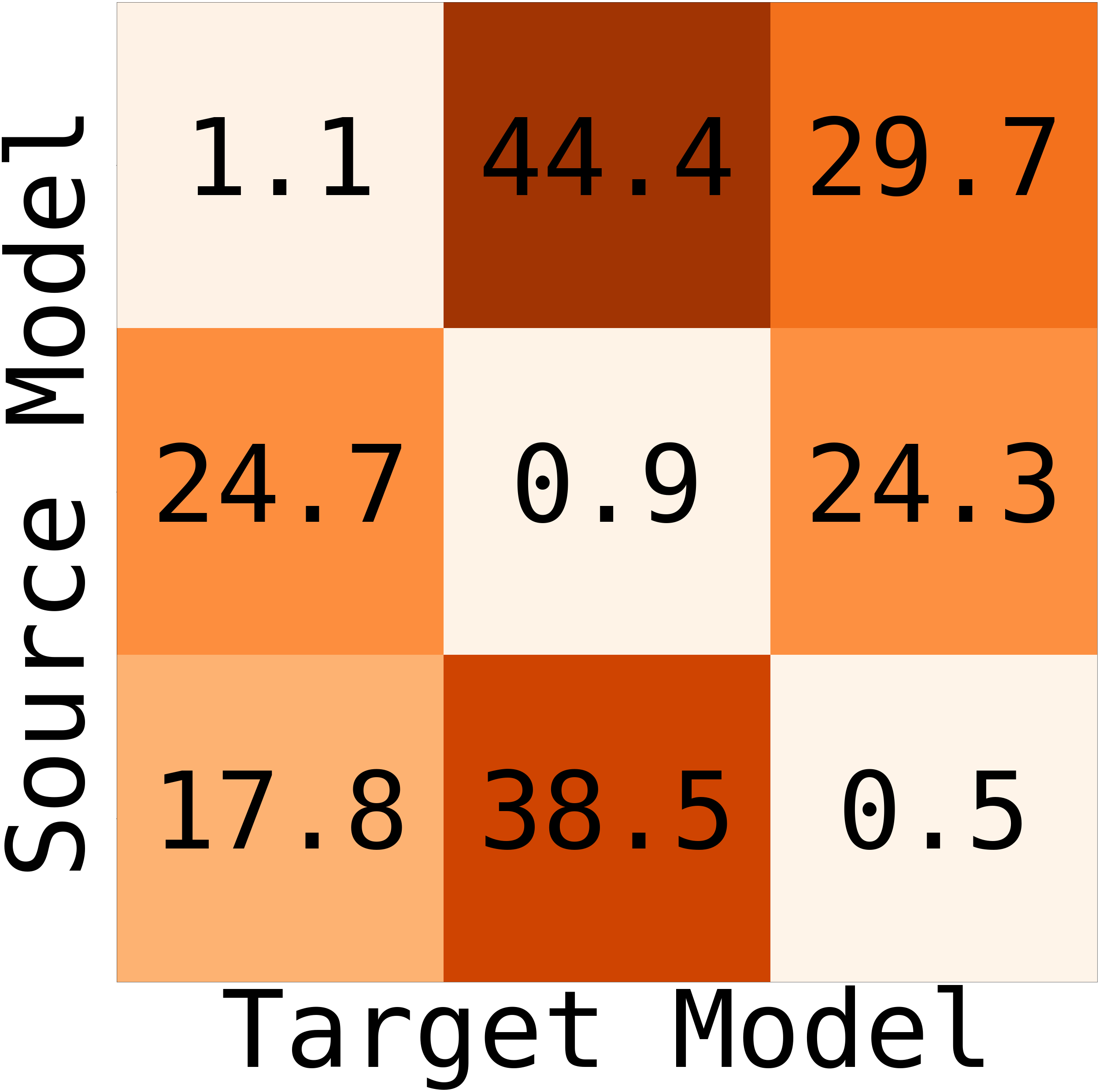}
        \caption{16 layers.}
    \end{subfigure}
    \caption{Same as above figure (\ref{appx.fig.transfer.xadv.tf.to.tf.same.layers}) but adversarial samples were generated using \yattack{} with $k=7$. As with \xattack{}, transfer of adversarial samples samples across transformers progressively becomes poorer as number of layers increases.}
    \label{appx.fig.transfer.yadv.tf.to.tf.same.layers}
\end{figure}

\begin{figure}
    \begin{subfigure}[b]{0.49\textwidth}
        \includegraphics[width=\textwidth]{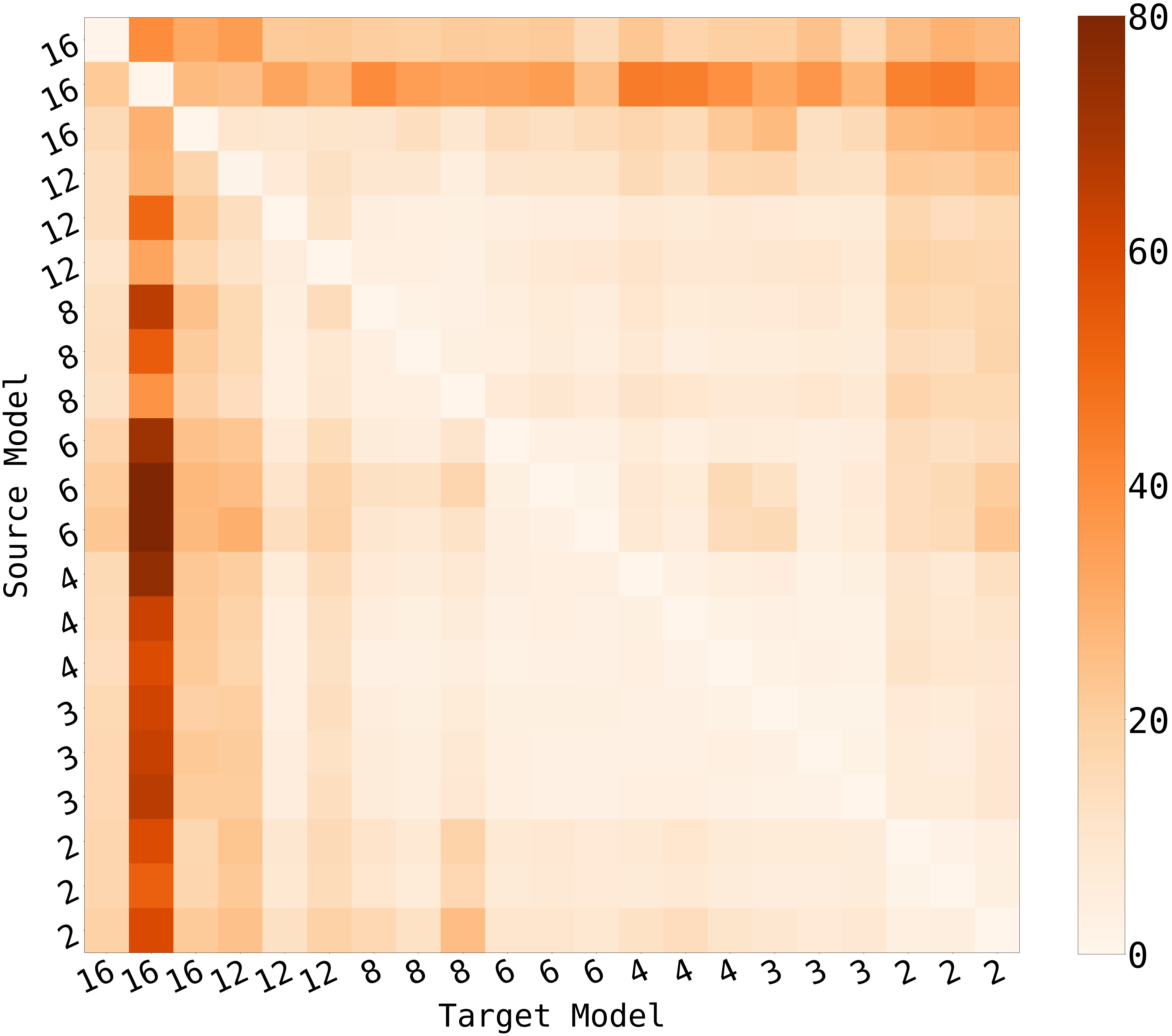}
        \caption{}
    \end{subfigure}
    \hfill
    \begin{subfigure}[b]{0.49\textwidth}
        \includegraphics[width=\textwidth]{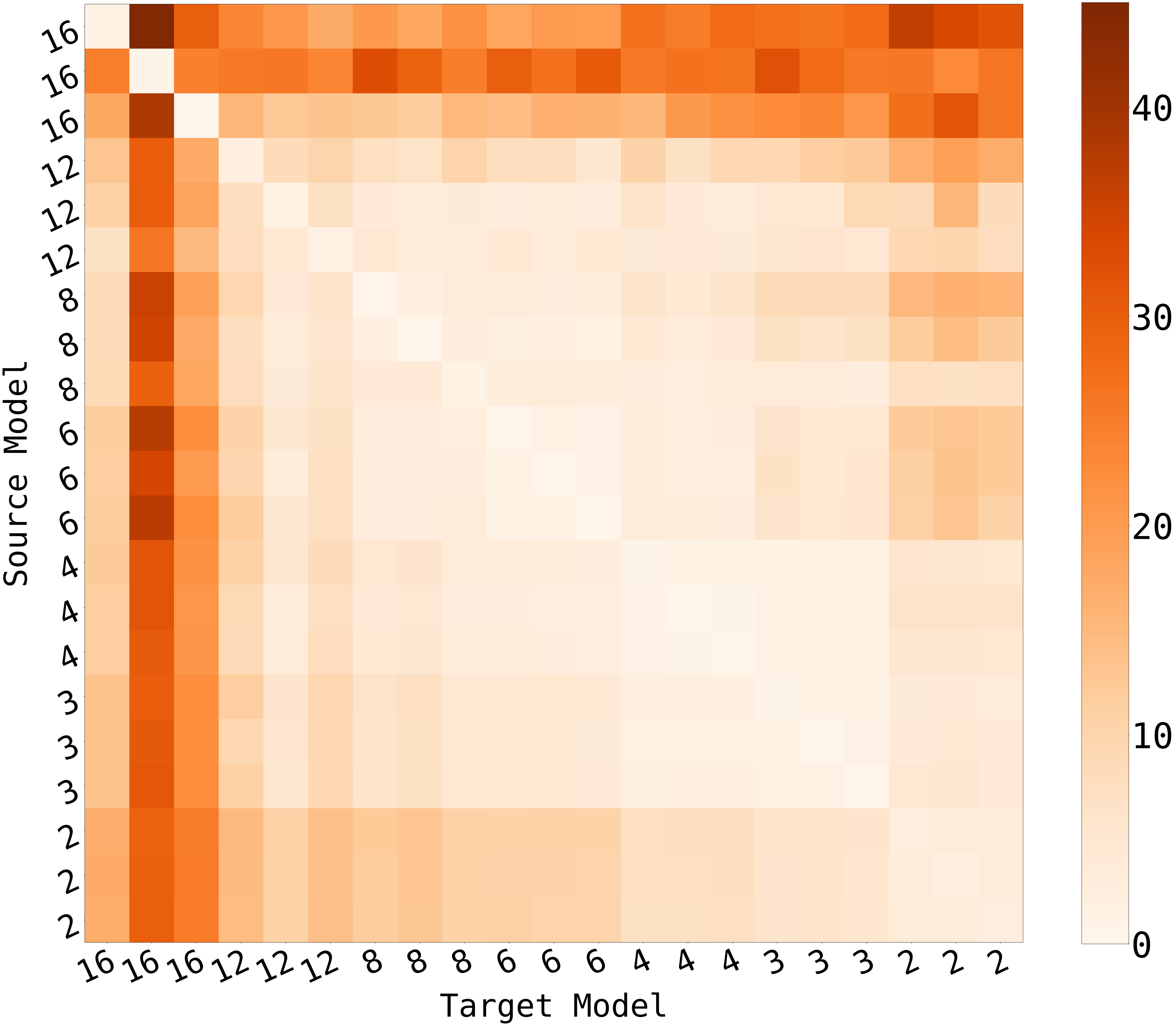}
        \caption{}
    \end{subfigure}
        \caption{Target Attack Error for different target models on adversarial samples possibly generated using a source model with a different number of layers. In (a) adversarial samples were generated using \xattack{} with $k=3$. In (b) adversarial samples were generated using \yattack{} with $k=7$. Transfer is generally worse when}
        \label{appx.fig.transfer.tf.to.tf.across.layers}
\end{figure}

\FloatBarrier

\subsection{Hijacking Attacks on Ordinary Least Square}
\label{appx.sec.transfer.ols.transfer}
Linear regression can be solved using ordinary least square.
This solution can be written in closed-form as follow:
\begin{align}
    \hat y &= f(X, Y, x_\query) = \left(X^\top X\right)^{-1}X^\top Y x_\query
\end{align}
where $X = [x_1^\top; x_2^\top; \cdots;x_N^\top]$ and $Y = [y_1, ..., y_N]$.
We implement a gradient-based adversarial attack on this solver by using Jax autograd to calculate the gradients $\nabla_{X} f(X, Y, x_\query)$ and $\nabla_{Y} f(X, Y, x_\query)$. 
Similar to our gradient-based attack on the transformer, we only update a randomly chosen subset of entries withing $X$ and $Y$.
In OLS, $X$ and $Y$ are not tokenized, however, for consistency of language, we will continue to refer to the individual entries of these matrices, i.e., $x_i,y_i$ as tokens.
We perform $1000$ iterations and use a learning rate of $0.01$ for both \xattack{} and \yattack.

Figure \ref{appx.fig.ols.attack} shows results for \xattack{} and y-attack respectively on OLS for $y_\bad$ chosen by using $\alpha=1.0$.
The adversarial robustness of OLS is qualitatively similar to that of the transformer;
for a fixed compute budget, single-token \yattack{} are much less successful compared to single-token \xattack,
and target attack error is lower when greater number of tokens are attacked.

\ifbool{iclrtemp}%
{\begin{figure}[b]
    \centering
    \begin{subfigure}[b]{0.39\textwidth}
       \includegraphics[width=\textwidth]{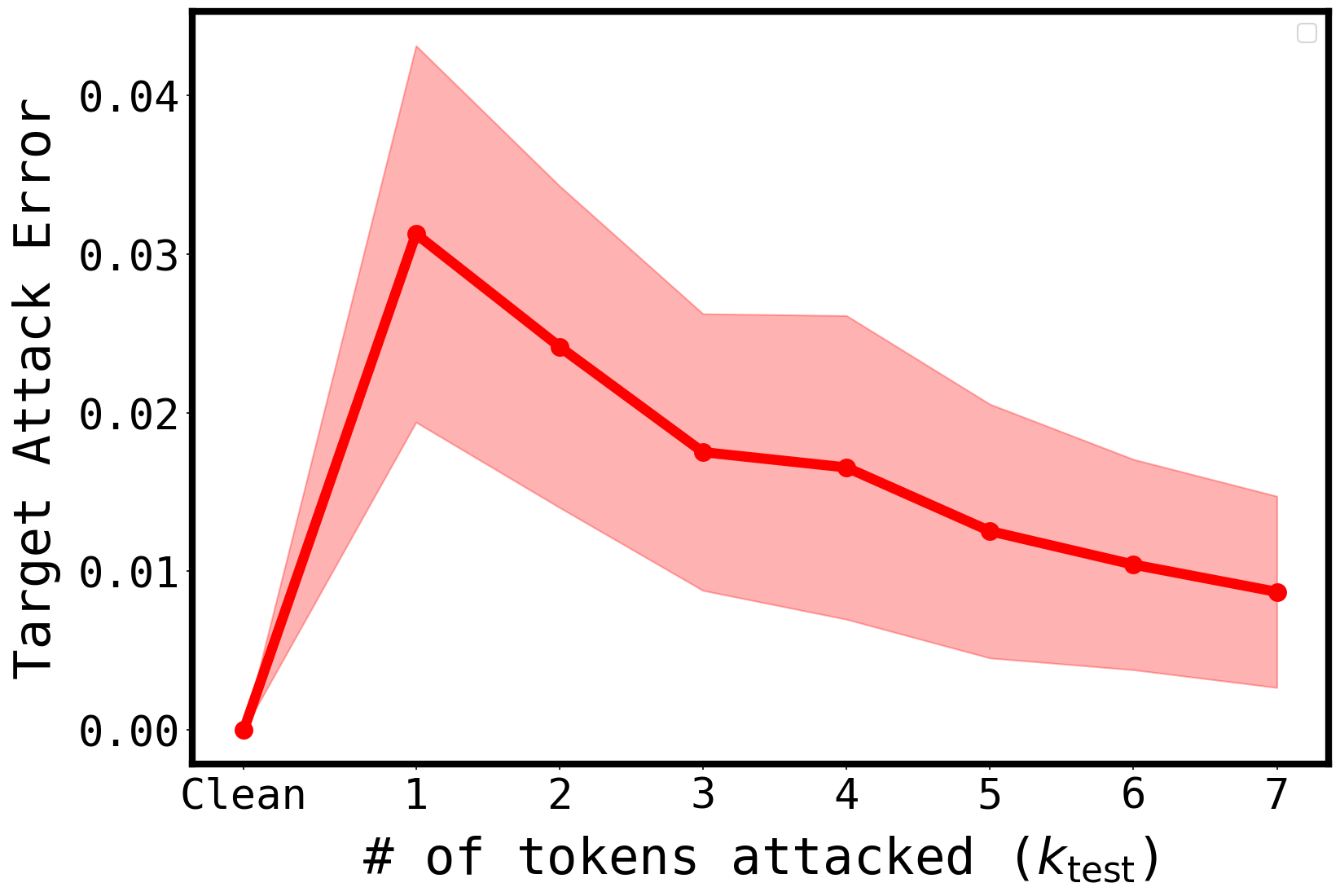}
    \caption{\xattackkk.}
    \end{subfigure}
    \hspace{5em}
    \begin{subfigure}[b]{0.38\textwidth}
       \includegraphics[width=\textwidth]{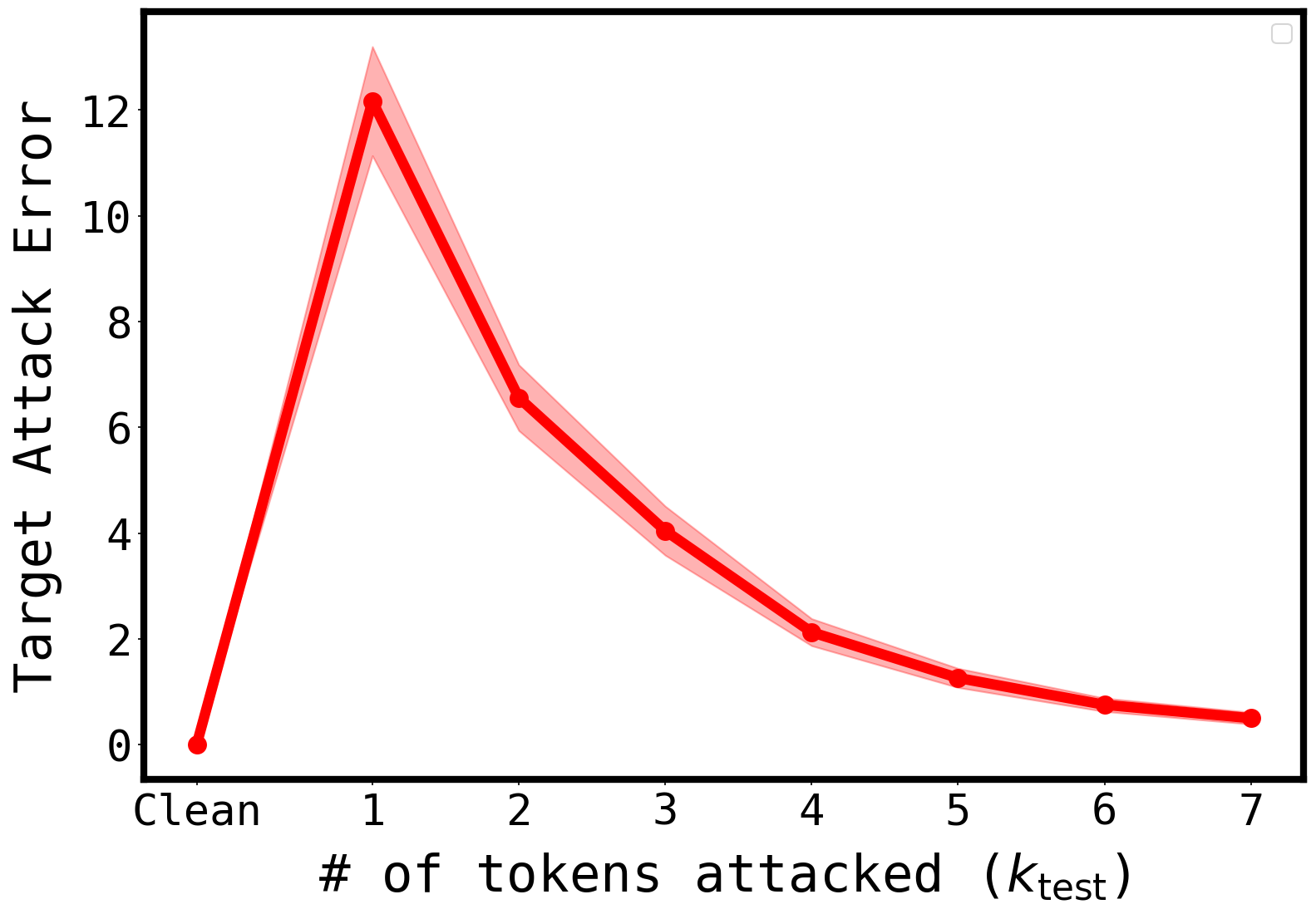}
    \caption{\yattackkk.}
    \end{subfigure}
    \caption{The adversarial robustness of ordinary least squares to gradient-based hijacking attacks is qualitatively similar to that of the transformers.}
    \label{appx.fig.ols.attack}
\end{figure}}%
{\begin{figure}[t]
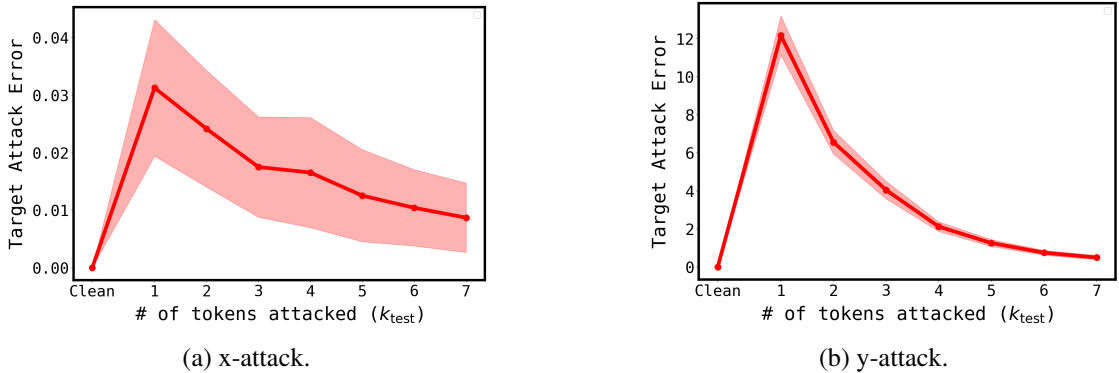

    \centering
    \begin{subfigure}[b]{0.39\textwidth}
       \includegraphics[width=\textwidth]{results/ols/OLS_attack_x_idxs_8_alpha_1.0_max_iters_1000.png}
    \caption{\xattackkk.}
    \end{subfigure}
    \hspace{5em}
    \begin{subfigure}[b]{0.38\textwidth}
       \includegraphics[width=\textwidth]{results/ols/OLS_attack_y_idxs_8_alpha_1.0_max_iters_1000.png}
    \caption{\yattackkk.}
    \end{subfigure}
    \caption{The adversarial robustness of ordinary least squares to gradient-based hijacking attacks is qualitatively similar to that of the transformers.}
    \label{appx.fig.ols.attack}
\end{figure}}

We further look at the transfer of adversarial attacks between transformers and OLS.
Specifically, by attacking OLS we create a set of adversarial samples and then measure the mean squared error (MSE) between the predictions of OLS and different transformers on these adversarial samples, and vice versa.
Figure~\ref{appx.fig.ols.tf.transfer.alpha.plot} shows the transfer for adversarial samples for different values of $\alpha$ for sampling $y_\bad$.
For \xattack, we attack $3$ indices and for y-attack, we attack $7$ indices.
We can make following observations from this figure: (i) the predictions made by OLS and transformers tend to diverge as $\alpha$ increases. This indicates lack of alignment between the predictions made by OLS and transformers OOD. 
(ii) For \xattack, MSE between predictions is significantly lower when adversarial samples are sourced by attacking OLS relative to when adversarial samples are sourced by attacking the transformers. In other words, adversarial samples transfer better from OLS to transformers but not vice versa. This hints at the fact that adversarial robustness of the transformers is worse than that of OLS.
(iii) For y-attack, the aforementioned asymmetry in transfer above does not exist except for transformers with layers $16$ and $12$.
(iv) Finally, we note that transformer with $16$ layers clearly always behaves in an anomalous fashion, with transformers with layers $12$ and $2$ also sometimes behaving anomalously, which is in line with the discussion in previous section on intra-transformer transfer of adversarial samples.

In Figure~\ref{appx.fig.ols.tf.transfer.idxs.plot}, we present complementary results showing MSE between predictions of OLS and transformers on adversarial samples when different number of tokens are attacked for $\alpha=1.0$. These results further support the observations made in the previous paragraph.

\newcommand{\figurewidth}{\ifbool{iclrtemp}{0.39\textwidth}{0.3\textwidth}}

\begin{figure}
    \centering
    \begin{subfigure}[b]{\figurewidth}
       \includegraphics[width=\textwidth]{results/OLS_TF_Transfer/ols_to_tf_attack_x_idxs_3_max_iters_1000.png}
       \caption{\xattackkk: OLS $\rightarrow$ Transformers.}
    \end{subfigure}
    \hspace{5em}
    \begin{subfigure}[b]{\figurewidth}
       \includegraphics[width=\textwidth]{results/OLS_TF_Transfer/tf_to_ols_attack_x_idxs_3.png}
       \caption{\xattackkk: Transformers $\rightarrow$ OLS.}
    \end{subfigure}
    \vspace{2em}
    \begin{subfigure}[b]{\figurewidth}
       \includegraphics[width=\textwidth]{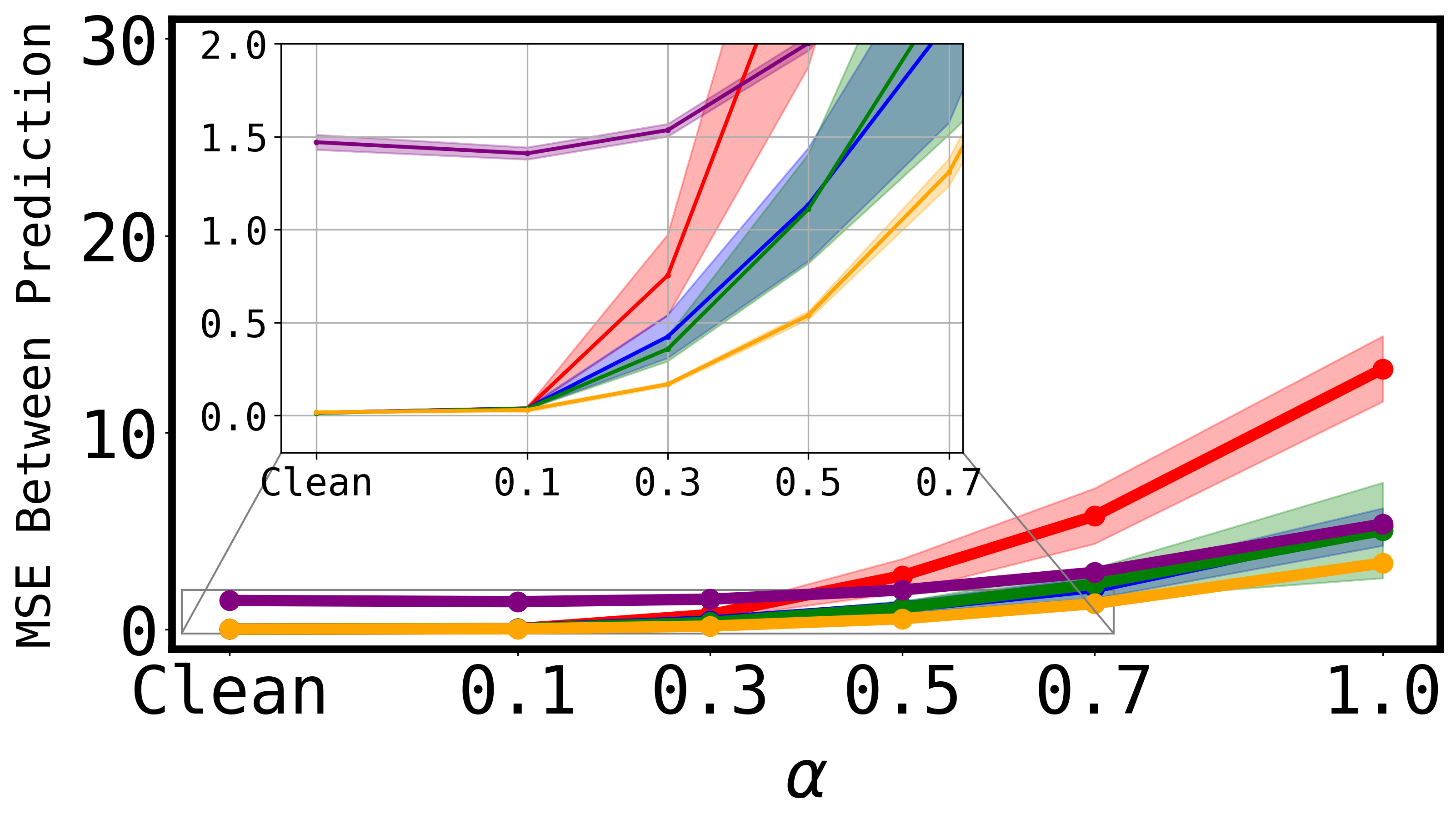}
       \caption{\yattackkk: OLS $\rightarrow$ Transformers.}
    \end{subfigure}
    \hspace{5em}
    \begin{subfigure}[b]{\figurewidth}
       \includegraphics[width=\textwidth]{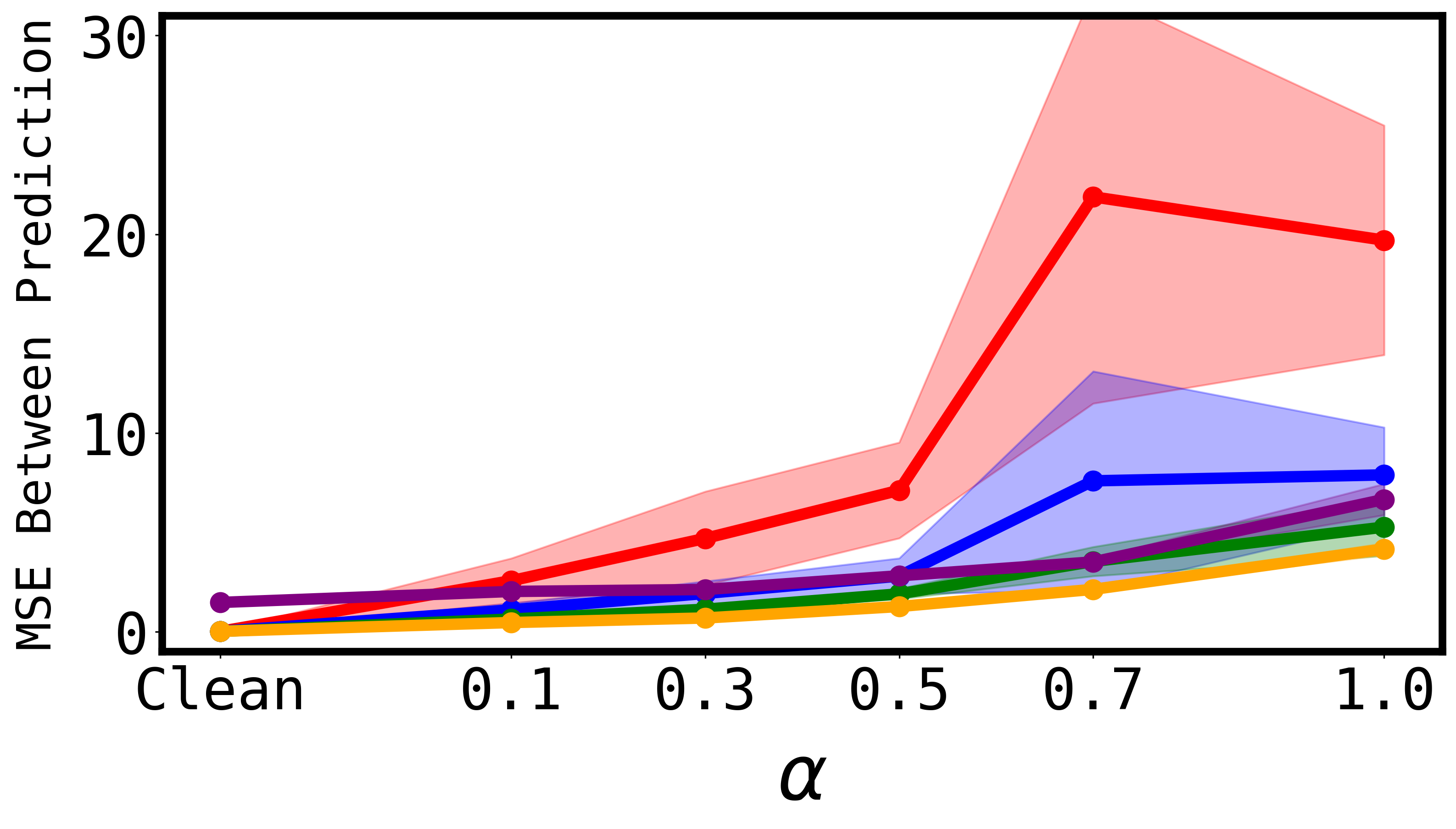}
       \caption{\yattackkk: Transformers $\rightarrow$ OLS.}
    \end{subfigure}
     \begin{subfigure}[b]{0.8\textwidth}
       \includegraphics[width=\textwidth]{results/OLS_TF_Transfer/ols_attack_x_idxs_3_legend.png}
    \end{subfigure}   
    \caption{The mean squared error between the predictions being made by the transformer and OLS on adversarial samples tends to increase as the `OOD-ness' of the $y_\bad$ increases. Furthermore, the difference in prediction is generally higher when the hijacking attacks are derived using the transformer (notice the differences in scale). For \xattack, we attack 3 tokens and for y-attack we attack 7 tokens when creating adversarial samples.}
    \label{appx.fig.ols.tf.transfer.alpha.plot}
\end{figure}

\begin{figure} 
    \centering
    \begin{subfigure}[b]{\figurewidth}
       \includegraphics[width=\textwidth]{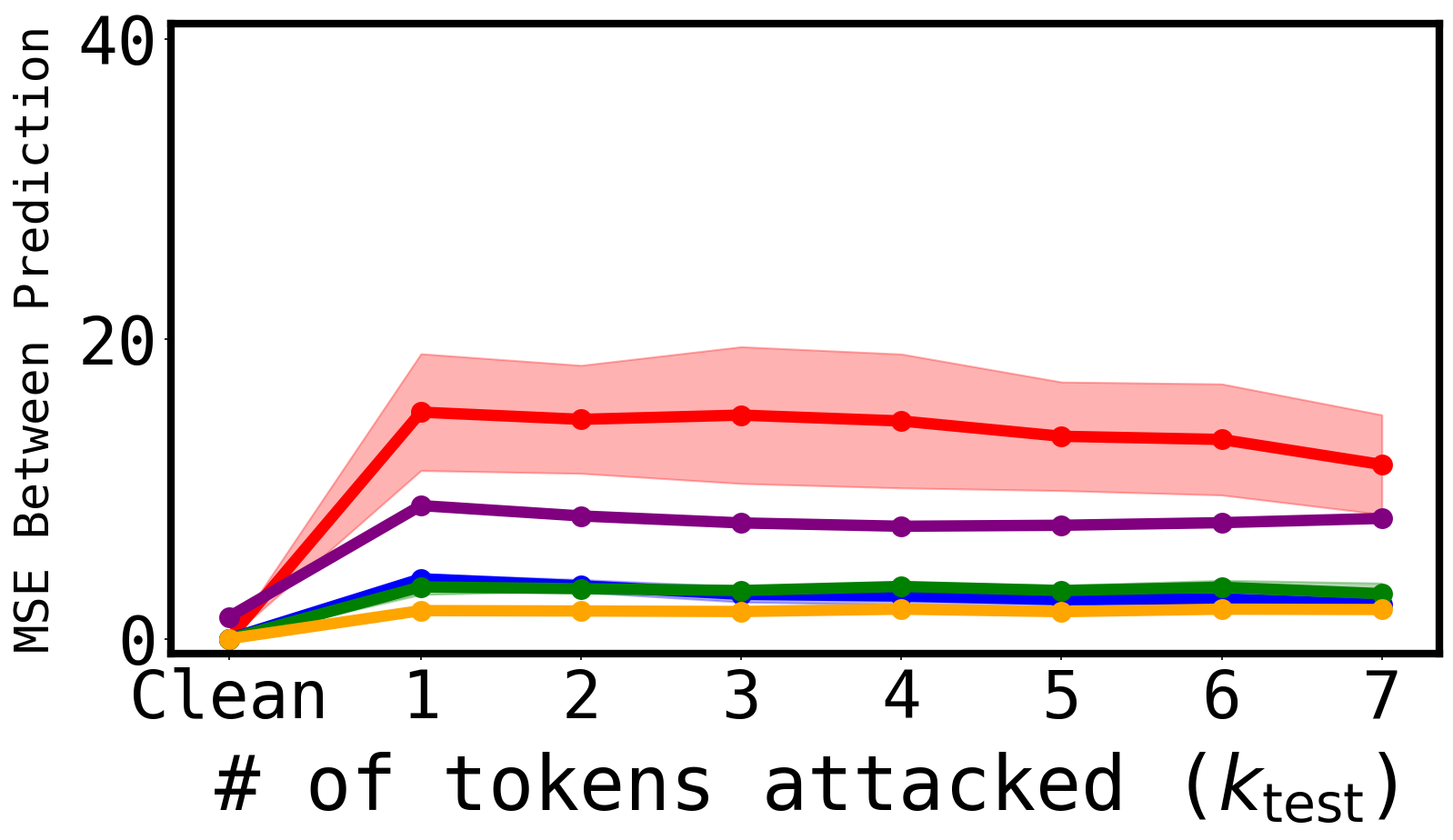}
       \caption{\xattackkk: OLS $\rightarrow$ Transformers.}
    \end{subfigure}
    \hspace{5em}
    \begin{subfigure}[b]{\figurewidth}
       \includegraphics[width=\textwidth]{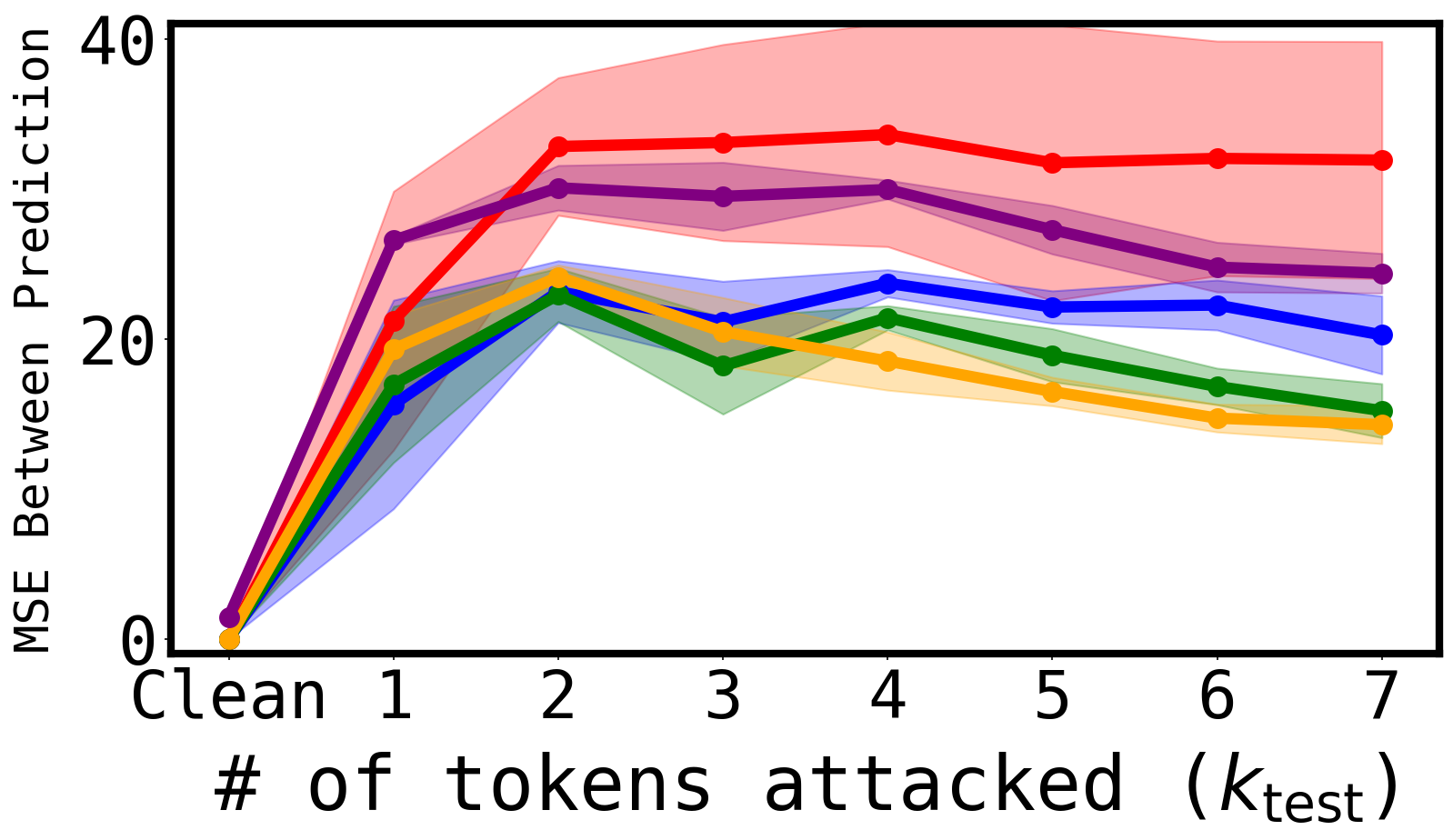}
       \caption{\xattackkk: Transformers $\rightarrow$ OLS.}
    \end{subfigure}
    \vspace{2em}
    \begin{subfigure}[b]{\figurewidth}
       \includegraphics[width=\textwidth]{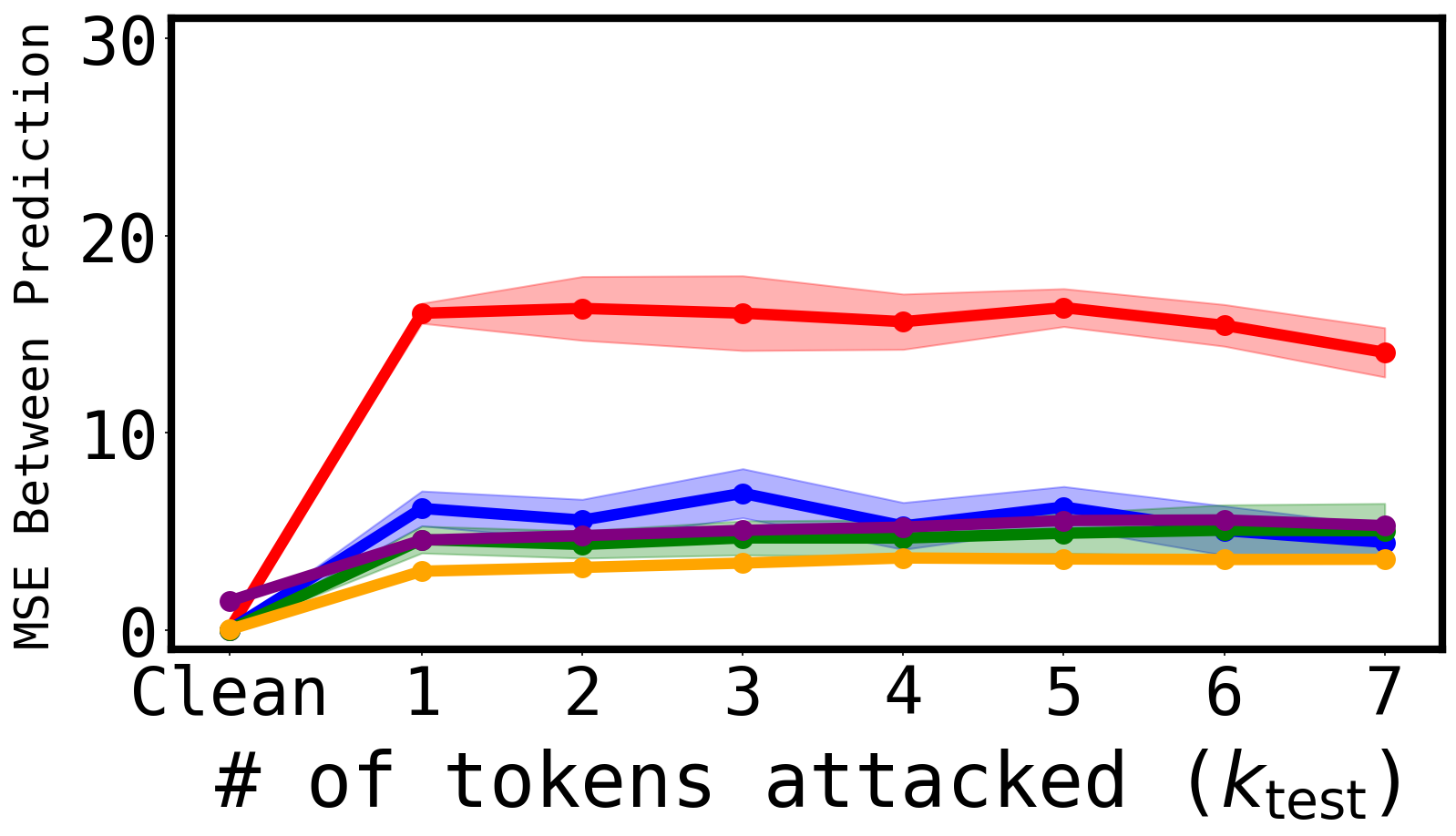}
       \caption{\yattackkk: OLS $\rightarrow$ Transformers.}
    \end{subfigure}
    \hspace{5em}
    \begin{subfigure}[b]{\figurewidth}
       \includegraphics[width=\textwidth]{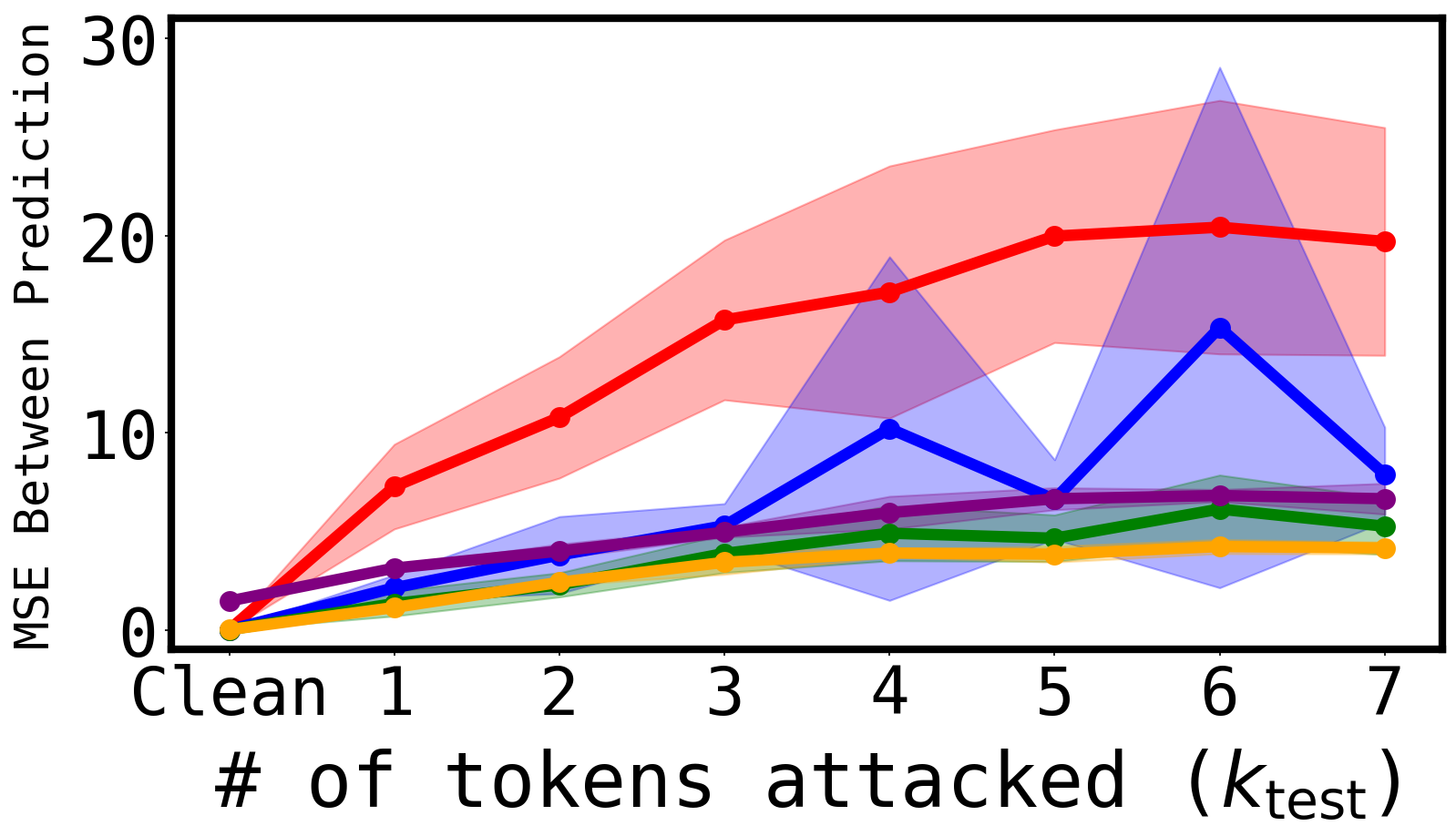}
       \caption{\yattackkk: Transformers $\rightarrow$ OLS.}
    \end{subfigure}
     \begin{subfigure}[b]{0.8\textwidth}
       \includegraphics[width=\textwidth]{results/OLS_TF_Transfer/ols_attack_x_idxs_3_legend.png}
    \end{subfigure}   
    \caption{The mean squared error between the predictions being made by the transformer and OLS on adversarial samples tends to be higher when the adversarial samples are sourced by attacking transformers. In the above plot, we use $\alpha=1.0$ for sampling $y_\bad$.}
    \label{appx.fig.ols.tf.transfer.idxs.plot}
\end{figure}

\FloatBarrier

\section{Training Details and Hyperparameters}
\label{appx.training.details}
\subsection{Linear Transformer}
To match the setup considered in Theorem~\ref{thm:jailbreaking.yadv}, we implement linear transformer as a single-layer attention-only linear transformer as described in equation~\ref{eq:lsa}.
We train the linear transformer for $2M$ steps with a batchsize of $1024$ and learning rate of $10^{-6}$.  

\subsection{Standard Transformer}
Our training setup closely mirrors that of \citet{garg2022can}. Similar to their setup, we use a curriculum where we begin training on data with dimensionality of $5$ and gradually increase the dimensionality to $20$ by incrementing it by $1$ every $5$k steps.
Details of our architecture are given in Table~\ref{appx.table:gpt.architecture}.
We guve the number of parameters present in various transformer models with different number of layers in Table~\ref{appx.table:number.of.parameters}.
Important training hyperparameters are mentioned in Table~\ref{appx.table:training.hps}.

\begin{table}[htbp]
    \centering
    \begin{tabular}{ll}
    \toprule
    \textbf{Parameter} & \textbf{Value} \\
    \midrule
    Embedding Size & 256 \\
    Number of heads & 8 \\
    Positional Embedding & Learned \\
    Number of Layers & 8 (unless mentioned otherwise) \\
    Causal Masking & Yes \\
    \bottomrule
    \end{tabular}
    \caption{Architecture for the transformer model.}
    \label{appx.table:gpt.architecture}
\end{table}

\begin{table}[htbp]
    \centering
    \begin{tabular}{ll}
    \toprule
    \textbf{Number of Layers} & \textbf{Parameter Count} \\
    \midrule
    $2$ & $1,673,601$ \\
    $3$ & $2,463,553$ \\
    $4$ & $3,253,505$ \\
    $6$ & $4,833,409$ \\
    $8$ & $6,413,313$ \\
    $12$ & $9,573,121$ \\
    $16$ & $12,732,929$ \\
    \bottomrule
    \end{tabular}
    \caption{Hyperparameters used for training transformer models with GPT-2 architecture.}
    \label{appx.table:number.of.parameters}
\end{table}

\begin{table}[htbp]
    \centering
    \begin{tabular}{ll}
    \toprule
    \textbf{Hyperparameter} & \textbf{Value} \\
    \midrule
    Learning Rate & $5 \times 10^{-4}$ \\
    Warmup Steps & 20,000 \\
    Total Training Steps & 500,000 \\
    Batch Size & 64 \\
    Optimizer & Adam \\
    \bottomrule
    \end{tabular}
    \caption{Hyperparameters used for training transformer models with GPT-2 architecture.}
    \label{appx.table:training.hps}
\end{table}

\FloatBarrier

\subsection{Adversarial Attack and Adversarial Training Details}
\label{appx.adversarial.attack.details}
We implement our adversarial attacks as simple gradient descent on the (selected) inputs with the target attack error as the optimization objective. We briefly experimented with variations of gradient descent, e.g., gradient descent with momentum but found those to perform at par with simple gradient descent.

When performing \xattack{}, we used a learning rate of $1$ and when performing \yattack{}, we used a learning rate of $100$.
When performing \zattack{}, we used a learning rate of $1$ when perturbing x-tokens and a learning rate of $100$ when perturbing y-tokens.
We chose the learning rates based on best performance within $100$ gradient steps. Using lower values of learning rates resulted in proportionally slower convergence, and hence were avoided.

In all our plots, we show results across three different models and use $1000$ samples for each model.

\textbf{Differences Between Adversarial Attacks and Adversarial Training}: The two major differences in our adversarial traning setup, compared with adversarial attacks setup are:
\begin{itemize}
    \item During adversarial attacks (done on trained models at test time), we sample $y_\bad$ according to the expression \ref{eq:y_bad.alpha}, but during adversarial training we sample $y_\bad$ by sampling a weight vector $w\sim N(0,I_d)$ independent of the task parameters $w_\tau$ and setting $y_\bad=w^\top x_\query$.
    \item During adversarial attacks, we perform 100 steps of gradient descent, but in adversarial training, we only perform 5 steps of gradient descent.
\end{itemize}
Both the above changes were done to help improve the efficiency of adversarial training.


\end{document}